\documentclass[twoside,11pt]{article}

\usepackage{blindtext}

\usepackage{jmlr2e} 

\usepackage{tikz-cd}
\usepackage{booktabs}
\usepackage{amsmath,amscd,amssymb,amsbsy,amstext,mathtools,algorithm,graphicx,multirow}
\usepackage{algpseudocode}
\algrenewcommand\algorithmicrequire{\textbf{Input:}}
\algrenewcommand\algorithmicensure{\textbf{Output:}}
\usepackage{pdfpages}
\usepackage{enumerate}
\usepackage{mathrsfs}
\usepackage[utf8]{inputenc}
\usepackage{subcaption}

\numberwithin{equation}{section}
\usepackage{amsfonts}
\usepackage[T1]{fontenc}

\usepackage{xr-hyper}
\usepackage{hyperref}

\usepackage{url}

\def\m{\mathcal}
\def\mb{\mathbb}

\def\ms{\mathscr}

\def\wt{\widetilde}
\def\wh{\widehat}

\def\dd{{\rm d}}

\def\lY{\overleftarrow{Y}}

\def\opn{\operatorname}

\newtheorem{assumption}{Assumption}

\begin{document}

\title{A Deep Generative Approach to Stratified Learning}
\author{\name Randy Martinez \email randym@umd.edu \\
       \addr Department of Mathematics\\
       University of Maryland\\
       College Park, MD 20742, USA
       \AND
       \name Rong Tang \email martang@ust.hk \\
       \addr Department of Mathematics \\
       Hong Kong University of Science and Technology\\
       Clear Water Bay, Kowloon, Hong Kong
       \AND
       \name Lizhen Lin \email lizhen01@umd.edu \\
       \addr Department of Mathematics\\
       University of Maryland\\
       College Park, MD 20742, USA}

\editor{}

\maketitle

\begin{abstract}%


While the manifold hypothesis is widely adopted in modern machine learning, complex data is often better modeled as \textit{stratified spaces}---unions of manifolds (strata) of varying dimensions. Stratified learning is challenging due to varying dimensionality, intersection singularities, and lack of efficient models in learning the underlying distributions. We provide a deep generative approach to stratified learning by developing two generative frameworks for learning distributions on stratified spaces. The first is a sieve maximum likelihood approach realized via a dimension-aware mixture of variational autoencoders. The second is a diffusion-based framework that explores the score field structure of a mixture. We establish the convergence rates for learning both the ambient and  intrinsic distributions, which are shown to be dependent on the intrinsic dimensions and smoothness of the underlying strata. Utilizing the geometry of the score field, we also establish consistency for estimating the intrinsic dimension of each stratum and propose an algorithm that consistently estimates both the number of strata and their dimensions. Theoretical results for both frameworks provide fundamental insights into the interplay of the underlying geometry, the ambient noise level, and deep generative models. Extensive simulations and real dataset applications, such as molecular dynamics, demonstrate the effectiveness of our methods.


\end{abstract}

\begin{keywords}
    deep generative models, stratified spaces, singular distribution estimation, intrinsic dimension estimation, geometric learning
\end{keywords}

\section{Introduction}

Deep neural networks have achieved remarkable success across a wide range of domains, including computer vision, natural language processing, and recommendation systems \citep{lecun2015deep, krizhevsky2012imagenet,vaswani2017attention, cheng2016wide}. Deep generative models, in particular,  lie at the heart of many of these applications, where neural networks are trained to implicitly learn the underlying distribution of the data from samples alone.  Real-world datasets, such as images, molecular, and medical data  tend to be very high-dimensional, where classical statistical models often suffer from the \textit{curse of dimensionality}. Nevertheless, a growing body of empirical evidence and theory suggests that deep generative models can perform  very  well in such high-dimensional settings and and seem to  be able to circumvent the curse of dimensionality \citep{barron2002universal,pope2021intrinsic,chen2023score,potaptchik2024linear}.  


A widely adopted assumption in modern machine learning is the \textit{manifold hypothesis}, which assumes that high-dimensional data is frequently supported on or near a low-dimensional manifold. This principle influenced research in many areas such as dimensionality reduction, manifold regularization, and representation learning \citep{bengio2013representation, belkin2003laplacian, belkin2006manifold}. Recent works have illustrated that neural networks can adapt to lower-dimensional structures and approximate functions on manifolds, which partially explains their effectiveness in high-dimensional learning problems  \citep{nakada2020adaptive,schmidt2019deep}. Convergence rates of learning distributions with generative models have also been shown to adapt to the geometric properties of the manifold. For instance, \cite{Chae} develop a likelihood-based approach to understanding singular distributions and demonstrate that convergence rates depend on intrinsic dimension and smoothness. Similarly, an increasing body of work has demonstrated the adaptivity of diffusion models to distributions supported on manifolds \citep{TangYang1, azangulov2024convergence, Tang2024ConditionalDM}. These developments suggest that generative models are not only effective in modeling low-dimensional structures embedded in high-dimensional ambient spaces, but are also capable of adapting to the geometry of the data.

Although the manifold assumption has been used extensively for studying distributions on low-dimensional structures, this has often shown to be a restrictive assumption for complex data.  
For example, \cite{robinson2025token} show that token embeddings in large language models violate the manifold hypothesis, and \cite{li2025unraveling} identify  a stratified manifold structure, that is, a union of manifolds, in the embedding space. Many real-world datasets, including natural images \citep{brown2022verifying} and molecular dynamics data \citep{sule2025learning}, exhibit more complicated geometric organization, with multiple components of different dimensions that may intersect. Such spaces form the broad class of \textit{stratified spaces}. While classical manifold learning techniques and generative modeling results on manifolds provide powerful tools for studying stratified spaces,  the regularity conditions typically imposed in the manifold setting, such as smoothness and positive reach, are commonly violated in stratified spaces.  This motivates the central goal of this paper: developing a generative approach to \textit{stratified learning}, which combines a richer set of geometric objects with the adaptive power of generative models to learn distributions supported on or near a stratified space.

Previous work in learning stratified spaces have mostly followed either classical geometric or  statistical approaches. Building upon the principles of the Levina-Bickel MLE \citep{levina2004maximum}, \cite{haro} propose a Poisson mixture model that utilizes Expectation-Maximization to simultaneously estimate local intrinsic dimensions, mixture weights, and densities. From a topological perspective, \cite{bendich2012local} leverage persistent homology to cluster points into strata, providing probabilistic inference theorems for their framework. Similarly, \cite{aamari} introduce a geometric "Slabeling" algorithm to cluster by dimension via data-derived slabs, yielding convergence rates for strata, dimension, and tangent space estimators. \cite{wu2022generalized} use a discriminative deep learning model to perform clustering of a stratified space by preserving geometric structure in the latent space with knowledge of the number of strata. While fundamental, these approaches encounter numerous challenges: they can be sensitive to noise and do not provide a mechanism for learning the underlying distribution of the  data centering or lying on the stratified spaces. 
In addition, the existing approaches  are certainly not generative, i.e. they do not provide a mechanism to synthesize new samples.

Stratified learning via generative modeling introduces novel challenges that elude the standard manifold learning theory: specifically, it requires jointly learning a support consisting of varying intrinsic dimensions and smoothness while approximating hierarchical mixtures. Mixture-of-experts models are commonly used to model (hierarchical) mixture distributions \citep{jacobs1991adaptive, jordan1994hierarchical,ye2020mixtures}, and methods such as partitioning data to resemble charts of a manifold have been employed in autoencoder-based models to learn distributions on manifold structures \citep{TangYang2, alberti2024manifold}. These architectures, however, struggle to naturally cluster data according to distinct geometries in an unsupervised setting. Conversely, diffusion models have illustrated the ability to locally capture the local geometry of the data. The score function at small time scales is approximately normal to the data manifold, and approaches involving Monte Carlo approximations of the normal space or the Fokker-Planck equation have been successfully used for local intrinsic dimension estimation \citep{stanczuk2024diffusion, horvat2024gauge, kamkari2024geometric, leung2025convolutions}. While these diffusion-based estimators naturally extend to disjoint unions of manifolds given sufficient separation, they fundamentally struggle with stratified spaces, where the presence of intersections violates standard local manifold assumptions.

To our knowledge, this is the first attempt to introduce a theoretical generative modeling framework designed to learn distributions and geometric structures on stratified spaces. We first develop a likelihood approach to learning a distribution via a sieve MLE. The key difficulty in this extension from the manifold case is the singularity around intersections, where smoothness and reach conditions that are commonly assumed in the manifold setting are violated. This is addressed by partitioning the space into regions that exhibit positive reach and an intersection portion that can be controlled. We then give a diffusion-based framework and derive rates of convergence of the score approximation and Wasserstein distance between the estimator and the target distribution. Furthermore, we prove consistency of the proposed diffusion-based local intrinsic dimension estimator as well as the number of strata. The theory developed for these two generative frameworks makes it possible to dissect the ability of deep generative models to perform stratified learning and to understand the roles of singularity, geometry, and ambient noise, as well as the distinctive features of the two generative frameworks. For example, they are suitable for different regimes.  The sieve-MLE approach is most appropriate when the noise level is moderate and providing a direct estimator for both the ambient and intrinsic distribution, while likelihood-based methods become unstable in near-singular settings. By contrast, the diffusion-based framework remains well posed even without noise due to its built-in Gaussian smoothing in the forward process. Thus, diffusion models are especially well suited for learning on singular or nearly singular stratified spaces. In addition,  the local geometric information of the score field at small diffusion time can be utilized for learning the structural information of the stratified space.


The paper is organized as follows: 
Section 2 provides the setup, definitions and assumptions that will be used throughout the paper. In Section 3, we develop a likelihood approach  for stratified learning. Section 4, is devoted to a diffusion-based framework  and convergences of the score and target distribution.  Section 5 focus on the consistency of the local intrinsic dimension (LID) and number of strata estimator. In Section 6, we provide extensive numerical examples that illustrate our methods. Lastly, Section 7 contains concluding remarks. For presentation purposes, we provide proofs to all theorems in the supplementary material.

\section{Stratified Spaces and Deep Generative Models} 

In this section, we formalize the geometric and probabilistic frameworks for learning distributions on stratified spaces.

\subsection{Geometric Setup}
We assume that our data is generated from a distribution that centers around  a stratified space $\mathcal{S} \subset \mathbb{R}^D$, which is defined as a union of compact embedded manifolds (without boundary) $M_1,...,M_K \subset \mathbb{R}^D$ of intrinsic dimensions $1\leq d_k<D$. Each component is also called a \textit{stratum} of $\mathcal{S}$. The subset of singularities is denoted by $\m S_{\text{sing}}=\bigcup_{k \neq \ell} (M_k \cap M_\ell)$, and points $x \in S_{\text{sing}}$ are referred to as \textit{singular}, and \textit{regular} otherwise.

For a closed set $C \subset \mb{R}^D$ and $r>0$, the \textit{reach} of $C$ is defined as the supremum of $r$ such that $C \oplus B_r(0_D)$ has unique Euclidean projection onto $C$, where $\oplus$ denotes Minkowski summation. Stratified spaces allow for intersections between strata, so unique projection may not hold in general for any $r>0$. In this case, we say $\operatorname{reach}(\m S)=0$   and the positive-reach assumption  is  violated. We therefore impose the following regularity conditions on each stratum and subset of  singularities of $\mathcal{S}$:

\begin{assumption}[Regularity of Strata] Each stratum $M_k$ is an embedded $\beta_k$-H\"older smooth manifold for some $\beta_k \geq 2$ and with positive reach $r_k>0$ when considered as a standalone manifold. We refer to \cite{gine2021mathematical} for details about H\"older smoothness.
\end{assumption}

\begin{assumption}[Intersections] The intersections between strata are non-degenerate in the sense that for any $k,\ell$, $M_k \cap M_\ell$ is a submanifold of co-dimension at least one of both $M_k$ and $M_\ell$, and the intersections are transversal (non-tangential).
\end{assumption}

The co-dimension condition implies that the singular intersection set $\m S_{\text{sing}}$ has strictly lower intrinsic dimension, and hence has measure zero. The transversality assumption ensures that the strata cross at a positive angle, preventing them from aligning tangentially. This implies that the distance between strata diverges at a linear rate when moving away from a singular region, which is important in our asymptotic analysis. Compactness of the strata also provides a margin between non-intersecting strata, preventing disjoint strata from becoming arbitrarily close. 

\subsection{Probabilistic Setup}

We first define a distribution $Q_*$, the \emph{intrinsic distribution}, that's assumed to be supported on a stratified space $\m S$ and can be written as a mixture of distributions supported on the individual manifolds:
$$Q_*=\sum_{k=1}^K \omega_kQ_k$$
where $Q_k$ is a probability measure on $M_k$ that is absolutely continuous with respect to the volume measure $\text{vol}_{M_k}$, $\omega_k \in (0,1)$, and $\sum_k \omega_k=1$.

Real-world data may be noisy and   generally need not lie exactly on a stratified space. We suppose that we have i.i.d. samples $X_1,...,X_n \in\mathbb{R}^D$ from a distribution $P_*$, which we model as a convolution of a distribution on $\mathcal{S}$ and Gaussian noise:
$$X=Y+\epsilon$$ where $Y \sim Q_*$ with distribution $Q_*$  is the intrinsic distribution that's supported exactly on the stratified space $\mathcal S$ and $\epsilon \sim \mathcal{N}(0_D,\sigma_*^2I_D)$ with $Y$ and $\epsilon$ being independent. In some cases, we may assume that $\sigma_* =\sigma_n\to 0$ as $n\to \infty$, which we specify separately in both the likelihood and diffusion approaches.
We note that when $\sigma_* >0$, $P_*$ is absolutely continuous with respect to the Lebesgue measure, and hence has a density $p_*$. This also allows us to view $P_*$ as:
$$P_*=\Big(\sum_{k=1}^K \omega_kQ_k\Big)*\mathcal{N}(0_D,\sigma_*^2I_D)=\sum_{k=1}^K \omega_k(Q_k * \mathcal{N}(0_D,\sigma_*^2I_D)).$$
In particular, we see that the convolution with Gaussian noise happens on each component of the mixture.

\begin{remark}
    We assume one noise level $\sigma_*$ across all strata. While one could consider stratum-dependent noise levels $\sigma_k$ to model heterogeneous measurement errors (e.g., different physical constraints), this can cause issues at intersections and complicates the analysis in diffusion models, which inherently use isotropic Gaussian noise. Furthermore,
     a single noise level ensures that the local variance is attributed to intrinsic geometry as opposed to variable noise magnitudes. 
\end{remark}

\begin{assumption}[Density Smoothness]
    For each $k$, the density $q_k$ of $Q_k$ with respect to $\operatorname{vol}_{M_k}$ is $\alpha_k$-H\"older smooth for some $\alpha_k \in [0,\beta_k-1]$ and is bounded from above and below on $M_k$.
\end{assumption}

\begin{assumption}\label{as:weights}
    There exists a constant $\omega_{\min}>0$ such that $\omega_k \geq \omega_{\min}$ for all $k$, and $\omega_k$ remains constant (independent of samples size).
\end{assumption}

\begin{remark} The conditions in Assumption~\ref{as:weights} imply that for a sample size of $n$, the average sample size $n_k$ for stratum $M_k$ is $\mb{E}[n_k] =\omega_kn\asymp n$
for all $k$ and that each component can locally be recovered asymptotically. These assumptions align with standard frameworks such as in \cite{aamari} and are necessary for identifying strata against noise. While one may consider the cases where $\omega_k$ can be arbitrarily small  or the number of strata $K$ grows as $n \to \infty$ (non-parametric mixtures), such settings require additional detection analysis and are left for future work.
\end{remark}

\subsection{Neural Network Sieve}

In both settings, we will use a class of ReLU neural networks to approximate the target function: in the likelihood regime, the network parametrizes the generator (or pushforward map), while in the diffusion regime, it parametrizes the score vector field.\\

\noindent \textbf{Definition (Neural network class):} For each $m\geq 1$, $\operatorname{ReLU}:\mb{R}^m \to \mb{R}^m$ denotes the coordinate-wise map $\operatorname{ReLU}(x_1,...,x_m)=(\max\{0,x_1\},...,\max\{0,x_m\})$. For each $i=0,...,L$, define the affine map $T_i(x)=A^{(i)}x+b^{(i)}$, where $A^{(i)} \in  \mathbb{R}^{ W_{i+1}\times W_i}$  are the weight matrices, $ b^{(i)} \in \mathbb{R}^{W_{i+1}}$ are the bias vectors. The standard feedforward neural network with depth $L$ is defined as
$$f_{\theta}= T_L \circ \operatorname{ReLU}\circ T_{L-1} \circ\operatorname{ReLU} \circ \cdots \circ \operatorname{ReLU} \circ T_0$$
where $\theta=\{A^{(0)},\ldots,\,A^{(L)}, b^{(0)}, \ldots, b^{(L)}\}$ and $\circ$ denotes function composition. A class of neural networks with depth $L$, 
width vector $W=(W_0,\,W_1,\ldots,\,W_{L+1})$,
sparsity $R$, parameter bound $B$, and function norm bound $V$ is defined as $$\Phi(L, W, R, B, V)=  \{f_{\theta}(\cdot): \sum_{i=0}^L(\|A^{(i)}\|_0+\|b^{(i)}\|_0) \leq R;\, \max _i\|A^{(i)}\|_{\infty} \vee\|b^{(i)}\|_{\infty} \leq B;\, \|f\|_{\infty}\leq V\,\big\}.$$

\section{ Sieve MLE: a likelihood-based approach}

In the setting of generative modeling, variational autoencoders are trained to maximize a variational lower bound of the log likelihood function. This motivates the exploration of distributions supported on stratified spaces through a likelihood lens. A central difficulty in discussing maximum likelihood estimators in the context of manifolds and stratified spaces is that distributions supported on lower-dimensional structures are singular with respect to Lebesgue measure; that is, they are concentrated on sets of Lebesgue measure zero.  Convolving that distribution with an isotropic Gaussian creates a distribution with a positive ambient density. Thus, the addition of noise is a theoretical necessity for a likelihood-based approach. However, this introduces an inherent trade-off on the noise variance. If the noise level is too large, the intrinsic geometry of the data is obscured, making recovery of the distribution on the stratified space difficult, while too little noise makes the likelihood more sharply concentrated, leading to singularity which can cause inconsistency in the ambient distribution estimator.

\citet{Chae} analyze a sieve MLE for distributions $P_*$ with smooth and lower-dimensional manifold structure and derive Wasserstein-1 rates for recovering the intrinsic distribution $Q_*$ from noisy samples. We build on this framework by considering a \textit{mixture-of-experts sieve} of neural networks and show that $Q_*$ can be approximated by a latent pushforward of a function in this sieve. We define a class $\m P$ of distributions $P_{f,\sigma}$ inspired by the mixture-of-experts model and show that the density $p_*$ of $P_*$ can be approximated in Hellinger distance by a sieve MLE $\hat{p}=p_{\hat{f},\hat{\sigma}} \in \m P$. With a careful partition of the ambient space and suitable noise level, it can be shown that the pushforward distribution $Q_{\hat{f}}$ converges to  the intrinsic distribution $Q_*$ in Wasserstein distance.

\subsection{Hellinger Convergence of a Sieve MLE for $p_*$}

We parametrize the singular distribution $Q_*$ on $\mathcal{S}$ via a latent pushforward model. Let $Z$ be a random variable on a latent space $\mathcal{Z}$ with a simple distribution $P_Z$ (e.g., Gaussian or uniform), and $f:\mathcal{Z}\to \m S \subset\mathbb{R}^D$ be a measurable map parametrized by a deep neural network. This induces a distribution on $\mathcal{S}$ through the pushforward $Q_f=f_\#P_Z$. We model the data-generating process as
$$X=f(Z)+\epsilon$$ where $\epsilon \sim \mathcal{N}(0,\sigma_*^2I_D)$, and $\epsilon$ and $Z$ are independent. Consider an interval $[\sigma_{\min},\sigma_{\max}]$ of possible noise levels for the data. For our purposes we will consider the regime where $\sigma_*=\sigma_{*,n} \to 0$ as $n \to \infty$, which is common in the manifold learning literature. Thus, $\sigma_{\max}$ can remain constant, but $\sigma_{\min}$ can be treated as a sequence tending to zero. Let $P_{f,\sigma}$ be the distribution of $X=f(Z)+\epsilon$, and define the class of distributions of this form as
$$\mathcal{P}=\{P_{f,\sigma} \mid f \in \mathcal{F}, \sigma \in [\sigma_{\min},\sigma_{\max}]\}.$$

\noindent Note that if $\sigma >0$, then $P_{f,\sigma}$ has a Lebesgue density given by
$$p_{f,\sigma}(x) = \int \phi_\sigma(x-f(z))dP_Z(z).$$

\noindent For a class of functions $\mathcal{F}$, an interval of possible noise levels $[\sigma_{\min},\sigma_{\max}]$, and any sequence $\eta_n \to 0$, one can define a Sieve MLE as $\hat{p}=p_{\hat{f},\hat{\sigma}}$, where $(\hat{f},\hat{\sigma}) \in \mathcal{F} \times [\sigma_{\min},\sigma_{\max}]$ are such that $$\sum_{i=1}^n \log p_{\hat{f},\hat{\sigma}}(X_i) \geq \sup_{(f,\sigma) \in \mathcal{F} \times [\sigma_{\min}, \sigma_{\max}]} \sum_{i=1}^n \log p_{f,\sigma}(X_i) - \eta_n.$$

\noindent The benefit of a sieve MLE is that it allows us to discuss finite sample performance of the estimator at any sample size $n$ as well as asymptotically. While establishing optimization guarantees of such an estimator in the realm of deep learning is a challenging task, in this paper we focus purely on the theoretical statistical convergence.

\paragraph{Generator Construction.} It is important to note that a manifold is not globally parameterizable in general, so one must work at the level of local charts and glue these local functions together via a partition of unity. A distribution on a stratified space has a natural hierarchical structure: we first select a stratum, then a chart within that stratum, and finally samples points from a local function on that chart.

The target distribution $Q_*$ can be realized by a pushforward measure of a generating function $f:\mathcal{Z} \to \mathcal{S} \subset \mathbb{R}^D$ as follows. Suppose $\m S = \bigcup_{k=1}^K M_k$ and $Q_*=\sum_{k=1}^K \omega_kQ_k$, where $\dim M_k=d_k$ and $Q_k$ is a distribution supported on $M_k$. Let $\mathcal{Z}=(0,1) \times \tilde{\mathcal{Z}}$, where $\tilde{\mathcal{Z}}$ is a $d_{\max}$-dimensional latent space (with $d_{\max}=\max_k d_k$), say $(0,1)^{d_{\max}}$, equipped with a uniform distribution. We then represent a latent sample as $(w,z)$, where $w \sim \text{Uniform}(0,1)$ is a routing variable selecting the stratum and chart, and $z \in \tilde{\m Z}$ represents its intrinsic coordinates.

To construct the local maps, suppose we have an $\alpha_k$-H\"older smooth density $q_k$ with respect to the volume measure on a compact $\beta_k$-H\"older smooth manifold $M_k$ with $\alpha_k \in [0,\beta_k-1]$. One can construct an $(\alpha_k+1)$-H\"older smooth transport map $f:\m Z_k \to U_k$, where $\m Z_k$ is $d_k$-dimensional unit cube equipped with the uniform distribution and $U_k$ is a chart on $M_k$ (see Lemma 10 in \cite{Chae} and the following remark, and Lemma 12.50 in \cite{villani2008optimal}).

To extend this to stratified space, recall that $Q_*=\sum_{k=1}^K \omega_kQ_k$. Let $I_1=(0,\omega_1)$ and $I_k=[\sum_{\ell =1}^{k-1} \omega_\ell,\sum_{\ell=1}^k\omega_\ell)$ for $k=2,...,K$ be disjoint intervals of lengths $\omega_1,...,\omega_K$, respectively, partitioning $(0,1)$. Now for each fixed $k$, following \cite{Chae}, suppose we cover $M_k$ with $J_k$ charts $\{U_j\}_{j=1}^{J_k}$ and define a partition of unity $\{\tau_j(\cdot)\}_{j=1}^{J_k}$ subordinate to that cover. Let $Q_{k,j}$ be the normalized measure on $U_j$ with corresponding Hausdorff density $q_{k,j}$ and define $\tilde{q}_{k,j}(y)=c_j\tau_j(y)q_{k,j}(y)$, where $c_j=[\int \tau_j(y)dQ_{k,j}(y)]^{-1}$. Then the Hausdorff density $q_k$ of $Q_k$ can be written as a mixture $q_k(y)=\sum_{j=1}^{J_k} \pi_{k,j}\tilde{q}_{k,j}(y)$, where $\pi_{k,j}=Q_{k,j}(U_j)/c_j$ for $j=1,...,J_k$  and each component $\tilde{q}_{k,j}(y)$ is representable as a pushforward measure $f_{k,j\#}P_{\wt Z}$. Hence, further partition each of the intervals as
$I_{1,1}=(0,\omega_1\pi_{1,1})$, and for all other $(k,j)$, define
$$I_{k,j}=\left[\sum_{\ell=1}^{k-1} \omega_
\ell+\omega_k\sum_{i=1}^{j-1} \pi_{k,i}\,,\sum_{\ell=1}^{k-1} \omega_\ell +\omega_k\sum_{i=1}^j \pi_{k,i}\right).$$
Let $h_{k,j}$ be the indicator function for $I_{k,j}$.
For $(w,z) \in \mathcal{Z}=(0,1) \times \tilde{\m Z}$, define $f^*$ as 
\begin{equation}\label{true_func} f^*(w,z)=\sum_{k=1}^K \sum_{j=1}^{J_k}h_{k,j}(w)f_{k,j}(z_{1:d_k}),
\end{equation}
where $z_{1:d_k}$ denotes projection onto the first $d_k$ variables of $\tilde{\m Z}$. Then one can show that by construction, we have that $Q_*=Q_{f^*}=f^*_{\#}P_Z$ where $P_Z={\rm Uniform}(0,1)\otimes P_{\wt Z}$. 

\paragraph{Mixture-of-Experts Sieve.} To define the sieve, we fix an integer $K\geq 1$ and a bound $V>0$. For each $k=1,...,K$, we consider the class $$\mathcal{F}^{(k)}=\Phi(L^{(k)},W^{(k)},R^{(k)},1,V^{(k)})$$ consisting of neural networks of depth $L^{(k)}$, widths $W^{(k)}$, sparsity $R^{(k)}$, weights bounded by 1, and output bound $\|f\|_\infty \leq V^{(k)}$. Also, let $\mathcal{H}=\Phi(\widetilde{L},\widetilde{W},\widetilde{R},1,1)$ consist of ReLU networks bounded by 1. For a positive integer $K\geq1$ and $K$-sequence $\mathbf{J}=(J_1,...,J_K)$ with $J_k \geq 1$ for all $k=1,...,K$, define the \textit{Mixture-of-Experts sieve} to be
$$\m G^{K,\mathbf{J}}=\left\{f(w,z)=\sum_{k=1}^K\sum_{j=1}^{J_k}h_{k,j}(w)f_{k,j}(z) :h_{k,j} \in \m H, f_{k,j}\in \m F^{(k)}, \|f\|_\infty \leq V \right\}.$$
The architecture for each component of the sieve may be different for theoretical purposes, but in practice one may take them all to be the same, i.e. $\m F^{(k)} = \mathcal{F}$ for all $k=1,...,K$. Our results will depend on the sample size $n$, and hence one can consider a sequence $\{\mathcal{G}^{K,\mathbf{J}}_n\}_{n=1}^\infty$.

\begin{remark}
    Mixture-of-Experts models traditionally use a softmax routing mechanism as a means of creating a distribution over the experts, so one could consider a class $\m H$ of ReLU neural networks with softmax output over all $h_{k,j}$'s. However, ReLU routing has been considered to be a promising alternative for sparse MoE architectures \citep{wang2024remoe}. Also, note that our sieve relaxes the probability simplex constraint on the gating functions (i.e., $\sum h_{k,j}=1)$ that is generally imposed by Mixture-of-Experts models. This relaxation allows us to approximate the true indicator routing functions that partition the domain with more tractability. One could also consider a (equivalent) hierarchical mixture architecture by partitioning $(0,1)$ into the mixture weights for the strata followed by another partition of $(0,1)$ for charts. 
\end{remark}

The following lemma shows that any function $f^*$ of the form (\ref{true_func}) can be approximated arbitrarily well by functions in $\m G^{K,\mathbf{J}}$.

\begin{lemma}\label{lm:approx} Suppose $f^*(w,z)=\sum_{k=1}^K\sum_{j=1}^{J_k} h_{k,j}^*(w)f^*_{k,j}(z)$ is of the form (\ref{true_func}) over $\m Z=(0,1)^{d_{\max}+1}$, where $f^*_{k,j}$ are $(\alpha_k+1)$-H\"{o}lder smooth with $\|f^*_{k,j}\|_\infty \leq V$ and $h^*_{k,j}$ are indicator functions for all $k=1,...,K$ and $j=1,...,J_k$. Then, for every $\delta > 0$, there exists an approximation $f(w,z)=\sum_{k=1}^K\sum_{j=1}^{J_k} h_{k,j}(w)f_{k,j}(z)$ consisting of:
\begin{enumerate}
    \item Routing networks $h_{k,j} \in \m H$ with $\widetilde{L}=O(\log \delta^{-1})$, $\|\widetilde{W}\|_\infty=O(1)$, and $\widetilde{R} = O(\log \delta^{-1})$,
    \item Expert networks $f_{k,j} \in \mathcal{F}^{(k)}$ with $L^{(k)}=O(\log \delta^{-1})$, $\|W^{(k)}\|_\infty = O(\delta^{-d_k/(\alpha_k+1)})$, $R^{(k)}=O(\delta^{-d_k/(\alpha_k+1)}\log \delta^{-1})$, and $V^{(k)} = V \vee 1$,
\end{enumerate} such that $\|f-f^*\|_2 \leq \delta$.
\end{lemma}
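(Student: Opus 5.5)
We approximate the routing indicators and the expert maps separately, and then control the error of the products in $L_2(P_Z)$. The norm $\|\cdot\|_2$ is taken with respect to the uniform measure on $\m Z=(0,1)^{d_{\max}+1}$.

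Since $|h^*_{k,j}|\le 1$ and $\|f_{k,j}\|_\infty\le V\vee 1$, each summand splits as
\[
h_{k,j}f_{k,j}-h^*_{k,j}f^*_{k,j}=(h_{k,j}-h^*_{k,j})f_{k,j}+h^*_{k,j}(f_{k,j}-f^*_{k,j}).
\]
Applying the triangle inequality over the $N=\sum_k J_k$ summands gives
\[
\|f-f^*\|_2\le \sum_{k,j}\Big[(V\vee 1)\,\|h_{k,j}-h^*_{k,j}\|_{L_2(0,1)}+\|f_{k,j}-f^*_{k,j}\|_{\infty}\Big].
\]
Since $K$, $J_k$ and $V$ are fixed, it suffices to make each routing error $O(\delta)$ in $L_2$ and each expert error $O(\delta)$ in sup norm. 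The constants can then be absorbed by rescaling $\delta$.

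\textbf{Experts.} Each $f^*_{k,j}$ depends only on $z_{1:d_k}$ and is $(\alpha_k+1)$-H\"older on $(0,1)^{d_k}$, with all $D$ output coordinates bounded by $V$. We apply a standard ReLU approximation theorem for H\"older functions (Schmidt-Hieber 2020, or the form used in \cite{Chae}) to each coordinate, with the input restricted to the first $d_k$ latent coordinates. This takes one linear layer with zero weights on the other coordinates, so only $O(1)$ parameters are added. It yields networks with depth $O(\log\delta^{-1})$, width $O(\delta^{-d_k/(\alpha_k+1)})$, sparsity $O(\delta^{-d_k/(\alpha_k+1)}\log\delta^{-1})$ and weights bounded by $1$, and sup error at most $\delta$. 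We stack the $D$ coordinates in parallel, which costs only constant factors. We then clip the output to $[-V,V]$ using two ReLU layers, via $x\mapsto \max(-V,\min(V,x))$. Clipping can only reduce the error, because the target already lies in $[-V,V]$, and it enforces $\|f_{k,j}\|_\infty\le V\vee 1$. Under the weight bound $1$, the constant $V$ can be produced by summing unit weights or through depth; this costs $O(1)$ extra parameters.

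\textbf{Routing.} Each $h^*_{k,j}$ is the indicator of an interval $[a,b)\subset(0,1)$ and depends only on $w$. We approximate it by a trapezoid
\[
h(w)=\min\{1,\max\{0,(w-a)/\eta+1\}\}-\min\{1,\max\{0,(w-b)/\eta+1\}\},
\]
which equals the indicator except on two intervals of length $\eta$. Hence $\|h-h^*\|_{L_2}\le \sqrt{2\eta}$, and the choice $\eta\asymp\delta^2$ makes the routing error $O(\delta)$. The trapezoid is a composition of a constant number of ReLUs, and its outputs lie in $[0,1]$. The only obstacle is the slope $1/\eta\asymp\delta^{-2}$, which violates the weight bound $B=1$. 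We obtain the slope by repeated doubling: $O(\log\delta^{-1})$ layers of width $2$, each computing $x\mapsto 2x$ as $\mathrm{ReLU}(x)+\mathrm{ReLU}(x)$ split over unit weights, applied after shifting by $a$. The shifts use biases in $[0,1]$. This gives $\widetilde L=O(\log\delta^{-1})$, $\|\widetilde W\|_\infty=O(1)$ and $\widetilde R=O(\log\delta^{-1})$, matching item 1.

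\textbf{Main obstacle.} The delicate step is this bookkeeping under the weight constraint $B=1$: producing large slopes and the output level $V$ through depth, while keeping the width constant and the sparsity logarithmic. We also need to handle the case $\alpha_k+1$ non-integer when citing the H\"older approximation theorem. Finally, we check the global bound $\|f\|_\infty\le V$ required by the sieve. The pieces $h_{k,j}$ are supported on almost disjoint intervals, but their $\eta$-transition regions may overlap at common endpoints, so the crude bound is $\|f\|_\infty\le 2V$. We fix this either by shrinking each trapezoid inward so that it is supported inside $[a,b)$, at the same $O(\sqrt\eta)$ cost, or by noting that at most two pieces are active at any $w$ and that their weights sum to at most $1$. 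With the inward-shrunk trapezoids the weights sum to at most $1$ pointwise, so $\|f\|_\infty\le V$ and $f\in\m G^{K,\mathbf J}$.
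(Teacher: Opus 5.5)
Your proposal is correct and follows essentially the same route as the paper. Both use a product split with the triangle inequality (yours is the mirror image of the paper's), a standard H\"older ReLU approximation for the experts (the paper simply cites Lemma 5 of \cite{Chae}), and routing trapezoids of slope $1/\eta$ with $\eta\asymp\delta^2$. These trapezoids are built from $O(\log\delta^{-1})$ width-two doubling layers under the unit weight bound, giving $L_2$ error $\sqrt{2\eta}$.

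Your extra check that $\sum_{k,j}h_{k,j}\le 1$ pointwise is a genuine improvement. For your trapezoids it follows by telescoping, since consecutive intervals share endpoints. The paper's gates equal $1$ on the closed interval $[a,b]$ and extend $\xi$ beyond it, so neighbouring gates overlap with total weight up to $2$, and the paper never verifies the sieve constraint $\|f\|_\infty\le V$.
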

\begin{remark}
    Unlike the $L^\infty$ approximation required in \cite{Chae}, Lemma~\ref{lm:approx} relaxes this to an $L^2$ approximation of $f^*$. This is a necessity since indicator functions jump from 0 to 1, making such approximation by (continuous) ReLU functions impossible in $L^\infty$. This does not cause issues with their proof of Theorem 3 for Hellinger distance since it only requires an $L^2$-approximation error to give an upper bound on the KL divergence between the true density $p_*$ and the model density $p_{f,\sigma_*}$.
\end{remark}

Lemma~\ref{lm:approx} shows that our mixture-of-experts sieve is rich enough to approximate the true generative map, effectively controlling the approximation error. The stochastic error of the estimator can be analyzed via the bracketing entropy. Let $N_{[]}(\delta,\mathcal{P},d_H)$ denote the minimum number of $\delta$-brackets in the Hellinger metric $d_H$ to cover the density class $\m P$. Based on the structure of a function $f \in \mathcal{G}^{K,\mathbf{J}}$, the complexity of $f$ is governed by the complexities of the routing class $\mathcal{H}$ and the expert classes $\mathcal{F}^{(k)}$. Together, these yield an upper bound on the bracketing entropy via the standard covering numbers $N(\epsilon,\m F,\|\cdot\|_\infty)$ with respect to the supremum norm:

\begin{lemma}\label{lm:covering}
    Let $\m G^{K,\mathbf{J}}$ be the Mixture-of-Experts sieve with $\|f\|_\infty \leq V$, and let $M=\sum_{k=1}^K J_k$. Then there exist constants $c,c',c''$ depending only on $D,V,M,\sigma_{\max}$ and $\delta_*=\delta_*(D)$ such that for every $\delta \in (0,\delta_*]$,
    \begin{align*}
        \log N_{[]}(\delta, \mathcal{P},d_H) &\leq M\log N\left (c\sigma_{\min}^{D+3}\delta^4, \mathcal{H},\|\cdot\|_\infty\right)\\
        &+ \sum_{k=1}^K J_k\log N\left(c'\sigma_{\min}^{D+3}\delta^4, \mathcal{F}^{(k)}, \|\cdot\|_\infty\right) + \log \Big(\frac{c''}{\sigma_{\min}^{D+2}\delta^4}\Big).
    \end{align*}
\end{lemma}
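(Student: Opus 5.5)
The plan is to reduce to the bracketing-entropy argument of \citet{Chae} for the class $\{p_{f,\sigma}\}$. That argument bounds Hellinger brackets for Gaussian-convolution densities using a sup-norm cover of the generator class together with a grid over $[\sigma_{\min},\sigma_{\max}]$. The only new ingredient is that the generator class $\m G^{K,\mathbf{J}}$ is built from products and sums of routing and expert networks, so its sup-norm covering number has to be controlled by those of $\m H$ and $\m F^{(k)}$.

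\textbf{Step 1: covering the sieve in sup norm.} Take $\epsilon$-nets $\{h^{(a)}\}$ of $\m H$ and $\{f^{(b)}_k\}$ of each $\m F^{(k)}$ in $\|\cdot\|_\infty$, and pick such a net independently for each index pair $(k,j)$. For $f=\sum_{k,j}h_{k,j}f_{k,j}$ and approximants $h'_{k,j},f'_{k,j}$, write
$$h_{k,j}f_{k,j}-h'_{k,j}f'_{k,j}=(h_{k,j}-h'_{k,j})f_{k,j}+h'_{k,j}(f_{k,j}-f'_{k,j}).$$
Since $\|h\|_\infty\le1$ and $\|f_{k,j}\|_\infty\le V^{(k)}\lesssim V$, this gives $\|f-f'\|_\infty\le M(1+V)\epsilon$. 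Hence
$$\log N\big(M(1+V)\epsilon,\m G^{K,\mathbf{J}},\|\cdot\|_\infty\big)\le M\log N(\epsilon,\m H,\|\cdot\|_\infty)+\sum_kJ_k\log N(\epsilon,\m F^{(k)},\|\cdot\|_\infty).$$
The net elements may violate the constraint $\|f\|_\infty\le V$. This is handled by the standard step of replacing each net element with a nearby member of the class, which costs a factor of $2$ in $\epsilon$. The factor $M(1+V)$ is absorbed into the constants $c,c'$.

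\textbf{Step 2: from sup-norm covers to Hellinger brackets.} Following Chae et al., fix $f_0$ and $\sigma_0$ on a grid. Then $|f(z)-f_0(z)|\le\epsilon$ and $|\sigma-\sigma_0|\le\epsilon'$ allow one to sandwich $\phi_\sigma(x-f(z))$ between envelope functions of the form
$$\ell(x)=(1-\text{small})\,\phi_{\sigma_0+\epsilon'}\big(\cdot\big)\cdot\text{local min},\qquad u(x)=\text{an analogous upper envelope}.$$
Integrating over $P_Z$ gives brackets $[\ell,u]$ for $p_{f,\sigma}$. One then bounds $d_H(\ell,u)^2\le\int(u-\ell)$, using an $L^1$ estimate of order $\sigma_{\min}^{-(D+3)/2}$-type powers times $\sqrt{\epsilon}$ on a large ball, plus a tail bound outside the ball that uses $\|f\|_\infty\le V$. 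Choosing $\epsilon\asymp\sigma_{\min}^{D+3}\delta^4$ makes the bracket width at most $\delta$.

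The $\sigma$-grid has cardinality of order $\sigma_{\max}/(\sigma_{\min}^{D+2}\delta^4)$, which produces the final $\log(c''/(\sigma_{\min}^{D+2}\delta^4))$ term. The restriction $\delta\le\delta_*(D)$ is needed so that the truncation radius and tail terms behave as in that argument. Since this step depends only on $\|f-f_0\|_\infty$ and on $\|f\|_\infty\le V$, and never on the internal structure of $f$, I would invoke their lemma essentially verbatim, with $\m F$ replaced by $\m G^{K,\mathbf{J}}$.

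\textbf{Main obstacle.} The delicate part is Step 2's exponent bookkeeping, namely confirming that the $\sigma_{\min}^{D+3}\delta^4$ scaling survives without changes. I expect it does, because the argument is generator-agnostic. The point to check is that the routing functions, which approximate indicators, enter only through the sup-norm covering. We never need Lipschitz continuity of $f$ in $z$: the envelopes are built pointwise in $z$ and then integrated against $P_Z$, so discontinuous or steep $h_{k,j}$ cause no problem. Combining Steps 1 and 2 yields the stated bound.
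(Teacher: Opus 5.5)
Your proposal is correct and follows the paper's proof. Both arguments cover $\m G^{K,\mathbf{J}}$ in sup norm by combining per-$(k,j)$ nets of $\m H$ and $\m F^{(k)}$ through a product-rule splitting. Both then invoke the bracketing lemma of \citet{Chae}, which uses only boundedness of the generator class, at scale $\epsilon\asymp\sigma_{\min}^{D+3}\delta^4$. Your factor-$2$ replacement of out-of-class net elements handles a detail the paper leaves implicit, and your sketch of the inside of Step 2 is not needed, since you (like the paper) use that lemma verbatim.
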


\noindent The approximation and entropy established from Lemma~\ref{lm:approx} and Lemma~\ref{lm:covering} lead to the following theorem on the convergence in Hellinger distance of the sieve MLE $\hat{p}$ of $p_*$. The convergence rate of $d_H(\hat{p},p_*)$ is effectively determined by the worst-rate stratum:

\begin{theorem}\label{th:hellinger}
    Let $n\geq 1$ and $\gamma >0$. Suppose Assumptions 1-4 hold and assume $\sigma_*\in [\sigma_{\min},\sigma_{\max}]$ with $\sigma_{\min}\leq 1$. Define $j_* = \arg \max_k \frac{d_k}{\alpha_k+1}$ and
    $$\delta_{app}:=\Big(\frac{\sigma_*^2}{n}\Big)^{(\alpha_{j_*}+1)/(2(\alpha_{j_*}+1)+d_{j_*})}.$$ Then there exists $\m H$ and $\{\m F^{k}\}_{k=1}^K$ that defines the Mixture-of-Experts sieve $\mathcal{G}^{K,\mathbf{J}}$  so that
    \begin{enumerate}
        \item  $\log N(\delta,\mathcal{H},\|\cdot\|_\infty)\leq \widetilde{R}(A+1 \vee \log \delta^{-1})$ and $\log N(\delta,\mathcal{F}^{(k)},\|\cdot\|_\infty)\leq R^{(k)}(A+1 \vee \log \delta^{-1})$ for $k=1,...,K$, where  $\tilde{R} = O(\log \delta^{-1}_{\text{app}})$ is the routing sparsity,  $R^{(k)}=O(\delta_{\text{app}}^{-d_k/(\alpha_k+1)}\log \delta_{\text{app}}^{-1})$  is the expert sparsity, and  $A = O((\log n)^2)$.
        \item There exists $f\in \mathcal{G}^{K,\mathbf{J}}$ such that $\|f-f^*\|_2\leq \delta_{app}$.
        \item  Let $\delta_*$ be given as in Lemma~\ref{lm:covering}. Then there exist some constants $C_1,C_2,C$ so that the sieve MLE $\hat{p}$ satisfies
        $$P_*(d_H(\hat{p},p_*) > \epsilon_n^*) \leq 5e^{-C_1n\epsilon_n^{*2}} + \frac{C_2}{n},$$ where  $$\epsilon_n^*=C\left( \sqrt{\frac{(\widetilde{R}+\sum_{k}J_kR^{(k)})(A+\log(n/\sigma_{\min}))}{n}} \vee \frac{\delta_{\text{app}}}{\sigma_*}\right).$$ 
        provided that $\eta_n \leq \epsilon_n^{*2}/6$ and $\epsilon_n^* \leq \sqrt{2}\delta_*$.
    \end{enumerate}
     
\end{theorem}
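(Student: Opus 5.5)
We will follow the general sieve-MLE strategy of \citet{Chae} (their Theorem 3), which derives Hellinger rates for sieve MLEs of the form $p_{\hat f,\hat\sigma}$ from two inputs. The first is an entropy bound for $\m P$. The second is a KL/$L^2$ approximation of $p_*$ by some $p_{f,\sigma_*}$ in the sieve. The proof then consists of verifying these two inputs for the Mixture-of-Experts sieve with $\delta=\delta_{app}$ and plugging them into a Wong--Shen type inequality.

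\emph{Step 1 (architecture and item 2).} Apply Lemma~\ref{lm:approx} with $\delta=\delta_{app}$ to the generator $f^*$ from (\ref{true_func}). This is legitimate since each $f_{k,j}$ is $(\alpha_k+1)$-H\"older by the transport-map construction and $Q_*=f^*_\#P_Z$. The lemma produces networks $h_{k,j}\in\m H$ with $\widetilde R=O(\log\delta_{app}^{-1})$ and $f_{k,j}\in\m F^{(k)}$ with $R^{(k)}=O(\delta_{app}^{-d_k/(\alpha_k+1)}\log\delta_{app}^{-1})$, and the resulting $f\in\m G^{K,\mathbf J}$ satisfies $\|f-f^*\|_2\le\delta_{app}$. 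Define $\m H,\m F^{(k)}$ with exactly these parameters; this gives item 2.

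\emph{Step 1 (item 1).} Use the standard covering bound for sparse ReLU classes with weights bounded by $1$ (Schmidt-Hieber type):
$$\log N(\delta,\Phi,\|\cdot\|_\infty)\le (R+1)\log\bigl(2\delta^{-1}(L+1)\textstyle\prod_i(W_i+1)^2\bigr).$$
Since the depths are $O(\log\delta_{app}^{-1})=O(\log n)$ and the widths are polynomial in $n$, the $\log$ of the product is $O((\log n)^2)$. This yields the term $A=O((\log n)^2)$ and hence item 1.

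\emph{Step 2 (entropy).} Insert item 1 into Lemma~\ref{lm:covering}. Evaluated at scale $c\sigma_{\min}^{D+3}\delta^4$, each log-covering term becomes $R(A+1\vee\log(1/(\sigma_{\min}\delta)))$ up to constants. Consequently
$$\log N_{[]}(\delta,\m P,d_H)\lesssim \Bigl(\widetilde R M+\sum_k J_kR^{(k)}\Bigr)\bigl(A+\log(1/(\sigma_{\min}\delta))\bigr).$$
The bracketing integral $\int_{\epsilon^2}^{\epsilon}\sqrt{\log N_{[]}}$ is therefore bounded by $\epsilon\sqrt{(\widetilde R+\sum_kJ_kR^{(k)})(A+\log(n/\sigma_{\min}))}$, absorbing $M$ into $C$ and taking $\epsilon\ge n^{-1/2}$. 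The Wong--Shen condition $\int\le c\sqrt n\,\epsilon^2$ then holds for $\epsilon$ equal to the first term of $\epsilon_n^*$.

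\emph{Step 3 (approximation error, the main obstacle).} We must show that $p_*$ is close to the sieve in a sense strong enough for the theorem. Closeness in Hellinger distance is not enough: Wong--Shen requires control of $K(p_*,p)$ and of a second-moment or tail quantity such as $\int p_*\log^2(p_*/p)$. We use convexity of KL in the mixing measure via the coupling $(f^*(Z),f(Z))$:
$$K(p_{f^*,\sigma_*},p_{f,\sigma_*})\le \mathbb E\,K\bigl(\m N(f^*(Z),\sigma_*^2),\m N(f(Z),\sigma_*^2)\bigr)=\frac{\|f-f^*\|_2^2}{2\sigma_*^2}\le\frac{\delta_{app}^2}{2\sigma_*^2}.$$
This is exactly where the $L^2$ relaxation in the remark after Lemma~\ref{lm:approx} is used. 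The second moment of the log ratio is handled as in \citet{Chae}. Since $\|f\|_\infty,\|f^*\|_\infty\le V$, both densities are Gaussian mixtures with centers in a bounded set, so
$$|\log(p_*/p)(x)|\lesssim(1+\|x\|^2)/\sigma_*^2.$$
Truncating at $\|x\|\lesssim\sigma_*\sqrt{\log n}$ leaves a tail contribution of order $1/n$; this is the source of the $C_2/n$ term, obtained via a Chebyshev/Markov bound on the empirical log-likelihood ratio at the approximant. Balancing then gives the second term $\delta_{app}/\sigma_*$ of $\epsilon_n^*$. The choice of $\delta_{app}$, with $j_*$ the worst stratum, equates this bias with the variance term up to logarithms.

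\emph{Step 4 (conclusion).} Combine Steps 2 and 3. Using the optimization slack $\eta_n\le\epsilon_n^{*2}/6$ and the range restriction $\epsilon_n^*\le\sqrt2\delta_*$ from Lemma~\ref{lm:covering}, the Wong--Shen inequality gives the exponential bound $5e^{-C_1n\epsilon_n^{*2}}$. Together with the $C_2/n$ deviation of the likelihood ratio at $p_{f,\sigma_*}$ from Step 3, this yields item 3.
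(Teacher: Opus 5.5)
Your Steps 1--2 and the KL bound in Step 3 coincide with the paper's argument. Your Schmidt-Hieber covering bound is in fact the appropriate justification for item 1. The gap is in the rest of Step 3, which is exactly what must produce the $C_2/n$ term. In the form of Wong--Shen's Theorem 4 that the paper uses, the bound reads
$$P_*\bigl(d_H(\hat p,p_*)>\epsilon_n^*\bigr)\le 5e^{-c_2n\epsilon_n^{*2}}+\frac{\tau_n}{n\delta_n},\qquad \epsilon_n^*=\epsilon_n\vee\sqrt{12\delta_n}.$$
Here $\delta_n$ bounds $K(p_*,p_{f,\sigma_*})$ and $\tau_n$ bounds $\int p_*\log^2(p_*/p_{f,\sigma_*})$. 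The remainder is a second-moment (Chebyshev-type) bound for the empirical log-likelihood ratio at the approximant. It is $O(1/n)$ only because $\tau_n\lesssim\delta_n$: the second moment of the log ratio must shrink with $\|f-f^*\|_2^2$ exactly as the KL divergence does.

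Your pointwise bound $|\log(p_*/p)(x)|\lesssim(1+\|x\|^2)/\sigma_*^2$ does not shrink with $\|f-f^*\|_2$. It gives only $\tau_n\lesssim\sigma_*^{-4}$, hence a remainder of order $1/(n\sigma_*^2\delta_{app}^2)$. Even for fixed $\sigma_*$ this decays only like $n^{-d_{j_*}/(2(\alpha_{j_*}+1)+d_{j_*})}$, not like $1/n$. Truncation does not repair this. The excluded tail has small probability, but on the retained region the log ratio is still only $O(V^2/\sigma_*^2)$. Concentrating the empirical average at the required level then needs a variance bound proportional to the approximation error, which is the very estimate that is missing. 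Also, $P_*$ lives near $\m S\subset B(0,V)$, so the region would have to be $\|x\|\le V+C\sigma_*\sqrt{\log n}$, not $\|x\|\lesssim\sigma_*\sqrt{\log n}$. So the Gaussian tail is not where $C_2/n$ comes from.

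The missing idea is to handle the second moment through the mixture structure, just as you handled KL. The paper uses the mixture inequality from Ghosal--van der Vaart (Example B.12, (B.17), Exercise B.8):
$$\int p_*\log^2\frac{p_*}{p_{f,\sigma_*}}\lesssim\int\!\!\int\Bigl(\log\frac{\phi_{\sigma_*}(x-f^*(z))}{\phi_{\sigma_*}(x-f(z))}\Bigr)^2\phi_{\sigma_*}(x-f^*(z))\,dx\,dP_Z(z)+K(p_*,p_{f,\sigma_*}).$$
It combines this with the explicit Gaussian identity $\int\bigl(\log\frac{\phi_\sigma(x)}{\phi_\sigma(x-y)}\bigr)^2\phi_\sigma(x)\,dx=\frac{\|y\|^4}{4\sigma^4}+\frac{\|y\|^2}{\sigma^2}$, with $y=f^*(z)-f(z)$. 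This yields $\tau_n\lesssim\delta_{app}^2/\sigma_*^2\asymp\delta_n$, and hence $\tau_n/(n\delta_n)\le C_2/n$.

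The quartic term is $\lesssim(\delta_{app}/\sigma_*)^4$, which is harmless because $\delta_{app}/\sigma_*\lesssim\epsilon_n^*$. This requires $\|f-f^*\|_4\lesssim\delta_{app}$, which Lemma~\ref{lm:gate_approx} gives at only logarithmic cost by taking the gate tolerance $\xi\asymp\delta_{app}^4$. Finally, the term $\delta_{app}/\sigma_*$ in $\epsilon_n^*$ is just $\sqrt{12\delta_n}$ up to constants; balancing only motivates the choice of $\delta_{app}$.
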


In the case that $\sigma_* = n^{-\gamma}$ and $\sigma_{\min} = n^{-\zeta}$ for some $\zeta \geq \gamma \geq 0$, then the choice of $\delta_{\text{app}}$ optimizes $\epsilon_n^*$ in Theorem~\ref{th:hellinger}, which implies that
$$\epsilon_n^* =Cn^{-\frac{(\alpha_{j_*}+1)-d_{j_*}\gamma}{2(\alpha_{j_*}+1) + d_{j_*}}}(\log n)^{3/2}.$$
Observe that if $\alpha_{j_*}+1-d_{j_*}\gamma \leq 0$, then distributional convergence is not guaranteed. This is in particular the case when $\gamma$ is relatively large, so that the noise becomes too small quickly.   In that regime the target distribution becomes nearly singular, which can lead to extremely large likelihood values and makes MLE-based methods unstable. Hence, we require that the noise does not decrease too fast for Theorem~\ref{th:hellinger} and the following section.

\subsection{Convergence of Sieve MLE for $Q_*$}

The sieve MLE framework is easily adapted for  estimating the intrinsic distribution $Q_*$ on $\m S$. A sieve MLE $P_{\hat{f},\hat{\sigma}}$ for $P_*$ naturally defines an estimator $Q_{\hat{f}}$ for $Q_*$. One of the standard methods of recovering an intrinsic distribution on a manifold from an ambient noisy one involves utilizing its reach to project onto the manifold as a form of deconvolution. The reach of the stratified space is 0 if two or more strata intersect, so a tubular neighborhood where projection is unique cannot exist. To circumvent this, we perform excision to partition $\m S$ into regular and singular regions. The regular regions in particular will have positive reach, making standard projection-based results such as Theorem 7 in \cite{Chae} applicable. Formally, for a  $\rho>0$ (which we chose to be vanishing with $\sigma_*$) and for each $k=1,...,K$, let
$$A_k=\left\{x \in M_k \big|\  \text{dist}\Big(x,\bigcup_{\ell \neq k} M_\ell\Big) >2\rho\right\}, \quad A=\bigcup_{k=1}^K A_k,\quad\text{ and }\quad  I_{\mathrm{sing}} = \mathcal{S} \setminus \bigcup_k A_k.$$
Then each $A_k$ is now a closed subset with positive reach, and $I$ contains points in intersections and points $2\rho$-close to two or more manifolds that contributes an error on the order of $\rho$. This is formalized in the following theorem:



\begin{theorem}\label{th:wasserstein}
    Let $\mathcal{S}=\bigcup_{k=1}^K M_k$ be the closure of $f^*(\mathcal{Z})$. Suppose that $||f^*||_\infty \leq V$ for some constant $V$. Assume that $\mathcal{S}$ does not have an interior point in $\mathbb{R}^D$ and $\text{reach}(M_k) = r_k$ for some constants $r_k>0$ when considered as standalone manifold. 
     Further assume that there exist constants $c>0$ and  $\rho_1>0$ such that when $\rho\leq \rho_1$, $Q_*(A)>c$. If $d_H(p_{f,\sigma},p_*) \leq \epsilon \leq 1$ and $\|f\|_\infty \leq V$, then
    $$W_1(Q_{f},Q_*)\leq C(\epsilon+\sigma_*\sqrt{\log \epsilon^{-1}})$$ for some constant $C$ depending on $c,D,K,V$.
\end{theorem}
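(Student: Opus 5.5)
The plan is to reduce to the manifold argument of Theorem 7 in \cite{Chae} by excision, with $\rho$ chosen of order $\sigma_*\sqrt{\log\epsilon^{-1}}$. First, I pass from Hellinger to total variation. Write $P_f=P_{f,\sigma}$. Then $\|P_f-P_*\|_{TV}\le \sqrt2\, d_H(p_{f,\sigma},p_*)\le \sqrt2\,\epsilon$. Both $Q_f$ and $Q_*$ are supported in the ball of radius $V$, so any $1$-Lipschitz test function $g$ may be recentred to satisfy $\|g\|_\infty\le 2V$ on that ball. Next I would build a deconvolution map $\pi:\mathbb{R}^D\to\mathbb{R}^D$ that is $1$-Lipschitz (or Lipschitz with a constant depending only on $c,D,K,V$) onto the relevant region. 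Applying it to $X=f(Z)+\epsilon'$ should approximately recover $f(Z)$, and applying it to $Y+\epsilon$ should approximately recover $Y$, outside events of small probability.

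Concretely, take the tubes $T_k=\{x:\operatorname{dist}(x,A_k)<\rho\}$. By the definition of $A_k$, these tubes are pairwise disjoint and separated by distance at least $\rho$. By Assumption 2, transversality gives $\operatorname{reach}(A_k)\gtrsim \min(r_k,\cdot)$ uniformly for small $\rho$, so $\rho< r_k$ makes the metric projection $\mathrm{pr}_k$ onto $M_k$ unique and Lipschitz on $T_k$. For $Y\sim Q_*$ restricted to $A$, the Gaussian tail bound gives $P(|\epsilon|>\rho)\lesssim \epsilon^{2}$ once $\rho=C_0\sigma_*\sqrt{\log\epsilon^{-1}}$ (using $\sigma_*\le\sigma_{\max}$). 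On the complementary event, $\mathrm{pr}(Y+\epsilon)$ lies within $O(|\epsilon|)$ of $Y$. The mass $Q_*(I_{\rm sing})$ I bound by $O(\rho)$: $I_{\rm sing}\cap M_k$ is the set within $2\rho$ of a codimension-$\ge1$ submanifold, and transversality makes its volume $O(\rho)$, with $q_k$ bounded above. On $I_{\rm sing}$ I simply pay $2V$ per unit mass.

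The comparison then goes as follows. For $1$-Lipschitz $g$, write $\int g\,dQ_f-\int g\,dQ_*$ as the sum of three differences: $\mathbb{E}g(f(Z))-\mathbb{E}g(\mathrm{pr}(X_f))$, then $\mathbb{E}g(\mathrm{pr}(X_f))-\mathbb{E}g(\mathrm{pr}(X_*))$, then $\mathbb{E}g(\mathrm{pr}(X_*))-\mathbb{E}g(Y)$. Here $\mathrm{pr}$ is a bounded measurable map, equal to $\mathrm{pr}_k$ on $T_k$ and to a fixed point elsewhere. The middle term is at most $2V\|P_f-P_*\|_{TV}\lesssim\epsilon$. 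The last term is $O(\sigma_*+\rho+\epsilon)$ by the previous paragraph.

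The first term is the main obstacle, because $Q_f$ is not known to lie on $\mathcal S$. Its support is only $f(\mathcal Z)$, with $\|f\|_\infty\le V$. Following \cite{Chae}, I would use the fact that $P_f$ is within $\epsilon$ of $P_*$ in TV, together with the fact that $P_*$ puts mass $\ge 1-O(\epsilon^2)$ within distance $\rho$ of $\mathcal S$. From this I want to show that $Q_f$ puts mass $\ge 1-O(\epsilon)$ within $O(\rho)$ of $\mathcal S$, and that most of it lies near $A$ in a way compatible with $Q_*$. The route is a Gaussian anti-concentration argument: if $f(Z)$ were at distance $>2\rho$ from $\mathcal S$ with probability $\eta$, then $X_f$ would lie outside the $\rho$-tube with probability $\gtrsim\eta$, since a Gaussian with scale $\sigma\le\sigma_{\max}$ has a constant-probability half-space escape. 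Hence $\eta\lesssim\epsilon$. This is where the hypotheses $Q_*(A)>c$ and that $\mathcal S$ has no interior are used: they keep the constant uniform and prevent $\hat\sigma$ from being absorbed.

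The remaining case is points of $f(Z)$ near $I_{\rm sing}$. There I would use TV closeness on the $3\rho$-neighbourhood of $I_{\rm sing}$, whose $P_*$-mass is $O(\rho+\epsilon)$, so those points cost $O(V(\rho+\epsilon))$. Points near $A_k$ satisfy $|\mathrm{pr}(f(Z)+\epsilon')-f(Z)|\lesssim|\epsilon'|+\mathrm{dist}(f(Z),\mathcal S)$, and the extra noise scale $\sigma$ contributes $O(\sigma)$. I expect the hardest step to be controlling $\sigma$ against $\sigma_*$. For this I would follow Chae et al.'s lemma comparing noise levels via Hellinger closeness, giving $|\sigma-\sigma_*|\lesssim\epsilon+\sigma_*$. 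Summing all terms yields $W_1(Q_f,Q_*)\le C(\epsilon+\rho)=C(\epsilon+\sigma_*\sqrt{\log\epsilon^{-1}})$.
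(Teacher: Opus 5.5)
\textbf{Comparison with the paper.} Your three-term telescoping, through a piecewise projection and total variation, is a legitimate alternative to the paper's route. The paper instead splits $Q_f$ and $Q_*$ into regular and singular pieces via Lemma~\ref{lm:wasser}, bounds the masses $a_2,b_2$, and runs the argument of Theorem~7 of \cite{Chae} on the renormalized densities over $A$, using $Q_*(A)>c$ for the renormalization.

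\textbf{The gap: no valid bound on $\sigma$.} The step that carries the whole estimate, an upper bound on the fitted noise level $\sigma$, is not actually supplied. You attribute to \cite{Chae} a lemma giving $|\sigma-\sigma_*|\lesssim\epsilon+\sigma_*$. The argument in their Theorem~7, which the paper also invokes, yields only $\sigma\le 2t_*$ with $t_*=(2D\sigma_*^2\log(D/\epsilon))^{1/2}$. It is also proved for a support of positive reach, whereas here $\operatorname{reach}(\mathcal S)=0$; this is exactly why the paper applies it only after restricting to $A$.

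Without $\sigma\lesssim t_*$, two of your steps fail:
\begin{itemize}
\item Your first term contains $\mathbb E\|\epsilon'\|\asymp\sigma\sqrt D$, which a priori is not small.
\item Your bound on the $Q_f$-mass near $I_{\mathrm{sing}}$ needs $\epsilon'$ to stay within $O(\rho)$ with probability bounded below, which also requires $\sigma$ small.
\end{itemize}
The anti-concentration step does not rescue this. Landing in the half-space pointing away from the nearest point of $\mathcal S$ does not keep $f(Z)+\epsilon'$ outside $\mathcal S^{\rho}$, because other sheets or strata may lie in that half-space. When $\sigma\asymp\rho$, genuine geometry is needed. You have also placed this step before the $\sigma$ bound it depends on.

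\textbf{How to close it directly for the union.} Prove a thin-tube estimate: there are $t_0,C>0$ such that $\sup_{w}\mathcal N(w,\sigma^2 I_D)(\mathcal S^{t})\le 1/2$ whenever $t\le t_0$ and $\sigma\ge Ct$.
\begin{itemize}
\item For $\sigma$ below a fixed fraction of $\min_k r_k$, Federer's bound $\operatorname{dist}(q-p,T_pM_k)\le\|q-p\|^2/(2r_k)$ places $M_k^{t}\cap B(w,R\sigma)$ inside a slab of width $O(t+R^2\sigma^2/r_k)$. That slab has Gaussian mass $O(t/\sigma+R^2\sigma/r_k)$; then sum over $k\le K$.
\item For larger $\sigma$, the Gaussian density is bounded and $\operatorname{Leb}(\mathcal S^{t})\to 0$.
\end{itemize}
Since $P_{f,\sigma}(\mathcal S^{t_*})\ge 1-(1+\sqrt2)\epsilon$, this forces $\sigma\le Ct_*$.

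\textbf{What your argument then reduces to.} Once $\sigma\le Ct_*$ is in hand, your proof becomes short:
\begin{itemize}
\item On $\{f(Z)+\epsilon'\in T_k\}$ you have $\|\mathrm{pr}_k(f(Z)+\epsilon')-f(Z)\|\le\|\epsilon'\|+\rho$, because $\operatorname{dist}(f(Z)+\epsilon',M_k)<\rho$.
\item By total variation, $P_{f,\sigma}\big((\bigcup_kT_k)^c\big)\le P_*\big((\bigcup_kT_k)^c\big)+\sqrt2\epsilon\lesssim\rho+\epsilon$.
\end{itemize}
So you need neither Lipschitz projections, nor anti-concentration, nor any information about where $f(Z)$ lies. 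In fact, a measurable nearest-point selection onto all of $\mathcal S$ would let you drop the excision as well.

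\textbf{Minor point.} The tubes $T_k$ are disjoint but need not be $\rho$-separated, since you only know $\operatorname{dist}(A_k,A_\ell)\ge2\rho$. Disjointness is all you use, so this does not affect the argument.
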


\begin{remark}
    Although the bound in Theorem~\ref{th:wasserstein} is of the same order as the one in Theorem 7 in \cite{Chae}, the constant is potentially much larger depending on the separation of the strata. While we do not assume that the stratified space has positive reach, our proof handles the singularity by removing a neighborhood around it, and this excision introduces additional dependence on the true noise level $\sigma_*$.
\end{remark}

We again emphasize the delicacy of the true noise level $\sigma_*$. If $\sigma_*$ is too large, the geometric information from the stratified structure can become obscured, which degrades the statistical recovery of $Q_*$. Conversely, if $\sigma_*$ is too small, the density $p_*$ becomes sharply concentrated and the Hellinger distance $d_H(\hat{p},p_*)$ may not converge, as discussed following Theorem~\ref{th:hellinger}. In the small-noise regime, one can apply noise injection (data perturbation) as in \cite{Chae} to regularize the density. The process involves fixing a perturbation level $\tilde{\sigma}>0$ and constructing perturbed observations $\tilde{X}_i = X_i + \tilde{\epsilon}_i$, where $\tilde{\epsilon}_i \sim \mathcal{N}(0,\tilde{\sigma}^2I_D)$. The perturbed data remain i.i.d. with respect to the regularized distribution $\tilde{P}_*=P_{f_*,\tilde{\sigma}_*}$, where $\tilde{\sigma}_*^2=\sigma_*^2 + \tilde{\sigma}^2$. By applying Theorem~\ref{th:hellinger} and Theorem~\ref{th:wasserstein} to $\tilde{X}_1,...,\tilde{X}_n$, we obtain a sieve MLE $\hat{P}_{\text{per}}=P_{\hat{f}_{\text{per}},\hat{\sigma}_{\text{per}}}$ for the ambient distribution and hence a sieve MLE $\hat{Q}_{\text{per}} = \hat{Q}_{\hat{f}_{\text{per}}}$ for $Q_*$.

Furthermore, suppose that $\sigma_*=n^{-\gamma}$ and $\sigma_{\min}=n^{-\zeta}$. Let $\tilde{\sigma}=n^{-(\alpha_{j_*}+1)/2(\alpha_{j_*}+1+d_{j_*})}$. By applying Theorem~\ref{th:wasserstein} on the perturbed distribution and adapting the argument in the proof of Theorem 9 in \cite{Chae}, we obtain 
$$P_*\left(W_1(\hat{Q}_{\text{per}},Q_*) >C_3\left( \epsilon_n^*+\sigma_*\sqrt{-\log \epsilon_n^*}\right)\right) \leq 5e^{-C_1n\epsilon_n^{*2}} + \frac{C_2}{n},$$
where $$\epsilon_n^* \asymp \begin{cases}
    n^{-\frac{\alpha_{j_*}+1-d_{j_*}\gamma}{2(\alpha_{j_*}+1)+d_{j_*}}}(\log n)^{3/2}& \text{if }\gamma<(\alpha_{j_*}+1)/(2(\alpha_{j_*}+1+d_{j_*})),\\
    n^{-\frac{\alpha_{j_*}+1}{2(\alpha_{j_*}+1+d_{j_*})}}(\log n)^{3/2}& \text{otherwise.}
\end{cases}$$
Data perturbation thus stabilizes the distribution and improves the rate of convergence when $\gamma \geq (\alpha_{j_*}+1)/(2(\alpha_{j_*}+1+d_{j_*}))$, and thus provides convergence guarantees in the small noise regime where the unperturbed estimator would fail.

\section{Diffusion Model for Stratified Learning}

Section 3 demonstrated the necessity of ambient noise to stabilize the likelihood of a singular distribution. That is, adding noise to the data helps prevent inconsistency when the noise-level is too low. However, selecting an optimal noise level can be practically rigid. Diffusion models naturally avoid this rigidity by constructing a time-dependent forward process to incrementally inject noise into the data rather than applying a single perturbation. In addition, diffusion models also exhibit a backward process that allows for easy sampling from a standard Gaussian. 

One can write the forward and backward processes in terms of stochastic differential equations \cite{song2020score}, and it has been shown in previous works (\cite{oko2023diffusion, TangYang1}) that diffusion models are minimax-optimal and can adapt to manifold structures. The main objectives in the setting of stratified learning are learning the structure (i.e., the number of manifolds and their dimensions) and the underlying distributions on the stratified space, both of which are not addressed in other contexts. To this end, we show that the score of distributions supported near a stratified space can be approximated by a score network and derive the convergences of both score estimation as well as the underlying distribution estimator. Note that unlike in the existing literature where the data is assumed to lie exactly on a submanifold embedded in a high-dimensional ambient space, we consider there a general setting where the data might admit some noise. In generalizing the theory of diffusion models  to the setting of stratified learning, it requires overcoming some substantial technical challenges and yield interesting results as discussed in Remark \ref{rem11}.  

\subsection{Forward-Backward Diffusion Models}
We focus on a common class of diffusion models: the forward-backward diffusion models. This class consists of a forward process used to corrupt data and construct a score function, and a dual backward process generate new samples. For the forward process, we define the following Ornstein-Uhlenbeck process:
$$d\overrightarrow{X}_t = -\beta_t\overrightarrow{X}_tdt + \sqrt{2\beta_t}dW_t, \quad \overrightarrow{X}_0 \sim p_*$$
where $\beta_t$ is a drift coefficient and $\{W_t: t \geq0\}$ denotes standard Brownian motion in $\mb{R}^D$. Solutions to this SDE are of the form $X_t=m_tX_0 + \int_0^t \frac{m_t}{m_s}\sqrt{2\beta_s}dW_s$, yielding the conditional marginals $p_t(\cdot|X_0)=\mathcal{N}(m_tX_0,\sigma_t^2I_D)$, where $m_t=\exp\Big(-\int_0^t \beta_s\,ds\Big)$ and $\sigma_t^2=1-m_t^2$. The forward process converges to a standard Gaussian distribution under the total variation metric exponentially fast as $t \to \infty$. In practice, one truncates the process at a finite time horizon $T$.

Under mild regularity conditions \citep{song2020score, haussmann1986time} the backward process is modeled by an SDE that simulates the forward process in reverse time:
$$d\overleftarrow{Y}_t= \Big[\beta_{T-t}\overleftarrow{Y}_t + 2\beta_{T-t}\nabla \log p_{T-t}(\overleftarrow{Y}_t)\Big]dt + \sqrt{2\beta_{T-t}}dW_t$$
The distribution of $\overleftarrow{Y}_t$ is $p_{T-t}$, so we can sample from $p_*$ by initializing the backward process with $\overleftarrow{Y}_0 \sim \mathcal{N}(0,I_D)$ as a proxy for $p_T$ and numerically solving the reverse SDE.  The reverse process requires access to the time-dependent score vector field $\nabla \log p_t(x)$, which is generally intractable. Standard score-matching \citep{hyvarinen2005estimation} is used to approximate the score $\nabla \log p_t(x)$ with a neural network $\hat{S}_\theta(x)$ by minimizing the objective
$$\mathcal{L}(\theta)=\mathbb{E}_{x \sim p}[||S_\theta(x)-\nabla \log p(x)||^2],$$
where $S_\theta:\mathbb{R}^D \to \mathbb{R}^D$ is a deep neural network parametrized by $\theta$.

To model the time-dependent score $\nabla \log p_t$, we must optimize this objective over the entire trajectory. To perform a tractable optimization problem that does not require access to the intractable $\nabla \log p_t(x)$, one typically minimizes the following denoising score-matching objective \citep{vincent2011connection}, which is equivalent up to a constant:
$$\mathcal{L}(\theta)=\frac{1}{n} \sum_{i=1}^n \int_0^T \mathbb{E}_{x_t \sim p_t(\cdot|x_i)} [||S_\theta(x_t,t) - \nabla \log p_t(x_t|x_i)||^2]\lambda(t)\,dt,$$
where $S_\theta:\mb{R}^D\times [0,T]\to \mb{R}^D$ is a neural network parametrized by $\theta$, and $\lambda(t)$ is a positive weighting function.

A fundamental challenge in the stratified learning setting is that the distribution $Q_*$ is supported on a singular subset of Lebesgue measure zero. If $\sigma_*>0$, then the ambient distribution $P_*$ has a Lebesgue density, which avoids this issue. On the other hand, if $\sigma_*=0$, then the forward process naturally regularizes the distribution through convolution with an isotropic Gaussian, making the marginals $p_t$ smooth and non-singular for any $t>0$. Note that the underlying geometry of the stratified space is most pronounced for small $t$ as the convolved measure remains tightly concentrated. However, this tight concentration leads to an optimization issue: the score $\nabla \log p_t(x)$ explodes as $t\to 0^+$ \citep{pidstrigach2022score, de2022convergence}, leading to numerical instability. Consequently, our analysis in Theorem~\ref{th:score} relies on rigorously controlling the approximation error from $t=T$ into the singular regime as $t \to 0$.

\subsection{Score Approximation and $W_1$ Convergence}
 Recall that we consider a distribution $Q_*=\sum_{k=1}^K \omega_k Q_k$ supported on a stratified space $\m S$. Assuming the data-generating process admits ambient noise $\epsilon \sim \mathcal{N}(0,\sigma_*^2I_D)$ for some $\sigma_* >0$, the true ambient distribution takes the form $$P_*=\Big(\sum_{k=1}^K \omega_kQ_k\Big) *\mathcal{N}(0,\sigma_*^2I_D)=\sum_{k=1}^K \omega_k(Q_k * \mathcal{N}(0,\sigma_*^2I_D)),$$
which admits a density with respect to the Lebesgue measure given by
$p_*(x)=\sum_{k=1}^K \omega_kp_k(x)$, where $$p_k(x)=\int \phi_{\sigma_*}(x-y)dQ_k(y).$$
Under the forward process, the transition kernel from an observation $x_0 \sim P_*$ to state $x$ at time $t$ is $p_t(x|x_0)=\mathcal{N}(x; m_t x_0, \sigma^2_t I_D)$. Hence, the marginal distribution at time $t$ is given by $$p_t(x) = \int p_t(x|x_0)p_*(x_0)dx_0.$$
Substituting our mixture form $p_*(x_0)=\sum_{k=1}^K \omega_kp_k(x_0)$ yields
$$p_t(x)= \int p_t(x|x_0)\sum_{k=1}^K \omega_kp_k(x_0)dx_0=\sum_{k=1}^K \omega_k\int p_t(x|x_0)p_k(x_0)dx_0=\sum_{k=1}^K \omega_kp_{k,t}(x). 
$$

Thus, the marginal density at each time $t$ preserves the mixture structure. This also induces the following decomposition of the score function:
\begin{equation}\label{eq:score_decomp}
    \nabla \log p_t(x)= \frac{\nabla p_t(x)}{p_t(x)}=\sum_{k=1}^K \omega_k\frac{\nabla p_{k,t}(x)}{p_t(x)}=\sum_{k=1}^K \frac{\omega_kp_{k,t}(x)}{p_t(x)}\nabla \log p_{k,t}(x).
\end{equation}

Equation~\ref{eq:score_decomp} reveals that the global score function is a convex combination of the component score vector fields $\nabla \log p_{k,t}(x)$, whose weights are given by the Bayesian posterior probability that a diffused point $x$ at time $t$ originates from stratum $k$. This yields a theoretical advantage to analyzing the score approximation error by independently controlling the error on each stratum $M_k$.

\paragraph{Theorem Setup} 
We perform denoising score-matching to obtain a neural network estimator $\hat{S}$. For simplicity, we assume the weighting function $\lambda(t)\equiv 1$. The estimator is obtained by solving:
\begin{align}
    \widehat S={\arg\min}_{S\in \m S_{NN}} \frac{1}{n}\sum_{i=1}^n \int_0^{T} \mb E_{x_t\sim p_t(\cdot\,|\,X_i)} \big[\|S(x_t,t) -\nabla\log p_t(x_t\,|\,X_i)\|^2\big] \,\dd t,\label{eqn:score_loss_fb}
\end{align}
To generate samples, we use the learned score network $\hat{S}$ in the backward SDE:
\begin{align}
     \dd \lY_t = \big[&\beta_{T - t} \lY_t \ + \ 2\beta_{T - t}\widehat S(\lY_t,T - t)\big]\,\dd t +\, \sqrt{2\beta_{T-t}}\,\dd W_t,\ \ \lY_0\sim \m N(0, I_D).\label{eqn:back_est}
\end{align}
We define a truncated estimator $\wh p $ for $P_*$ as the distribution of $\lY_{T-\tau}\cdot\bold{1}(\|\lY_{T-\tau}\|_{\infty}\leq L)$, where $\tau$ is the early-stopping time and $L$ is a sufficiently large constant. Then consider $\m S_{NN}$ realized by piecewise ReLU neural network
\begin{equation*}
\begin{aligned}
      &\m S_{NN}=\big\{S(x,t)=\sum_{l=0}^{L-1} S_l(x,t)\cdot \textbf{1}\big(t_{l}\leq t < t_{l+1}\big)\,\big|\, S_l\in \Phi(L_l, W_l, R_l, B_l, V_l),\ l\in[L] \big\},
\end{aligned}
\end{equation*}
where $\tau=t_0< t_1< \cdots<t_{L}=T$, $L=\lfloor \log_2\frac{T}{\tau} \rfloor +1$, and $\frac{t_{l+1}}{t_{l}}=2$ for any $0\leq l\leq L-2$. We use $\wt O(a_n)$ to be of order $a_n$ up to logarithmic term, i.e., $b_n=\wt O(a_n)$ iff there exist constants $c,C$ so that $b_n\leq C\, a_n (\log n)^{c}$ for $n\geq 2$.

\begin{theorem}[Forward-backward diffusion]\label{th:score}
    Suppose Assumptions 1,3,4 are satisfied, and the drift coefficient $\beta_t$ is  infinitely differentiable and uniformly bounded from above and below in $t$. There exists a constant $C_1$ so that if $T=C_1\log n$ and 
    \begin{equation*}
\tau=c_1\log n\cdot\left\{
\begin{array}{cc}
n^{-c},& \sigma_*>  n^{-\min_{k} \frac{\alpha_k+1}{2\alpha_k+d_k}}\sqrt{\log n}\\
n^{-\min_{k} \frac{2\alpha_k+2}{2\alpha_k+d_k}},& \sigma_*\leq n^{-\min_{k} \frac{\alpha_k+1}{2\alpha_k+d_k}}\sqrt{\log n},\\
\end{array}
\right.
 \end{equation*}
    where $c$ is any constant with $c\geq 1$ and $c_1$ is a sufficiently large positive constant. Then there exist suitable choices of $\{(L_l, W_l, R_l, B_l, V_l)\}_{l\in [L]}$ so that
  \begin{enumerate}
    \item For any $l\in [L]$, 
    \begin{equation*}
    \begin{aligned}
             &\mb{E}_{X_{1:n}\sim P_{*}^{\otimes n}}\bigg[  \int_{t_l}^{t_{l+1}} \int_{\mb R^{D}}   \|\wh S(x,t)-\nabla \log p_t(x)\|^2 p_t(x)\,\dd x\dd t\bigg]\\
             &\qquad=\wt O\bigg(\sum_{k=1}^K  \min\Big(\frac{ (\sigma_*+\sqrt{1\wedge t_l})^{-\max\{d_k,2\}}}{n}, n^{-\frac{2\alpha_k}{2\alpha_k+d_k}}+\frac{n^{-\frac{2\beta_k}{2\alpha_k+d_k}}}{\sigma_*^2+t_l}\Big)\bigg).
    \end{aligned}
    \end{equation*}
    \item   It holds that
    \begin{equation*}
       \mb{E}_{X_{1:n}\sim P_{*}^{\otimes n}}[W_1(\wh p,\, P_*)]=\wt O\bigg(\frac{1}{\sqrt{n}}+\sum_{k=1}^K 
       \min\Big(\frac{ \sigma_*^{-\frac{d_k}{2}+1}}{\sqrt{n}}, n^{-\frac{\alpha_k+1}{2\alpha_k+d_k}}\Big)\bigg).
    \end{equation*}
\end{enumerate}

\end{theorem}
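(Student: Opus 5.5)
The plan follows the standard three-part analysis of diffusion-model estimators (approximation error, stochastic error, and error propagation through the backward SDE), as in \cite{oko2023diffusion, TangYang1}. The new work is to exploit the mixture decomposition \eqref{eq:score_decomp}, so that each stratum contributes its own rate.

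\textbf{Step 1: score approximation.} On each time block $[t_l,t_{l+1})$ we build a ReLU network $S_l$ that approximates $\nabla\log p_t$ in $L^2(p_t)$. Write $p_t=\sum_k\omega_k p_{k,t}$, where $p_{k,t}=Q_k*\mathcal N(0,(m_t^2\sigma_*^2+\sigma_t^2)I_D)$ after rescaling by $m_t$. The effective bandwidth is then $\bar\sigma_t^2\asymp\sigma_*^2+(1\wedge t)$.

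We approximate each $p_{k,t}$ and each $\nabla p_{k,t}$ separately, using the manifold-case construction of \cite{TangYang1} applied to $M_k$ as a standalone manifold. This yields:
\begin{itemize}
\item local charts of $M_k$ with $\beta_k$-smooth parametrizations;
\item a Taylor or local-polynomial expansion of $q_k$ of order $\alpha_k$;
\item Gaussian integrals over each chart, realized by networks.
\end{itemize}
The resulting error is roughly $N_k^{-\alpha_k/d_k}+N_k^{-\beta_k/d_k}/\bar\sigma_t$ for network size $N_k$. The first term comes from density smoothness, the second from manifold smoothness weighted by the score scale $1/\bar\sigma_t$.

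The global score is recovered as $\sum_k\omega_k\nabla p_{k,t}/\sum_k\omega_kp_{k,t}$. We approximate this ratio with ReLU division and multiplication gadgets after truncating to the high-probability region where $p_t\ge n^{-c}$. The truncation error is controlled by Gaussian tail bounds.

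\textbf{Main obstacle: the singular region.} I expect the hardest step to be controlling the error of this ratio near $\mathcal S_{\rm sing}$. There the posterior weights $\omega_kp_{k,t}/p_t$ are not close to $0$ or $1$, and reach is zero. Working component-wise sidesteps the reach problem, because each $p_{k,t}$ only ever uses the reach $r_k$ of $M_k$. The remaining work is a lower bound on the denominator. I would first try the following route:
\begin{itemize}
\item on the tube of width $\bar\sigma_t\sqrt{\log n}$ around $\mathcal S$, bound $p_t\gtrsim\bar\sigma_t^{-(D-d_{\max})}$ up to constants, using Assumption~\ref{as:weights} and the density lower bounds;
\item convert the relative error of each numerator and denominator into an absolute score error of order (component error)$/\bar\sigma_t$;
\item use transversality (Assumption 2) only to show that the mass of the $\bar\sigma_t$-neighbourhood of $\mathcal S_{\rm sing}$ is $O(\bar\sigma_t)$, so that cruder bounds suffice there.
\end{itemize}

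\textbf{Step 2: stochastic error.} We bound the covering entropy of each $\Phi(L_l,W_l,R_l,B_l,V_l)$ and apply the standard oracle inequality for denoising score matching on the block $[t_l,t_{l+1}]$. The per-sample loss is bounded by $V_l^2\lesssim\log n/\bar\sigma_{t_l}^2$. Balancing $N_k$ against $N_k/n$ gives $n^{-2\alpha_k/(2\alpha_k+d_k)}+n^{-2\beta_k/(2\alpha_k+d_k)}/(\sigma_*^2+t_l)$. This is the second branch of the minimum in the first claim.

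The first branch, $\bar\sigma_{t_l}^{-\max(d_k,2)}/n$, comes from a different approximation that ignores smoothness. When $\bar\sigma_t$ is large, $p_{k,t}$ is analytic at scale $\bar\sigma_t$. It can therefore be covered by an $O(\bar\sigma_t^{-d_k})$-point grid on $M_k$ with Gaussian-kernel networks at polylogarithmic cost per point, giving a parametric-type error. Taking the better of the two constructions for each $k$ and summing over $k$ yields the first claim.

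\textbf{Step 3: Wasserstein bound.} We apply Girsanov's theorem, or the $W_1$-via-KL argument of \cite{TangYang1}, to the backward SDE \eqref{eqn:back_est}. This bounds the distance between $\widehat p$ and $p_\tau$ by
\begin{itemize}
\item $\sqrt{\sum_l(\text{block error})}$;
\item a term $e^{-T}$ from initialization, which $T=C_1\log n$ makes $O(1/\sqrt n)$;
\item the early-stopping bias $W_1(p_\tau,P_*)\lesssim\sqrt{\tau}$ (with $m_\tau$-scaling), which the choice of $\tau$ makes negligible.
\end{itemize}
As in \cite{TangYang1}, the KL bound is converted to $W_1$ blockwise by weighting with the time scale $\sqrt{t_l}$. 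Summing over the $O(\log n)$ dyadic blocks, with $\bar\sigma_{t_l}\ge\sigma_*$, gives $\min(\sigma_*^{1-d_k/2}/\sqrt n,\;n^{-(\alpha_k+1)/(2\alpha_k+d_k)})$. The two regimes for $\tau$ correspond to whether $\sigma_*$ itself already dominates the early-stopping scale. Truncation at $\|\cdot\|_\infty\le L$ costs only an exponentially small term.
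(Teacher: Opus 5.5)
Your overall architecture matches the paper's: a blockwise oracle inequality, Lemma B.2 of Tang--Yang for $W_1$ with weights $(t_l\log n)\wedge 1$, a smoothness-free grid at scale $\bar\sigma_t$ for strata with $\bar\sigma_t\gtrsim n^{-1/(2\alpha_k+d_k)}$, and a chart/Taylor construction at scale $n^{-1/(2\alpha_k+d_k)}$ for the rest. The gap is in the step you flag as the main obstacle.

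First, $p_t\gtrsim\bar\sigma_t^{-(D-d_{\max})}$ is false on the tube $\{\operatorname{dist}(x,\mathcal S)\le c_0\bar\sigma_t\sqrt{\log n}\}$. It holds only within $O(\bar\sigma_t)$ of $\mathcal S$. At distance $c_0\bar\sigma_t\sqrt{\log n}$, $(2\pi\bar\sigma_t^2)^{D/2}p_t(x)$ is of order $n^{-c_0^2/2}\bar\sigma_t^{d_k}$, so only a floor $n^{-c}$ is available. Dividing absolute numerator and denominator errors by that floor loses a polynomial factor. This is harmless for the smoothness-free construction: its absolute errors can be pushed to $n^{-c-1}$, because its coefficients are exact local moments of $Q_k$. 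It is fatal for the chart-based construction, whose errors are only of relative size $n^{-\alpha_k/(2\alpha_k+d_k)}+n^{-\beta_k/(2\alpha_k+d_k)}\sqrt{\log n}/\bar\sigma_t$.

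Second, the $\bar\sigma_t$-neighbourhood of $\mathcal S_{\mathrm{sing}}$ cannot be handled by cruder bounds. With the clipped bound $\|\widehat S-\nabla\log p_t\|^2\lesssim\log n/\bar\sigma_t^2$ on a set of $p_t$-mass $\asymp\bar\sigma_t$, the block $[t_l,2t_l]$ picks up $t_l\log n/\bar\sigma_t$. This is of order $\log n$ for $t_l\asymp 1$, never of the polynomially small order claimed, and it also ruins the $W_1$ sum. Finally, the theorem assumes only Assumptions 1, 3 and 4, so transversality is not available to you.

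The missing idea, which your second bullet gestures at but your plan does not use, is that every localized piece should be approximated with \emph{relative} error. Then neither a denominator lower bound nor any special treatment of intersections is needed. Write $\wt\sigma_t$ for your $\bar\sigma_t$. For each stratum $k$ and cover point $j$, the paper pulls out the Gaussian factor $\exp(-\|x-G_{kj}(\phi^p_{kj}(x))\|^2/(2\wt\sigma_t^2))$ at the approximate projection. The approximation error of that piece, in numerator and denominator alike, is bounded by the piece's own size (this factor times $\wt\sigma_{t_i}^{d_k}$) times the relative factor above. Meanwhile $(B_2''')$ is bounded below by the sum of the same pieces. Since the denominator is a sum of nonnegative terms, the ratio error is at most the largest relative error, uniformly in $x$, however the posterior mass is split among strata near an intersection. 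The coarse strata are absorbed through the $n^{-c_3}$ floor.

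This is also what the lower bound on $\tau$ is for: it makes $n^{-\beta_k/(2\alpha_k+d_k)}\sqrt{\log n}/\wt\sigma_{t_i}$ small, so that $(B_2''')\ge\frac12(B_2'')$. Its role is not to make $\sqrt\tau$ negligible; in the small-$\sigma_*$ regime $\sqrt\tau$ is comparable to the final rate.
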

    In  the case $\sigma_*>  n^{-\min_{k} \frac{\alpha_k+1}{2\alpha_k+d_k}}\sqrt{\log n}$, the early stopping time $\tau$ is introduced mainly for technical convenience, and it suffices that $\tau$ decays polynomially in $n$.
 
\begin{remark}
\label{rem11}
    The main technical challenges arise from three sources. First, unlike \cite{TangYang1,oko2023diffusion}, we allow Gaussian noise with level $\sigma_* \in [0,\infty)$, so the distribution need not lie exactly on a submanifold. An interesting consequence of Theorem~\ref{th:score} is that $\sigma_*$ actually has a positive effect on the Wasserstein bound for distribution learning. In particular, when $\sigma_*$ is of constant order, we achieve the parametric root-$n$ rate $\widetilde O(1/\sqrt{n})$. This improvement stems from two facts:  
    (1) The Gaussian noise has a smoothing effect on the intrinsic distribution supported on the stratified space, so the noised distribution has an infinitely smooth density with respect to Lebesgue measure when $\sigma_*$ is of constant order, making the problem intrinsically easier.  
    (2) The structure of the diffusion model estimator is well suited to handle such noise. Convolution with Gaussian noise resembles the forward process that injects Gaussian noise; on the reverse side, this can be viewed as an automatic early stopping in the backward diffusion process, which stabilizes the score approximation.
    
    Second, the stratified-space structure itself poses substantial challenges. In particular, when $\sigma_* = 0$, the data support is no longer a smooth submanifold with positive reach, a standard assumption in the literature on submanifold estimation or distribution estimation on submanifolds~\citep{TangYang1,tang2023minimax,10.1214/18-AOS1685}. Our key insight is that the evaluation metric is the Wasserstein distance, which is relatively insensitive to exact support recovery \citep{tang2023minimax}. Therefore, even if the stratified space cannot be accurately estimated near its intersections, this is less consequential when the goal is distribution recovery under Wasserstein distance. To address the stratified-space setting, we carefully exploit the additive structure of the score function in \eqref{eq:score_decomp} and construct neural network approximations to control the resulting approximation error. Lastly, we allow different components of the stratified space to have different manifold smoothness $\beta_k$ and density smoothness $\alpha_k$. This requires us to carefully choose the resolution and grid size in our approximation theory so as to adapt to the varying smoothness levels.
 
\end{remark}

\begin{remark}\label{rem12}

Both the diffusion-model and sieve-MLE approaches fit naturally into our stratified-space framework, but they address different aspects of the learning problem. On the one hand, diffusion-based learning is particularly well suited to the singular regime arising when $\sigma_*=0$ or when $\sigma_*$ decays rapidly. Because score-based diffusion models are trained through denoising with Gaussian perturbations, the method remains well posed even for noiseless observations, and we can obtain consistency results in $W_1$ distance without imposing a lower bound on the decay rate of $\sigma_*$ or artificially injecting noise into the data as in the likelihood based method. Moreover, the learned score field $\wh S(x,t)$ contains local geometric information; as discussed in Section~\ref{sec:estdim}, its behavior as $t \to 0$ can be leveraged to construct estimators of intrinsic dimension. On the other hand, the sieve MLE procedure is explicitly likelihood-based for an ambient-density model and is therefore less suitable when the target distribution is near singular or no ambient noise is present. However, when the data-generating mechanism includes Gaussian noise, our sieve model class, consisting of mixtures of generative models convolved with Gaussian noise, admits a natural deconvolution interpretation. In particular, fitting $P_{f,\sigma}$ yields not only an estimator of the observed ambient distribution $P_0$, but also an immediate estimator $Q_{\wh f}$ of the underlying noiseless distribution $Q_*$. This contrasts with diffusion models, which by default target the observed distribution rather than its deconvolved counterpart. Finally, the explicit mixture structure in the sieve model provides additional interpretability by enabling stratum-wise generation.

\end{remark}

\section{Estimating Number of Strata and Dimensions}\label{sec:estdim}

The theoretical results in Sections 3 and 4 establish that the likelihood and diffusion models implicitly adapt to the structure of a stratified space with the convergence rates ultimately depending on the intrinsic dimensions and smoothness of strata. However, in many practical settings, the underlying topological structure, such as the number of strata and the dimension of the each stratum is unavailable. In this section, we now address the inverse problem: given a learned score network, can we recover the number of strata and their intrinsic dimensions?

As observed previously, the score function of a distribution supported on a lower-dimensional submanifold explodes as $t \to 0^+$, which requires early stopping to prevent instability when training. Nevertheless, given a point $x$, the score varies minimally along tangent directions while its magnitude is dominant in normal directions. Recent works have leveraged this behavior of diffusion models to estimate local intrinsic dimension of a manifold via the Fokker-Planck equation, gauges, and Monte Carlo methods \citep{stanczuk2024diffusion, kamkari2024geometric, horvat2024gauge, leung2025convolutions}. More specifically, \cite{stanczuk2024diffusion} show that the score vector of the marginal density $p_t$ for points sufficiently close to the data manifold aligns with the normal space as $t\to 0^+$. They then propose a local intrinsic dimension estimator by approximating the normal space at a given data point through sampling score vectors of locally diffused points at small time scales. We extend this method to stratified spaces by showing that, at small time scales, the score is approximately normal to a single stratum around regular points. Near singularities where multiple strata intersect, the score becomes a combination of competing normal directions that is ultimately dominated by the normal space corresponding to the lowest-dimensional participating stratum in the limit.

\subsection{Geometry of the Score}
Equation~\ref{eq:score_decomp} illustrated that the score function for the marginal distribution at time $t$ can be written as a convex combination of the component scores for each manifold:
$$\nabla \log p_t(x) = \sum_{k=1}^K \frac{\omega_k p_{k,t}(x)}{p_t(x)} \nabla \log p_{k,t}(x),$$
where $\frac{\omega_k p_{k,t}(x)}{p_t(x)}$ are the posterior mixture weights. For regular points $x \in M_k \setminus \bigcup_{\ell \neq k} M_\ell$ outside of a small neighborhood of the intersection region, the posterior weight $\frac{\omega_k p_{k,t}(x)}{p_t(x)}$ will tend to 1 as $t \to 0$ as the relative density contributions from other strata become exponentially negligible. However, for points near a singularity, the score is a convex combination of the individual scores of the intersecting strata, with each score component being approximately normal to its respective stratum. To formalize this geometrically, we apply Laplace's method to analyze the interaction of the marginal densities $p_{k,t}$ and their gradients $\nabla p_{k,t}$ as $t \to 0^+$. As we exhibit in the following theorem, the local density contribution from each stratum scales differently based on its intrinsic dimension $d_k$. The relative volume scales proportionally to $\sigma^{d_k-d_\ell}$, so the contributions from higher-dimensional strata vanish as $t \to 0^+$. Thus, the limiting behavior of the score near a singularity is dominated by the normal space of the lowest-dimensional nearby stratum.

\begin{theorem}\label{th:normalalign}
    Suppose that the distribution $P_*$ is supported on a compact stratified space $\m S = \bigcup_{k=1}^K M_k \subset \mathbb{R}^D$ and let $p_t$ be the marginal density at time $t>0$. Let $$L(x)=\{k\in \{1,...,K\}: \operatorname{dist}(x,M_k) = \operatorname{dist}(x,\m S)\}.$$
    Suppose that Assumptions 1, 3, and 4 hold. Fix $x \notin \m S$ sufficiently close to $\m S$ so that, for every $k \in L(x)$, $x$ lies in a tubular neighborhood of $M_k$ making the projection map $\pi_k$ onto $M_k$ uniquely defined. Let $k_* \in L(x)$ satisfy $d_{k_*}<d_k$ for all $k \in L(x)\setminus \{k_*\}$. Then 
    $$\frac{\nabla \log p_t(x)}{\|\nabla \log p_t(x)\|}\to \frac{\pi_{k_*}(x)-x}{\|\pi_{k_*}(x)-x\|}.$$
\end{theorem}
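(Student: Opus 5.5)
The plan is a direct Laplace-method analysis of each mixture component, followed by the decomposition \eqref{eq:score_decomp}. I take the noiseless setting implicit in the statement, $P_*=Q_*$ supported on $\m S$. For each $k$ this gives
$$p_{k,t}(x)=\int_{M_k}\phi_{\sigma_t}(x-m_t y)\,q_k(y)\,d\mathrm{vol}_{M_k}(y),\qquad \nabla p_{k,t}(x)=\int_{M_k}\frac{m_t y-x}{\sigma_t^2}\,\phi_{\sigma_t}(x-m_t y)\,q_k(y)\,d\mathrm{vol}_{M_k}(y).$$
I work with $\sigma_t^2\asymp t$ and $1-m_t=O(t)$, which hold because $\beta_t$ is bounded above and below.

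First I dispose of the strata $k\notin L(x)$. By compactness there is a gap $\Delta>0$ with $\operatorname{dist}(x,M_k)^2\ge \operatorname{dist}(x,\m S)^2+\Delta$. The exponent is $\|x-m_ty\|^2/(2\sigma_t^2)$, and replacing $m_t y$ by $y$ changes it by only $O(t)/\sigma_t^2=O(1)$ uniformly over the bounded support. Hence both $p_{k,t}(x)$ and $\|\nabla p_{k,t}(x)\|$ are $O(\sigma_t^{-D-2}e^{-(\operatorname{dist}(x,\m S)^2+\Delta/2)/(2\sigma_t^2)})$. These terms are exponentially negligible relative to the contributions of the strata in $L(x)$ computed next.

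Next, for $k\in L(x)$, I apply Laplace's method in a chart around $\pi_k(x)$. Assumption 1 gives $\beta_k\ge2$ and positive reach, and $x$ lies in the tubular neighborhood with $\operatorname{dist}(x,M_k)<r_k$. So $y\mapsto\|x-m_ty\|^2$ restricted to $M_k$ has a unique nondegenerate minimizer $y_t\to\pi_k(x)$, with Hessian $I-\langle x-\pi_k(x),\mathrm{II}\rangle$, which is positive definite. Using Assumption 3 (continuity and positivity of $q_k$), this yields
$$p_{k,t}(x)=(2\pi\sigma_t^2)^{-(D-d_k)/2}\,e^{-\operatorname{dist}(x,m_tM_k)^2/(2\sigma_t^2)}\,\frac{q_k(\pi_k(x))}{\sqrt{\det H_k(x)}}\,(1+o(1)).$$
The same expansion applied to the vector integrand gives $\nabla p_{k,t}(x)=p_{k,t}(x)\,\frac{m_ty_t-x}{\sigma_t^2}(1+o(1))$, where $m_ty_t-x\to\pi_k(x)-x\neq0$ because $x\notin\m S$. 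Thus each component score is $\nabla\log p_{k,t}(x)=\sigma_t^{-2}\big(\pi_k(x)-x+o(1)\big)$.

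Finally I compare the strata in $L(x)$. All of them share the leading exponential $e^{-\operatorname{dist}(x,\m S)^2/(2\sigma_t^2)}$. The shift $\operatorname{dist}(x,m_tM_k)^2-\operatorname{dist}(x,M_k)^2=O(t)$ contributes only a factor bounded above and below. Therefore
$$\frac{\omega_k p_{k,t}(x)}{\omega_{k_*}p_{k_*,t}(x)}=O\big(\sigma_t^{d_k-d_{k_*}}\big)\to0\qquad\text{for }k\in L(x)\setminus\{k_*\},$$
since $d_k>d_{k_*}$, and Assumption 4 keeps the weights bounded away from $0$. So the posterior weight of $k_*$ tends to $1$ and all others tend to $0$. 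Substituting into \eqref{eq:score_decomp}, every component score has the same order $\sigma_t^{-2}$, so
$$\sigma_t^2\nabla\log p_t(x)\to\pi_{k_*}(x)-x.$$
Normalizing gives the claim.

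The main obstacle is the uniformity of the Laplace expansion as $t\to0$. The minimizer moves with $m_t$, and I need remainder terms that are genuinely $o(1)$ relative to leading terms. This matters especially for the gradient, where I need the vector limit $\pi_k(x)-x$ rather than merely its magnitude. I would handle this by a change of variables centered at $y_t$, Taylor expansion to second order using the $C^2$ chart (from $\beta_k\ge2$), and dominated convergence on the rescaled variable $(y-y_t)/\sigma_t$. I would also make the $O(t)/\sigma_t^2$ cross-terms explicit so that they cancel in the ratios, or at least remain bounded in them.
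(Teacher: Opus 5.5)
Your proposal is correct and follows essentially the same route as the paper. It uses Laplace's method on each stratum around its projection, suppresses strata outside $L(x)$ exponentially through the distance gap, and compares $p_{k,t}(x)/p_{k_*,t}(x)\asymp \sigma_t^{d_k-d_{k_*}}$ among equidistant strata (the $O(1-m_t)=O(\sigma_t^2)$ shift in the exponent adds only a bounded factor), so the posterior weight of $k_*$ tends to one; the paper phrases this through $x-m_t\,\mathbb{E}[x_0\mid X_t=x]$ with weights $\gamma_{k,t}$, which is the same computation.
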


\begin{remark}
    Theorem~\ref{th:normalalign} assumes that the distribution is supported on a stratified space. When noise is small and tends to zero (as in our assumption), this will still be sufficiently close to $S$. More explicitly, at any $t>0$, the conditional marginal densities are Gaussian with variance $\sigma_t^2 + m_t^2\sigma_*^2$, which goes to 0 if both $t\to 0$ and $\sigma_* \to 0$. We show empirically that the dimension estimation is robust to small noise in Section 6. Furthermore, the unique minimal dimension stratum assumption can be lifted, resulting in the asymptotic direction being governed by the lowest-dimensional strata. This leads to a convex combination of normal vectors along with an $o(1)$ remainder.
\end{remark}

\subsection{Local Intrinsic Dimension (LID) Estimation}
Theorem~\ref{th:normalalign} characterizes the limiting behavior of the score function for points sufficiently close to $\m S$. By simulating the forward process starting at a data point $x_0$ over a sufficiently small time interval, we can sample vectors that roughly span the normal space of the corresponding stratum (mixture if near a singularity). Utilizing this geometric property, we adapt the local intrinsic dimension estimation framework in \citep{stanczuk2024diffusion} to construct estimators for the number of strata and their intrinsic dimension. Given a sample of size $n$, we can compute a local intrinsic dimension estimate for each data point and aggregate them into a histogram. The number of modes of this histogram provides an estimator $\hat{K}$ for true number of strata $K$ with the locations of the modes yielding corresponding intrinsic dimension estimates $\hat{d}_1,...,\hat{d}_{\hat{K}}$.

In practice, the theoretical limit $t \to 0^+$ is inaccessible given a finite sample. The score-matching objective must be restricted to the temporal window $[\tau,T]$ for some early stopping time $\tau$ to maintain stability during training, which places a lower bound on the noise level $\sigma_\tau>0$. Near the singular region of $\m S$, this results in competing forces between surrounding strata influenced by local geometry and densities. This can result in spurious dimension estimates. Fortunately, the singular regions occupy a vanishingly small volume relative to regular points of $\m S$, and hence representing a low-probability event in the data-generating process. We can thus impose a minimal mass threshold to filter out these the dimension estimates that lack sufficient statistical support.

\begin{algorithm}[t]
\caption{Dimension Estimator}
\label{alg:dimest}
\begin{algorithmic}[1]
\Require Ambient dim $D$;
Sampling interval $[t_{start}, t_{end}]$; Number of score vectors $N$; Threshold $\alpha>0$.
\Statex \textbf{Definitions:} Let $m_t = \exp(-\int_0^t \beta(s)ds)$ be the mean coefficient.
\Statex \textbf{Data:} $\mathcal{X}=\{x_i\}_{i=1}^n$
\State Set $\mathcal{D}_k=\{\}$ for $k=1,...,D-1$
\For{$x_0 \in \mathcal{X}$}
    \State Set $S=[\,]$
    \For{$i=1,...,N$}
        \State Sample $t \sim \text{Unif}(t_{start},t_{end})$
        \State Sample $x_t^{(i)} \sim \mathcal{N}(m_t x_0,\; (1-m_t^2) I_D)$
        \State Append $s_\theta(x_{t}^{(i)},t)$ to $S$.
    \EndFor
    \State Compute the singular values $\lambda_1 \geq \cdots \geq \lambda_D$ of $S$.
    \State Let $\hat{k}=D-\arg\max_{i=1,...,D-1} (\lambda_i-\lambda_{i+1})$
    \State Append $x_0$ to $\mathcal{D}_{\hat{k}}$.
\EndFor
\State Discard sets $\m D_k$ where $\frac{|\m D_k|}{n}<\alpha$.
\State \Return Estimated number of strata $\hat{K}=|\{k: \m D_k \neq \emptyset\}|$ and their intrinsic dimensions $\{k:\m D_k \neq \emptyset\}$.
\end{algorithmic}
\end{algorithm}

The construction of the estimators based on these observations is summarized in Algorithm~\ref{alg:dimest}.
Recall that the singular values of a matrix $A$ are the square roots of the eigenvalues of $AA^\top$, so we consider second-moment matrices in the following. To establish consistency of these estimators at a regular point $x_0 \in \m S$, we first demonstrate that the second-moment matrix of the marginal score exhibits a spectral gap over a sufficiently small temporal window. For simplicity, we consider the case of a single manifold as the behavior of the score for regular points of $\m S$ follows analogously.

\begin{lemma}\label{lm:spectral_gap}
     Let $x_0$ be an element of a $d$-dimensional manifold $Y \subset \mathbb{R}^D$ and let $0 <t_{\text{start}}<t_{\text{end}}$. For $t \sim \text{Uniform}[t_{\text{start}},t_{\text{end}}]$ and $X \sim \mathcal{N}(m_{t}x_0,\sigma_{t}^2I_D)$, define the second-moment matrix of the score at $x_0$ as
     $$\Sigma(x_0) =\mb{E}_{t,X}[\nabla \log p_t(X)(\nabla \log p_t(X))^\top],$$
     Then the eigenvalues $\mu_1 \geq \cdots \geq \mu_D$ of $\Sigma(x_0)$ satisfy:
     \begin{enumerate}
         \item For $k \leq D-d$, $\mu_k =\mathbb{E}[1/\sigma_{t}^2] + O(\mb{E}[\sigma_t^{-1}])$.
         \item For $k > D-d$, $\mu_k = O(\mb{E}[\sigma_t^{-1}])$.
     \end{enumerate}
     Consequently, for sufficiently small $t_{\text{end}}$, the eigenvalues exhibit their largest spectral gap between indices $D-d$ and $D-d+1$.
\end{lemma}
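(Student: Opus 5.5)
We will prove the lemma by writing the score near $x_0$ as a dominant normal term plus a bounded tangential remainder, and then applying eigenvalue perturbation to the second-moment matrix. Concretely, for $X=m_tx_0+\sigma_t\xi$ with $\xi\sim\mathcal N(0,I_D)$, we aim for the local expansion
\begin{equation*}
\nabla\log p_t(X)=\frac{m_t\pi(X)-X}{\sigma_t^2}+r_t(X),\qquad \|r_t(X)\|=O(\sigma_t^{-1})\ \text{in } L^2,
\end{equation*}
where $\pi$ denotes the (rescaled) projection onto $m_tY$. This expansion comes from Laplace's method applied to $p_t(x)=\int\phi_{\sigma_t}(x-m_ty)\,q(y)\,d\mathrm{vol}_Y(y)$, which is the same tool used for Theorem~\ref{th:normalalign}. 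We use positive reach (Assumption 1) to make $\pi$ well defined on a tube. Positivity and H\"older regularity of $q$ (Assumption 3) ensure that the posterior $p_t(y\mid x)$ concentrates at scale $\sigma_t$ around $\pi(x)$. Then $\mathbb{E}[y\mid x]-\pi(x)$ splits into a tangential part of size $O(\sigma_t)$ and a normal part of size $O(\sigma_t^2)$ coming from curvature. Dividing by $\sigma_t^2$ gives the remainder $O(\sigma_t^{-1})$. We handle the event that $X$ leaves the tube by a Gaussian tail bound; it contributes $e^{-c/\sigma_t^2}$, which is negligible.

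Next we linearize the main term. Let $P_N=P_{N_{x_0}Y}$ be the orthogonal projector onto the normal space at $x_0$. Since $X-m_tx_0=\sigma_t\xi$ and the manifold is locally a graph over $T_{x_0}Y$ with quadratic deviation, we get $m_t\pi(X)-X=-\sigma_tP_N\xi+O(\sigma_t^2\|\xi\|^2)$. Hence
\begin{equation*}
\nabla\log p_t(X)=-\sigma_t^{-1}P_N\xi+E_t,\qquad \mathbb{E}\|E_t\|^2=O(1)+O(\sigma_t^{-2})\ \text{(tangential, bounded by } \sigma_t^{-1}\text{ scale)}.
\end{equation*}
The cleaner statement is that $E_t=O(\sigma_t^{-1})\cdot(\text{bounded-moment random vector})$, while the main term is $\sigma_t^{-1}P_N\xi$ with exactly $\mathbb{E}[\sigma_t^{-2}P_N\xi\xi^\top P_N]=\sigma_t^{-2}P_N$. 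Expanding the outer product gives
\begin{equation*}
\Sigma(x_0)=\mathbb{E}[\sigma_t^{-2}]\,P_N+\Delta,
\end{equation*}
where $\Delta$ collects the cross terms and $E_tE_t^\top$. Here is the point to be careful about: a naive bound on $E_t$ of size $\sigma_t^{-1}$ would make $\mathbb{E}[E_tE_t^\top]$ of order $\mathbb{E}[\sigma_t^{-2}]$, which is too large. We will therefore show that the tangential component of the score is genuinely $O(1)$, since it is the gradient of the log of the smooth intrinsic density, and that only the normal curvature correction is $O(\sigma_t^{-1})\cdot O(\sigma_t)$-sized. The cross term $\sigma_t^{-1}P_N\xi\,E_t^\top$ then has norm $O(\sigma_t^{-1})$, and $\|\Delta\|_{op}=O(\mathbb{E}[\sigma_t^{-1}])$ for $t$ small.

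With these pieces, Weyl's inequality yields the two eigenvalue claims. The matrix $P_N$ has $D-d$ eigenvalues equal to $1$ and $d$ eigenvalues equal to $0$, so $\mu_k=\mathbb{E}[\sigma_t^{-2}]+O(\mathbb{E}[\sigma_t^{-1}])$ for $k\le D-d$ and $\mu_k=O(\mathbb{E}[\sigma_t^{-1}])$ otherwise. For the spectral-gap conclusion, note that gaps inside each block are $O(\mathbb{E}[\sigma_t^{-1}])$, whereas the gap at $D-d$ is at least $\mathbb{E}[\sigma_t^{-2}]-O(\mathbb{E}[\sigma_t^{-1}])$. Because $\sigma_t^2\asymp t$ for small $t$ (with $\beta$ bounded), we have $\mathbb{E}[\sigma_t^{-2}]/\mathbb{E}[\sigma_t^{-1}]\ge \min_t\sigma_t^{-1}\cdot c\to\infty$ as $t_{\text{end}}\to0$. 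The main obstacle is the uniform Laplace expansion of the second step, namely the sharp $O(1)$ tangential and $O(\sigma_t^{-1})$ normal control of the score remainder, uniformly over $X$ in a $\sigma_t\sqrt{\log}$-ball. I would first attempt it in local normal coordinates, writing $Y$ as a $\beta$-H\"older graph and Taylor-expanding $q$ and the graph map to first order.
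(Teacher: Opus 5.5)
Your plan is essentially the paper's proof. You decompose $\nabla\log p_t(X)=-\sigma_t^{-1}P_N\xi+R_t$ with $\sup_t\mathbb{E}[\|R_t\|^2\mid t]=O(1)$, bound the cross term by Cauchy--Schwarz at order $\sigma_t^{-1}$, and average over $t$ to get $\Sigma(x_0)=\mathbb{E}[\sigma_t^{-2}]P_N+O(\mathbb{E}[\sigma_t^{-1}])$ before applying Weyl's inequality. The only difference is how the $O(1)$ remainder is obtained: the paper cites the variance-preserving analogue of Theorem~3.1 in \cite{liu2025improving} and Taylor-expands the projection, whereas you propose to derive it directly by Laplace's method in graph coordinates.

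As you noticed mid-argument, your initial target $r_t=O(\sigma_t^{-1})$ is too weak. The sharp $O(1)$ bound you then aim for, with a tangential posterior-mean shift of order $\sigma_t^2$, genuinely needs $q$ to be Lipschitz or $C^1$, as your own first-order Taylor expansion of $q$ already presumes; Assumption 3 alone does not supply this. For $\alpha$-H\"older $q$ with $0<\alpha<1$, the tangential score can be of order $\sigma_t^{\alpha-1}$. That still gives the spectral-gap conclusion, but not the stated $O(\mathbb{E}[\sigma_t^{-1}])$ error.
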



While Theorem~\ref{th:normalalign} and Lemma~\ref{lm:spectral_gap} motivate the use of diffusion models in local intrinsic dimension estimation, these geometric properties assume access to the intractable marginal scores. By utilizing the fact that score approximation rates converge to 0 as $n\to \infty$ in Theorem~\ref{th:score}, we theoretically establish that our estimators of the number of strata and their intrinsic dimensions are statistically consistent. We define the local intrinsic dimension estimator as follows. First, take a sufficiently trained diffusion model $s_\theta(x,t)$ and a data point $x_0$. Then sample $N \geq D$ pairs $t_i \sim  [t_{\text{start}},t_{\text{end}}]$ and $X_i \sim \m N(m_tx_0)$, form the $D \times N$ matrix $S=[s_\theta(X_1,t_1),....,s_\theta(X_{N},t_{N})]$, and take the SVD of $S$. Let $\lambda_1 \geq \cdots \geq \lambda_D$ be the singular values of $S$, and define $$\hat{d}(x)=D-\operatorname{argmax}_i(\lambda_i-\lambda_{i+1}).$$
The optimal early stopping times and convergence rates in Theorem~\ref{th:score} fundamentally depend on intrinsic dimensions of the strata, since these dimensions determine the neural-network sizes required to attain the optimal score-approximation rate. Because Theorem~\ref{th:consistency} aims to estimate these dimensions from data, we cannot assume this information beforehand, nor take $t$ arbitrarily close to zero without a sufficiently large sample sizes. To decouple the consistency estimation from the dimension-dependent rate of score approximation, we impose the following high-level assumption:

\begin{assumption}
    For each $n \geq 1$, there exist sequences of times $0<t_1^{(n)},t_2^{(n)}\leq T$ and a score network $\hat{S}_n(x,t)$ with $\|\hat{S}_n(x,t)\|_\infty \leq V_n$ and $$\mb{E}_{X_{1:n}\sim P_*^{\otimes n}}\left[\frac{1}{t_2^{(n)}-t_1^{(n)}} \int_{t^{(n)}_1}^{t^{(n)}_2}\int_{\mb{R}^D}\|\hat{S}_n(x,t)-\nabla \log p_t(x)\|^2 p_t(x) \dd x\dd t \right] \leq \epsilon_n,$$
    with $\epsilon_n \to 0$ as $n \to \infty$.
\end{assumption}

\begin{theorem}\label{th:consistency}
    Suppose Assumptions 1-5 hold for a distribution $P_*$ supported on a stratified space $\m S=\bigcup_{k=1}^K M_k$ and sequences of times $\{t_1^{(n)},t_2^{(n)}\}_{n=1}^\infty$, and let $\Delta_n = t_2^{(n)}-t_1^{(n)}$. Further assume $V_n$ and $\sigma_{t_1^{(n)}}^{-1}$ grow at most polynomially in $n$. Fix $\ell \in \{1,...,K\}$ and $x_0 \in M_\ell$ with $\operatorname{dist}(x_0,\m S_{\text{sing}})\geq \delta >0$, where $\m S_{\text{sing}}=\bigcup_{i \neq j} (M_i \cap M_j)$. Also suppose $t_{2}^{(n)}\to 0$. Let $\{\hat{S}_n(x,t)\}_{n=1}^\infty$ be a sequence of score estimators satisfying the score approximation error $\epsilon_n$ on the intervals $[t_{1}^{(n)},t^{(n)}_{2}]$ with $\epsilon_n \to 0$. Sample $N_n$ i.i.d. pairs $t_i \sim Uniform(t^{(n)}_{1},t^{(n)}_{2})$ and $x_i \sim \m N(m_{t_i}x_0, \sigma_{t_i}^2I_D)$.
    If $N_n\to \infty$ and $n^\eta\sigma_{t^{(n)}_{1}}^{-d_\ell}\epsilon_n \to 0$ as $n\to \infty$ for some $\eta>0$, then the intrinsic dimension estimate $\hat{d}_n(x_0)$ is consistent, i.e.
    $$\mb{P}(\hat{d}_n(x_0)=d_\ell)\to 1\quad \text{as } n \to \infty.$$
\end{theorem}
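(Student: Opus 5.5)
We compare the empirical second-moment matrix built from the learned score, $\hat\Sigma_n = \frac{1}{N_n} S S^\top$ with $S=[\hat S_n(x_i,t_i)]_{i\le N_n}$, to the population matrix $\Sigma(x_0)$ of Lemma~\ref{lm:spectral_gap} built from the true score. Since $\hat d_n(x_0)$ depends on the singular values of $S$ only through their ordering and the location of the largest gap, it is invariant under rescaling. We therefore normalize both matrices by $\mb E[\sigma_t^{-2}]$ over $t\sim\mathrm{Uniform}(t_1^{(n)},t_2^{(n)})$, and show that
\[
\bigl\|\hat\Sigma_n/\mb E[\sigma_t^{-2}] - \Pi_N\bigr\|_{op}\to 0
\]
in probability, where $\Pi_N$ is the projection onto the normal space $N_{x_0}M_\ell$. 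Weyl's inequality then gives eigenvalues of the normalized $\hat\Sigma_n$ close to $1$ (with multiplicity $D-d_\ell$) and close to $0$ (with multiplicity $d_\ell$). The largest gap of the singular values (square roots of these eigenvalues) is then at index $D-d_\ell$ with probability tending to one, so $\hat d_n(x_0)=d_\ell$.

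\textbf{Step 1: population term.} Because $\operatorname{dist}(x_0,\m S_{\mathrm{sing}})\ge\delta$ and $t_2^{(n)}\to0$, the posterior weights $\omega_kp_{k,t}(x)/p_t(x)$ for $k\neq\ell$ are exponentially small, of order $e^{-c\delta^2/\sigma_t^2}$, uniformly for $x$ in a region of Gaussian mass $1-o(1)$ around $m_tx_0$. The Gaussian tail outside that region is controlled by the crude bound $\|\nabla\log p_t(x)\|\lesssim (1+\|x\|)/\sigma_t^2$. Hence $\nabla\log p_t$ agrees with $\nabla\log p_{\ell,t}$ up to negligible error, and Lemma~\ref{lm:spectral_gap} applied to $M_\ell$ gives
\[
\Sigma(x_0)/\mb E[\sigma_t^{-2}] = \Pi_N + O\bigl(\mb E[\sigma_t^{-1}]/\mb E[\sigma_t^{-2}]\bigr)=\Pi_N+O(\sigma_{t_2^{(n)}})\to\Pi_N .
\]

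\textbf{Step 2: score error.} This is the main obstacle: Assumption~5 controls the error only in $L^2(p_t)$, whereas we sample $X\sim\mathcal N(m_tx_0,\sigma_t^2I)$. We would pass to $p_t$ by a density-ratio bound. Near $x_0$ on a regular point, Laplace's method gives $p_t(x)\gtrsim \omega_\ell\,\sigma_t^{-(D-d_\ell)}e^{-\|x-m_tx_0\|^2/(c\sigma_t^2)}$, using that $q_\ell$ is bounded below. Consequently $\phi_{\sigma_t}(x-m_tx_0)/p_t(x)\lesssim\sigma_t^{-d_\ell}$ up to a factor $e^{C\|x-m_tx_0\|^2/\sigma_t^2}$ that is tamed by truncating to $\|x-m_tx_0\|\le C'\sigma_t\sqrt{\log n}$. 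On the complement, the bound $V_n$, the polynomial growth of $\sigma_{t_1}^{-1}$ and Gaussian tails give a contribution $n^{-\text{large}}$. Absorbing the resulting $\log$ factors into $n^{\eta}$, this yields
\[
\mb E_{t,X}\|\hat S_n-\nabla\log p_t\|^2\lesssim n^{\eta}\sigma_{t_1^{(n)}}^{-d_\ell}\epsilon_n + o(1)\to0 .
\]
This quantity is measured against $\mb E[\sigma_t^{-2}]\ge1$ after normalization. Writing $\hat s=s+e$ and using Cauchy--Schwarz,
\[
\|\mb E[\hat s\hat s^\top]-\mb E[ss^\top]\|\le \mb E\|e\|^2+2\sqrt{\mb E\|e\|^2\,\mb E\|s\|^2},
\]
and $\mb E\|s\|^2\asymp\mb E[\sigma_t^{-2}]$, so the normalized difference vanishes. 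The expectation over $X_{1:n}$ is converted to a statement in probability by Markov's inequality.

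\textbf{Step 3: Monte Carlo term.} Conditionally on $\hat S_n$, $\hat\Sigma_n$ is an average of $N_n$ i.i.d.\ matrices with mean $\mb E[\hat s\hat s^\top]$ and entries bounded by $V_n^2$. Chebyshev's inequality (or matrix Bernstein) applied to the normalized matrices then gives convergence to the mean as $N_n\to\infty$. If $V_n$ grows, we would either use truncation at level $\sigma_t^{-1}\sqrt{\log n}$, as in Step 2, with finite fourth moments $\mb E\|s\|^4/\mb E[\sigma_t^{-2}]^2=O(1)$, or simply let $N_n$ grow fast enough relative to $V_n$. Combining Steps 1--3 with the Weyl argument above completes the proof.
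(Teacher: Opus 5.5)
Your architecture matches the paper's. You transfer the $L^2(p_t)$ error to $\mathcal N(m_tx_0,\sigma_t^2I_D)$ by a density ratio on a ball plus a Gaussian tail outside it, perturb the second-moment matrix via Cauchy--Schwarz, and finish with Lemma~\ref{lm:spectral_gap} and Weyl. Your normalization by $\mb E[\sigma_t^{-2}]$, using $\mb E[\sigma_t^{-1}]\le\sigma_{t_2^{(n)}}\mb E[\sigma_t^{-2}]$, is cleaner than the paper's use of $\sigma_{t_1^{(n)}}^{-2}$.

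\textbf{Step 2 does not give the bound you claim.} If the ratio $\phi_{\sigma_t}(x-m_tx_0)/p_t(x)$ carries a factor $e^{C\|x-m_tx_0\|^2/\sigma_t^2}$, then on the region $\|x-m_tx_0\|\le C'\sigma_t\sqrt{\log n}$ that factor is $n^{CC'^2}$, a fixed power of $n$, not a logarithm. You cannot shrink $C'$ to compensate. The excluded tail contributes roughly $(V_n^2+\sigma_{t_1^{(n)}}^{-4})\,n^{-C'^2/2}$, and since $V_n$ and $\sigma_{t_1^{(n)}}^{-1}$ grow polynomially, $C'$ is bounded below by their growth exponents. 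So $n^{CC'^2}$ cannot be absorbed into $n^\eta$ for arbitrary $\eta>0$. A Laplace-type lower bound would also have to be made uniform over this growing region. What you need is a comparison whose loss is polynomial, not exponential, in the truncation radius, which the paper provides:
\begin{itemize}
\item For $\|x-m_tx_0\|\le R\sigma_t$ and $x'\in B(x_0,\sigma_t/R)\cap M_\ell$, one has $|\langle x-m_tx_0,x'-x_0\rangle|\le\sigma_t^2$, hence $p_t(x\mid x_0)\le e^{3/2}p_t(x\mid x')$.
\item Integrate over this shrinking patch, whose $P_*$-mass is $\gtrsim(\sigma_t/R)^{d_\ell}$.
\item This gives $p_t(x\mid x_0)\lesssim R^{d_\ell}\sigma_t^{-d_\ell}p_t(x)$. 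With $R\asymp\sqrt{\log n}$ the loss is polylogarithmic, which is exactly what the $n^\eta$ in the hypothesis absorbs.
\end{itemize}

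\textbf{Step 3 also fails under the stated hypotheses.} Chebyshev applied to $\hat s\hat s^\top$, with entries only bounded by $V_n^2$, gives a normalized variance of order $V_n^4/(N_n\mb E[\sigma_t^{-2}]^2)$. That requires $N_n$ to outgrow $V_n^4$, whereas the theorem assumes only $N_n\to\infty$. Letting $N_n$ grow fast enough changes the theorem. Truncating $\hat S_n$ is not available either, because the estimator uses it untruncated. The fix is the paper's decomposition:
\[
\hat\Sigma_n-\Sigma_n=\frac{1}{N_n}\sum_i\bigl(\hat s_i\hat s_i^\top-s_is_i^\top\bigr)+\Bigl(\frac{1}{N_n}\sum_i s_is_i^\top-\Sigma_n\Bigr).
\]
The first term has operator norm at most $\frac1{N_n}\sum_i(\|e_i\|^2+2\|e_i\|\|s_i\|)$. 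Its conditional expectation is at most $\mb E\|e\|^2+2(\mb E\|e\|^2\,\mb E\|s\|^2)^{1/2}$, so Markov's inequality suffices and $V_n$ enters only through the tail term of Step 2. Chebyshev is then applied only to the true scores, using their fourth moment (Lemma~\ref{lm:fourth_moment}).

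Finally, your claim $\mb E\|s\|^4/\mb E[\sigma_t^{-2}]^2=O(1)$ holds only when $\sigma_{t_1^{(n)}}\asymp\sigma_{t_2^{(n)}}$; it fails, for instance, if $t_1^{(n)}/t_2^{(n)}\to0$. The paper relies on the same condition implicitly when it asserts $\gamma_n\asymp\sigma_{t_1^{(n)}}^{-2}$.
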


\begin{remark}
    Although Lemma~\ref{lm:spectral_gap} and Theorem~\ref{th:consistency} are stated for the noiseless setting, the same argument extends to noisy observations $Y_0=X_0+Z$, $Z \in \m N(0,\sigma_*^2I_D)$, provided the noise level decreases with $n$ and satisfies $\sigma_{*,n} \leq C\sigma_{t_1^{(n)}}$. Indeed, the conditional density at the latent $X=x_0$ has the form $\m N(m_tx_0,\tilde{\sigma}_t^2I_D)$, where $\tilde{\sigma}_t^2=m_t^2\sigma_*^2 + \sigma_t^2$, so the proofs carry through by replacing $\sigma_t$ with $\tilde{\sigma}$. The constraint on $\sigma_*$ implies that $\tilde{\sigma}_t \asymp \sigma_t$ on $[t_1^{(n)},t_2^{(n)}]$, so the spectral gap is of order $\mb{E}[\tilde{\sigma}_t^{-2}]$, the density ratio exponent remains $d_\ell$, and the conclusions remain unchanged.
\end{remark}

In Theorem~\ref{th:consistency} we required a decreasing sequence of time values, which also decreases the respective noise levels. The distance $\delta$ from the singular region could also be allowed to decrease with $n$. Consequently, the $\delta$-neighborhood of $\m S_\text{sing}$ shrinks, allowing the score network to identify the dimensions of regular points arbitrarily close to the singular region. As the singular region has a strictly smaller intrinsic dimension than the component strata, $\m S_{\text{sing}}$ forms a set of measure zero under $P_*$. Thus, asymptotically, almost all points are regular, and one can expect the collection of the local dimension estimates to recover the distinct strata dimensions. This observation is summarized in the following corollary:

\begin{corollary}
    Suppose that the true intrinsic dimensions $d_1,...,d_K$ are distinct. Assume the conclusion of Theorem~\ref{th:consistency} holds uniformly over regular points at distance at least $\delta_n$ for some sequence $\delta_n \to 0$. Let $\mathcal{D}_n$ be the set of thresholded dimension estimates obtained from the modes of the histogram in Algorithm~\ref{alg:dimest} over all $n$ points.  Then
    $$\mathbb{P}(\mathcal{D}_n = \{d_k\}_{k=1}^K) \to 1 \quad \text{as } n \to \infty.$$
    Hence, the number of estimated strata satisfies $\mb{P}(\hat{K}=K) \to 1$.
\end{corollary}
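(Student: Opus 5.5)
The plan is to reduce the corollary to a statement about the empirical mass of each dimension bin $\mathcal{D}_k$ in Algorithm~\ref{alg:dimest}. The algorithm returns $\mathcal{D}_n=\{k:|\mathcal{D}_k|/n\ge\alpha\}$, so it suffices to show two things with probability tending to one:
\begin{itemize}
\item[(i)] $|\mathcal{D}_{d_k}|/n\ge\alpha$ for every $k$;
\item[(ii)] $|\mathcal{D}_j|/n<\alpha$ for every $j\notin\{d_1,\dots,d_K\}$.
\end{itemize}
Both follow if the fraction of sample points whose estimate is wrong tends to zero in probability. For (i) we also need the threshold to satisfy $\alpha<\omega_{\min}$, which I take to be implicit in the algorithm's tuning; otherwise the statement fails. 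Distinctness of the $d_k$ ensures that the bins $\mathcal{D}_{d_k}$ for different strata are different sets, so $|\{k:\mathcal{D}_k\neq\emptyset\}|=K$ exactly and $\hat K=K$ follows immediately.

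To control the error fraction, fix $n$ and split the sample into good and bad points. A point $X_i$ is good if it lies on a stratum $M_{\ell_i}$ with $\operatorname{dist}(X_i,\mathcal{S}_{\text{sing}})\ge\delta_n$; otherwise it is bad. Let $G_n$ be the number of good points and $B_n=n-G_n$. Since $\mathcal{S}_{\text{sing}}$ is a finite union of compact submanifolds of codimension at least one in each stratum (Assumption 2), and each $Q_k$ has density bounded above (Assumption 3), we get $Q_*(\{y:\operatorname{dist}(y,\mathcal{S}_{\text{sing}})<\delta_n\})\lesssim\delta_n\to0$. This uses a tubular volume estimate of the $\delta_n$-neighborhood of a codimension-$\ge1$ compact submanifold inside $M_k$, which is $O(\delta_n)$ in $\operatorname{vol}_{M_k}$. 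Hence $\mathbb{E}[B_n/n]\to0$, and Markov gives $B_n/n\to0$ in probability.

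For the good points, the uniform version of Theorem~\ref{th:consistency} gives
\[
\sup_{x_0\text{ regular},\,\operatorname{dist}(x_0,\mathcal{S}_{\text{sing}})\ge\delta_n}\mathbb{P}\bigl(\hat d_n(x_0)\neq d_\ell\bigr)=:p_n\to0.
\]
Conditioning on the locations of the sample points, the number $E_n$ of misclassified good points satisfies $\mathbb{E}[E_n]\le np_n$, so $E_n/n\to0$ in probability by Markov again. No independence across points is needed, only linearity of expectation.

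The main subtlety is that the score network $\hat S_n$ is trained on the same sample at which it is evaluated. I would handle this by reading the assumed uniform conclusion as a bound on the probability over $X_{1:n}$ and the Monte Carlo draws jointly, holding for each fixed regular point. If that reading is insufficient, I would fall back on sample splitting, which costs only constants in $n$.

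Combining these bounds: with probability tending to one, $(B_n+E_n)/n<\min(\alpha,\omega_{\min}-\alpha)/2$. By the law of large numbers, the fraction of points drawn from stratum $k$ converges to $\omega_k\ge\omega_{\min}$. The good points among them are placed correctly except for an $o_p(1)$ fraction, so $|\mathcal{D}_{d_k}|/n\ge\omega_k-o_p(1)>\alpha$, which is (i). Any bin $j\notin\{d_k\}$ receives only bad or misclassified points, so $|\mathcal{D}_j|/n\le(B_n+E_n)/n<\alpha$, which is (ii). Therefore $\mathbb{P}(\mathcal{D}_n=\{d_k\}_{k=1}^K)\to1$.
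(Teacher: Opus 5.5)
Your argument follows the same route as the paper. The paper gives no formal proof of this corollary, only the paragraph preceding it: $\mathcal{S}_{\text{sing}}$ is $Q_*$-null, its $\delta_n$-neighborhood shrinks, so asymptotically almost all sample points are regular and correctly classified. Your good/bad split, with Markov bounds and the law of large numbers for the stratum counts, is a correct rigorization of that sketch. Your remark that the threshold must satisfy $0<\alpha<\omega_{\min}$ is a genuine hypothesis the statement omits. A minor simplification: the $O(\delta_n)$ tube-volume estimate is more than you need. Since $\mathcal{S}_{\text{sing}}$ is closed and $Q_*$-null, continuity from above already gives $Q_*(\{\operatorname{dist}(\cdot,\mathcal{S}_{\text{sing}})<\delta_n\})\to 0$.

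The one step that needs repair is your primary treatment of the train/evaluate dependence. The bound $\sup_{x_0}\mathbb{P}(\hat d_n(x_0)\neq d_\ell)\le p_n$ holds for each \emph{fixed} $x_0$, with probability taken jointly over $X_{1:n}$ and the Monte Carlo draws. It says nothing about $\hat d_n(X_i)$ when $\hat S_n$ was fit on a sample containing $X_i$. The reason is that the conditional law of $\hat S_n$ given $X_i=x$ is not its unconditional law; a network that partially memorizes its training points behaves differently there. So under that reading the step ``conditioning on the locations, $\mathbb{E}[E_n]\le np_n$'' is unjustified.

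Either of two fixes closes the gap.
\begin{enumerate}
\item Read the uniformity hypothesis more strongly: there is a single training-data event of probability tending to one on which the Monte Carlo misclassification probability is at most $p_n$ simultaneously for all regular $x_0$ with $\operatorname{dist}(x_0,\mathcal{S}_{\text{sing}})\ge\delta_n$. Conditioning on $X_{1:n}$ is then legitimate.
\item Use your sample-splitting fallback, preferably as cross-fitting (each half evaluated with the network trained on the other half), so that all $n$ points still receive estimates and the histogram is still over the full sample. The hypothesis at sample size $\lfloor n/2\rfloor$ suffices.
\end{enumerate}
The paper's sketch ignores this issue entirely.
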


In the single-manifold setting, the local intrinsic dimension estimator in \citet{stanczuk2024diffusion} is motivated by the asymptotic normal alignment of the score as $t \to 0^+$; however, they do not provide statistical guarantees. To the best of our knowledge, our analysis provides the first establishment of statistical consistency of diffusion-based local intrinsic dimension estimation in the single-manifold and stratified settings.

\section{Simulation study and real data analysis}

In this section,  we empirically demonstrate the effectiveness of the Algorithm 1 as a local intrinsic dimension estimator and compare it to standard baselines such as the Levina-Bickel MLE and Local PCA. For a sufficiently trained diffusion model, this will give a \textit{stratification} of the space into its components. We also exhibit the ability for a Mixture-of-VAE model to learn the stratified structure and learn generators for each manifold. In addition, we evaluate the error of the distribution estimator in $W_1$ distance and compare the performance of the two methods.

\subsection{Diffusion-based Dimension Estimation}

Two standard approaches to estimating local intrinsic dimension are Local Principal Component Analysis (Local PCA) and the Levina-Bickel Maximum Likelihood estimator (LB-MLE). While both of these methods base their estimators off a $k$-nearest neighbors approach, Algorithm~\ref{alg:dimest} samples score vectors diffused points originating from a single data point. Although the process of training a diffusion model and performing singular-value decomposition (SVD) on a potentially large matrix can be computationally expensive, Algorithm~\ref{alg:dimest} is inherently more robust to noise and also demonstrates competitive results to standard baselines in the noiseless setting.

Lemma~\ref{lm:spectral_gap} provided evidence that such a spectral gap is expected from Algorithm~\ref{alg:dimest} between the $D-d$ and $D-d+1$ eigenvalues for points on a $d$-dimensional manifold. Performing this procedure illustrated promising results in our synthetic datasets, but it was sensitive to the choice of time intervals. A natural substitute that we found to be more effective and robust was the ratio 
$$\frac{\lambda_i}{\max\{\lambda_{i+1},\epsilon\}},$$
where $\epsilon$ is a floor used for numerical stability. Part of this discrepancy can possibly be attributed to using a time window instead of one particular time $t_0$, which fluctuates the magnitude of the score vectors. We empirically observed that the ratio is more stable across time windows and agrees on windows where they are both accurate. Hence, the experiments below will use the ratio rather than the difference of singular values to estimate the local intrinsic dimension of points.

We implement denoising diffusion probabilistic models (DDPM) \citep{ho2020denoising} to train a neural network $\epsilon_\theta=\mathbb{E}[\epsilon|x_t]$ that estimates the noise $\epsilon$ in the equation 
$$x_t = m_tx_0 + \sigma_t\epsilon,\quad \epsilon \sim \mathcal{N}(0,I_D).$$
Since $p_t(x_t|x_0) = \mathcal{N}(m_tx_0,\sigma^2_tI_D)$, we have that $$\nabla \log p_t(x_t|x_0) = - \frac{x_t - m_tx_0}{\sigma_t^2}= -\frac{\sigma_t\epsilon}{\sigma_t^2}=-\frac{\epsilon}{\sigma_t}$$
Using Tweedie's formula, we can express the score as
$$\nabla \log p_t(x_t) = \mathbb{E}[\nabla \log p_t(x_t|x_0) \mid x_t] = - \frac{1}{\sigma_t}\mathbb{E}[\epsilon|x_t].$$
Therefore, by using $\epsilon_\theta \approx \mathbb{E}[\epsilon|x_t]$, we get that
$$\nabla \log p_t(x_t)\approx -\frac{\epsilon_\theta(x_t,t)}{\sigma_t}.$$

\subsubsection{Synthetic Datasets}

We consider two synthetic datasets that consist of low-dimensional manifold structures embedded in $\mathbb{R}^{50}$. We use a ReLU neural network with widths $(114,512,512,512,50)$. The 114 consists of 50 for the ambient dimension and 64 dimensions for the time embedding, which is another ReLU MLP with width vector $(1,64,64)$. We take $N=500$ score vectors in Algorithm~\ref{alg:dimest}.

\paragraph{Circle and Sphere} The first consists of a circle and sphere, which can be expressed as
\begin{align*} 
    M_1 &= \{(x,y,z) \in \mathbb{R}^3 : (x-0.5)^2 + y^2=1.2^2, z=0\}\\
    M_2 &= \{(x,y,z) \in \mathbb{R}^3: x^2+y^2+z^2=1\}
\end{align*}

We consider the uniform distribution $Q_1$ on the circle and uniform distribution $Q_2$ on the sphere, the mixture
$$Q_* = 0.4\cdot Q_1 + 0.6\cdot Q_2$$
as well as the convolved distribution
$$P_* = Q_* +\epsilon,$$
where $\epsilon \sim \mathcal{N}(0,\sigma_*^2I_D)$. We take the values $\sigma_*=0, 0.02, 0.05$ in our experiments.

The two manifolds intersect at two points, leaving an arc within the sphere and the rest of the circle on the outside. The data is simulated in $\mb{R}^3$ and then embedded into $\mathbb{R}^{50}$, where ambient noise is added. The Levina-Bickel MLE and Local PCA estimate a large portion correctly in the case of no noise, but the accuracy quickly descends to zero once noise is included. These are visualized in Figure~\ref{fig:circle_sphere_classical}, where these methods capture the lower-dimensional for $\sigma_*=0$, but fail to identify any points of the strata as one-, two-, or even three-dimensional when $\sigma_*=0.05$. Figure~\ref{fig:circle_sphere} illustrates the results from Algorithm~\ref{alg:dimest} applied to every point, and Tables~\ref{tab:dim_est_sigma0}, \ref{tab:dim_est_sigma002}, and \ref{tab:dim_est_sigma005} provide the dimension counts and accuracies. The diffusion-based method is stable across many time windows and provides the correct dimensions at various small noise levels.

\begin{figure}[h]
  \centering
  \begin{subfigure}[t]{0.48\linewidth}
    \centering
    \includegraphics[width=6cm]{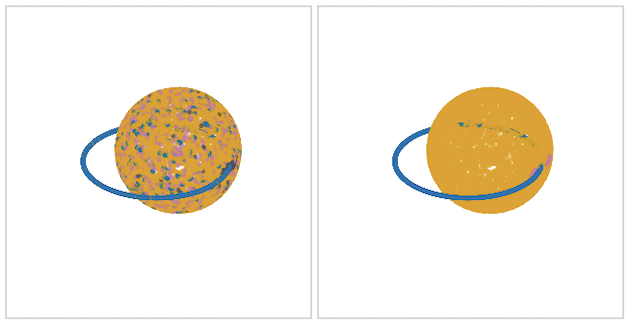}
    \caption{$\sigma_*=0$}
  \end{subfigure}
  \begin{subfigure}[t]{0.48\linewidth}
    \centering
    \includegraphics[width=6cm]{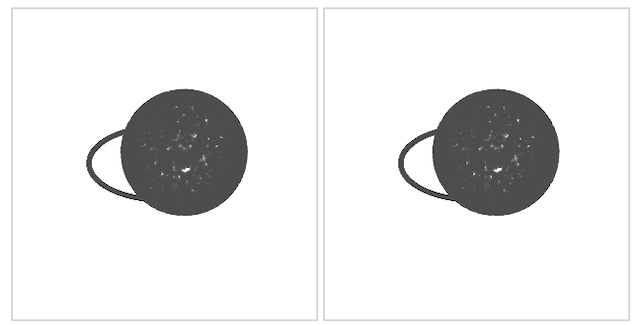}
    \caption{$\sigma_*=0.05$}
  \end{subfigure}
  \caption{Points Labeled as 1D (blue), 2D (orange), 3D (pink), and 4D+ (gray) by Levina-Bickel MLE (left) and Local PCA (right).}
  \label{fig:circle_sphere_classical}
\end{figure}

\begin{figure}[h]
  \centering
  \includegraphics[width=14cm]{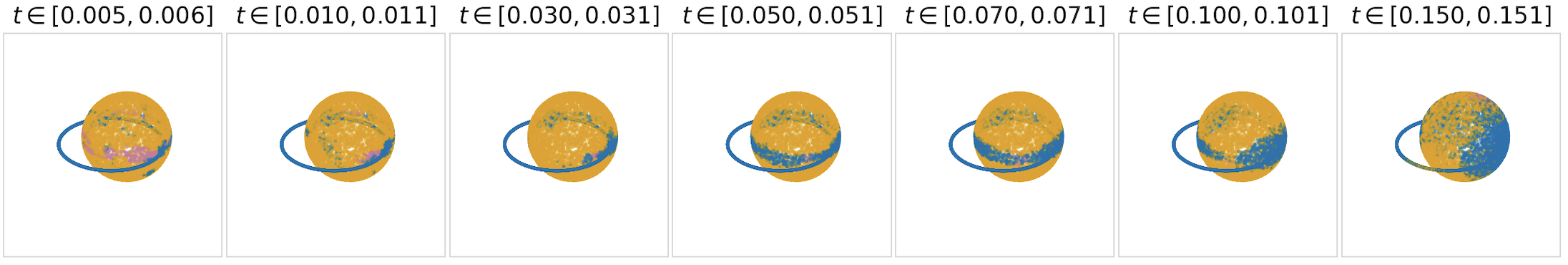}
  \includegraphics[width=14cm]{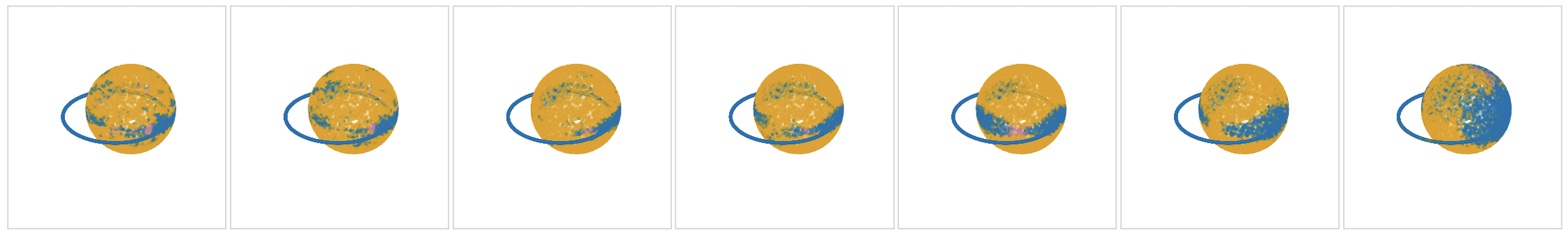}
  \includegraphics[width=14cm]{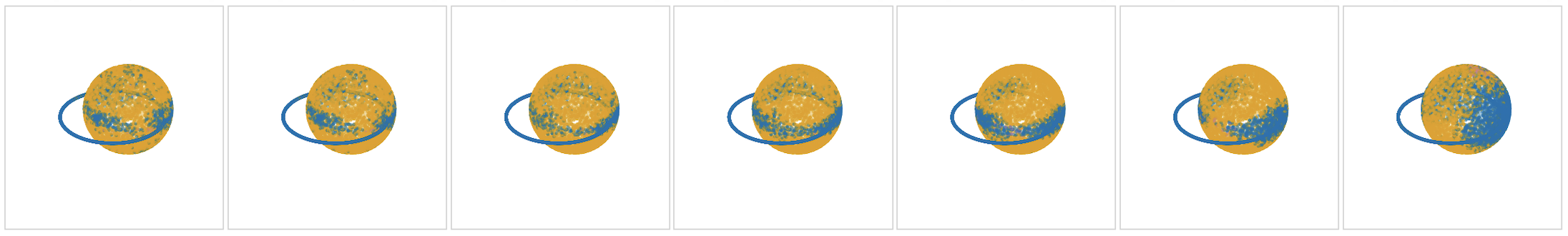}
  \caption{Points Labeled as 1D (blue), 2D (orange), and 3D (pink) by Algorithm~\ref{alg:dimest} over time intervals with $\sigma_*=0$ (top row), $\sigma_*=0.02$ (middle row), and $\sigma_*=0.05$ (bottom row).}
  \label{fig:circle_sphere}
\end{figure}

\begin{table}[H]
\centering
\scriptsize
\setlength{\tabcolsep}{2.5pt}
\begin{tabular}{lccccccccc}
\toprule
& $[0.005,0.006]$ & $[0.010,0.011]$ & $[0.030,0.031]$ & $[0.050,0.051]$ & $[0.070,0.071]$ & $[0.100,0.101]$ & $[0.150,0.151]$ & LB & L-PCA \\
\midrule
Dim $1$  & 43.5 & 44.8 & 44.7 & 49.2 & 44.3 & 45.8 & 65.9 & 43.5 & 39.9\\
Dim $2$  & 51.9 & 54.0 & 55.0 & 50.6 & 55.5 & 54.2 & 33.0 & 45.9 & 59.6\\
Dim $3$  &  4.5 &  1.2 &  0.3 &  0.2 &  0.2 &  0.1 &  1.1 & 7.8 & 0.5\\
Dim $4+$ &  0.0 &  0.0 &  0.0 &  0.0 &  0.0 &  0.0 &  0.0 & 1.0 & 0\\
\midrule
Accuracy & 91.92 & 93.96 & 95.04 & 90.64 & 84.97 & 80.90 & 67.17 & 83.76 & 99.40\\
\bottomrule
\end{tabular}
\caption{Algorithm~1 dimension estimates (in percentages) for the union of circle and sphere with $n=10000$ sample and no ambient noise ($\sigma_*=0)$, along with Levina-Bickel (LB) and Local PCA (L-PCA) estimates.}
\label{tab:dim_est_sigma0}
\end{table}

\begin{table}[H]
\centering
\scriptsize
\setlength{\tabcolsep}{2.5pt}
\renewcommand{\arraystretch}{0.92}
\begin{tabular}{lccccccc}
\toprule
& $[0.005,0.006]$ & $[0.010,0.011]$ & $[0.030,0.031]$ & $[0.050,0.051]$ & $[0.070,0.071]$ & $[0.100,0.101]$ & $[0.150,0.151]$ \\
\midrule
Dim $1$ & 48.6 & 48.4 & 46.5 & 48.0 & 43.8 & 44.7 & 66.3 \\
Dim $2$ & 50.1 & 50.8 & 53.1 & 51.8 & 55.5 & 55.3 & 32.8 \\
Dim $3$ &  1.3 &  0.9 &  0.4 &  0.2 &  0.7 &  0.0 &  0.9 \\
Dim $4+$&  0.0 &  0.0 &  0.0 &  0.0 &  0.0 &  0.0 &  0.0 \\
\midrule
Accuracy & 89.66& 90.65& 93.10& 91.81& 85.53& 80.81& 70.01 \\
\bottomrule
\end{tabular}
\caption{Algorithm~1 dimension estimates (in percentages) for the union of circle and sphere with $n=10000$ sample and noise level $\sigma_*=0.02$}
\label{tab:dim_est_sigma002}
\end{table}

\begin{table}[H]
\centering
\scriptsize
\setlength{\tabcolsep}{2.5pt}
\renewcommand{\arraystretch}{0.92}
\begin{tabular}{lccccccc}
\toprule
& $[0.005,0.006]$ & $[0.010,0.011]$ & $[0.030,0.031]$ & $[0.050,0.051]$ & $[0.070,0.071]$ & $[0.100,0.101]$ & $[0.150,0.151]$ \\
\midrule
Dim $1$ & 48.6 & 47.5 & 49.6 & 50.8 & 43.8 & 44.8 & 67.8 \\
Dim $2$ & 50.9 & 52.0 & 50.3 & 49.2 & 55.9 & 55.0 & 31.4 \\
Dim $3$ &  0.5 &  0.5 &  0.1 &  0.0 &  0.2 &  0.2 &  0.9 \\
Dim $4+$&  0.0 &  0.0 &  0.0 &  0.0 &  0.0 &  0.0 &  0.0 \\
\midrule
Accuracy  & 88.59& 90.52& 90.29& 89.16& 83.19& 81.09& 70.83 \\
\bottomrule
\end{tabular}
\caption{Algorithm~1 dimension estimates (in percentages) for the union of circle and sphere with $n=10000$ sample and noise level $\sigma_*=0.05$}
\label{tab:dim_est_sigma005}
\end{table}

\paragraph{Union of Four Manifolds} We consider a disjoint union of a 1-dimensional helix, a 2-dimensional torus, a 4-dimensional sphere, and a 7-dimensional sphere embedded in $\mathbb{R}^{50}$. Explicitly, embed the following manifolds:
\begin{align*}
    M_1&=\{(\cos t,\sin t, 0.1t) \in \mb{R}^3:0\leq t \leq 4\pi\}\\
    M_2&= \{((2+0.5\cos \theta)\cos \phi, (2+0.5\cos \theta)\sin \phi, r \sin \theta):0\leq \theta,\phi \leq 2\pi\}\\
    M_3&= \{x \in \mb{R}^5:\|x\|^2=1\}\\
    M_4&= \{x\in \mb{R}^8:\|x\|^2=1\}
\end{align*}

We then apply a translation to each manifold so that they are disjoint in the ambient space. We equip each manifold with the uniform distribution $Q_k$ and define the mixture
$$Q_*=0.15\cdot Q_1 + 0.2 \cdot Q_2 + 0.25 \cdot Q_3 + 0.4 \cdot Q_4.$$ We provide the confusion matrix for the best result of Algorithm~\ref{alg:dimest} over our chosen time intervals in Table 4.


\begin{table}[H]
\centering

\begin{minipage}{0.32\textwidth}
\centering
\scriptsize
\setlength{\tabcolsep}{2.5pt}
\begin{tabular}{lrrrrr}
\toprule
& \multicolumn{5}{c}{Predicted dim} \\
\cmidrule(lr){2-6}
True dim & 1 & 2 & 4 & 7 & Other \\
\midrule
1 \,(0.15) &    0 &   0 &    0 &    0 & 1500 \\
2 \,(0.20) &    0 &   0 &    0 &    0 & 2000 \\
4 \,(0.25) &    0 &   0 & 2499 &    0 &    1 \\
7 \,(0.40) &    0 &   0 &    0 &   80 & 3920 \\
\midrule
\multicolumn{6}{l}{Overall Acc: $25.79\%$ (Local PCA)}\\
\bottomrule
\end{tabular}
\end{minipage}\hfill
\begin{minipage}{0.32\textwidth}
\centering
\scriptsize
\setlength{\tabcolsep}{2.5pt}
\begin{tabular}{lrrrrr}
\toprule
& \multicolumn{5}{c}{Predicted dim} \\
\cmidrule(lr){2-6}
True dim & 1 & 2 & 4 & 7 & Other \\
\midrule
1 \,(0.15) &    0 &   0 &   0 &    0 & 1500 \\
2 \,(0.20) &    0 &   0 &   0 &  245 & 1755 \\
4 \,(0.25) &    0 &   0 & 933 &   92 & 1475 \\
7 \,(0.40) &    0 &   0 &  76 & 1024 & 2900 \\
\midrule
\multicolumn{6}{l}{Overall Acc: $19.57\%$ (LB)}\\
\bottomrule
\end{tabular}
\end{minipage}\hfill
\begin{minipage}{0.32\textwidth}
\centering
\scriptsize
\setlength{\tabcolsep}{2.5pt}
\begin{tabular}{lrrrrr}
\toprule
& \multicolumn{5}{c}{Predicted dim} \\
\cmidrule(lr){2-6}
True dim & 1 & 2 & 4 & 7 & Other \\
\midrule
1 \,(0.15) & 1338 & 136 &    0 &    0 &   26 \\
2 \,(0.20) &  254 & 867 &    0 &    0 &  879 \\
4 \,(0.25) &    1 &   0 & 2449 &    0 &   50 \\
7 \,(0.40) &    0 &   0 &    0 & 3920 &   80 \\
\midrule
\multicolumn{6}{l}{Overall Acc: $85.74\%$ (Algorithm 1)}\\
\bottomrule
\end{tabular}
\end{minipage}

\vspace{0.3cm}
\caption{Comparison of confusion matrices for baseline estimators (Local PCA, Levina-Bickel) and Algorithm 1 on the 4-manifold mixture with noise $\sigma_*=0.05$. Algorithm 1 results reflect the time interval with best accuracy, $t\in[0.020,0.021]$.}
\label{tab:all_methods_confusion}
\end{table}

\subsubsection{Real-World Datasets}
To illustrate the diffusion-based dimension estimation method on real-world datasets we apply Algorithm~\ref{alg:dimest} to molecular dynamics data, which is a common application in manifold learning. We perform the analysis on butane, which is a small molecule consisting of four atoms, and the standard alanine dipeptide molecule. 

\paragraph{Butane}
Butane consists of four atoms and can be represented as a 12-dimensional vector, and \cite{sule2025learning} show that the dynamics can be represented as 1- and 2-dimensional structures based on the dihedral angles. Table 5 shows that the classical estimators predict dimensions being between 6-8, which is well-above the two dimensions established.

\begin{table}[h]
\centering
\begin{tabular}{lccccc}
\hline
Estimator & $N$ & Mean dim & Median dim & IQR & Mode dim \\
\hline
Local PCA (k=20, 95\% var) 
  & 2000 & 7.928 & 8.000 & [8.000, 8.000] & 8 \\
Levina--Bickel MLE (k=20) 
  & 2000 & 7.043 & 6.761 & [5.598, 8.198] & 6 \\
\hline
\end{tabular}
\caption{Classical intrinsic dimension estimates for butane dynamics.}
\end{table}

For our diffusion-based estimator, we consider a range of $t$ values and deduce the estimated dimensions in Table 6. The estimated dimensions are much higher around 0.01, which is expected as the score is more difficult to approximate. The time values from 0.05-0.3 show a majority of 1-dimensional and 2-dimensional classifications. This matches what is expected, except that around 0.3 there is an influx in 1-dimensional counts and almost no points classified as 3-dimensional or higher. We suspect this is a result of oversmoothing, but it still aligns with our presumed knowledge of dimensions.

\begin{table}[t]
\centering
\small
\setlength{\tabcolsep}{6pt}
\begin{tabular}{c|cccc|c}
\toprule
Time & Dim 1 & Dim 2 & Dim 3 & Dim 11 & Mean \\
\midrule
$[0.01,0.011]$ & 153 (30.6\%) & 47 (9.4\%)  & 0 (0.0\%)  & 300 (60.0\%) & 7.10 \\
$[0.03,0.031]$ & 290 (58.0\%) & 131 (26.2\%) & 4 (0.8\%)  & 75 (15.0\%)  & 2.78 \\
$[0.05,0.051]$ & 322 (64.4\%) & 149 (29.8\%) & 13 (2.6\%) & 16 (3.2\%)   & 1.67 \\
$[0.07,0.071]$ & 312 (62.4\%) & 166 (33.2\%) & 18 (3.6\%) & 4 (0.8\%)    & 1.48 \\
$[0.10,0.101]$ & 251 (50.2\%) & 241 (48.2\%) & 8 (1.6\%)  & 0 (0.0\%)    & 1.51 \\
$[0.15,0.151]$ & 183 (36.6\%) & 314 (62.8\%) & 3 (0.6\%)  & 0 (0.0\%)    & 1.64 \\
$[0.20,0.201]$ & 224 (44.8\%) & 274 (54.8\%) & 1 (0.2\%)  & 1 (0.2\%)    & 1.57 \\
\bottomrule
\end{tabular}
\caption{Intrinsic dimension estimates for Butane dynamics over 500 samples.}
\label{tab:butane}
\end{table}

\paragraph{Alanine Dipeptide}
The dynamics of alanine dipeptide is frequently used as a baseline dataset for local intrinsic dimension estimators. It is typically better to restrict to the use of heavy atoms since the hydrogen atoms may contribute high-frequency motion and hence increase the intrinsic dimension estimate. Dimension-reduction techniques support the hypothesis that alanine dipeptide has the intrinsic structure of a two-dimensional torus \citep{hashemian2013modeling}, and this is usually accepted in practice. From Table 7, we observe that Algorithm~\ref{alg:dimest} assigns over half of the points to dimension 2 when the time is not too small, with the rest being classified as mostly dimension 1 or 3.

\begin{table}[!t]
\centering
\small
\setlength{\tabcolsep}{4pt}
\resizebox{\linewidth}{!}{%
\begin{tabular}{c|cccccc|c}
\hline
Time & Dim 1 & Dim 2 & Dim 3 & Dim 4 & Dim 5 & Dim 29 & Mean \\
\hline
$[0.01,0.011]$ & 7 (1.4\%) & 31 (6.2\%) & 96 (19.2\%) & 72 (14.4\%) & 84 (16.8\%) & 156 (31.2\%) & 12.62 \\
$[0.03,0.031]$ & 14 (2.8\%) & 90 (18.0\%) & 240 (48.0\%) & 28 (5.6\%) & 22 (4.4\%) & 104 (20.8\%) & 6.55 \\
$[0.05,0.051]$ & 26 (5.2\%) & 185 (37.0\%) & 247 (49.4\%) & 8 (1.6\%) & 2 (0.4\%) & 32 (6.4\%) & 4.10 \\
$[0.07,0.071]$ & 46 (9.2\%) & 243 (48.6\%) & 201 (40.2\%) & 1 (0.2\%) & 0 (0.0\%) & 9 (1.8\%) & 3.04 \\
$[0.10,0.101]$ & 92 (18.4\%) & 265 (53.0\%) & 141 (28.2\%) & 0 (0.0\%) & 0 (0.0\%) & 2 (0.4\%) & 2.51 \\
$[0.15,0.151]$ & 141 (28.2\%) & 285 (57.0\%) & 74 (14.8\%) & 0 (0.0\%) & 0 (0.0\%) & 0 (0.0\%) & 1.87 \\
$[0.20,0.201]$ & 177 (35.4\%) & 294 (58.8\%) & 28 (5.6\%) & 0 (0.0\%) & 0 (0.0\%) & 1 (0.2\%) & 1.75 \\
\hline
\end{tabular}%
}
\caption{Alanine dipeptide (heavy atoms) dimension estimates across time windows using 500 samples. For brevity, only the most frequent dimensions are shown.}
\label{tab:alanine_dims}
\end{table}

\subsection{ Distribution Estimation via Mixture-of-VAEs and Diffusion Models}

\subsubsection{Mixture-of-VAEs}
Variational autoencoders (VAEs) \citep{kingma2013auto} are trained by maximizing a tractable variational lower bound on the marginal log-likelihood, commonly referred to as the evidence lower bound (ELBO):
$$L(x)=\log p_\theta(x) - \text{KL}(q_\phi(z|x) \| p_\theta(z|x)).$$

To bridge our theoretical setting with practical implementation, we approximate the distribution on a stratified space as a mixture model $f=\sum_{k=1}^K h_kf_k$ (simplified to $J_k=1$), where $h_k$ acts as an indicator function on stratum $k$. The log-likelihood then takes the form
$$\log p(x) = \log \int \phi_\sigma(x-f(z))dP_Z(z)
    = \log \sum_{k=1}^K \int_{\{h_k=1\}} \phi_\sigma(x-f_k(z))dP_Z(z)$$

Let $A_k=\{z:h_k(z)=1\}$, define the prior weights $\omega_k=P(z \in A_k)$, and let the conditional prior be $P_k(z)=P_Z(z| z\in A_k)$. Introduce the categorical latent variable $C \sim \text{Cat}(\omega_1,...,\omega_k)$. Then the last expression can be framed as an expectation, and we can apply Jensen's inequality to derive our ELBO:
\begin{align*}
    \log \sum_{k=1}^K w_k \int \phi_\sigma(x-f_k(z))dP_k(z) &=\log \sum_{k=1}^K \int p(x,z,c=k)dz\\
    &= \log \mathbb{E}_{q(c,z|x)} \frac{p(x,z,c)}{q(z,c|x)}\\
    & \geq \mathbb{E}_{q(c,z|x)} \log\frac{p(x,z,c)}{q(z,c|x)}\\
    &= \mathbb{E}_{q(c,z|x)} [\log p(x,z,c) - \log q(z,c|x)] =: L_{ELBO}
\end{align*}
Factorizing $q(c,z|x)=q(c|x)q(z|c,x)$ and parametrizing the routing network as $q(c|x)=\text{softmax}(g_\theta(x))=: \hat{h}_k(x)$, the final decomposition becomes
$$L_{ELBO}=\sum_{k=1}^K \hat{h}_k(x)[\mathbb{E}_{q(z|x,k)} \log \phi_\sigma(x-f_k(z)) - \text{KL}(q(z|x,k)||p_k(z)) + \log w_k - \log \hat{h}_k(x)].$$

To effectively capture general manifolds that are not globally parametrizable, one would typically need to construct an atlas of $J_k$ charts for each manifold $k$. While mixtures of generative models have been used to perform this task on single manifolds \citep{TangYang2}, these approaches generally require knowledge of the intrinsic dimension and a pre-computed clustering of the data. Mixture-of-expert models naturally circumvent these requirements, making them more theoretically appealing. However, they are notoriously difficult to train in the unsupervised setting; frequently suffering from issues such as mode or posterior collapse. Furthermore, to learn the topological structure of a stratified space, the model must also separate the data by dimension. Assuming that the strata have distinct intrinsic dimensions makes this sufficient to identify the distinct submanifolds.

A $d$-dimensional manifold is locally homeomorphic to $\mathbb{R}^d$, so a VAE trained on a local neighborhood should learn a mapping of rank $d$. To explicitly incentivize the model to separate these dimensions, we introduce a projection consistency penalty to the objective function. Specifically, we assign each expert $k$ a dimension $d_k$ and penalize the loss function if the rank of its autoencoder deviates from $d_k$. Geometrically, an expert of dimension $d_k$ should have $d_k$ singular values close to 1, and $D-d_k$ near 0, where $D$ is the ambient dimension. If $\sigma_1 \geq \sigma_2 \geq \cdots \geq \sigma_D$ are the singular values, then the penalization term we impose for expert $k$ is $$P_k(x)=\frac{1}{d}\sum_{i=1}^k (\sigma_i-1)^2 + \frac{1}{D-d}\sum_{i=k+1}^D \sigma_i^2.$$

\begin{algorithm}
\caption{Stratified Mixture of VAEs}
\label{alg:movae}
\begin{algorithmic}[1]
\Require Ambient dim $D$; number of experts $K$, dimensions $d_k$ for $k=1,...,K$, hyperparameters $\beta,\beta_g,\gamma>0$;
\Statex \textbf{Data:} $\mathcal{X}=\{x_i\}_{i=1}^n$
\State Assign each expert $L_k$ a target dimension $d_k$ for each $k \in [K]$
\State Initialize encoders $\{q_{\phi_k}\}$, decoders $\{p_{\theta_k}\}$, gate logits $g_\psi(\cdot)$, and mixture logits $\pi$
\Repeat
  \State Sample mini-batch $\mathcal{B}=\{x_b\}_{b=1}^B\subset\mathcal{X}$.
  \State Set global weights $w \gets \mathrm{softmax}(\pi)$
  \For{$x \in \mathcal{B}$}
    \State $\hat{h}(x)\gets\mathrm{softmax}(g_\psi(x))$
    \For{$k=1$ \textbf{to} $K$}
      \State $z_k \sim q_{\phi_k}(z\mid x)$
      \State $\hat{x}_k \gets \mathbb{E}[x\mid z_k;\theta_k]$
      \State Compute decoder Jacobian $J_k(x)$ and penalty $P_k(x)$.
      \State $\ell_k(x) \gets \|x-\hat{x}_k\|_2^2
      + \beta \cdot\mathrm{KL}\!\big(q_{\phi_k}(z\mid x)\,\|\,\mathcal{N}(0,I)\big)
      -\beta_g(\log w_k - \log \hat{h}_k(x)) +\gamma\,P_k(x)$
    \EndFor
  \EndFor
  \State $\mathcal{L}(\mathcal{B}) \gets \frac{1}{B}\sum_{x\in\mathcal{B}}\sum_{k=1}^K \hat{h}_k(x)\,\ell_k(x)$
  \State Update $(\phi,\theta,\psi)$ with one Adam step on $\mathcal{L}(\mathcal{B})$.
\Until{convergence}
\end{algorithmic}
\end{algorithm}

Our training procedure is summarized in Algorithm~\ref{alg:movae}. By integrating this dimension-aware regularizer, our architecture forces the routing network to distinguish between dimensions, which is a mechanism that is absent in standard Mixture-of-VAEs. However, successfully training such a model requires carefully balancing weights of the objective function to avoid collapse. For instance, while standard $\beta$-VAE formulations employ a KL penalty greater than 1 to encourage disentanglement \citep{burgess2018understanding}, we operate in the regime where $\beta \ll 1$ to focus on separation of strata and lock in the mixture weights, and one can anneal $\beta$ back to 0.5 or 1 to enforce generative regularization. We also have a $\beta_g$ hyperparameter for the gate KL weight, which we leave at 0.01 for experiments. These aims to decouple potentially conflicting objectives. We also emphasize the effect of $\gamma$ on the training process: if $\gamma$ is too low, there will not be enough geometric signal, while too high $\gamma$ degrades reconstruction fidelity and hence prevents accurate dimension estimation. Recall that for $a>0$, $\operatorname{LeakyReLU}(x)=\max\{0,x\}+a\min\{0,x\}$). We consider a latent dimension of 4, and 2 hidden layers of width 256 for each encoder and decoder equipped with Leaky ReLU activation with $a=0.2$. The gate is also a Leaky ReLU network with architecture $(3,128,2)$ as we have only 2 experts for the output. To stabilize training early on, we perform Gumbel-softmax temperature annealing from $\tau=2.0$ to $\tau =0.1$.

\paragraph{Circle and Plane} The first stratified space we consider is the union of a circle and a 2-dimensional plane:
$$M_1=\{(\cos \theta, 0, \sin \theta): \theta \in [0,2\pi)\}$$
$$M_2=\left\{(x,y,0):x,y \in \left[-\frac{3}{2},\frac{3}{2}\right]\right\}.$$
We let $Q_i$ be the uniform distribution on $M_i$ and set the mixture weights as $\omega_1=\omega_2=0.5$. This space is exhibited in Figure 3, where the mixture of VAEs successfully routes the points to the correct experts apart from a small neighborhood around the singularities.

\begin{figure}[t]
    \centering
    \includegraphics[width=16cm]{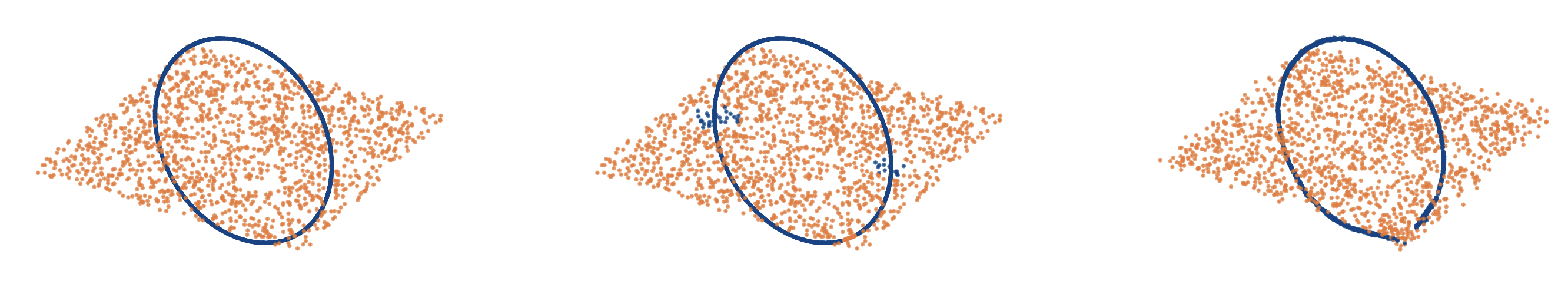}
    \caption{Ground truth (left), classified points (middle), reconstructions (right) for the union of a circle and $2$-dimensional plane with $n=3000$ samples.}
    \label{fig:circle_plane}
\end{figure}

\paragraph{Helix and Swiss Roll} We also consider the 1-dimensional helix and 2-dimensional Swiss roll:
$$M_1=\left\{(\cos t - \frac{5}{2}, \sin t, \frac{t}{2}-\pi): t \in [0,4\pi]\right\}$$
$$M_2=\left\{ \left(\frac{t\cos t}{5} +\frac{5}{2}, \frac{h}{5}, \frac{t\sin t}{5}\right): t\in \left[ \frac{3\pi}{2},\frac{9\pi}{2} \right], h \in\left[ -\frac{5}{2}, \frac{5}{2}\right]\right\},$$
with the mixture distribution $Q_*=\frac{1}{3}Q_1 + \frac{2}{3}Q_2$, where $Q_i$ is the uniform distribution on $M_i$. The space is displayed in Figure 4.

\begin{figure}[t]
    \centering
    \includegraphics[width=15cm]{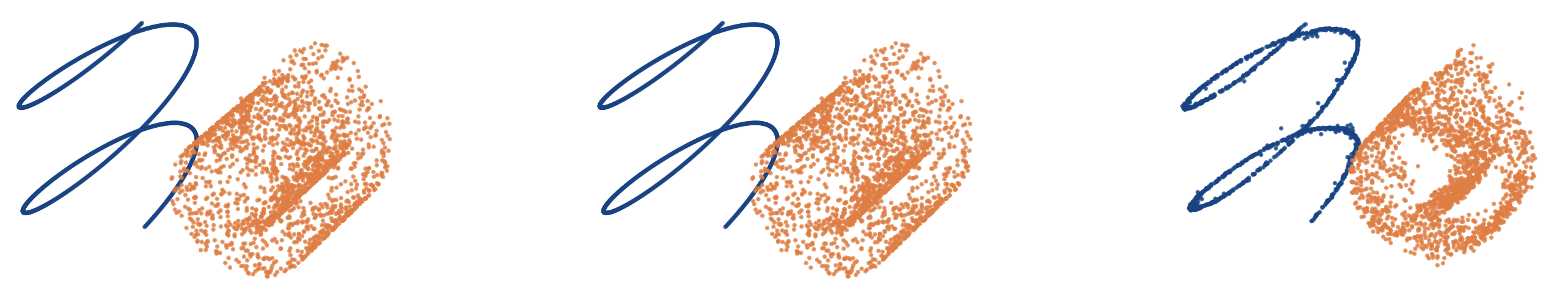}
    \caption{Ground truth (left), classified points (middle), reconstructions (right) for the union of a curve and Swiss roll with $n=3000$ samples.}
    \label{fig:noodle_swiss}
\end{figure}

\subsubsection{Distribution Estimation}
As discussed in Remark~\ref{rem12}, the sieve model class in the likelihood-based approach admits a natural deconvolution interpretation when ambient noise is present, while diffusion models target the observed distribution. Consequently, diffusion models perform extremely well in estimating the intrinsic distribution in the case of no noise or small noise levels, but struggle to recover the intrinsic distribution for moderate to high noise levels. On the other hand, VAEs provide better estimation of the intrinsic distribution as long as the noise level is not too small.

To illustrate these behaviors, we consider the 1-dimensional helix and 2-dimensional Swiss roll from Figure 4, but embedded in $\mb{R}^{15}$ with mixture weights $\omega_1=\omega_2=0.5$ and Leaky ReLU networks with $a=0.2$ with 4 hidden layers of width 512. We run standard DDPM, while the mixture of VAEs was trained with $\beta=0.01$ for the first 2000 epochs, then 2000 more with frozen routing while annealing $\beta$ to 0.5. The (sliced) Wasserstein-1 distance over various noise levels is portrayed in Figure 5 at $n=1000$ and $n=3000$ samples. We see that the mixture of VAEs attains small error at both of these sample sizes for noise levels up to around $\sigma_*=0.2$, but the gap is most pronounced at $\sigma=0.5$, where $W_1$ drops substantially as $n$ increases from $n=1000$ to $n=3000$. Also, the diffusion model is more accurate at low noise but eventually becomes worse than VAE as $\sigma_*$ grows. The curves cross around $\sigma_*=0.15$ when $n=3000$, and at even small $\sigma_*$ when $n=1000$, after which the diffusion model's sliced Wasserstein-1 distance increases steadily. This behavior is consistent with the fact that diffusion model learn the smoothed, noised-corrupted distribution rather than the underlying intrinsic one. Furthermore, while the VAE's error naturally grows at extreme noise levels ($\sigma_*=0.5$), this gap shrinks significantly as the sample size increases. 

\begin{figure}
    \centering
    \includegraphics[width=15cm]{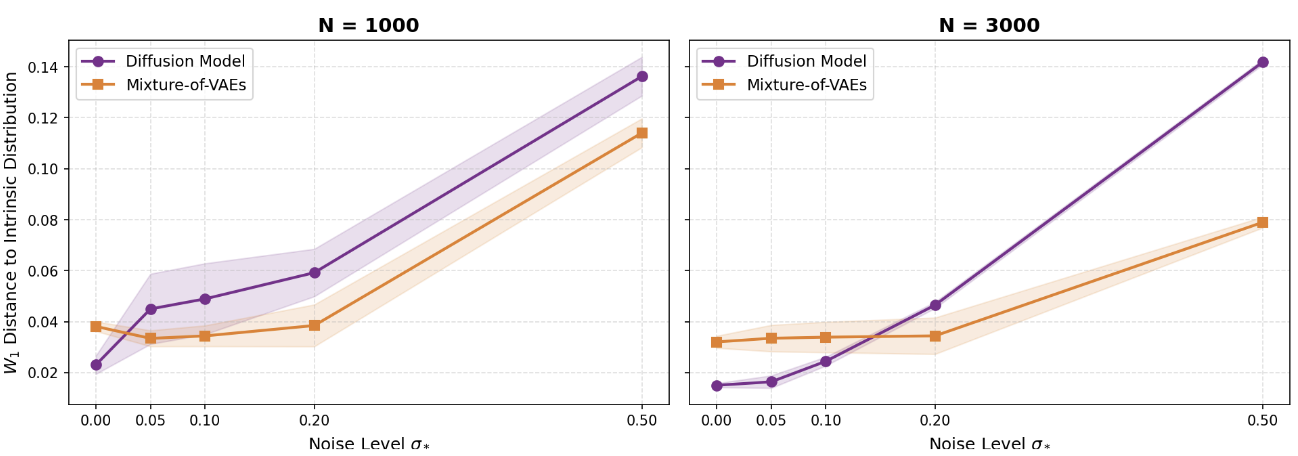}
    \caption{Comparison of the (Sliced) Wasserstein-1 distance between learned distribution and the intrinsic distribution via diffusion models and Mixture of VAEs at $n=1000$ and $n=3000$ samples.}
    \label{fig:noise_comparisons}
\end{figure}

\section{Discussion}

This paper develops a generative framework for \emph{stratified learning}. Going beyond the classical manifold setting, we study distributions supported on or near unions of manifolds of different dimensions, possibly with singular intersections. Our results show that deep generative models can adapt not only to low-dimensional geometry, but also to more general stratified structures. In particular, the paper provides two  generative approaches: a likelihood-based framework through a sieve MLE and a diffusion-based framework through score estimation. Together, these approaches elucidate how geometry, singularity, and ambient noise interact in determining the statistical  theory of  stratified learning.

A main message of the paper is that the two frameworks are suited to different regimes. The sieve-MLE approach provides a natural likelihood-based model for the ambient distribution and yields a direct estimator of the intrinsic distribution through the learned generator. This makes it particularly appealing when the data are generated with non-negligible Gaussian noise (when the noise delays to zero at a moderate rate) and one wishes to interpret the problem as deconvolution. At the same time, the analysis also reveals the inherent disadvantages  of likelihood-based methods in the near-singular regime: when the noise level is too small, the ambient density becomes sharply concentrated near the stratified support, which makes maximum-likelihood estimation unstable. In contrast, the diffusion-based framework remains well posed even in the noiseless setting, since the forward process itself regularizes the distribution through Gaussian smoothing. This makes diffusion models especially attractive for learning distributions on highly singular or nearly singular stratified spaces.

Another important insight from our analysis is that ambient noise does not play a purely detrimental role. In the likelihood framework, too much noise obscures the underlying geometry, while too little noise creates instability through singularity. For near-singular  case, the injected noise in a data perturbation approach becomes a blessing. In the diffusion framework, moderate noise can have a favorable effect by smoothing the target distribution and stabilizing score estimation. In particular, when the noise level is of constant order, the Wasserstein bound attains the parametric root-$n$ rate up to logarithmic factors for the ambient distribution estimation.  Overall, these results highlight a subtle geometry-noise interaction: ambient noise may either regularize or obscure the problem depending on the learning framework and the noise regime under consideration.

Beyond distribution estimation, our work also shows that diffusion models provide a principled route to learning the geometric information of the structures.  By exploiting the local behavior of the score field at small diffusion times, we establish consistency for estimating local intrinsic dimension and, consistency for estimating the number of strata. This connects generative modeling with geometric structure learning in a way that is not done by classical manifold-learning approaches.  At the same time,  our analysis highlights that dimension estimation is most reliable at regular points, while points near singular intersections remain intrinsically more delicate due to the competition between nearby strata.

There are several interesting directions for future research. These include extending the analysis to stratified spaces under more general geometric assumptions, as well as developing more flexible noise models that allow for varying and anisotropic noise across strata. It would also be of interest to study more general mixture distributions that permit small mixing weights, and to characterize the corresponding minimax rates under suitable beta-min conditions on the mixing weights and separation conditions on the mixture components. Beyond the development of statistical theory, it is also important to investigate the computational and optimization aspects of the proposed models.

\bibliography{references}

@article{Chae,
  author  = {Minwoo Chae and Dongha Kim and Yondai Kim and Lizhen Lin},
  title   = {A Likelihood Approach to Nonparametric Estimation of a Singular Distribution Using Deep Generative Models},
  journal = {Journal of Machine Learning Research},
  year    = {2023},
  volume  = {24},
  number  = {77},
  pages   = {1--42}
}

@inproceedings{TangYang1,
  title={Adaptivity of diffusion models to manifold structures},
  author={Tang, Rong and Yang, Yun},
  booktitle={International Conference on Artificial Intelligence and Statistics},
  pages={1648--1656},
  year={2024},
  organization={PMLR}
}

@inproceedings{Tang2024ConditionalDM,
  title={Conditional Diffusion Models are Minimax-Optimal and Manifold-Adaptive for Conditional Distribution Estimation},
  author={Rong Tang and Lizhen Lin and Yun Yang},
  booktitle={International Conference on Learning Representations},
  year={2025},

}

@article{TangYang2,
  title={Estimating Distributions with Low-dimensional Structures Using Mixtures of Generative Models},
  author={Tang, Rong and Yang, Yun},
  journal={arXiv preprint arXiv:2301.00890},
  year={2023}
}

@article{10.1214/18-AOS1685,
author = {Eddie Aamari and Cl{\'e}ment Levrard},
title = {{Nonasymptotic rates for manifold, tangent space and curvature estimation}},
volume = {47},
journal = {The Annals of Statistics},
number = {1},
publisher = {Institute of Mathematical Statistics},
pages = {177 -- 204},
year = {2019},
doi = {10.1214/18-AOS1685},
URL = {https://doi.org/10.1214/18-AOS1685}
}

@article{aamari,
  title={A theory of stratification learning},
  author={Aamari, Eddie and Berenfeld, Cl{\'e}ment},
  journal={arXiv preprint arXiv:2405.20066},
  year={2024}
}

@article{haro,
  title={Stratification learning: Detecting mixed density and dimensionality in high dimensional point clouds},
  author={Haro, Gloria and Randall, Gregory and Sapiro, Guillermo},
  journal={Advances in Neural Information Processing Systems},
  volume={19},
  year={2006}
}

@article{jacobs1991adaptive,
  title={Adaptive mixtures of local experts},
  author={Jacobs, Robert A and Jordan, Michael I and Nowlan, Steven J and Hinton, Geoffrey E},
  journal={Neural computation},
  volume={3},
  number={1},
  pages={79--87},
  year={1991},
  publisher={MIT Press}
}

@article{jordan1994hierarchical,
  title={Hierarchical mixtures of experts and the EM algorithm},
  author={Jordan, Michael I and Jacobs, Robert A},
  journal={Neural computation},
  volume={6},
  number={2},
  pages={181--214},
  year={1994},
  publisher={MIT Press}
}

@article{alberti2024manifold,
  title={Manifold learning by mixture models of VAEs for inverse problems},
  author={Alberti, Giovanni S and Hertrich, Johannes and Santacesaria, Matteo and Sciutto, Silvia},
  journal={Journal of Machine Learning Research},
  volume={25},
  number={202},
  pages={1--35},
  year={2024}
}

@inproceedings{stanczuk2024diffusion,
  title={Diffusion models encode the intrinsic dimension of data manifolds},
  author={Stanczuk, Jan Pawel and Batzolis, Georgios and Deveney, Teo and Sch{\"o}nlieb, Carola-Bibiane},
  booktitle={Forty-first International Conference on Machine Learning},
  year={2024}
}

@article{brown2022verifying,
  title={Verifying the union of manifolds hypothesis for image data},
  author={Brown, Bradley CA and Caterini, Anthony L and Ross, Brendan Leigh and Cresswell, Jesse C and Loaiza-Ganem, Gabriel},
  journal={arXiv preprint arXiv:2207.02862},
  year={2022}
}

@inproceedings{bendich2012local,
  title={Local homology transfer and stratification learning},
  author={Bendich, Paul and Wang, Bei and Mukherjee, Sayan},
  booktitle={Proceedings of the twenty-third annual ACM-SIAM symposium on Discrete Algorithms},
  pages={1355--1370},
  year={2012},
  organization={SIAM}
}

@inproceedings{oko2023diffusion,
  title={Diffusion models are minimax optimal distribution estimators},
  author={Oko, Kazusato and Akiyama, Shunta and Suzuki, Taiji},
  booktitle={International Conference on Machine Learning},
  pages={26517--26582},
  year={2023},
  organization={PMLR}
}

@article{kingma2013auto,
  title={Auto-encoding variational bayes},
  author={Kingma, Diederik P and Welling, Max},
  journal={arXiv preprint arXiv:1312.6114},
  year={2013}
}

@inproceedings{wu2022generalized,
  title={Generalized clustering and multi-manifold learning with geometric structure preservation},
  author={Wu, Lirong and Liu, Zicheng and Xia, Jun and Zang, Zelin and Li, Siyuan and Li, Stan Z},
  booktitle={Proceedings of the IEEE/CVF winter conference on applications of computer vision},
  pages={139--147},
  year={2022}
}

@article{sule2025learning,
  title={Learning collective variables that preserve transition rates},
  author={Sule, Shashank and Mehta, Arnav and Cameron, Maria K},
  journal={arXiv preprint arXiv:2506.01222},
  year={2025}
}

@article{nakada2020adaptive,
  title={Adaptive approximation and generalization of deep neural network with intrinsic dimensionality},
  author={Nakada, Ryumei and Imaizumi, Masaaki},
  journal={Journal of Machine Learning Research},
  volume={21},
  number={174},
  pages={1--38},
  year={2020}
}

@article{tang2023minimax,
  title={Minimax rate of distribution estimation on unknown submanifolds under adversarial losses},
  author={Tang, Rong and Yang, Yun},
  journal={The Annals of Statistics},
  volume={51},
  number={3},
  pages={1282--1308},
  year={2023},
  publisher={Institute of Mathematical Statistics}
}

@article{azangulov2024convergence,
  title={Convergence of diffusion models under the manifold hypothesis in high-dimensions},
  author={Azangulov, Iskander and Deligiannidis, George and Rousseau, Judith},
  journal={arXiv preprint arXiv:2409.18804},
  year={2024}
}

@article{robinson2025token,
  title={Token embeddings violate the manifold hypothesis},
  author={Robinson, Michael and Dey, Sourya and Chiang, Tony},
  journal={arXiv preprint arXiv:2504.01002},
  year={2025}
}

@article{li2025unraveling,
  title={Unraveling the localized latents: Learning stratified manifold structures in llm embedding space with sparse mixture-of-experts},
  author={Li, Xin and Sarwate, Anand},
  journal={arXiv preprint arXiv:2502.13577},
  year={2025}
}

@article{pidstrigach2022score,
  title={Score-based generative models detect manifolds},
  author={Pidstrigach, Jakiw},
  journal={Advances in Neural Information Processing Systems},
  volume={35},
  pages={35852--35865},
  year={2022}
}

@article{levina2004maximum,
  title={Maximum likelihood estimation of intrinsic dimension},
  author={Levina, Elizaveta and Bickel, Peter},
  journal={Advances in neural information processing systems},
  volume={17},
  year={2004}
}

@article{song2020score,
  title={Score-based generative modeling through stochastic differential equations},
  author={Song, Yang and Sohl-Dickstein, Jascha and Kingma, Diederik P and Kumar, Abhishek and Ermon, Stefano and Poole, Ben},
  journal={arXiv preprint arXiv:2011.13456},
  year={2020}
}

@article{kamkari2024geometric,
  title={A geometric view of data complexity: Efficient local intrinsic dimension estimation with diffusion models},
  author={Kamkari, Hamidreza and Ross, Brendan L and Hosseinzadeh, Rasa and Cresswell, Jesse C and Loaiza-Ganem, Gabriel},
  journal={Advances in Neural Information Processing Systems},
  volume={37},
  pages={38307--38354},
  year={2024}
}

@article{horvat2024gauge,
  title={On gauge freedom, conservativity and intrinsic dimensionality estimation in diffusion models},
  author={Horvat, Christian and Pfister, Jean-Pascal},
  journal={arXiv preprint arXiv:2402.03845},
  year={2024}
}

@article{leung2025convolutions,
  title={On Convolutions, Intrinsic Dimension, and Diffusion Models},
  author={Leung, Kin Kwan and Hosseinzadeh, Rasa and Loaiza-Ganem, Gabriel},
  journal={arXiv preprint arXiv:2506.20705},
  year={2025}
}

@article{de2022convergence,
  title={Convergence of denoising diffusion models under the manifold hypothesis},
  author={De Bortoli, Valentin},
  journal={arXiv preprint arXiv:2208.05314},
  year={2022}
}

@article{ho2020denoising,
  title={Denoising diffusion probabilistic models},
  author={Ho, Jonathan and Jain, Ajay and Abbeel, Pieter},
  journal={Advances in neural information processing systems},
  volume={33},
  pages={6840--6851},
  year={2020}
}

@article{schmidt2019deep,
  title={Deep relu network approximation of functions on a manifold},
  author={Schmidt-Hieber, Johannes},
  journal={arXiv preprint arXiv:1908.00695},
  year={2019}
}

@book{villani2008optimal,
  title={Optimal transport: old and new},
  author={Villani, C{\'e}dric and others},
  volume={338},
  year={2008},
  publisher={Springer}
}

@article{hyvarinen2005estimation,
  title={Estimation of non-normalized statistical models by score matching.},
  author={Hyv{\"a}rinen, Aapo and Dayan, Peter},
  journal={Journal of Machine Learning Research},
  volume={6},
  number={4},
  year={2005}
}

@article{vincent2011connection,
  title={A connection between score matching and denoising autoencoders},
  author={Vincent, Pascal},
  journal={Neural computation},
  volume={23},
  number={7},
  pages={1661--1674},
  year={2011},
  publisher={MIT Press}
}

@article{haussmann1986time,
  title={Time reversal of diffusions},
  author={Haussmann, Ulrich G and Pardoux, Etienne},
  journal={The Annals of Probability},
  pages={1188--1205},
  year={1986},
  publisher={JSTOR}
}

@article{lecun2015deep,
  title={Deep learning},
  author={LeCun, Yann and Bengio, Yoshua and Hinton, Geoffrey},
  journal={nature},
  volume={521},
  number={7553},
  pages={436--444},
  year={2015},
  publisher={Nature Publishing Group UK London}
}

@article{krizhevsky2012imagenet,
  title={Imagenet classification with deep convolutional neural networks},
  author={Krizhevsky, Alex and Sutskever, Ilya and Hinton, Geoffrey E},
  journal={Advances in neural information processing systems},
  volume={25},
  year={2012}
}

@article{vaswani2017attention,
  title={Attention is all you need},
  author={Vaswani, Ashish and Shazeer, Noam and Parmar, Niki and Uszkoreit, Jakob and Jones, Llion and Gomez, Aidan N and Kaiser, {\L}ukasz and Polosukhin, Illia},
  journal={Advances in neural information processing systems},
  volume={30},
  year={2017}
}

@inproceedings{cheng2016wide,
  title={Wide \& deep learning for recommender systems},
  author={Cheng, Heng-Tze and Koc, Levent and Harmsen, Jeremiah and Shaked, Tal and Chandra, Tushar and Aradhye, Hrishi and Anderson, Glen and Corrado, Greg and Chai, Wei and Ispir, Mustafa and others},
  booktitle={Proceedings of the 1st workshop on deep learning for recommender systems},
  pages={7--10},
  year={2016}
}

@article{barron2002universal,
  title={Universal approximation bounds for superpositions of a sigmoidal function},
  author={Barron, Andrew R},
  journal={IEEE Transactions on Information theory},
  volume={39},
  number={3},
  pages={930--945},
  year={2002},
  publisher={IEEE}
}

@article{bengio2013representation,
  title={Representation learning: A review and new perspectives},
  author={Bengio, Yoshua and Courville, Aaron and Vincent, Pascal},
  journal={IEEE transactions on pattern analysis and machine intelligence},
  volume={35},
  number={8},
  pages={1798--1828},
  year={2013},
  publisher={IEEE}
}

@article{belkin2003laplacian,
  title={Laplacian eigenmaps for dimensionality reduction and data representation},
  author={Belkin, Mikhail and Niyogi, Partha},
  journal={Neural computation},
  volume={15},
  number={6},
  pages={1373--1396},
  year={2003},
  publisher={MIT Press}
}

@article{belkin2006manifold,
  title={Manifold regularization: A geometric framework for learning from labeled and unlabeled examples.},
  author={Belkin, Mikhail and Niyogi, Partha and Sindhwani, Vikas},
  journal={Journal of machine learning research},
  volume={7},
  number={11},
  year={2006}
}

@inproceedings{ye2020mixtures,
  title={Mixtures of variational autoencoders},
  author={Ye, Fei and Bors, Adrian G},
  booktitle={2020 Tenth International Conference on Image Processing Theory, Tools and Applications (IPTA)},
  pages={1--6},
  year={2020},
  organization={IEEE}
}

@article{kruger2018set,
  title={Set regularities and feasibility problems},
  author={Kruger, Alexander Y and Luke, D Russell and Thao, Nguyen H},
  journal={Mathematical Programming},
  volume={168},
  number={1},
  pages={279--311},
  year={2018},
  publisher={Springer}
}

@article{federer1959curvature,
  title={Curvature measures},
  author={Federer, Herbert},
  journal={Transactions of the American Mathematical Society},
  volume={93},
  number={3},
  pages={418--491},
  year={1959},
  publisher={JSTOR}
}

@book{ghosal2017fundamentals,
  title={Fundamentals of nonparametric Bayesian inference},
  author={Ghosal, Subhashis and Van der Vaart, Aad W},
  volume={44},
  year={2017},
  publisher={Cambridge University Press}
}

@article{wong1995probability,
  title={Probability inequalities for likelihood ratios and convergence rates of sieve MLEs},
  author={Wong, Wing Hung and Shen, Xiaotong},
  journal={The Annals of Statistics},
  pages={339--362},
  year={1995},
  publisher={JSTOR}
}

@article{wang2024remoe,
  title={Remoe: Fully differentiable mixture-of-experts with relu routing},
  author={Wang, Ziteng and Zhu, Jun and Chen, Jianfei},
  journal={arXiv preprint arXiv:2412.14711},
  year={2024}
}

@article{liu2025improving,
  title={Improving the euclidean diffusion generation of manifold data by mitigating score function singularity},
  author={Liu, Zichen and Zhang, Wei and Li, Tiejun},
  journal={arXiv preprint arXiv:2505.09922},
  year={2025}
}

@book{gine2021mathematical,
  title={Mathematical foundations of infinite-dimensional statistical models},
  author={Gin{\'e}, Evarist and Nickl, Richard},
  year={2021},
  publisher={Cambridge university press}
}

@article{hashemian2013modeling,
  title={Modeling and enhanced sampling of molecular systems with smooth and nonlinear data-driven collective variables},
  author={Hashemian, Behrooz and Mill{\'a}n, Daniel and Arroyo, Marino},
  journal={The Journal of chemical physics},
  volume={139},
  number={21},
  year={2013},
  publisher={AIP Publishing}
}

@article{pope2021intrinsic,
  title={The intrinsic dimension of images and its impact on learning},
  author={Pope, Phillip and Zhu, Chen and Abdelkader, Ahmed and Goldblum, Micah and Goldstein, Tom},
  journal={arXiv preprint arXiv:2104.08894},
  year={2021}
}

@inproceedings{chen2023score,
  title={Score approximation, estimation and distribution recovery of diffusion models on low-dimensional data},
  author={Chen, Minshuo and Huang, Kaixuan and Zhao, Tuo and Wang, Mengdi},
  booktitle={International Conference on Machine Learning},
  pages={4672--4712},
  year={2023},
  organization={PMLR}
}

@article{potaptchik2024linear,
  title={Linear convergence of diffusion models under the manifold hypothesis},
  author={Potaptchik, Peter and Azangulov, Iskander and Deligiannidis, George},
  journal={arXiv preprint arXiv:2410.09046},
  year={2024}
}

@article{burgess2018understanding,
  title={Understanding disentangling in $\beta$-VAE},
  author={Burgess, Christopher P and Higgins, Irina and Pal, Arka and Matthey, Loic and Watters, Nick and Desjardins, Guillaume and Lerchner, Alexander},
  journal={arXiv preprint arXiv:1804.03599},
  year={2018}
}
\appendix
\section{Proofs for Section 3}
\subsection{Proof of Lemma~\ref{lm:approx}}
Suppose $\|f^*\|_\infty \leq V$ and takes the form $$f^*(w,z)=\sum_{k=1}^K \sum_{j=1}^{J_k} h^*_{k,j}(w)f^*_{k,j}(z).$$

For each $k\in \{1,...,K\}$ and $j=1,...,J_k$, Lemma 5 in \cite{Chae} implies there exists a neural network $f_{k,j} \in \m F^{(k)}=\Phi(L^{(k)},W^{(k)},R^{(k)},1,V^{(k)})$ with $L^{(k)} =O(\log \delta^{-1})$, $\|W^{(k)}\|_\infty =O(\delta^{-d_k/(\alpha_k+1)})$, $R^{(k)}=O(\delta^{-d_k/(\alpha_k+1)}\log \delta^{-1})$ satisfying $\|f_{k,j}-f^*_{k,j}\|_\infty \leq \frac{\delta}{2KJ_k}$.\\

\noindent Although one can approximate $\mathbf{1}_{[a,b)}$ with a shallow ReLU network with bounded parameters, it would require a large width and sparsity to create a steep slope, which dominates order of the overall error. However, we can use depth in order to stay within bounded weights:

\begin{lemma}\label{lm:gate_approx}
    Fix $0\leq a<b\leq 1$, $1\leq p < \infty$ and $0<\xi<1$. Then there exists $h \in  \Phi(L,W,R,1,1)$ with $L=O(\log \xi^{-1})$, $\|W\|_\infty=O(1)$ and $R=O(\log \xi^{-1})$ such that
    \begin{enumerate}
        \item $h(x)=1$ for $x \in [a,b]$
        \item $h(x)=0$ for $x \leq a-\xi$ and $x \geq b + \xi$.
        \item $\|h-\mathbf{1}_{[a,b]}\|_{L^p} \leq (2\xi)^{1/p}$
    \end{enumerate}
\end{lemma}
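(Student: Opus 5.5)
The plan is to build $h$ as a trapezoid: it rises linearly from $0$ at $a-\xi$ to $1$ at $a$, stays equal to $1$ on $[a,b]$, and falls linearly back to $0$ at $b+\xi$. The natural formula is
\[
h(x)=\min\{1,\operatorname{ReLU}(s(x-a)+1),\operatorname{ReLU}(s(b-x)+1)\},\qquad s=\xi^{-1}.
\]
Equivalently, one can write it as a sum of four ReLUs, $s[\operatorname{ReLU}(x-a+\xi)-\operatorname{ReLU}(x-a)-\operatorname{ReLU}(x-b)+\operatorname{ReLU}(x-b-\xi)]$. Properties 1 and 2 then hold by direct inspection. For property 3, note that $0\le h\le 1$ and that $h$ differs from $\mathbf 1_{[a,b]}$ only on $(a-\xi,a)\cup(b,b+\xi)$, where $|h-\mathbf 1_{[a,b]}|\le 1$. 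This set has total Lebesgue measure at most $2\xi$, so $\|h-\mathbf 1_{[a,b]}\|_{L^p}\le (2\xi)^{1/p}$. The output bound $\|h\|_\infty\le 1$ is immediate from the min with $1$, which can be written as $1-\operatorname{ReLU}(1-y)$ for $y\ge 0$, together with $\min(u,v)=u-\operatorname{ReLU}(u-v)$.

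The only real issue is the weight constraint $B=1$. The slope $s=\xi^{-1}$ is large, and a single affine layer with weights bounded by $1$ cannot produce it. I would realize the multiplication by $s$ through depth instead. A layer with two identical ReLU units feeding into their sum doubles a nonnegative input while using only unit weights: $y\mapsto \operatorname{ReLU}(y)+\operatorname{ReLU}(y)=2y$ for $y\ge0$. Composing this $m=\lceil\log_2\xi^{-1}\rceil$ times multiplies by $2^m\ge\xi^{-1}$. Each doubling block has width $O(1)$ and $O(1)$ nonzero parameters, which gives $L=O(\log\xi^{-1})$, $\|W\|_\infty=O(1)$ and $R=O(\log\xi^{-1})$.

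The construction then runs in three stages.
\begin{enumerate}
\item Compute the two nonnegative ramps $u_1=\operatorname{ReLU}(x-a+\xi)$ and $u_2=\operatorname{ReLU}(b+\xi-x)$ in the first layer. The biases $a-\xi$ and $b+\xi$ have absolute value at most $2$. If $b+\xi>1$, I would split the bias across two units, or carry a constant unit through the layers to supply it, so that every parameter stays within magnitude $1$.
\item Run both ramps in parallel through the doubling blocks. This yields $2^m u_1$ and $2^m u_2$, with the slope now $2^m$ rather than exactly $\xi^{-1}$.
\item Apply the clipping: first form $\min(2^m u_1,2^m u_2)$, then clip at $1$ by computing $1-\operatorname{ReLU}(1-\cdot)$, all with unit weights.
\end{enumerate}

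Using slope $2^m\ge\xi^{-1}$ has a side effect I would account for explicitly. The result now equals $1$ on the slightly larger interval $[a-\xi+2^{-m},\,b+\xi-2^{-m}]\supseteq[a,b]$ and is $0$ outside $(a-\xi,b+\xi)$. Properties 1 and 2 still hold, and the disagreement set is still contained in $(a-\xi,a)\cup(b,b+\xi)$, so property 3 is unaffected.

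I expect the main obstacle to be this bookkeeping. Specifically, I need to check that every bias and weight genuinely stays in $[-1,1]$, including constants such as $b+\xi$ that may exceed $1$, and that the sparsity count stays logarithmic.
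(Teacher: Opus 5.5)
Your proposal is correct and follows essentially the same route as the paper: a ReLU trapezoid whose steep slope is produced through depth by $O(\log\xi^{-1})$ unit-weight doubling layers of the form $(u,v)\mapsto(\operatorname{ReLU}(u+v),\operatorname{ReLU}(u+v))$. The differences are cosmetic. The paper clips each ramp before doubling, via $\operatorname{ReLU}(x-c)-\operatorname{ReLU}(x-c-\eta)$ with $\eta=2^{-m}\le\xi$, and subtracts two sigmoidal ramps instead of taking a min; your explicit handling of biases such as $b+\xi>1$ also fills a small bookkeeping gap that the paper leaves unaddressed.
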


From Lemma~\ref{lm:gate_approx} with $\xi \asymp \delta^2$, for each $k=1,...,K$ and $j=1,...,J_k$, there exists $h_{k,j}$ with $\widetilde{L}=O(\log \delta^{-1})$, $\|\widetilde{W}\|_\infty = O(1)$ and $\widetilde{R} = O(\log \delta^{-1})$ such that
$\|h_{k,j}-h^*_{k,j}\|_2 \leq \frac{\delta}{2VKJ_k}.$\\

Let $f(w,z)=\sum_{k=1}^K\sum_{j=1}^{J_k} h_{k,j}(w)f_{k,j}(z)$. Note that $h^*_{k,j}$ have mutually disjoint support for each $k,j$, and so $\|f_{k,j}^*\|_\infty \leq V$. It then follows that
\begin{align*}
    \|f-f^*\|_2 &\leq \sum_{k=1}^K \sum_{j=1}^{J_k} \|h_{k,j}f_{k,j} - h^*_{k,j}f^*_{k,j}\|_2\\
    &\leq \sum_{k=1}^K \sum_{j=1}^{J_k} \|h_{k,j}f_{k,j} - h_{k,j}f^*_{k,j} +h_{k,j}f^*_{k,j} - h^*_{k,j}f^*_{k,j}\|_2 \\
    & \leq \sum_{k=1}^K \sum_{j=1}^{J_k} \left(\|h_{k,j}\|_\infty\|f_{k,j}-f^*_{k,j}\|_2 + \|f^*_{k,j}\|_\infty\|h_{k,j}-h^*_{k,j}\|_2\right)\\
    & \leq \sum_{k=1}^K \sum_{j=1}^{J_k}\left( 1\cdot \|f_{k,j} - f^*_{k,j}\|_\infty + \|f^*_{k,j}\|_\infty \|h_{k,j}-h^*_{k,j}\|_2\right)\\
    & \leq \sum_{k=1}^K \sum_{j=1}^{J_k} \left(\frac{\delta}{2KJ_k} + V\frac{\delta}{2VKJ_k}\right) = \delta.
\end{align*}

$\hfill{\Box}$

\subsection{Proof of Lemma~\ref{lm:covering}}
Recall the following Lemma from Chae et. al:

\begin{lemma}\label{lm:chae_cover}
    Let $\mathcal{F}$ be a class of functions from $\mathcal{Z}$ to $\mathbb{R}^D$ such that $\|f\|_\infty \leq V$ for every $f \in \mathcal{F}$. Let $\mathcal{P}=\{P_{f,\sigma}: f\in \mathcal{F}, \sigma \in[\sigma_{min},\sigma_{max}]\}$ with $\sigma_{min}\leq 1$. There, there exists constants $c=c(D,V,\sigma_{max})$, $c'=c'(D,V,\sigma_{max})$ and $\delta_*=\delta_*(D)$ such that $$\log N_{[]}(\delta,\mathcal{P},d_H) \leq \log N(c\sigma_{min}^{D+3}\delta^4,\mathcal{F},\|\cdot\|_\infty)+\log \Big(\frac{c'}{\sigma_{min}^{D+2}\delta^4}\Big)$$ for every $\delta \in (0,\delta_*].$
\end{lemma}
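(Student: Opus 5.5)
The plan is a standard bracketing construction: cover the generator class in sup norm, discretize the noise level on a grid, and turn each pair (generator, noise level) into a Hellinger bracket. The brackets use a pointwise perturbation bound on a large ball and a uniform Gaussian envelope outside it.

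\emph{Step 1 (pointwise perturbation).} Fix $\eta,\eta'>0$. Let $\{f_i\}$ be an $\eta$-net of $\mathcal{F}$ in $\|\cdot\|_\infty$, and let $\{\sigma_j\}$ be a grid of $[\sigma_{\min},\sigma_{\max}]$ with mesh $\eta'$, so there are at most $\sigma_{\max}/\eta'+1$ points. Take $f\in\mathcal{F}$ and $\sigma$ with $\|f-f_i\|_\infty\le\eta$ and $|\sigma-\sigma_j|\le\eta'$.

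Since $\|\nabla\phi_\sigma\|_\infty\lesssim\sigma^{-(D+1)}$ and $|\partial_\sigma\phi_\sigma|\lesssim\sigma^{-(D+1)}$ uniformly in $x$, integrating against $P_Z$ gives
$$\sup_x|p_{f,\sigma}(x)-p_{f_i,\sigma_j}(x)|\le C\sigma_{\min}^{-(D+1)}(\eta+\eta')=:\varepsilon .$$

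\emph{Step 2 (envelope and brackets).} Every density in $\mathcal{P}$ satisfies $p_{f,\sigma}(x)\le H(x)$, where $H(x):=C\sigma_{\min}^{-D}\exp\!\big(-(\|x\|-V)_+^2/(2\sigma_{\max}^2)\big)$. This holds because $\|f\|_\infty\le V$ and $\sigma\le\sigma_{\max}$. Define the bracket
$$l_{ij}=(p_{f_i,\sigma_j}-\varepsilon)_+,\qquad u_{ij}=\min(p_{f_i,\sigma_j}+\varepsilon,\,H).$$
By Step 1, every $p_{f,\sigma}$ in the corresponding cell lies in $[l_{ij},u_{ij}]$.

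\emph{Step 3 (Hellinger size and parameter choice).} Using $(\sqrt u-\sqrt l)^2\le u-l$, the squared Hellinger size of each bracket satisfies
$$d_H^2(l_{ij},u_{ij})\le\int(u_{ij}-l_{ij})\le 2\varepsilon\,\mathrm{vol}(B_{V+R})+2\int_{\|x\|>V+R}H .$$
Choose $R\asymp\sigma_{\max}\sqrt{\log(1/(\sigma_{\min}\delta))}$ so that the tail integral is at most $\delta^2/2$. It then suffices to have $\varepsilon R^D\lesssim\delta^2$, i.e. $\eta,\eta'\lesssim\sigma_{\min}^{D+1}\delta^2/\mathrm{polylog}(1/(\sigma_{\min}\delta))$.

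This is where I expect the main (though routine) difficulty: the bookkeeping that removes the logarithmic factor so the result matches the clean stated form. I would absorb the polylog into one extra factor $\sigma_{\min}^{2}\delta^{2}$, using $\log^{D/2}(1/x)\lesssim x^{-1}$ for $x\le\delta_*$. Here the threshold $\delta_*(D)$ is chosen small enough for this inequality to hold and for the tail step to go through. This yields admissible choices $\eta=c\sigma_{\min}^{D+3}\delta^4$ and $\eta'\asymp\sigma_{\min}^{D+2}\delta^4$.

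\emph{Step 4 (counting).} The number of brackets is at most $N(c\sigma_{\min}^{D+3}\delta^4,\mathcal{F},\|\cdot\|_\infty)\cdot c'/(\sigma_{\min}^{D+2}\delta^4)$. Taking logarithms gives the claimed bound, with $c,c'$ depending only on $D,V,\sigma_{\max}$.
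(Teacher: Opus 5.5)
Your proposal is correct: the sup-norm perturbation bound $C\sigma_{\min}^{-(D+1)}(\eta+\eta')$, the envelope-truncated brackets $[(p-\varepsilon)_+,\min(p+\varepsilon,H)]$, the $L^1$-to-Hellinger conversion, and the absorption of the $\log^{D/2}(1/(\sigma_{\min}\delta))$ volume factor into extra powers of $\sigma_{\min}\delta\le 1$ all check out, and your choices $\eta\asymp\sigma_{\min}^{D+3}\delta^4$ and $\eta'\asymp\sigma_{\min}^{D+2}\delta^4$ are admissible. The paper does not prove this lemma itself but imports it from \cite{Chae}, and your argument is the standard Ghosal--van der Vaart-type bracketing construction, which, judging from the stated scales, is the kind of argument behind that cited result.
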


Note that this lemma holds for any class of bounded functions $\mathcal{F}$ from $\mathcal{Z}\to \mathbb{R}^D$. Given a stratified space consisting of $K$ manifolds and a decomposition of each manifold into $J_k$ charts, a function $f$ in the mixture-of-experts sieve $\m G^{K,\mathbf{J}}$ is of the form $f(w,z)=\sum_{k=1}^K \sum_{j=1}^{J_k}h_{k,j}(w)f_{k,j}(z)$, where $h_{k,j} \in \mathcal{H}$ and $f_{k,j} \in \mathcal{F}^{(k)}$ for each $k=1,...,K$ and $j=1,...,J_k$. Since $\|h_{k,j}\|_\infty \leq 1$ and $\|f_{k,j}\|_\infty \leq V$, for any $\tilde{f}(w,z)=\sum_{k=1}^K \sum_{j=1}^{J_k} \tilde{h}_{k,j}(w)\tilde{f}_{k,j}(z) \in \m G^{K,\mathbf{J}}$,
\begin{align*}
    \|f-\tilde{f}\|_\infty &=\left\|\sum_{k=1}^K \sum_{j=1}^{J_k} (h_{k,j}f_{k,j}-\tilde{h}_{k,j}\tilde{f}_{k,j})\right\|_\infty\\
    &\leq \sum_{k=1}^K \sum_{j=1}^{J_k}\|h_{k,j} \|_\infty\|f_{k,j}-\tilde{f}_{k,j}\|_\infty + \|\tilde{f}_{k,j}\|_\infty\|h_{k,j}-\tilde{h}_{k,j}\|_\infty\\
    & \leq \sum_{k=1}^K \sum_{j=1}^{J_k} \|f_{k,j}-\tilde{f}_{k,j}\|_\infty + V\|h_{k,j}-\tilde{h}_{k,j}\|_\infty
\end{align*} 
Let $M=\sum_{k=1}^K J_k$. Taking an $\frac{\epsilon}{2M}$-cover of $\m F^{(k)}$ and $\frac{\epsilon}{2VM}$-cover of $\m H$ for each $k=1,...,K$ and $j=1,...,J_k$ gives us an $\epsilon$-cover for $\m G^{K,\mathbf{J}}$, and hence
\begin{align*}
    \log N(\epsilon, \m G^{K,\mathbf{J}},\|\cdot\|_\infty) &\leq \sum_{k=1}^K \sum_{j=1}^{J_k}\log N\left(\frac{\epsilon}{2M}, \m F^{(k)},\|\cdot\|_\infty\right)+ M \log N\left(\frac{\epsilon}{2VM},\m H,\|\cdot\|_\infty\right)\\
    & =\sum_{k=1}^K J_k \log N\left(\frac{\epsilon}{2M}, \m F^{(k)}, \|\cdot \|_\infty\right) +M \log N\left( \frac{\epsilon}{2VM},\m H, \|\cdot \|_\infty\right)
\end{align*}
Applying Lemma~\ref{lm:chae_cover} and setting $\epsilon=c\sigma_{\min}^{D+3}\delta^4$ gives the claim after absorbing constants.
$\hfill{\Box}$\\

\subsection{Proof of Theorem~\ref{th:hellinger}}
(1) Follows from Lemma~\ref{lm:covering} and (2) follows from Lemma~\ref{lm:approx} with $\delta=\delta_{\operatorname{app}}$.

For (3), we follow the idea in the proofs of Theorem 3 and Corollary 6 in \cite{Chae} and provide their proofs for completeness. To this end, choose constants $c_1,c_2,c_3,c_4$ with $c_1=1/3$ and $c_3>2$ as in Theorem 1 in \cite{wong1995probability} and constants $c,c'$ in their Lemma 1.\\

Let $\hat{R}=\widetilde{R} + \sum_{k=1}^K J_kR^{(k)}$ be the total sparsity. For $\delta \in (0,c_3\delta_*]$,
$$\log N_{[]}\left(\frac{\delta}{c_3},\m P,d_H\right) \leq 4(\hat{R}+1)\log \delta^{-1} + \hat{R}A + (D+3)(\hat{R}+1)\log \sigma_{\min}^{-1}+c_5\hat{R}$$
for some $c_5=c_5(c,c',c_3)$ by (1). Thus, for all $\epsilon \leq \sqrt{2}\delta_* \leq c_3\delta_*/\sqrt{2}$,
\begin{align*}
    &\int_{\epsilon^2/2^8}^{\sqrt{2}\epsilon} \sqrt{\log N_{[]}(\delta/c_3, \m P, d_H)}\,d\delta \\
    &\leq \sqrt{2}\epsilon\sqrt{\hat{R}A+(D+3)(\hat{R}+1)\log \sigma_{\min}^{-1}+c_5\hat{R}}+\sqrt{2}\epsilon\sqrt{4(\hat{R}+1)}\sqrt{\log \frac{2^8}{\epsilon^2}}
\end{align*}

This can be bounded by $c_4n^{1/2}\epsilon_n^2$, where we take $\epsilon=\epsilon_n=c_6\sqrt{n^{-1}\hat{R}(A+\log(n/\sigma_{\min}))}$, for a large enough constant $c_6=c_6(c_4,c_5,D)$. Then by Example B.12 in \cite{ghosal2017fundamentals},
\begin{align*}
    \text{KL}(p_*,p_{f,\sigma_*}) &\leq \int \text{KL}(N(f^*(z),\sigma_*^2),N(f(z),\sigma_*^2))dP_Z(z)\\
    & = \int \frac{\|f^*(z)-f(z)\|_2^2}{2\sigma_*^2}dP_Z(z) \leq \frac{\delta_{\text{app}}^2}{2\sigma_*^2}.
\end{align*}
We also have that
$$\int \left(\log \frac{\phi_\sigma(x)}{\phi_\sigma(x-y)} \right)^2\phi_\sigma(x)dx =\int \frac{\|y\|_2^4 +4|x^Ty|^2}{4\sigma^2}\phi_\sigma(x)dx \leq \frac{\|y\|^4_2}{4\sigma^2} + \|y\|_2^2\int \frac{\|x\|^2}{\sigma^2} \phi_\sigma(x)dx.$$
Thus, by Example B.12, (B.17) and Exercise B.8 in \cite{ghosal2017fundamentals},
\begin{align*}
    \int \left(\log \frac{p_*(x)}{p_{f,\sigma_*}(x)}\right)^2dP_*(x) &\leq \iint \left( \log \frac{\phi_{\sigma_*}(x-f_*(z)}{\phi_{\sigma_*}(x-f(z))}\right)^2\phi_\sigma(x-f_*(z))dxdP_z + 4\text{KL}(p_*,p_{f,\sigma_*})\\
    &\leq \frac{\delta_{\text{app}}^4}{4\sigma_*^2} + \delta_{\text{app}}^2\int \frac{\|x\|^2_2}{\sigma_*^2}\phi_{\sigma_*}(x)dx + \frac{2\delta_{\text{app}}^2}{\sigma_*^2} \leq c_7\frac{\delta_{\text{app}}^2}{\sigma_*^2},
\end{align*}
for some constant $c_7$. Let $\delta_n=\frac{\delta_{\text{app}}^2}{2\sigma_*^2}$ and $\tau_n=c_7\frac{\delta^2_{\text{app}}}{\sigma_*^2}$. Then by Theorem 4 in \cite{wong1995probability} with $\epsilon_n^*=\epsilon_n \vee \sqrt{12\delta_n}$,
$$P_*(d_H(\hat{p},p_*) > \epsilon_n^*) \leq 5e^{-c_2n\epsilon_n^{*2}} + \frac{12\tau_n}{n\epsilon_n^{*2}} \leq 5e^{-c_2n\epsilon_n^{*2}} + \frac{\tau_n}{n\delta_n} = 5e^{-c_2n\epsilon_n^{*2}} + \frac{2c_7}{n}.$$ 
The Theorem follows after redefining constants. $\hfill{\Box}$

\subsection{Proof of Theorem~\ref{th:wasserstein}}
We will make use of the following lemma to separate the contributions from the partitioned space:

\begin{lemma}\label{lm:wasser}
    Let $Y \subseteq \mathbb{R}^D$ be a compact set, let $R=\text{diam}(Y)$, and let $\sum_{k=1}^K b_kQ_k$ be a mixture distribution on $Y$. Then for any mixture distribution $\sum_{k=1}^K a_k\tilde{Q}_k$ that is also supported on $Y$, $$W_1(\sum_{k=1}^K a_k\tilde{Q}_k, \sum_{k=1}^K b_kQ_k) \leq \sum_{k=1}^K b_k W_1(\tilde{Q}_k,Q_k) + \frac{R}{2}\sum_{k=1}^K |a_k-b_k|.$$
\end{lemma}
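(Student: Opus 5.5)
Write both mixtures relative to the intermediate mixture $\sum_k b_k\tilde Q_k$ and use the triangle inequality for $W_1$:
$$W_1\Big(\sum_k a_k\tilde Q_k,\sum_k b_kQ_k\Big)\le W_1\Big(\sum_k a_k\tilde Q_k,\sum_k b_k\tilde Q_k\Big)+W_1\Big(\sum_k b_k\tilde Q_k,\sum_k b_kQ_k\Big).$$
Both mixtures are supported on $Y$, so $W_1$ is finite and the triangle inequality applies. We implicitly assume $a_k,b_k\ge0$ with $\sum_k a_k=\sum_k b_k=1$, so all three measures are probability measures on $Y$.

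\emph{Second term.} We use joint convexity of $W_1$ through couplings. For each $k$, let $\gamma_k$ be an optimal coupling of $(\tilde Q_k,Q_k)$; it exists because $Y$ is compact. Then $\gamma=\sum_k b_k\gamma_k$ is a coupling of $\sum_k b_k\tilde Q_k$ and $\sum_k b_kQ_k$, and
$$\int\|x-y\|\,d\gamma=\sum_k b_kW_1(\tilde Q_k,Q_k).$$
This gives the first summand of the bound. Alternatively, the Kantorovich--Rubinstein dual gives the same bound directly: for $1$-Lipschitz $\phi$, $\int\phi\, d(\sum b_k\tilde Q_k-\sum b_kQ_k)=\sum b_k\int\phi\, d(\tilde Q_k-Q_k)$.

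\emph{First term.} Use the dual. Because the measures live on $Y$, we may restrict to $1$-Lipschitz $\phi$ on $Y$. Since both measures have mass one, we may shift $\phi$ by a constant; choose it so that $\sup_Y\phi=-\inf_Y\phi$. Then $\|\phi\|_{\infty,Y}\le R/2$, because $\sup_Y\phi-\inf_Y\phi\le \operatorname{diam}(Y)=R$. Hence
$$\Big|\sum_k(a_k-b_k)\int\phi\,d\tilde Q_k\Big|\le \frac R2\sum_k|a_k-b_k|.$$
Taking the supremum over $\phi$ gives the second summand. Equivalently, this is the bound $W_1(\mu,\nu)\le \frac R2\|\mu-\nu\|_{TV}$-type, with total variation $\sum_k|a_k-b_k|$.

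The only delicate point is the constant $R/2$ rather than $R$. It relies on the centering trick, which requires equal total masses. If the weights were not normalized, the shift would fail, so I would state the normalization explicitly. Otherwise the argument is routine.
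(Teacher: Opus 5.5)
Your proof is correct. Its skeleton matches the paper's: the triangle inequality through $\sum_k b_k\tilde Q_k$, and the component term bounded by the mixed coupling $\sum_k b_k\gamma_k$.

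The difference is in the weight term. The paper cites Villani's Theorem 6.15 for $W_1(\mu,\nu)\le R\|\mu-\nu\|_{TV}$ and pairs it with $\|\sum_k a_k\tilde Q_k-\sum_k b_k\tilde Q_k\|_{TV}\le\frac12\sum_k|a_k-b_k|$. You instead use Kantorovich--Rubinstein duality with a centered test function. That proves the needed inequality from scratch and shows exactly where the factor $\frac12$ comes from: equal total masses permit the centering.

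This choice buys you something real. Villani's theorem is stated with the total variation measure, $W_1\le\int d(x_0,x)\,d|\mu-\nu|(x)$, so with $x_0\in Y$ it gives $R\,|\mu-\nu|(Y)$. Here $|\mu-\nu|(Y)$ equals $\sum_k|a_k-b_k|$ when the $\tilde Q_k$ are mutually singular. Read literally, the paper's citation therefore yields only $R\sum_k|a_k-b_k|$. The paper's constant $R/2$ is correct, but justifying it requires the half-normalized distance $\sup_A|\mu(A)-\nu(A)|$ together with the maximal-coupling bound $W_1\le R\sup_A|\mu(A)-\nu(A)|$, not Theorem 6.15 as cited. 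Your closing remark, $W_1\le\frac R2|\mu-\nu|(Y)$ with $|\mu-\nu|(Y)\le\sum_k|a_k-b_k|$, pairs the normalizations correctly. The paper's route is shorter and leans on a standard inequality; yours is self-contained and makes the constant airtight.

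One tacit assumption is shared by both arguments. The intermediate mixture needs every $\tilde Q_k$ with $b_k>0$ to be supported on $Y$, and the hypothesis guarantees this only for those $k$ with $a_k>0$. So your sentence ``so all three measures are probability measures on $Y$'' implicitly reads the hypothesis as ``each component lives on $Y$.'' That is the natural reading, and it holds in the paper's application, where all pieces are restrictions of measures on $\overline{B(0,V)}$.
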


We proceed to assume that $\epsilon$ and $\sigma_*\sqrt{\log \epsilon^{-1}}$ are sufficiently small; otherwise we just need a suitably large enough constant $C$ so that the inequality will hold. Recall from the proof of Theorem 7 in \cite{Chae} that for the choice $t_*=(2D\sigma_*^2\log(D/\epsilon))^{1/2}$, we have that
$$\int_{\|x\|_2 \geq t_*} \phi_{\sigma_*}(x)dx \leq \epsilon.$$
Throughout the proof, we fix the following parameters:
$$\eta_1=2t_*,\quad \eta_0=4t_*,\quad \rho=8t_*,$$
and assume that $\rho < \min\{r_*,1,\rho_1\}$. Suppose $\sigma \leq C_0\sigma_*\sqrt{\log(D/\epsilon)}$ for some constant $C_0$. This implies for $Z\sim \m N(0,\sigma^2 I_D),Z_* \sim \m N(0,\sigma_*^2I_D)$ that there exists a constant $c_0>0$ such that
$$\mb{P}(\|Z\|<t_*)\geq c_0,\quad \mb{P}(\|Z_*\|\geq t_*)\leq \epsilon.$$

To handle intersections, we define for each manifold $M_k$: $$A_k=\left\{x \in M_k \mid \text{dist}\left(x,\bigcup_{\ell \neq k} M_\ell \right) \geq 2\rho \right\},\quad A=\bigcup_{k=1}^K A_k,\quad I_{\mathrm{sing}}=\m S\setminus A.$$
\begin{remark}
    If two manifolds are sufficiently close without intersection, this excision would remove part of the manifold away from the intersections. However, we will assume enough of a margin and let $\rho$ be sufficiently small so that this excision only occurs at around singularities.
\end{remark}

Recall that we assume each stratum is compact with regularity assumptions along with intersections being transversal in Assumptions 1 and 2. This implies the local metric inequality $\text{dist}(x,M_k \cap (\bigcup_{\ell \neq k} M_\ell)) \lesssim \text{dist}(x, \bigcup_{\ell \neq k} M_\ell)$ (see Theorem 1 in \citep{kruger2018set}). By compactness, this local bound holds globally. We also have the reverse bound induced by the inclusion $M_k \cap (\bigcup_{\ell \neq k} M_\ell) \subset \bigcup_{\ell \neq k} M_\ell$, which implies $\text{dist}(x,M_k \cap (\bigcup_{\ell \neq k} M_\ell)) \asymp \text{dist}(x, \bigcup_{\ell \neq k} M_\ell)$. Hence, $A_k$ is (up to a constant depending on the angle $\theta_0$ between strata at intersections) the complement of a $2\rho$-tubular neighborhood of the singularity $\m S_{\text{sing}}$ in $M_k$. Combining this with $\text{reach}(M_k) = r_k > \rho$ and standard properties of sets with positive reach \citep{federer1959curvature}, we have that $\text{reach}(A_k) \geq \rho$. Since the sets $A_k$ are separated by $\rho$, we conclude that 
$$\text{reach}(A) \geq \min \left\{ \min_k \text{reach}(A_k),\rho\right\} \geq \rho.$$

Since $f(\m Z)$ need not lie entirely in $\m S$, we partition $\mb{R}^D$ as follows. Let
$$U=\{x \in \mb{R}^D:\operatorname{dist}(x,A)<\eta_0\},\quad U^c=\mb{R}^D \setminus U.$$
Also, let $$a_1=Q_f(U),\quad a_2=Q_f(U^c),\quad b_1=Q_*(A),\quad b_2= Q_*(I_{\mathrm{sing}}),$$
and the normalized restrictions
$$Q_f^{(1)}=\frac{1}{a_1}Q_f|_U,\quad Q_f^{(2)}=\frac{1}{a_2}Q_f|_{U^c},\quad Q_*^{(1)}=\frac{1}{b_1}Q_*|_A, \quad Q_*^{(2)}=\frac{1}{b_2}Q_*|_{I_\mathrm{sing}}.$$
This implies that $a_1+a_2=Q_f(\mb{R}^D)=1$ and $b_1+b_2=Q_*(\m S)=1$. Since $\|f\|_\infty,\|f^*\|_\infty \leq V$, all measures are supported in $Y=\overline{B(0,V)}$. So by Lemma~\ref{lm:wasser} with $R=\operatorname{diam}(Y)\leq 2V$, we have that
$$W_1\left(Q_f,Q_*\right) \leq \sum_{i=1}^2 b_iW_1(Q_f^{(i)},Q_*^{(i)})+ \frac{R}{2}\sum_{i=1}^2 |a_i-b_i|.$$

\paragraph{Bounding $b_2$ and $b_2W_1(Q_f^{(2)},Q_*^{(2)})$.} Recall that the densities on each manifold are bounded from above and that intersections have positive codimension. Since the set $I_{\mathrm{sing}}$ is essentially a union of geodesic tubes of radius $\rho$ around lower-dimensional subsets, its volume within $M_k$ is $O(\rho^d)$ for some $d \geq 1$. Thus, there exists a constant $c$ such that
$$b_2=\sum_{k=1}^K \omega_k\int_{I_{\mathrm{sing}}\cap M_k} q_k \, d\text{vol}_{M_k}(x) \leq c \sum_{k=1}^K\text{vol}_{M_k}(I_{\mathrm{sing}} \cap M_k) \lesssim \rho$$
As $Q_f^{(2)}$ and $Q_*^{(2)}$ are supported in $Y$, we have $W_1(Q_f^{(2)},Q_*^{(2)})\leq 2V$, and so $$b_2W_1(Q_f^{(2)},Q_*^{(2)}) \leq 2V b_2 \lesssim 2V\cdot\rho \lesssim \sigma_* \sqrt{\log \epsilon^{-1}}.$$

\paragraph{Bounding $a_2$.} Let $\m S^{\eta_1}=\{x\in \mb{R}^D:\operatorname{dist}(x,\m S) < \eta_1\}$ and consider the splitting
$$a_2=\underbrace{Q_f(U^c \cap \m S^{\eta_1})}_{\tilde{a}_2} + \underbrace{Q_f(\mb{R}^D \setminus \m S^{\eta_1})}_{\hat{a}_2},$$
where $\tilde{a}_2$ is the proportion of points closer to $\m S$, and $\hat{a}_2$ is the proportion farther away.

If $X \in \mb{R}^D \setminus S^{\eta_1}$ and $\|Z\| <t_*$, we have $\operatorname{dist}(X+Z,\m S)\geq \operatorname{dist}(X,\m S)-\|Z\|>t_*=\eta_1/2$, so $X+Z \not\in \m S^{\eta_1/2}$. Hence
$$P_{f,\sigma}(\mb{R}^D \setminus \m S^{\eta_1/2}) \geq \int_{\mb{R}^D \setminus \m S^{\eta_1}} \mb{P}(\|Z\|<t_*)Q_f(dx)=\hat{a}_2\mb{P}(\|Z\|<t_*).$$

Conversely, since $Q_*$ is supported on $\m S$,
$$P_*(\mb{R}^D \setminus \m S^{\eta_1/2}) = \int_{\m S} \mb{P}(X+Z_* \notin \m S^{\eta_1/2})Q_*(dx) \leq \mb{P}(\|Z_*\|\geq t_*)\leq \epsilon.$$
With our bound on Hellinger distance, we have that $|P_{f,\sigma}(\cdot)-P_*(\cdot)|\leq d_{TV}(P_{f,\sigma},P_*) \leq \sqrt{2}\epsilon$, which implies that
$$\hat{a}_2\mb{P}(\|Z\|<t_*)\leq \epsilon + \sqrt{2}\epsilon.$$
As $\mb{P}(\|Z\|<t_*)\geq c_0>0$, we have
$$\hat{a}_2 \lesssim \epsilon.$$

Now for the mass near $\m S$, we define the set $$V=\{ x \in \mb{R}^D: \operatorname{dist}(x,I_{\mathrm{sing}})<3t_*\}.$$
Consider $y \in U^c \cap \m S^{\eta_1}$ and let $x \in \m S$ be such that $\|x-y\|<\eta_1$. If $x \in A$, then $\operatorname{dist}(x,A)\leq \|x-y\| <\eta_1<\eta_0$, so $y \not\in U^c$, a contradiction. Hence $x \in I_{\mathrm{sing}}$ and $\operatorname{dist}(y,I_{\mathrm{sing}})<2t_*$, so that $y \in V$. For any $X \in U^c \cap \m S^{\eta_1}$ and any $Z$, we hence have
$$\|Z\|< t_* \implies \operatorname{dist}(X+Z, I_{\mathrm{sing}})\leq \operatorname{dist}(X,I_{\mathrm{sing}})+\|Z\|<3t_*,$$
which implies that $X + Z \in V$. Therefore,
$$P_{f,\sigma}(V) \geq \int_{U^c \cap \m S^{\eta_1}} \mb{P}(\|Z\|<t_*)Q_f(dx)=\tilde a_2\mb{P}(\|Z\|<t_*).$$
Define the intrinsic set
$$I=\{x \in \m S:\operatorname{dist}(x,I_{\mathrm{sing}})<4t_*\}$$
and note that $I$ is a $O(\rho)$-tubular neighborhood of $I_{\mathrm{sing}}$ in $\m S$, and hence the same volume argument from the bound on $b_2$ applies, i.e. $Q_*(I) \lesssim \rho$. If we take $X \sim Q_*$ and $X+Z_* \in V$ with $\|Z_*\|<t_*$, then $\operatorname{dist}(X,I_{\mathrm{sing}})\leq \operatorname{dist}(X+Z_*,I_{\mathrm{sing}}) + \|Z_*\|<4t_*$, which means that $X \in I$. Hence
$$P_*(V)=\int_{I} \mb{P}(X+Z_* \in V)Q_*(dx)+\int_{\m S \setminus I} \mb{P}(X+Z_* \in V)Q_*(dx)\leq Q_*(I) + \mb{P}(\|Z_*\|\geq t_*)\lesssim \rho + \epsilon.$$
With the Hellinger bound, we have that $P_{f,\sigma}(V) \leq P_*(V) + \sqrt{2}\epsilon$, and so
$$\tilde{a}_2\mb{P}(\|Z\|<t_*)\lesssim \rho +\epsilon,$$
and $\mb{P}(\|Z\|<t_*)\geq 1/2$ yields $\tilde{a}_2\lesssim \rho + \epsilon.$
Hence $$a_2=\tilde{a}_2 + \hat a_2 \lesssim \rho+\epsilon \lesssim \sigma_* \sqrt{\log \epsilon^{-1}} + \epsilon.$$

\paragraph{Bounding $|a_1-b_1|$ and $|a_2-b_2|$.} Note that $a_1+a_2=b_1+b_2=1$, so $$|a_1-b_1|=|a_2-b_2|\leq a_2+b_2 \lesssim \rho + \epsilon.$$
Hence $$\frac{R}{2}(|a_1-b_1|+|a_2-b_2|)\lesssim \epsilon + \sigma_*\sqrt{\log \epsilon^{-1}}.$$

\paragraph{Bounding $W_1(Q_f^{(1)},Q_*^{(1)})$.} The assumption that $\rho\leq \rho_1$ implies that $b_1 \geq c$, and so
$$a_1 \geq b_1-|a_1-b_1|\geq c-C_0(\epsilon+\sigma_*\sqrt{\log \epsilon^{-1}}) \geq c/2$$
for some constant $C_0 >0$ as $\sigma_*\sqrt{\log \epsilon^{-1}}$ is small. This allows us to control the $L^1$ distance on the restricted densities. Let $p_{f,\sigma}^{(1)}=Q_f^{(1)}*\phi_{\sigma}$ and $p_*^{(1)}=Q_*^{(1)}*\phi_{\sigma_*}$ so that
$$p_{f,\sigma}=a_1p_{f,\sigma}^{(1)}+a_2(Q_f^{(2)}*\phi_{\sigma}),\quad p_*=b_1p_*^{(1)}+b_2(Q_*^{(2)}*\phi_{\sigma_*}).$$
Rearranging gives us that
$$a_1p_{f,\sigma}^{(1)}-b_1p_*^{(1)}=(p_{f,\sigma}-p_*)-(a_2Q_f^{(2)}*\phi_{\sigma} - b_2Q_*^{(2)}*\phi_{\sigma_*}).$$
Then
$$\|a_1p_{f,\sigma}^{(1)}-b_1p_*^{(1)}\|_1 \leq \|p_{f,\sigma}-p_*\|_1 +a_2+b_2.$$
Using the relation $p_{f,\sigma}^{(1)}-p_*^{(1)}=\frac{1}{a_1}(a_1p_{f,\sigma}^{(1)}-b_1p_*^{(1)})+\frac{b_1-a_1}{a_1}p_*^{(1)}$ implies that
$$\|p_{f,\sigma}^{(1)}-p_*^{(1)}\|_1 \leq \frac{1}{a_1}(\|p_{f,\sigma}-p_*\|_1 + a_2+b_2+|a_1-b_1|) \leq \frac{2}{c}(2\sqrt{2}d_H(p_{f,\sigma},p_*)+a_2+b_2+|a_1-b_1|).$$
By our assumption on the Hellinger distance and bounds in the previous steps, we have that each of the terms is $\lesssim \epsilon+\sigma_*\sqrt{\log \epsilon^{-1}}$. Since $Q_*^{(1)}$ is supported on the set $A$ with $\operatorname{reach}(A)\geq 4t_*$, we can apply the argument in Theorem 7 in \cite{Chae} to get $\sigma \leq 2t_*$ and
\begin{align*}
    W_1(Q_f^{(1)},Q_*^{(1)}) &\lesssim \sigma +\|p_{f,\sigma}^{(1)}-p_*^{(1)}\|_1 + \sigma_*\\
    &\lesssim t_* + \|p_{f,\sigma}^{(1)}-p_*^{(1)}\|_1 \lesssim \epsilon + \sigma_*\sqrt{\log \epsilon^{-1}}.
\end{align*}
Thus $b_1W_1(Q_f^{(1)},Q_*^{(1)})\lesssim \epsilon+\sigma_*\sqrt{\log \epsilon^{-1}}$.

Putting all bounds together gives us that
$$W_1(Q_f,Q_*) \lesssim \epsilon + \sigma_*\sqrt{\log \epsilon^{-1}}.$$

 $\hfill{\Box}$

\subsection{Proof of Lemma~\ref{lm:gate_approx}}
Let $m=\lceil \log_2 2\xi^{-p}\rceil$ and $\eta=2^{-m}$. Consider the function
$$g_{c,\eta}(x)=\operatorname{ReLU}(x-c)-\operatorname{ReLU}(x-c-\eta),$$
and define $D(u,v)=(\operatorname{ReLU}(u+v),\operatorname{ReLU}(u+v))$. In particular, for $u=v\geq 0$, this gives $D(u,v)=(2u,2u)$. Set $(u_0,v_0)=(g_{c,\eta}(x),g_{c,\eta}(x))$ and
$$(u_i,v_i)=D(u_{i-1},v_{i-1}).$$
Then $u_m=v_m=2^mg_{c,\eta}(x)$. Let $f_{c,\eta}(x)=u_m=2^mg_{c,\eta}(x)=\frac{1}{\eta}g_{c,\eta}(x)$. Then
$$f_{c,\eta}(x)=\begin{cases}
    0,& x \leq c;\\
    \frac{x-c}{\eta},& c<x < c+\eta;\\
    1,& x \geq c+\eta.
\end{cases}
$$
Set $\Phi(x)=f_{a-\eta,\eta}(x)-f_{b,\eta}(x)$. Then properties 1, 2, and 3 of the lemma are satisfied with the specified layers, width and sparsity. $\hfill{\Box}$

\subsection{Proof of Lemma~\ref{lm:wasser}}
By the triangle inequality,
$$W_1(\sum_{k=1}^K a_k\tilde{Q}_k, \sum_{k=1}^K b_kQ_k) \leq W_1(\sum_{k=1}^K a_k\tilde{Q}_k, \sum_{k=1}^K b_k\tilde{Q}_k)+W_1(\sum_{k=1}^K b_k\tilde{Q}_k, \sum_{k=1}^K b_kQ_k).$$

The second term on the right captures the error in the distribution estimation and can be bounded as follows. Take an optimal coupling of $\tilde{Q}_k$ and $Q_k$ for each $k$, say $\gamma_k$. Define a new measure $\gamma = \sum_{k=1}^K b_k\gamma_k$. This gives us marginals of $\sum_{k=1}^K b_k\tilde{Q}_k$ and $\sum_{k=1}^K b_kQ_k$, so this is a coupling of the two mixtures. Thus, an optimal coupling would satisfy $$W_1(\sum_k b_k\tilde{Q}_k,\sum_k b_kQ_k)\leq \int |x-y|d\gamma(x,y)=\sum_k b_k \int |x-y|d\gamma_k(x,y)=\sum_k b_kW_1(\tilde{Q}_k,Q_k).$$

The first term on the right of the beginning inequality captures the error in the estimation of the mixture weights. By (\cite{villani2008optimal} Theorem 6.15), $W_1(\mu,\nu)\leq R||\mu-\nu||_{TV}$, where $R=diam(\mathcal{Y}_*)$ and $||\sum_{k=1}^K a_k\tilde{Q}_k - \sum_{k=1}^K b_k\tilde{Q}_k||_{TV} \leq \frac{1}{2}\sum_{k=1}^K|a_k-b_k|$. Thus
$$W_1(\sum_{k=1}^K a_k\tilde{Q}_k, \sum_{k=1}^K b_k\tilde{Q}_k) \leq \frac{R}{2}\sum_{k=1}^K |a_k-b_k|$$
Adding these bounds together gives the claim. $\hfill{\Box}$\\

\section{Proofs for Section 4}
For the sake of simplicity  and without loss of generality, in the following analysis,  we assume $\m M=\cup_{k=1}^K \m M_k\subset \mb B_1(0_D)$. Recall that $\sigma_{t}=\sqrt{1-\exp(-2\int_0^{ t} \beta_s\,\dd s)}$ and $m_t=\sqrt{1-\sigma_t^2} $.  Then there exists a constant $C$ so that  $\frac{1}{c} \sqrt{t\wedge 1}\leq \sigma_t\leq c\sqrt{t\wedge 1}$ and  $  \frac{1}{c}(t\wedge 1)\leq 1-m_t\leq   c(t\wedge 1)$. For any $x\in \mb R^D$, we have 
\begin{equation*}
    \begin{aligned}
        &p_t(x)\propto\mb{E}_{y\sim P_*}\Big[\exp\big(-\frac{\|x-m_ty\|^2}{2\sigma_t^2}\big)\Big]\\
        &\propto \sum_{k=1}^K \omega_k\cdot\mb{E}_{y\sim Q_k}\Big[\int_{\mb R^D}\exp\big(-\frac{\|x-m_tz\|^2}{2\sigma_t^2}\big)\exp\big(-\frac{\|y-z\|^2}{2\sigma_*^2}\big)\,\dd z\Big]\\
        &\propto  \sum_{k=1}^K \omega_k\cdot\mb{E}_{y\sim Q_k}\Bigg[\int_{\mb R^D}\exp\bigg(-\Big(\frac{m_t^2}{2\sigma_t^2}+\frac{1}{2\sigma_*^2}\Big)\Big\|z-\frac{\frac{m_t}{\sigma_t^2}x+\frac{y}{\sigma_*^2}}{\frac{m_t^2}{\sigma_t^2}+\frac{1}{\sigma_*^2}}\Big\|^2\bigg)\,\dd z\cdot\exp\bigg(-\frac{\|x-m_ty\|^2}{2(m_t^2\sigma_*^2+\sigma_t^2)}\bigg)\Bigg]\\
         &\propto  \sum_{k=1}^K \omega_k\cdot\mb{E}_{y\sim Q_k}\Bigg[\exp\bigg(-\frac{\|x-m_ty\|^2}{2(m_t^2\sigma_*^2+\sigma_t^2)}\bigg)\Bigg].
    \end{aligned}
\end{equation*}
So let $\wt \sigma_t=\sqrt{m_t^2\sigma_*^2+\sigma_t^2}$, we can write 
\begin{equation*}
    p_t(x)=\frac{1}{(2\pi\wt \sigma_t^2)^{\frac{D}{2}}}  \sum_{k=1}^K \omega_k\cdot\mb{E}_{y\sim Q_k}\Bigg[\exp\bigg(-\frac{\|x-m_ty\|^2}{2\wt \sigma_t^2}\bigg)\Bigg],
\end{equation*}
and 
\begin{equation*}
    \begin{aligned}
        \nabla \log p_t(x)&=\frac{\nabla p_t(x)}{p_t(x)}= \frac{\sum_{k=1}^K \omega_k\cdot\mb{E}_{y\sim Q_k}\Big[\exp\big(-\frac{\|x-m_ty\|^2}{2\wt \sigma_t^2}\big)\cdot \Big(-\frac{x-m_ty}{\wt \sigma_t^2}\Big)\Big]}{\sum_{k=1}^K \omega_k\cdot\mb{E}_{y\sim Q_k}\Big[\exp\big(-\frac{\|x-m_ty\|^2}{2\wt \sigma_t^2}\big)\Big]}.
    \end{aligned}
\end{equation*}
We first use Lemma B.2 in~\cite{TangYang1} to relate the generalization error of the score function $\nabla \log p_t(X_t)$ to the generalization error of the distribution $P_*$ under the $W_1$ metric 
\begin{lemma}\label{lemma1}(Lemma B.2 of~\cite{TangYang1})
    Suppose $\wh S(x,t)\lesssim \frac{\sqrt{\log n}}{\sigma_t}$, then 
    \begin{equation*}
W_1\left(\wh p, P_*\right) \lesssim \frac{1}{n}+\tau^{\frac{1}{2}}+\sum_{i=0}^{L-1} \sqrt{\big((t_i \log n)\wedge 1 \big)\int_{t_i}^{t_{i+1}}\int_{\mb R^D}\left\|\wh S(x,t)-\nabla \log p_t(x)\right\|^2 p_t(x)\,\dd x\,\dd t}. 
    \end{equation*}
\end{lemma}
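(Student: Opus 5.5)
The plan is to bound $W_1(\wh p,P_*)$ by splitting the error into four sources and attributing the score error to the dyadic time blocks $[t_i,t_{i+1}]$ one at a time.

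\textbf{Step 1 (three error terms independent of the score).} First I bound the errors that do not involve the learned score:
\begin{enumerate}
\item \emph{Early stopping.} Write $\overrightarrow X_\tau=m_\tau\overrightarrow X_0+\sigma_\tau\epsilon$. Since $\mathcal S\subset\mathbb B_1(0_D)$, $1-m_\tau\lesssim\tau$ and $\sigma_\tau\lesssim\sqrt\tau$, the synchronous coupling gives $W_1(p_\tau,P_*)\le \mb E\|\overrightarrow X_\tau-\overrightarrow X_0\|\lesssim \tau+\sqrt{\tau}$.
\item \emph{Initialization.} With $T=C_1\log n$, the OU contraction gives $\mathrm{KL}(p_T\,\|\,\m N(0,I_D))\lesssim e^{-2T}\le n^{-2}$. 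Pushing this through the true backward dynamics (data processing) and converting with Pinsker on the bounded box $\|\cdot\|_\infty\le L$ yields an $O(1/n)$ contribution.
\item \emph{Truncation.} Gaussian tails of $p_{\tau}$ show that the indicator $\mathbf 1(\|\lY_{T-\tau}\|_\infty\le L)$ moves only $O(1/n)$ mass-weighted distance once $L\gtrsim\sqrt{\log n}$.
\end{enumerate}

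\textbf{Step 2 (telescoping hybrids).} For $i=0,\dots,L$ let $\mu^{(i)}$ be the law at time $T-\tau$ of the backward process that uses the estimated score $\wh S$ on the forward-time window $[t_i,T]$ and the true score $\nabla\log p_t$ on $[\tau,t_i]$. Then $\mu^{(0)}$ is (the untruncated) $\wh p$ and $\mu^{(L)}=p_\tau$. The triangle inequality gives
\[
W_1\big(\mu^{(0)},\mu^{(L)}\big)\le\sum_{i=0}^{L-1}W_1\big(\mu^{(i)},\mu^{(i+1)}\big),
\]
and consecutive hybrids differ only on the block $[t_i,t_{i+1}]$. On that block I use Girsanov. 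Since both processes share the same input law at forward time $t_{i+1}$ and run the same SDE outside the block, the KL divergence between their laws at forward time $t_i$ is at most $\int_{t_i}^{t_{i+1}}\beta_t\,\mb E\|\wh S-\nabla\log p_t\|^2\,\dd t\lesssim\int_{t_i}^{t_{i+1}}\!\!\int\|\wh S-\nabla\log p_t\|^2p_t\,\dd x\,\dd t=:\mathcal E_i$.

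\textbf{Step 3 (KL to $W_1$ with the right scale; main obstacle).} I need to convert $\mathcal E_i$ into a $W_1$ gap of order $\sqrt{((t_i\log n)\wedge1)\,\mathcal E_i}$ after the remaining true backward flow from $t_i$ down to $\tau$. The obstacle is that a crude bound $W_1\le \mathrm{diam}\cdot\mathrm{TV}\le\sqrt{\mathcal E_i}$ loses the factor $\sqrt{t_i\log n}$. I plan to work with a $1$-Lipschitz test function $h$ and
\[
g_i(x)=\mb E\big[h(\overrightarrow X_\tau)\mid \overrightarrow X_{t_i}=x\big].
\]
Then $\int h\,\dd\mu^{(i)}-\int h\,\dd\mu^{(i+1)}=\int g_i\,\dd(\nu_i-\nu_i')$, where $\nu_i,\nu_i'$ are the two laws at forward time $t_i$.
\begin{enumerate}
\item \emph{Reduce to the centred oscillation.} For any constant $c$, $\int c\,\dd(\nu_i-\nu_i')=0$, so it suffices to bound the oscillation of $g_i$ about a constant. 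Because $\overrightarrow X_\tau-\overrightarrow X_{t_i}/m_{t_i}$ has size $\lesssim\sqrt{t_i\log n}\wedge 1$ with high probability (posterior concentration using the score bound $\wh S\lesssim\sqrt{\log n}/\sigma_t$ and the support in $\mathbb B_1$), $g_i(x)-h(x/m_{t_i})$ has that size. I am least sure about this sub-step.
\item \emph{Remaining Lipschitz piece.} The term $\int h(x/m_{t_i})\,\dd(\nu_i-\nu_i')$ I would handle with a transportation inequality. The two block laws differ only through drift over a window of length $\asymp t_i$ that carries Gaussian noise of variance $\asymp t_i$. A Djellout--Guillin--Wu type $T_1$ inequality on path space, with constant $\asymp t_i$, then gives $W_1(\nu_i,\nu_i')\lesssim\sqrt{t_i\,\mathcal E_i}$ up to logarithms.
\item \emph{Combine and sum.} If the oscillation route in the first sub-step proves too delicate, I would fall back on this $T_1$ argument applied directly to the end laws. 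Summing the blocks with Steps 1--2 gives $\frac1n+\tau^{1/2}+\sum_i\sqrt{((t_i\log n)\wedge1)\,\mathcal E_i}$, which is the bound in Lemma~\ref{lemma1}.
\end{enumerate}
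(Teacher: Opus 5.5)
\textbf{There is a genuine gap in the KL bound of Step 2, and it comes from the order of your hybrids.}

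With $\wh S$ on $[t_i,T]$ and $\nabla\log p_t$ on $[\tau,t_i]$, consecutive hybrids enter the block $[t_i,t_{i+1}]$ with the law of the $\wh S$-driven process at forward time $t_{i+1}$. That law is not $p_{t_{i+1}}$, even if you initialize at $p_T$. Girsanov then gives $\mathrm{KL}\lesssim\int_{t_i}^{t_{i+1}}\beta_t\,\mb E_{\rho_t}\|\wh S-\nabla\log p_t\|^2\,\dd t$, where $\rho_t$ is the marginal of one of the two hybrids. Nothing lets you replace $\rho_t$ by $p_t$: score matching controls the error only in $L^2(p_t)$, and you have no bound on $\dd\rho_t/\dd p_t$. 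So ``$\lesssim\mathcal E_i$'' is unjustified at exactly the point where $\mathcal E_i$ is supposed to enter.

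The same ordering also undermines Step 3:
\begin{itemize}
\item After the block your hybrids follow the \emph{true} reverse flow. The hypothesis $\|\wh S\|\lesssim\sqrt{\log n}/\sigma_t$, which you invoke in sub-step 1, therefore never acts there.
\item The true flow started from points atypical for $p_{t_i}$, which $\nu_i,\nu_i'$ may well charge, can move by $O(1)$. So $g_i-h(\cdot/m_{t_i})$ is not uniformly $O(\sqrt{t_i\log n})$.
\item The $T_1$ route in sub-step 2 is not available. A path-space constant $\asymp t_i$ needs an $O(1/t_i)$-Lipschitz drift. Neither a ReLU network with weights up to $\exp(\Theta(\log^4 n))$ nor $\nabla\log p_t$ provides this: the Hessian of $\log p_t$ contains the posterior covariance divided by $\sigma_t^4$, which is $O(1)\cdot\sigma_t^{-4}$ near points equidistant from two strata. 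Also, $T_1$ does not pass to mixtures over starting points unless you condition on them.
\end{itemize}

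\textbf{The missing idea is to reverse the hybrids and localize at the left end of each block.}
\begin{enumerate}
\item Let the $i$-th hybrid run $\nabla\log p_t$ on $[t_i,T]$ from $p_T$, and $\wh S$ on $[\tau,t_i]$. The $\m N(0,I_D)$ versus $p_T$ error is paid once, by data processing, as in your Step 1.
\item Consecutive hybrids now both sit exactly at $p_{t_{i+1}}$ at forward time $t_{i+1}$, and the true-drift one has marginals $p_t$ on the block. Girsanov gives $\mb E_x\,\mathrm{KL}_x\lesssim\mathcal E_i$ for the conditional path laws given the common point $x$ at time $t_{i+1}$.
\item After the block both hybrids follow the $\wh S$-driven SDE, and this is where the hypothesis is used: $\int_\tau^{t_{i+1}}\beta_t\|\wh S\|\,\dd t\lesssim\sqrt{\log n}\int_0^{t_{i+1}}t^{-1/2}\,\dd t\lesssim\sqrt{t_{i+1}\log n}$. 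Adding the Brownian increment, the $\wh S$-driven path stays within $O(\sqrt{t_{i+1}\log n})$ of $x$ outside an event of probability $n^{-c}$. The same holds for the true path on the block, since its joint law is that of the forward process.
\item For each $x$, maximally couple the two positions at time $t_i$ and use the same Brownian motion afterwards. The endpoints then coincide with probability $1-\mathrm{TV}_x$ and are otherwise within $O(\sqrt{t_{i+1}\log n})$ of each other.
\item Pinsker, Jensen and $t_{i+1}=2t_i$, together with the crude bound $\mathrm{diam}\cdot\mathrm{TV}$ when $t_i\log n\ge1$, give the summand $\sqrt{((t_i\log n)\wedge1)\,\mathcal E_i}$ up to $n^{-c}$. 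No oscillation estimate or transportation inequality is needed.
\end{enumerate}

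The paper does not reprove this statement; it imports it from Tang and Yang. The score-bound hypothesis is there to control the displacement of the $\wh S$-driven process in this kind of argument.
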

\noindent Moreover, when $S(x_t,t)\lesssim \sqrt{\frac{\log n}{t\wedge 1}}$, we have for any $l\in \{0,1,\cdots,L-1\}$,
  \begin{equation*}
  \begin{aligned}
          &\int_{t_l}^{t_{l+1} }\int_{\mb R^D}\|S(x_t, t)-\nabla \log p_t(x_t|x)\|^2p_t(x_t|x)\,\dd{x_t}\dd t\\
          &\leq \int_{t_l}^{t_{l+1} }\int 2\cdot\|S(x_t,t)\|^2p_t(x_t|x) \,\dd{x_t}\dd t+\int_{t_l}^{t_{l+1} }\int 2\cdot\|\nabla \log p_t(x_t|x)\|^2p_t(x_t|x) \,\dd{x_t}\dd t\lesssim \log ^2 n.
  \end{aligned}
  \end{equation*}
 Then by Theorem 4.3 of~\cite{oko2023diffusion}, for any $l\in \{0,1,\cdots,L-1\}$, it holds that
 \begin{equation*}
     \begin{aligned}
& \mb E\left[\int_{t_l}^{t_{l+1}} \int_{\mathbb{R}^D}\left\|\wh S(x, t)-\nabla \log p_t(x)\right\|^2 p_t(x)\, \dd x \dd t\right] \\
& \lesssim \inf_{S\in \Phi(L_l,W_l,R_l,B_l,V_l)}\int_{t_l}^{t_{l+1}} \int_{\mathbb{R}^D}\left\|S(x, t)-\nabla \log p_t(x)\right\|^2 p_t(x)\, \dd x \dd t+\frac{(\log n)^2}{n} R_l L_l \log \left(n  L_l  \left\|W_l\right\|_\infty  B_l\right).
\end{aligned}
\end{equation*}
We will choose $ R_l,L_l$, $\left\|W_l\right\|_\infty$ and $B_l$ so  that the approximation error satisfies $$ \inf_{S\in \Phi(L_l,W_l,R_l,B_l,V_l)}\int_{t_l}^{t_{l+1}} \int_{\mathbb{R}^D}\left\|\wh S(x, t)-\nabla \log p_t(x)\right\|^2 p_t(x)\, \dd x \dd t=\wt O\bigg(\sum_{k=1}^K  \min\Big(\frac{ (\sigma_*+\sqrt{1\wedge t_l})^{-d_k}}{n}, n^{-\frac{2\alpha_k}{2\alpha_k+d_k}}\Big)\bigg).$$ We first introduce the following lemma, which states that it is sufficient  to approximate the score function $\nabla \log p_t(x)$ only for values of $x$ that are in close proximity to $\m M=\cup_{k=1}^K \m M_k$.
\begin{lemma}\label{lemma2.1}
  If $\underset{x\in \mb R^D}{\sup}\underset{t\in[\tau,T]}{\sup}[\|S(x,t)\|_{\infty}\wt\sigma_t]\leq c {\sqrt{\log n}} $. Then, there exist constants $(c_0,c_1,c_2,c_3)$ so that for any $i\in \{0,1,\cdots, L-1\}$ and $t\in [t_i,t_{i+1}]$ with $1<\frac{t_{i+1}}{t_i}\leq 2$, 
    \begin{enumerate}
        \item Denote ${\operatorname{dist}}(x,\m M)$ as the distance of point $x\in \mb R^D$ to $\m M$. Then
        \begin{equation*}
            \begin{aligned}
            & \int\left\|\nabla \log p_t(x)-S(x,t)\right\|^2 p_t(x) \, \dd x \\
\leq & \int\left\|\nabla \log p_t(x)-S(x, t)\right\|^2 p_t(x) \cdot 1\left(\operatorname{dist}(x, \m M) \leq c_0\wt\sigma_{t_i}\sqrt{\log n}\right) \,\dd x +(1+c^2)\cdot c_1\frac{1}{n^2}.
            \end{aligned}
        \end{equation*}
 \item For any $x\in \mb R^D$ satisfying ${\opn{dist}(x,\m M)\leq c_0\wt\sigma_{t_i}\sqrt{\log n}}$, we have 
 \begin{enumerate}
     \item $\|\nabla \log p_t(x)\|_{\infty} \leq c_2 \frac{\sqrt{\log n}}{\wt\sigma_{t_i}}$.
     \item $(2\pi\wt\sigma_t^2)^{\frac{D}{2}}p_t(x)\geq n^{-c_3}$.
 \end{enumerate}
\end{enumerate}
\end{lemma}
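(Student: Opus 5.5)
The plan rests on the closed form derived just above: $p_t(x)=(2\pi\wt\sigma_t^2)^{-D/2}\sum_k\omega_k\mb E_{y\sim Q_k}[\exp(-\|x-m_ty\|^2/(2\wt\sigma_t^2))]$. The score is therefore a posterior mean, $\nabla\log p_t(x)=-(x-m_t\mb E[y\mid x])/\wt\sigma_t^2$, where the posterior over $y$ is supported on $\m M\subset\mb B_1(0_D)$. All three claims come from elementary Gaussian tail and lower-bound estimates. Throughout we use $t\in[t_i,t_{i+1}]$ with $t_{i+1}/t_i\le 2$, so that $\wt\sigma_t\asymp\wt\sigma_{t_i}$ up to absolute constants; this lets us replace $\wt\sigma_t$ by $\wt\sigma_{t_i}$ at the cost of constants.

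\textbf{Claim 2(b).} Take $x$ with $\operatorname{dist}(x,\m M)\le c_0\wt\sigma_{t_i}\sqrt{\log n}$ and let $y_0\in\m M$ be a nearest point. Then $\|x-m_ty_0\|\le \|x-y_0\|+(1-m_t)\lesssim \wt\sigma_{t_i}\sqrt{\log n}$, using $1-m_t\lesssim t\wedge1\lesssim\sigma_t^2\le\wt\sigma_t$. Consider the $\wt\sigma_t$-geodesic ball around $y_0$ inside the stratum $M_k$ containing $y_0$. On it, $\|x-m_ty\|^2\lesssim\wt\sigma_t^2\log n$, and since $q_k$ is bounded below and the volume is regular (Assumptions 1 and 3), its $Q_k$-mass is $\gtrsim(\wt\sigma_t\wedge1)^{d_k}$. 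Combining with $\omega_k\ge\omega_{\min}$, we get $(2\pi\wt\sigma_t^2)^{D/2}p_t(x)\ge \omega_{\min}\,c\,(\wt\sigma_t\wedge 1)^{d_k}n^{-c'}$. Since $\wt\sigma_t\ge\sigma_\tau$ and $\tau$ is polynomial in $n$, this is at least $n^{-c_3}$.

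\textbf{Claim 2(a).} Write $\nabla\log p_t(x)=-\mb E[(x-m_ty)/\wt\sigma_t^2\mid x]$ and split the posterior expectation according to whether $\|x-m_ty\|\le C\wt\sigma_t\sqrt{\log n}$. On that event the integrand is at most $C\sqrt{\log n}/\wt\sigma_t$. On the complement, the unnormalized weight is at most $\exp(-C^2\log n/2)\cdot\|x-m_ty\|/\wt\sigma_t^2$. Dividing by the lower bound $n^{-c_3}$ from 2(b), choosing $C$ large relative to $c_3$, and using $\|x-m_ty\|\lesssim \wt\sigma_t\sqrt{\log n}+O(1)$ together with $\wt\sigma_t\ge n^{-O(1)}$, the complement contributes $O(n^{-1})$. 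This gives $\|\nabla\log p_t(x)\|_\infty\le c_2\sqrt{\log n}/\wt\sigma_{t_i}$.

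\textbf{Claim 1.} This is the main obstacle. It suffices to bound $\int\|\nabla\log p_t-S\|^2p_t\,\mathbf 1(\operatorname{dist}(x,\m M)>c_0\wt\sigma_{t_i}\sqrt{\log n})\,dx$. Using $\|a-b\|^2\le2\|a\|^2+2\|b\|^2$, we have $\|S\|^2\le Dc^2\log n/\wt\sigma_t^2$. For the score term, Jensen's inequality gives $\|\nabla\log p_t(x)\|^2\le\mb E[\|x-m_ty\|^2\mid x]/\wt\sigma_t^4$, so $\int\|\nabla\log p_t\|^2p_t\mathbf 1_{\text{far}}\le \wt\sigma_t^{-4}\mb E[\|X_t-m_tY\|^2\mathbf 1_{\text{far}}(X_t)]$, where $X_t=m_tY+\wt\sigma_t\xi$ with $\xi\sim N(0,I_D)$. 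The far event forces $\wt\sigma_t\|\xi\|\ge\operatorname{dist}(X_t,\m M)-(1-m_t)\ge(c_0/2)\wt\sigma_t\sqrt{\log n}$. Gaussian tails then give $\mb E[\|\xi\|^2\mathbf 1(\|\xi\|\ge c_0\sqrt{\log n}/2)]\lesssim n^{-c_0^2/8}\log n$, and the same tail bound handles the $\|S\|^2$ term. The total is $(1+c^2)\,\wt\sigma_t^{-2}\,n^{-c_0^2/8}\,\mathrm{polylog}(n)$. Because $\wt\sigma_t^{-2}\le\sigma_\tau^{-2}$ is polynomial in $n$, taking $c_0$ large enough yields $(1+c^2)c_1n^{-2}$. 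The delicate points are keeping $c_0$ uniform across $i$ and absorbing the $(1-m_t)$ shift, which is harmless since $1-m_t\lesssim\sigma_t^2\le\wt\sigma_t$.
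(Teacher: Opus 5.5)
Your proposal is correct. For Claims 2(b) and 2(a) you follow the paper's argument essentially step for step. First you lower-bound $(2\pi\widetilde\sigma_t^2)^{D/2}p_t(x)$ by the contribution of a $\widetilde\sigma_t$-ball around the nearest-point projection onto the closest stratum. Then you truncate the posterior expectation defining the score at $\|x-m_ty\|\le C\widetilde\sigma_t\sqrt{\log n}$ and divide the remainder by that lower bound.

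Claim 1 is where your route differs. The paper bounds the score pointwise by $(\|x\|+\sqrt{D})/\widetilde\sigma_t^2$, using $\mathcal M\subset\mathbb B_1(0_D)$. It then splits the far region into two pieces. The piece $\|x\|\ge c_1\sqrt{\log n}$ is handled by a Gaussian tail. On the piece $\|x\|<c_1\sqrt{\log n}$, the distance condition bounds the kernel by $\exp(-c_0^2\widetilde\sigma_{t_i}^2\log n/(4\widetilde\sigma_t^2))$, which is then multiplied by the prefactor $(2\pi\widetilde\sigma_t^2)^{-D/2}$ and the Lebesgue volume of a ball of radius $c_1\sqrt{\log n}$.

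You instead apply Jensen to the posterior-mean form of the score and integrate against $p_t$. Under the coupling $X_t=m_tY+\widetilde\sigma_t\xi$, the far-region integral becomes $\widetilde\sigma_t^{-2}\,\mathbb E[\|\xi\|^2\mathbf 1_{\mathrm{far}}(X_t)]$. Since $\operatorname{dist}(X_t,\mathcal M)\le\widetilde\sigma_t\|\xi\|+(1-m_t)$, the far event forces $\|\xi\|\gtrsim c_0\sqrt{\log n}$.

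Your route removes the split in $\|x\|$ and the volume and prefactor bookkeeping. The only polynomial factor the Gaussian tail has to beat is $\widetilde\sigma_t^{-2}$. The paper's route stays with pointwise bounds on the score, which matches the style of the surrounding approximation argument. Both rest on the same two inputs: Gaussian tails and $\widetilde\sigma_t\ge\sigma_\tau\ge n^{-O(1)}$.
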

\noindent Furthermore, we will use Lemma C.2 of~\cite{TangYang1} to  bound the covering number of $\m M_k$.
\begin{lemma}\label{lemma2.2}(Lemma C.2 of~\cite{TangYang1})
For any $k\in [K]$ and any $\epsilon>0$, there exists an $\epsilon$-cover $N_{k,\epsilon}$ of $\m M_k$ so that $N_{k,\epsilon}\subset \m M_k$ and $|N_{k,\epsilon}|\lesssim (\epsilon\wedge 1)^{-d_k}$, moreover, for any $x_0\in \m M_k$ and $r\geq \epsilon$, we have 
\begin{equation*}
    \big|\{x\in N_{k,\epsilon}\,:\, \|x-x_0\|\leq r\}\big|\lesssim \big(\frac{r\wedge 1}{\epsilon\wedge 1}\big)^{d_k}.
\end{equation*}
\end{lemma}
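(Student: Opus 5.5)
The plan is a standard packing/volume argument on the single compact manifold $\m M_k$. It uses only that $\m M_k$ is a compact $d_k$-dimensional embedded submanifold with positive reach $r_k$ (Assumption 1), viewed as a standalone manifold; the other strata play no role.

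\textbf{Construction and global count.} For $\epsilon\geq \epsilon_0:=\min\{r_k,1\}/4$, I would fix once and for all a finite $\epsilon_0$-cover of the compact set $\m M_k$ by points of $\m M_k$. Its cardinality is a constant, which is $\lesssim(\epsilon\wedge 1)^{-d_k}$, and the local count claim is then trivial. For $\epsilon<\epsilon_0$, I would take $N_{k,\epsilon}\subset\m M_k$ to be a maximal $\epsilon$-separated subset of $\m M_k$. By maximality it is an $\epsilon$-cover, and the Euclidean balls $B(x,\epsilon/2)$, $x\in N_{k,\epsilon}$, are pairwise disjoint.

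\textbf{Two-sided volume bounds.} The key input is the following estimate: there are constants $c,C>0$ depending only on $d_k$, $r_k$ and $\operatorname{vol}(\m M_k)$ such that for every $y\in\m M_k$ and every $s>0$,
$$
c\,(s\wedge r_k)^{d_k}\le \operatorname{vol}_{\m M_k}\big(B(y,s)\cap\m M_k\big)\le C\,(s\wedge 1)^{d_k}.
$$
For $s\le r_k/2$, positive reach implies two facts (Niyogi--Smale--Weinberger, Federer):
\begin{enumerate}
\item the orthogonal projection of $B(y,s)\cap\m M_k$ onto the tangent space $T_y\m M_k$ is a diffeomorphism onto its image, with Jacobian bounded above and below by absolute constants;
\item the image contains a $d_k$-dimensional ball of radius $\asymp s$ and is contained in one of radius $s$.
\end{enumerate}
Together these give both inequalities. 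For $s>r_k/2$, the lower bound follows from monotonicity in $s$, and the upper bound from $\operatorname{vol}(\m M_k)<\infty$.

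\textbf{Counting.} Summing the disjoint lower-bounded masses $\operatorname{vol}(B(x,\epsilon/2)\cap\m M_k)\geq c\,\epsilon^{d_k}$ over $x\in N_{k,\epsilon}$ gives
$$
|N_{k,\epsilon}|\le \operatorname{vol}(\m M_k)/(c\,\epsilon^{d_k}).
$$
For the local statement, fix $x_0\in\m M_k$ and $r\geq\epsilon$. Every $x\in N_{k,\epsilon}$ with $\|x-x_0\|\le r$ has its ball $B(x,\epsilon/2)$ contained in $B(x_0,2r)$. Using disjointness together with the upper volume bound at $y=x_0$, $s=2r$,
$$
\big|\{x\in N_{k,\epsilon}:\|x-x_0\|\le r\}\big|\cdot c\,\epsilon^{d_k}\le C\,(2r\wedge 1)^{d_k}.
$$
This gives the claimed bound $\big((r\wedge1)/(\epsilon\wedge1)\big)^{d_k}$ after absorbing constants.

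\textbf{Main obstacle.} The hard step is the uniform-in-center two-sided volume comparison for small balls. Everything else is bookkeeping. The first thing I would try is to quote the reach-based local graph representation: near each point, $\m M_k$ is the graph over $T_y\m M_k$ of a map with small differential, uniformly for radii up to a fixed fraction of $r_k$. This gives uniform constants, and the remaining ranges of $\epsilon$ and $r$ comparable to or larger than $r_k$ are absorbed into constants via compactness.
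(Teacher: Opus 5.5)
Your proof is correct. It is the standard maximal $\epsilon$-packing argument combined with two-sided reach-based volume bounds for $B(y,s)\cap\mathcal{M}_k$, which is what underlies the cited Lemma C.2 of Tang and Yang; the paper itself gives no proof beyond that citation. One cosmetic point: the uniform graph and Jacobian bounds are safest for radii up to a smaller fraction such as $r_k/4$, since the usual tangent-angle bound $\cos\phi\ge\sqrt{1-2s/r_k}$ degenerates at $s=r_k/2$. Your bookkeeping already absorbs this, because the lower bound is only used at radius $\epsilon/2<r_k/8$ and larger radii are handled through $\operatorname{vol}(\mathcal{M}_k)<\infty$.
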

\noindent Let us fix an $i\in \{0,1,\cdots,L-1\}$ and time interval  $t\in [t_{i},t_{i+1}]$ where $1<\frac{t_{i+1}}{t_{i}}\leq 2$, $t_i\geq \tau$ and $t_{i+1}\leq T$.  Using Lemma~\ref{lemma2.1}, we only need to control 
\begin{equation*}
    \begin{aligned}
        &\inf_{S\in \Phi(L_l,W_l,R_l,B_l,V_l)}\int_{t_i}^{t_{i+1}} \int_{\mathbb{R}^D}\left\|S(x, t)-\nabla \log p_t(x)\right\|^2 p_t(x) \cdot 1\left(\operatorname{dist}(x, \m M) \leq c_0\wt\sigma_{t_i}\sqrt{\log n}\right)\, \dd x \dd t\\
      & \leq (t_{i+1}- t_i)\cdot \inf_{S\in \Phi(L_i,W_i,R_i,B_i,V_i)}\sup_{t\in [t_i,t_{i+1}]}\sup_{x\,:\, \operatorname{dist}(x, \m M) \leq c_0\wt\sigma_{t_i}\sqrt{\log n}}\left\|S(x, t)-\nabla \log p_t(x)\right\|^2.\\
    \end{aligned}
\end{equation*}
 Depending on the value of $t_i$ and $\sigma_*$, we will control the above term using different methods. 
\subsection{Case 1: for any $k\in [K]$, $\wt \sigma_{t_i}\geq \frac{n^{-\frac{1}{2\alpha_k+d_k}}}{\sqrt{\log n}}$}
 For any $k\in [K]$, let $N_{k,\epsilon^*}$ be an $\epsilon^*$-cover of $\m M_k$ with $\epsilon^*=\wt\sigma_{t_i}\sqrt{\log n}$ so that statements in Lemma~\ref{lemma2.2} are satisfied.  Then the cardinality of $N_{k,\epsilon^*}$, denoted by $|N_{k,\epsilon^*}|$, satisfies  $|N_{k,\epsilon^*}|=\m O\big(1\vee (\epsilon^*)^{-d_k}\big)$.  Denote $J_k=|N_{k,\epsilon^*}|$ and write $N_{k,\epsilon^*}=\{x_{k,1}^*,x_{k,2}^*,\cdots, x_{k,J_k}^*\}$. For ease of notation, we denote 
 \begin{equation*}
     (A)=\sum_{k=1}^K \omega_k\cdot\mb{E}_{y\sim Q_k}\Big[\exp\big(-\frac{\|x-m_ty\|^2}{2\wt \sigma_t^2}\big)\cdot \Big(-\frac{x-m_ty}{\wt \sigma_t^2}\Big)\Big]
 \end{equation*}
 and 
 \begin{equation*}
     (B)=\sum_{k=1}^K \omega_k\cdot\mb{E}_{y\sim Q_k}\Big[\exp\big(-\frac{\|x-m_ty\|^2}{2\wt \sigma_t^2}\big)\Big],
 \end{equation*}
 so that $\nabla \log p_t(x)=\frac{(A)}{(B)}$. Then using Lemma~\ref{lemma2.1}, we have for any $x\in \mb R^D$ satisfying ${\opn{dist}(x,\m M)\leq c_0\wt\sigma_{t_i}\sqrt{\log n}}$, it holds that $(B)\geq n^{-c_3}$ and $\|\frac{(A)}{(B)}\|\leq c_2\frac{\sqrt{\log n}}{\wt \sigma_{t_i}}$. Define the following partition function
\begin{equation*}
   \rho(x)=\left\{
    \begin{array}{cc}
     1    & |x|<1 \\
     0    & |x|>2 \\
     2-|x|&1<|x|\leq 2.
    \end{array}
    \right.
\end{equation*}
 Let $c_4$ be a large enough constant, we approximate $(A)$ by 
 \begin{equation*}
     (A')=\sum_{k=1}^K \omega_k\frac{\sum_{j=1}^{J_k}\cdot\mb{E}_{y\sim Q_k}\Big[\exp\big(-\frac{\|x-m_ty\|^2}{2\wt \sigma_t^2}\big)\cdot \Big(-\frac{x-m_ty}{\wt \sigma_t^2}\Big)\Big]\cdot \rho\big(\frac{|x-x^*_{kj}|}{c_4\wt \sigma_{t_i}\sqrt{\log n}}\big)}{\max\left(1,\sum_{j=1}^{J_k} \rho\big(\frac{|x-x^*_{kj}|}{c_4\wt \sigma_{t_i}\sqrt{\log n}}\big)\right)},
 \end{equation*}
 and approximate $(B)$ by 
 \begin{equation*}
     (B')=\max\bigg(n^{-c_3},\sum_{k=1}^K \omega_k\frac{\sum_{j=1}^{J_k}\cdot\mb{E}_{y\sim Q_k}\Big[\exp\big(-\frac{\|x-m_ty\|^2}{2\wt \sigma_t^2}\big)\Big]\cdot \rho\big(\frac{|x-x^*_{kj}|}{c_4\wt \sigma_{t_i}\sqrt{\log n}}\big)}{\max\left(1,\sum_{j=1}^{J_k} \rho\big(\frac{|x-x^*_{kj}|}{c_4\wt \sigma_{t_i}\sqrt{\log n}}\big)\right)}\bigg).
 \end{equation*}
Then  with sufficiently large $c_4$, for any $t\in [t_i,t_{i+1}]$ and $x\in\mb R^D$ with ${\rm dist}(x,\m M)\leq c_0\wt \sigma_{t_i}\sqrt{\log n}$,
\begin{equation*}
    \begin{aligned}
        &\|(A)-(A')\|\\
        &=\bigg\|\sum_{k=1}^{K} \omega_k \cdot\bold{1}\Big(\sum_{j=1}^{J_k}\rho(\frac{|x-x_{kj}^*|}{c_4\wt \sigma_{t_i}\sqrt{\log n}})< 1\Big) \cdot\Big(\frac{\sum_{j=1}^{J_k} \rho\big(\frac{|x-x^*_{kj}|}{c_4\wt \sigma_{t_i}\sqrt{\log n}}\big)}{\max\left(1,\sum_{j=1}^{J_k} \rho\big(\frac{|x-x^*_{kj}|}{c_4\wt \sigma_{t_i}\sqrt{\log n}}\big)\right)}-1\Big)\\
        &\qquad\qquad \cdot\mb{E}_{y\sim Q_k}\Big[\exp\big(-\frac{\|x-m_ty\|^2}{2\wt \sigma_t^2}\big)\cdot \Big(-\frac{x-m_ty}{\wt \sigma_t^2}\Big)\Big]\bigg\|\\
         &\leq\bigg\|\sum_{k=1}^{K} \omega_k \cdot\bold{1}\Big(\forall j,\, \|x-x_{kj}^*\|\geq \frac{c_4\wt \sigma_{t_i}\sqrt{\log n}}{\sqrt{D}}\Big) \cdot\mb{E}_{y\sim Q_k}\Big[\exp\big(-\frac{\|x-m_ty\|^2}{2\wt \sigma_t^2}\big)\cdot \Big(-\frac{x-m_ty}{\wt \sigma_t^2}\Big)\Big]\bigg\|\\
                &\leq\bigg\|\sum_{k=1}^{K} \omega_k \cdot\bold{1}\Big({\rm dist}(x,\m M_k)\geq \big(\frac{c_4}{\sqrt{D}}-1\big)\wt \sigma_{t_i}\sqrt{\log n}\Big) \cdot\mb{E}_{y\sim Q_k}\Big[\exp\big(-\frac{\|x-m_ty\|^2}{2\wt \sigma_t^2}\big)\cdot \Big(-\frac{x-m_ty}{\wt \sigma_t^2}\Big)\Big]\bigg\|\\
                &\leq n^{-c_3-1},
    \end{aligned}
\end{equation*}
and similarly
\begin{equation*}
    \begin{aligned}
                &\frac{\sqrt{\log n}}{\wt \sigma_{t_i}}|(B)-(B')|\\
                &\leq\frac{\sqrt{\log n}}{\wt \sigma_{t_i}}\bigg|\sum_{k=1}^{K} \omega_k \cdot\bold{1}\Big({\rm dist}(x,\m M_k)\geq \big(\frac{c_4}{\sqrt{D}}-1\big)\wt \sigma_{t_i}\sqrt{\log n}\Big) \cdot\mb{E}_{y\sim Q_k}\Big[\exp\big(-\frac{\|x-m_ty\|^2}{2\wt \sigma_t^2}\big)\Big]\bigg|\\
                &\leq n^{-c_3-1}.
    \end{aligned}
\end{equation*}
 Combined with $(B)\geq n^{-c_3}$ and , we have 
 \begin{equation*}
 \begin{aligned}
         &\Big\| \nabla \log p_t(x)-\frac{(A')}{(B')}\Big\|=\Big\|\frac{(A)}{(B)}-\frac{(A')}{(B')}\Big\|\leq \Big\|\frac{(A)}{(B)}\Big\|\cdot\frac{|(B)-(B')|}{|(B')|}+\frac{\|(A)-(A')\|}{|B'|}\lesssim n^{-1}.
 \end{aligned}
 \end{equation*}
Then for any  $k\in [K]$, $j\in [J_k]$, and $x\in \mb R^D$ with $\|x-x_{kj}^*\|\leq 2c_4\wt \sigma_{t_i}\sqrt{\log n}$, we have
 \begin{equation*}
 \begin{aligned}
       \{y\in \m M_k\,:\, \|x-y\|\leq c_5\wt\sigma_{t_i}\sqrt{\log n}\}&\subset   \{y\in \m M_k\,:\, \|y-x_{kj}^*\|\leq (2c_4+c_5)\wt\sigma_{t_i}\sqrt{\log n}\}.\,
 \end{aligned}
 \end{equation*}
Let $c_6=2c_4+c_5$,  we further approximate $(A')$ by 
  \begin{equation*}
     (A^{''})=\sum_{k=1}^K \omega_k\frac{\sum_{j=1}^{J_k} \int_{\m M_k\cap \mb B_{c_6\wt\sigma_{t_i}\sqrt{\log n}}(x_{kj}^*)}\exp\big(-\frac{\|x-m_ty\|^2}{2\wt \sigma_t^2}\big)\cdot \Big(-\frac{x-m_ty}{\wt \sigma_t^2}\Big)q_k(y)\,\dd {\rm vol}_{\m M_k}(y)\cdot \rho\big(\frac{|x-x^*_{kj}|}{c_4\wt \sigma_{t_i}\sqrt{\log n}}\big)}{\max\left(1,\sum_{j=1}^{J_k} \rho\big(\frac{|x-x^*_{kj}|}{c_4\wt \sigma_{t_i}\sqrt{\log n}}\big)\right)},
 \end{equation*}
 and approximate $(B')$ by 
 \begin{equation*}
        (B^{''})=\max\bigg(n^{-c_3},\sum_{k=1}^K \omega_k\frac{\sum_{j=1}^{J_k} \int_{\m M_k\cap \mb B_{c_6\wt\sigma_{t_i}\sqrt{\log n}}(x_{kj}^*)}\exp\big(-\frac{\|x-m_ty\|^2}{2\wt \sigma_t^2}\big)q_k(y)\,\dd {\rm vol}_{\m M_k}(y)\cdot \rho\big(\frac{|x-x^*_{kj}|}{c_4\wt \sigma_{t_i}\sqrt{\log n}}\big)}{\max\left(1,\sum_{j=1}^{J_k} \rho\big(\frac{|x-x^*_{kj}|}{c_4\wt \sigma_{t_i}\sqrt{\log n}}\big)\right)}\bigg).
 \end{equation*}
 When $c_5$ is sufficiently large, we have 
 \begin{equation*}
     \begin{aligned}
&\|(A')-(A^{''})\|\\
&=\bigg\|\sum_{k=1}^K \omega_k\frac{\sum_{j=1}^{J_k} \int_{\m M_k\setminus\mb B_{c_6\wt\sigma_{t_i}\sqrt{\log n}}(x_{kj}^*)}\exp\big(-\frac{\|x-m_ty\|^2}{2\wt \sigma_t^2}\big)\cdot \Big(-\frac{x-m_ty}{\wt \sigma_t^2}\Big)q_k(y)\,\dd {\rm vol}_{\m M_k}(y)\cdot \rho\big(\frac{|x-x^*_{kj}|}{c_4\wt \sigma_{t_i}\sqrt{\log n}}\big)}{\max\left(1,\sum_{j=1}^{J_k} \rho\big(\frac{|x-x^*_{kj}|}{c_4\wt \sigma_{t_i}\sqrt{\log n}}\big)\right)}
\bigg\|\\
&\leq  \sum_{k=1}^K \omega_k\frac{\sum_{j=1}^{J_k} \int_{\m M_k\setminus\mb B_{c_6\wt\sigma_{t_i}\sqrt{\log n}}(x_{kj}^*)}\exp\big(-\frac{\|x-m_ty\|^2}{2\wt \sigma_t^2}\big)\cdot \Big\|\frac{x-m_ty}{\wt \sigma_t^2}\Big\|\cdot q_k(y)\,\dd {\rm vol}_{\m M_k}(y)\cdot \rho\big(\frac{|x-x^*_{kj}|}{c_4\wt \sigma_{t_i}\sqrt{\log n}}\big)}{\max\left(1,\sum_{j=1}^{J_k} \rho\big(\frac{|x-x^*_{kj}|}{c_4\wt \sigma_{t_i}\sqrt{\log n}}\big)\right)}\\
 &\leq  \sum_{k=1}^K \omega_k\frac{\sum_{j=1}^{J_k} \int_{\m M_k\setminus\mb B_{c_5\wt\sigma_{t_i}\sqrt{\log n}}(x)}\exp\big(-\frac{\|x-m_ty\|^2}{2\wt \sigma_t^2}\big)\cdot \Big\|\frac{x-m_ty}{\wt \sigma_t^2}\Big\|\cdot q_k(y)\,\dd {\rm vol}_{\m M_k}(y)\cdot \rho\big(\frac{|x-x^*_{kj}|}{c_4\wt \sigma_{t_i}\sqrt{\log n}}\big)}{\max\left(1,\sum_{j=1}^{J_k} \rho\big(\frac{|x-x^*_{kj}|}{c_4\wt \sigma_{t_i}\sqrt{\log n}}\big)\right)}\\
  &\leq  \sum_{k=1}^K \omega_k  \int_{\m M_k\setminus\mb B_{c_5\wt\sigma_{t_i}\sqrt{\log n}}(x)}\exp\big(-\frac{\|x-m_ty\|^2}{2\wt \sigma_t^2}\big)\cdot \Big\|\frac{x-m_ty}{\wt \sigma_t^2}\Big\|\cdot q_k(y)\,\dd {\rm vol}_{\m M_k}(y) \\
  &\leq n^{-c_3-1},
     \end{aligned}
 \end{equation*}
and similarly, 
\begin{equation*}
    \begin{aligned}
                &\frac{\sqrt{\log n}}{\wt \sigma_{t_i}}|(B')-(B^{''})|\\
                &\leq \frac{\sqrt{\log n}}{\wt \sigma_{t_i}}\cdot \sum_{k=1}^K \omega_k  \int_{\m M_k\setminus\mb B_{c_5\wt\sigma_{t_i}\sqrt{\log n}}(x)}\exp\big(-\frac{\|x-m_ty\|^2}{2\wt \sigma_t^2}\big)  \cdot q_k(y)\,\dd {\rm vol}_{\m M_k}(y) \\
  &\leq n^{-c_3-1}.
    \end{aligned}
\end{equation*}
Therefore, combined with $(B'')\geq (B)-2n^{-c_3-1}\frac{\wt\sigma_{t_i}}{\sqrt{\log n}}\geq \frac{n^{-c_3}}{2}$ and $\|\frac{(A')}{(B')}\|\leq  \|\frac{(A)}{(B)}\|+n^{-1}\leq 2c_2\frac{\sqrt{\log n}}{\wt \sigma_{t_i}}$, we have 
\begin{equation*}
        \Big\|\frac{(A')}{(B')}-\frac{(A^{''})}{(B^{''})}\Big\|\leq \Big\|\frac{(A')}{(B')}\Big\|\cdot\frac{|(B')-(B^{''})|}{|(B^{''})|}+\frac{\|(A')-(A^{''})\|}{|B^{''}|}\lesssim n^{-1}.
\end{equation*}
Then let $\ms L=c_7\log n$ with a sufficiently large $c_7$ and consider 
\begin{equation*}
    \begin{aligned}
        (A^{'''})&=\sum_{k=1}^K \omega_k\frac{\sum_{j=1}^{J_k} \int_{\m M_k\cap \mb B_{c_6\wt\sigma_{t_i}\sqrt{\log n}}(x_{kj}^*)}\sum_{l=0}^{\ms L}(-1)^l \frac{\|x-m_ty\|^{2l}}{l!2^{l}\wt \sigma_t^{2l}} \big(-\frac{x-m_ty}{\wt \sigma_t^2}\big)
        q_k(y)\,\dd {\rm vol}_{\m M_k}(y)\cdot \rho\big(\frac{|x-x^*_{kj}|}{c_4\wt \sigma_{t_i}\sqrt{\log n}}\big)}{\max\left(1,\sum_{j=1}^{J_k} \rho\big(\frac{|x-x^*_{kj}|}{c_4\wt \sigma_{t_i}\sqrt{\log n}}\big)\right)}\\
        &=\sum_{k=1}^K \frac{\sum_{j=1}^{J_k}\sum_{l=0}^{\ms L}  \sum_{0\leq s_1\leq 2l+1}  \sum_{s_2\in \mb N_0^D, |s_2|\leq 2l+1} \omega_k  (\frac{1}{\wt \sigma_t})^{2l+2} \cdot m_t^{s_1}\cdot x^{s_2}\cdot \rho\big(\frac{|x-x^*_{kj}|}{c_4\wt \sigma_{t_i}\sqrt{\log n}}\big)\cdot a_{k,j,l,s_1,s_2}}{\max\left(1,\sum_{j=1}^{J_k} \rho\big(\frac{|x-x^*_{kj}|}{c_4\wt \sigma_{t_i}\sqrt{\log n}}\big)\right)},
    \end{aligned}
\end{equation*}
where for $s_2=(s_{21},s_{22},\cdots, s_{2D})$, we denote $x^{s_2}=\prod_{j=1}^{D} x_j^{s_{2j}}$, and  $ a_{k,j,l,s_1,s_2}$'s are $D$-dimensional coefficients that satisfy $ (\frac{1}{\wt \sigma_t})^{2l+2} \|a_{k,j,l,s_1,s_2}\|=\exp(\m O(\log^2 n))$. Similarly, we define
\begin{equation*}
    \begin{aligned}
       & (B^{'''})=\max\bigg(n^{-c_3},\sum_{k=1}^K \omega_k\frac{\sum_{j=1}^{J_k} \int_{\m M_k\cap \mb B_{c_6\wt\sigma_{t_i}\sqrt{\log n}}(x_{kj}^*)}\sum_{l=0}^{\ms L}(-1)^l \frac{\|x-m_ty\|^{2l}}{l!2^{l}\wt \sigma_t^{2l}}  
        q_k(y)\,\dd {\rm vol}_{\m M_k}(y)\cdot \rho\big(\frac{|x-x^*_{kj}|}{c_4\wt \sigma_{t_i}\sqrt{\log n}}\big)}{\max\left(1,\sum_{j=1}^{J_k} \rho\big(\frac{|x-x^*_{kj}|}{c_4\wt \sigma_{t_i}\sqrt{\log n}}\big)\right)}\bigg)\\
          &=\max\bigg(n^{-c_3},\sum_{k=1}^K \frac{\sum_{j=1}^{J_k}\sum_{l=0}^{\ms L}  \sum_{0\leq s_1\leq 2l}  \sum_{s_2\in \mb N_0^D, |s_2|\leq 2l} \omega_k  (\frac{1}{\wt \sigma_t})^{2l} \cdot m_t^{s_1}\cdot x^{s_2}\cdot \rho\big(\frac{|x-x^*_{kj}|}{c_4\wt \sigma_{t_i}\sqrt{\log n}}\big)\cdot b_{k,j,l,s_1,s_2}}{\max\left(1,\sum_{j=1}^{J_k} \rho\big(\frac{|x-x^*_{kj}|}{c_4\wt \sigma_{t_i}\sqrt{\log n}}\big)\right)}\bigg),
    \end{aligned}
\end{equation*}
where $ b_{k,j,l,s_1,s_2}$'s are $1$-dimensional coefficients that satisfy $ (\frac{1}{\wt \sigma_t})^{2l} |b_{k,j,l,s_1,s_2}|=\exp(\m O(\log^2 n))$. Using the fact that for any $x\geq 0$, it holds that $|\exp(-x)-\sum_{l=0}^{\ms L} (-1)^l\frac{x^l}{l!}|\leq \frac{|x|^{\ms L+1}}{(\ms L+1)!}\leq (\frac{|x|e}{(\ms L+1)})^{\ms L+1}$.  When  $\ms L=c_7\log n$ with a sufficiently large $c_7$,  we have
 \begin{equation*}
     \begin{aligned}
         &\|(A^{''})-(A^{'''})\|\\
         &\leq\sum_{k=1}^K \Bigg( \frac{\omega_k}{\max\Big(1,\sum_{j=1}^{J_k} \rho\big(\frac{|x-x^*_{kj}|}{c_4\wt \sigma_{t_i}\sqrt{\log n}}\big)\Big)}\\
         &\cdot \sum_{j=1}^{J_k} \int_{\m M_k\cap \mb B_{c_6\wt\sigma_{t_i}\sqrt{\log n}}(x_{kj}^*)}\Big|\exp(-\frac{\|x-m_ty\|^2}{2\wt\sigma_t^2})-\sum_{l=0}^{\ms L}(-1)^l \frac{\|x-m_ty\|^{2l}}{l!2^{l}\wt \sigma_t^{2l}}\Big| \cdot\big\|\frac{x-m_ty}{\wt \sigma_t^2}\big\|
        q_k(y)\,\dd {\rm vol}_{\m M_k}(y)\\
        &\qquad\cdot \rho\big(\frac{|x-x^*_{kj}|}{c_4\wt \sigma_{t_i}\sqrt{\log n}}\big)\Bigg)\\
             &\leq\sum_{k=1}^K \Bigg( \frac{\omega_k}{\max\Big(1,\sum_{j=1}^{J_k} \rho\big(\frac{|x-x^*_{kj}|}{c_4\wt \sigma_{t_i}\sqrt{\log n}}\big)\Big)}\\
         &\cdot \sum_{j=1}^{J_k} \int_{\m M_k\cap \mb B_{(c_6+2c_4)\wt\sigma_{t_i}\sqrt{\log n}}(x)}\Big|\exp(-\frac{\|x-m_ty\|^2}{2\wt\sigma_t^2})-\sum_{l=0}^{\ms L}(-1)^l \frac{\|x-m_ty\|^{2l}}{l!2^{l}\wt \sigma_t^{2l}}\Big| \cdot \big\|\frac{x-m_ty}{\wt \sigma_t^2}\big\|
        q_k(y)\,\dd {\rm vol}_{\m M_k}(y)\\
        &\qquad\cdot \rho\big(\frac{|x-x^*_{kj}|}{c_4\wt \sigma_{t_i}\sqrt{\log n}}\big)\Bigg)\\
        &\lesssim \frac{\sqrt{\log n}}{\wt \sigma_{t_i}}
      \sup_{k\in [K]} \sup_{y\in \m M_k\cap \mb B_{(c_6+2c_4)\wt\sigma_{t_i}\sqrt{\log n}}(x)} \Big|\exp(-\frac{\|x-m_ty\|^2}{2\wt\sigma_t^2})-\sum_{l=0}^{\ms L}(-1)^l \frac{\|x-m_ty\|^{2l}}{l!2^{l}\wt \sigma_t^{2l}}\Big|(\wt\sigma_{t_i}\sqrt{\log n}))^{d_k}\\
      & \lesssim \frac{\sqrt{\log n}}{\wt \sigma_{t_i}}
      \sup_{k\in [K]} (\wt\sigma_{t_i}\sqrt{\log n}))^{d_k}\cdot \sup_{0\leq z\leq (c_6+2c_4)^2\log n+1} \Big|\exp(-z)-\sum_{l=0}^{\ms L} \frac{(-z)^{l}}{l!}\Big|\\
      &\lesssim \frac{\sqrt{\log n}}{\wt \sigma_{t_i}}
      \sup_{k\in [K]} (\wt\sigma_{t_i}\sqrt{\log n}))^{d_k}\cdot\big (e\frac{(c_6+2c_4)^2\log n+1}{c_7\log n+1}\big)^{c_7\log n+1}\lesssim n^{-c_3-1},
         \end{aligned}
 \end{equation*}
and similarly, 
\begin{equation*}
    \begin{aligned}
                &\frac{\sqrt{\log n}}{\wt \sigma_{t_i}}|(B^{''})-(B^{'''})|\\
                   &\leq \frac{\sqrt{\log n}}{\wt \sigma_{t_i}}\sum_{k=1}^K \Bigg( \frac{\omega_k}{\max\Big(1,\sum_{j=1}^{J_k} \rho\big(\frac{|x-x^*_{kj}|}{c_4\wt \sigma_{t_i}\sqrt{\log n}}\big)\Big)}\\
         &\cdot \sum_{j=1}^{J_k} \int_{\m M_k\cap \mb B_{c_6\wt\sigma_{t_i}\sqrt{\log n}}(x_{kj}^*)}\Big|\exp(-\frac{\|x-m_ty\|^2}{2\wt\sigma_t^2})-\sum_{l=0}^{\ms L}(-1)^l \frac{\|x-m_ty\|^{2l}}{l!2^{l}\wt \sigma_t^{2l}}\Big| 
        q_k(y)\,\dd {\rm vol}_{\m M_k}(y) \cdot \rho\big(\frac{|x-x^*_{kj}|}{c_4\wt \sigma_{t_i}\sqrt{\log n}}\big)\Bigg)\\
        &\lesssim \frac{\sqrt{\log n}}{\wt \sigma_{t_i}}
      \sup_{k\in [K]} (\wt\sigma_{t_i}\sqrt{\log n}))^{d_k}\cdot\big (e\frac{(c_6+2c_4)^2\log n+1}{c_7\log n+1}\big)^{c_7\log n+1}\lesssim n^{-c_3-1}.
    \end{aligned}
\end{equation*}
So we have 
\begin{equation*}
        \Big\|\frac{(A^{''})}{(B^{''})}-\frac{(A^{'''})}{(B^{'''})}\Big\|\lesssim n^{-1}.
\end{equation*}
Now we will construct neural network to approximate $\frac{(A^{'''})}{(B^{'''})}$. We consider the following lemmas in~\cite{oko2023diffusion} for the approximation of $m_t$, $\sigma_t$,  monomial and reciprocal function. In what follows, we write $\Phi(L, W, R, B)=\Phi(L, W, R, B, \infty)$. 
 \begin{lemma}\label{lemma3.3}
     (Lemma 3.3 in~\cite{oko2023diffusion}) There exist neural networks $\phi_m(t), \phi_\sigma(t) \in \Phi(L, W, B, R)$ that approximates $m_t$ and $\sigma_t$ up to $\varepsilon$ for all $t \geq 0$, where $L=\mathcal{O}\left(\log ^2\left(\varepsilon^{-1}\right)\right)$,$\|W\|_{\infty}=\mathcal{O}\left(\log ^3\left(\varepsilon^{-1}\right)\right)$, $R=\mathcal{O}\left(\log ^4\left(\varepsilon^{-1}\right)\right)$, and $B=\exp \left(\mathcal{O}\left(\log ^2\left(\varepsilon^{-1}\right)\right)\right)$.
 \end{lemma}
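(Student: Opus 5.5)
Since this statement is Lemma 3.3 of \cite{oko2023diffusion}, I would invoke that result directly; for completeness, here is how its proof goes. The idea is to write $m_t=\exp(-\int_0^t\beta_s\,\dd s)$ and $\sigma_t=\sqrt{1-m_t^2}$ as compositions of a few elementary maps, approximate each map by a ReLU network to accuracy $\varepsilon$ with polylogarithmic size, and compose them. Because $\beta_t$ is bounded above and below, $m_t$ decays exponentially. For $t\ge T_\varepsilon:=C\log(\varepsilon^{-1})$ we have $m_t\le\varepsilon$ and $|\sigma_t-1|\le\varepsilon$. On that range we can therefore output the constants $0$ and $1$, using a clipping $\operatorname{ReLU}$ gadget that switches the output on $[0,T_\varepsilon]$. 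This reduces the problem to $t\in[0,T_\varepsilon]$, an interval of length $O(\log\varepsilon^{-1})$.

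\textbf{Step 1: the integral $B(t)=\int_0^t\beta_s\,\dd s$.} This function is smooth, with derivatives of every order bounded. After rescaling $[0,T_\varepsilon]$ to $[0,1]$, I would approximate it by a piecewise Taylor polynomial: split into $O(\log\varepsilon^{-1})$ pieces, use Taylor degree $O(\log\varepsilon^{-1})$ on each, and realise the result with the standard ReLU multiplication and monomial constructions (Yarotsky/Schmidt-Hieber). Each multiplication costs depth $O(\log\varepsilon^{-1})$. The degree and the number of pieces are each $O(\log\varepsilon^{-1})$, which gives depth $O(\log^2\varepsilon^{-1})$ and width $O(\log^3\varepsilon^{-1})$. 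The Taylor coefficients grow with the rescaling, so the weights become as large as $\exp(O(\log^2\varepsilon^{-1}))$; this is where the bound on $B$ comes from.

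\textbf{Step 2: the exponential.} Next approximate $u\mapsto e^{-u}$ on $[0,O(\log\varepsilon^{-1})]$ by a truncated Taylor series of degree $O(\log\varepsilon^{-1})$, computed piecewise on unit subintervals or via range reduction $e^{-u}=(e^{-u/2^k})^{2^k}$ with repeated squaring. This again gives depth $O(\log^2\varepsilon^{-1})$. Compose it with Step 1. The outer map is $1$-Lipschitz on $u\ge0$, so the Step 1 error passes through unchanged; we only need to clip the inner output to be nonnegative.

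\textbf{Step 3: the square root.} Compute $1-\phi_m(t)^2$ with one more multiplication network, then apply a network approximating $\sqrt{\cdot}$. This step is the \emph{main obstacle}, because $\sqrt{v}$ is not Lipschitz at $v=0$, which corresponds to $t\to0$. I would handle it by noting that $|\sqrt a-\sqrt b|\le\sqrt{|a-b|}$. It then suffices to approximate the inner quantity to accuracy $\varepsilon^2$, which changes only the constants inside the logarithms. For the square root itself, I would use a piecewise polynomial approximation on dyadic intervals $[2^{-j-1},2^{-j}]$ for $j\le O(\log\varepsilon^{-1})$, and output $0$ below $\varepsilon^2$. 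This costs only a further polylogarithmic factor. Finally, add up the parameter counts: $O(\log\varepsilon^{-1})$ blocks, each of size $O(\log^3\varepsilon^{-1})$, gives $R=O(\log^4\varepsilon^{-1})$. This yields the stated $L$, $\|W\|_\infty$, $R$ and $B$.
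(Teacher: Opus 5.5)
Citing Lemma~3.3 of \cite{oko2023diffusion} is exactly what the paper does: it states the lemma with that citation and gives no proof of its own. Insofar as your proposal rests on the citation, it matches the paper. The reconstruction you add ``for completeness'', however, has a genuine gap in Step~1. That step is precisely where the hypotheses on $\beta$ enter.

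With $O(\log\varepsilon^{-1})$ pieces of $[0,T_\varepsilon]$, each piece has length $h=O(1)$ in $t$. On such a piece, the degree-$k$ Taylor remainder of $B(t)=\int_0^t\beta_s\,\dd s$ is controlled only by $\sup_t|\beta^{(k)}(t)|\,h^{k+1}/(k+1)!$. For this bound to be at most $\varepsilon$ with $k=O(\log\varepsilon^{-1})$, you need Cauchy-type bounds $\sup_t|\beta^{(k)}(t)|\le C^{k+1}k!$ with $Ch<1$, i.e.\ uniform real-analyticity. The property ``smooth with derivatives of every order bounded'' gives no such control.

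For instance, take $\beta_t=2+\sum_{j\ge1}e^{-2^j}\cos(4^jt)$. It takes values in $[1.8,2.2]$ and every derivative is bounded on $[0,\infty)$. Yet its Taylor series at $t=0$ has radius of convergence zero. So on $[0,h]$, for any $h>0$, the Taylor polynomials at $0$ do not converge as the degree grows. Shrinking $h$ to compensate breaks the $O(\log\varepsilon^{-1})$ piece count on which your bounds for $L$, $\|W\|_\infty$ and $R$ rest. Note also that the hypothesis of Theorem~\ref{th:score} ($\beta$ infinitely differentiable and bounded above and below) does not even give uniformly bounded derivatives on $[0,\infty)$; consider $\beta_t=2+\sin(t^2)$.

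So Step~1 needs an extra quantitative assumption on $\beta$. One option is $\beta$ constant, in which case $B(t)=\beta t$ is exactly a ReLU network and the step disappears. Another is that $\beta$ extends holomorphically and boundedly to a fixed neighbourhood $\{z:\operatorname{dist}(z,[0,\infty))<r\}$, which gives the needed Cauchy estimates.

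Given such an assumption, the rest of your sketch is sound:
\begin{itemize}
\item The tail cutoff is implemented by clipping the input, $t\mapsto t-\operatorname{ReLU}(t-T_\varepsilon)$, since a ReLU network cannot switch its output discontinuously.
\item Step~2 correctly uses that $u\mapsto e^{-u}$ is $1$-Lipschitz on $u\ge0$.
\item Step~3 correctly handles the non-Lipschitz square root via $|\sqrt a-\sqrt b|\le\sqrt{|a-b|}$, with accuracy $\varepsilon^2$ inside the root.
\end{itemize}
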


 \begin{lemma}\label{LemmaF.6}
     (Lemma F.6 in~\cite{oko2023diffusion}) Let $d \geq 2, C \geq 1,0<\varepsilon_{\text {error }} \leq 1$. For any $\varepsilon>0$, there exists a neural network $\phi_{\text {mult}}\left(x_1, x_2, \cdots, x_d\right) \in \Phi(L, W, R, B)$ with $L=\mathcal{O}\left(\log d\left(\log \varepsilon^{-1}+d \log C\right)\right),\|W\|_{\infty}=48 d$, $R= \mathcal{O}\left(d \log \varepsilon^{-1}+d \log C\right), B=C^d$ such that for all $x \in[-C, C]^d$   and   $x^{\prime} \in \mathbb{R}$  with $\left\|x-x^{\prime}\right\|_{\infty} \leq \varepsilon_{\text {error}}$,
     \begin{equation*}
       \left|\phi_{\text {mult}}\left(x_1^{\prime}, x_2^{\prime}, \cdots, x_d^{\prime}\right)-\prod_{d^{\prime}=1}^d x_{d^{\prime}}\right| \leq \varepsilon+d C^{d-1} \varepsilon_{\text {error}}, 
     \end{equation*}
and  for all $x\in [-C,C]$, $|\phi_{\text{mult}}(x)|\leq C^d$  Note that some of $x_i, x_j(i \neq j)$ can be shared. For $\prod_{i=1}^I x_i^{\omega_i}$ with $\omega_i \in \mathbb{Z}_{+}(i=1,2, \cdots, I)$ and $\sum_{i=1}^I \omega_i=d$, there exists a neural network satisfying the same bounds as above, and the network is denoted by $\phi_{\text {mult}}(x ; \omega)$.
 \end{lemma}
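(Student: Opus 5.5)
The plan is to follow the classical Yarotsky construction: build an approximate multiplier for two numbers in $[-1,1]$, compose copies of it along a binary tree to multiply $d$ numbers, and handle the range $[-C,C]$ and the perturbed inputs $x'$ by rescaling, clipping and a Lipschitz estimate.

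\textbf{Squaring network.} First approximate $u\mapsto u^2$ on $[0,1]$. Let $g$ be the tent map, which is exactly a width-3 ReLU network with unit weights, and let $g_s=g\circ\cdots\circ g$ be its $s$-fold composition (the sawtooth). Then
$$u-\sum_{s=1}^m 4^{-s}g_s(u)$$
is the piecewise-linear interpolant of $u^2$ on a grid of mesh $2^{-m}$, so it has uniform error at most $4^{-m-1}$. Taking $m\asymp\log\varepsilon^{-1}$ gives a network with depth and sparsity $O(\log\varepsilon^{-1})$ and constant width. The coefficient $4^{-s}$ can be realized with unit-bounded weights by carrying it through the depth.

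\textbf{Multiplication of $d$ numbers.} Polarization,
$$ab=2\Big(\big(\tfrac{a+b}{2}\big)^2-\big(\tfrac{a}{2}\big)^2-\big(\tfrac{b}{2}\big)^2\Big),$$
with $|a|$ obtained from $\operatorname{ReLU}(a)+\operatorname{ReLU}(-a)$, turns the squaring network into an approximate product $\tilde\times$ on $[-1,1]^2$ with error $\varepsilon'$. Clip its output to $[-1,1]$ using two ReLUs; this keeps every intermediate value in the domain of the next multiplier. Arrange $d$ normalized inputs $x_i/C\in[-1,1]$ (padded with ones to a power of two) into a balanced binary tree of $\lceil\log_2 d\rceil$ levels. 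Since $\tilde\times$ is exact up to $\varepsilon'$ and the true product is $1$-Lipschitz in each argument on $[-1,1]^2$, errors add along the tree, giving total error at most $d\,\varepsilon'$. Finally multiply by $C^d$, which is where the weight bound $B=C^d$ comes from. Choosing $\varepsilon'=\varepsilon/(dC^d)$ yields depth $O(\log d\,(\log\varepsilon^{-1}+d\log C))$. The width stays $O(d)$ (at most $48d$ once the parallel copies at the bottom level are counted), and counting nonzero weights layer by layer gives sparsity $O(d\log\varepsilon^{-1}+d\log C)$. Repeated factors $\prod x_i^{\omega_i}$ are handled by feeding the same coordinate into several leaves.

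\textbf{Perturbed inputs.} Prepend a coordinatewise clip $x'\mapsto\max(-C,\min(C,x'))$, which is two ReLU layers. Clipping is $1$-Lipschitz and fixes $[-C,C]$, so $\|\mathrm{clip}(x')-x\|_\infty\le\varepsilon_{\text{error}}$. The map $x\mapsto\prod_i x_i$ has gradient bounded by $C^{d-1}$ in each coordinate on $[-C,C]^d$, so it is $dC^{d-1}$-Lipschitz in $\ell_\infty$ there. The triangle inequality then gives the stated bound $\varepsilon+dC^{d-1}\varepsilon_{\text{error}}$, and the output clipping gives $|\phi_{\text{mult}}|\le C^d$.

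\textbf{Main obstacle.} The delicate part is the bookkeeping: keeping the weights bounded while realizing the $C^d$ rescaling and the $4^{-s}$ coefficients, and tracking depth and sparsity exactly enough to get the stated $\log d\,(\log\varepsilon^{-1}+d\log C)$ depth. I would first try splitting the $C^d$ factor into $d$ multiplications by $C$ distributed across layers, instead of a single weight of size $C^d$. That would even improve the bound, but the lemma only requires $B=C^d$.
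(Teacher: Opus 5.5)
Your route is the standard one: Yarotsky's sawtooth squaring, polarization to get a clipped two-input multiplier on $[-1,1]^2$, a balanced tree on the $x_i/C$, rescaling by $C^d$, and an input clip with the $dC^{d-1}$-Lipschitz estimate. The paper gives no proof of its own; it imports the lemma from \cite{oko2023diffusion}, which, as far as I can tell, relies on the same kind of multiplication tree. Your approximation-error and perturbation arguments are fine.

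\textbf{The gap is the sparsity count.} Every node of your tree runs at precision $\varepsilon'=\varepsilon/(dC^d)$. So each two-input multiplier needs $\Theta(\log(\varepsilon')^{-1})=\Theta(\log\varepsilon^{-1}+\log d+d\log C)$ sawtooth stages, and hence that many nonzero weights. There are $d-1$ such multipliers, or up to $2d-1$ with padding. Counting layer by layer therefore gives $R=O(d\log\varepsilon^{-1}+d\log d+d^2\log C)$, not the claimed $O(d\log\varepsilon^{-1}+d\log C)$.

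Reallocating precision across levels, or using a chain instead of a tree, does not seem to help. When all $|x_i|=C$, an error at any node reaches the output multiplied by the remaining normalized factors (all of modulus $1$) and then by the final $C^d$. Summing over nodes, each of the $\Theta(d)$ multipliers must be accurate to $\varepsilon/C^d$, so each needs $\Omega(\log\varepsilon^{-1}+d\log C)$ stages. To reach the stated $d\log C$ you would need a genuinely different mechanism. Otherwise, state the bound your construction actually proves. With $d=O(\log n)$, as in the paper's applications, that costs only an extra logarithmic factor.

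\textbf{Two smaller points of the same kind.}
\begin{itemize}
\item The $\log d$ inside $\log(\varepsilon')^{-1}$ is absorbed into the stated depth only when $d\log C\gtrsim\log d$ or $\varepsilon\lesssim 1/d$. It is not absorbed, for example, when $C=1$ and $\varepsilon$ is fixed.
\item For $C=1$ the weight bound is $B=1$, which rules out the tent map's weights $2$ and $4$. You need to split them into parallel unit-weight neurons or use a rescaled sawtooth.
\end{itemize}
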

 \begin{lemma}\label{lemmaF.7}
     (Lemma F.7 in~\cite{oko2023diffusion}) For any $0<\varepsilon<1$, there exists $\phi_{\text {rec}} \in \Phi(L, W, R, B)$ with $L \leq \mathcal{O}\left(\log ^2 \varepsilon^{-1}\right),\|W\|_{\infty}=\mathcal{O}\left(\log ^3 \varepsilon^{-1}\right), R=\mathcal{O}\left(\log ^4 \varepsilon^{-1}\right)$, and $B=\mathcal{O}\left(\varepsilon^{-2}\right)$ such that
$$
\left|\phi_{\text {rec}}\left(x^{\prime}\right)-\frac{1}{x}\right| \leq \varepsilon+\frac{\left|x^{\prime}-x\right|}{\varepsilon^2}, \quad \text { for all } x \in\left[\varepsilon, \varepsilon^{-1}\right] \text { and } x^{\prime} \in \mathbb{R}.
$$
 \end{lemma}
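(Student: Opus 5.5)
Since this is Lemma F.7 of \cite{oko2023diffusion}, one option is simply to cite it. A self-contained plan goes as follows.

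\emph{Reduction.} Prepend a clipping layer to the network,
\[
\tilde x=\operatorname{clip}(x')=\varepsilon+\operatorname{ReLU}(x'-\varepsilon)-\operatorname{ReLU}(x'-\varepsilon^{-1}),
\]
which has $O(1)$ width and sparsity and weights of size $O(\varepsilon^{-1})$. It maps $\mathbb R$ onto $[\varepsilon,\varepsilon^{-1}]$ and is $1$-Lipschitz. Since it fixes every $x\in[\varepsilon,\varepsilon^{-1}]$, we get $|\tilde x-x|\le|x'-x|$ for all such $x$. Suppose $g$ is a network with $|g(u)-1/u|\le\varepsilon$ on $[\varepsilon,\varepsilon^{-1}]$, and set $\phi_{\text{rec}}=g\circ\operatorname{clip}$. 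Then
\[
|\phi_{\text{rec}}(x')-1/x|\le|g(\tilde x)-1/\tilde x|+|1/\tilde x-1/x|\le\varepsilon+\frac{|x'-x|}{\varepsilon^2},
\]
because $\tilde x,x\ge\varepsilon$ gives $|1/\tilde x-1/x|=|\tilde x-x|/(\tilde x x)\le |x'-x|/\varepsilon^2$. This avoids having to control the Lipschitz constant of the network itself. It therefore suffices to approximate $1/u$ uniformly on $[\varepsilon,\varepsilon^{-1}]$ with the stated sizes.

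\emph{Construction of $g$.} I would work on dyadic scales. Let $a_j=2^j$ for $|j|\le J=\lceil\log_2\varepsilon^{-1}\rceil+1$, so there are $O(\log\varepsilon^{-1})$ scales. Take a piecewise-linear partition of unity $\{\psi_j\}$ in $u$, built from trapezoid functions of ReLUs as in Lemma~\ref{lm:gate_approx}, with $\operatorname{supp}\psi_j\subset[\tfrac34a_j,\tfrac32a_j]$. On that support $|1-u/a_j|\le\tfrac12$, so the truncated geometric series
\[
\frac1u=\frac1{a_j}\sum_{k=0}^{N}\Big(1-\frac u{a_j}\Big)^k+O\Big(\frac{2^{-N}}{a_j}\Big)
\]
has error at most $\varepsilon/4$ once $N=O(\log\varepsilon^{-1})$, because $1/a_j\le2\varepsilon^{-1}$.

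Each power $(1-u/a_j)^k$ is a product of $k\le N$ entries lying in $[-1,1]$. I would realize it by Lemma~\ref{LemmaF.6} with $C=1$ and accuracy $\varepsilon^2/(N J)$, which costs depth $O(\log N\cdot\log\varepsilon^{-1})$ and sparsity $O(N\log\varepsilon^{-1})$. A final multiplication by $\psi_j$, again via Lemma~\ref{LemmaF.6}, localizes each scale, and then all scales are summed in parallel. The errors add to at most $\varepsilon/2$ after the rescaling by $1/a_j\le 2\varepsilon^{-1}$, which is why the accuracy $\varepsilon^2$ was chosen.

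\emph{Size bookkeeping.} Depth comes out as $O(\log^2\varepsilon^{-1})$. The width is $O(J\cdot N)$ parallel product units each of width $O(N)$, giving $O(\log^3\varepsilon^{-1})$. The sparsity is $O(J\cdot N\cdot N\log\varepsilon^{-1})=O(\log^4\varepsilon^{-1})$. The largest weights are the factors $1/a_j$, the clipping constants and the steep partition slopes $O(1/a_j)$, all $O(\varepsilon^{-1})$. This sits within $B=O(\varepsilon^{-2})$.

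The main obstacle is this middle step: keeping the error propagation through the product networks uniform while the weights stay $O(\varepsilon^{-2})$. Products of many near-unit factors could otherwise amplify errors. If the direct route fails, I would fall back to multiplying by $1/a_j$ only at the very end, and pass the powers through a bounded ReLU truncation onto $[-1,1]$ before multiplying.
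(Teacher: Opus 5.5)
The paper gives no proof of this statement. It imports the lemma directly from \cite{oko2023diffusion}, which matches your first option. Your self-contained plan is the right kind of construction, and much of it checks out. The clipping reduction is correct and produces the $|x'-x|/\varepsilon^2$ term without any Lipschitz control of the network. The bookkeeping also works: $O(JN)$ product networks of width $O(N)$ and sparsity $O(N\log\varepsilon^{-1})$, run in parallel with all weights $O(\varepsilon^{-1})$, give the stated $L$, $\|W\|_\infty$, $R$, $B$. Two steps, however, are wrong as written.

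First, you justify applying Lemma~\ref{LemmaF.6} with $C=1$ by saying each factor $1-u/a_j$ lies in $[-1,1]$. That holds only on $\operatorname{supp}\psi_j$. The network evaluates every scale at every $u\in[\varepsilon,\varepsilon^{-1}]$, where $|1-u/a_j|$ can be of order $\varepsilon^{-2}$, and Lemma~\ref{LemmaF.6} gives no control for such inputs. Multiplying afterwards by $\psi_j(u)=0$ through another approximate product network does not help, since that network is also controlled only on $[-C,C]^2$. So the truncation you list as a fallback is mandatory. The cleanest version is to feed the product networks $\min\{\tfrac12,\max\{-\tfrac12,1-u/a_j\}\}$, which costs two ReLUs with weights $O(1/a_j)$ and coincides with $1-u/a_j$ on $\operatorname{supp}\psi_j$. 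With this:
\begin{itemize}
\item all powers lie in $[-1,1]$ and the partial sums are $O(1)$;
\item the multiplication by $\psi_j$ is in range with $C=O(1)$;
\item off the support each scale contributes only product-network error, which your choice $\delta=\varepsilon^2/(NJ)$ absorbs.
\end{itemize}
The amplification you worry about does not occur. Lemma~\ref{LemmaF.6} computes the whole $k$-fold product in one network from exact inputs ($\varepsilon_{\mathrm{error}}=0$), so its error is $\delta$ for every $k$.

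Second, the supports $[\tfrac34a_j,\tfrac32a_j]$ are too narrow. With $a_{j+1}=2a_j$, consecutive supports meet only at $\tfrac32a_j$, where every continuous $\psi_j$ vanishes, so $\sum_j\psi_j\equiv1$ is impossible there. Take instead $\operatorname{supp}\psi_j=[\tfrac23a_j,\tfrac32a_j]$, with linear ramps on the overlaps $[\tfrac43a_j,\tfrac32a_j]$ (slopes $6/a_j$). This still gives $|1-u/a_j|\le\tfrac12$ on the support. A minor point: with your $J$, $1/a_j$ can reach $4\varepsilon^{-1}$ rather than $2\varepsilon^{-1}$, which only changes constants. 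With these repairs the argument proves the lemma.
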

Therefore, using Lemmas~\ref{lemma3.3},~\ref{LemmaF.12},~\ref{LemmaF.6} and~\ref{lemmaF.7}, we 
\begin{enumerate}
    \item Approximate $m_t$ by $\phi_{m}(t)\in \Phi(L_1',W_1',R_1',B_1')$ with $L_1'=\Theta(\log^4 n)$, $\|W_1'\|_{\infty}=\Theta(\log^6 n)$, $R'_1=\Theta(\log^8 n)$ and $B_1=\exp(\Theta(\log^4 n))$.
    \item Approximate $\sigma_t$ by $\phi_{\sigma}(t)\in \Phi(L'_2,W'_2,R'_2,B'_2)$ with $L'_2=\Theta(\log^4 n)$, $\|W'_2\|_{\infty}=\Theta(\log^6 n)$, $R'_2=\Theta(\log^8 n)$ and $B'_2=\exp(\Theta(\log^4 n))$.
     \item Approximate $\frac{1}{x}$ by $\phi_{rec}(x)\in \Phi(L'_3,W'_3,R'_3,B'_3)$ with $L'_3=\Theta(\log^4 n)$, $\|W'_3\|_{\infty}=\Theta(\log^6 n)$, $R'_3=\Theta(\log^8 n)$ and $B'_3=\exp(\Theta(\log^4 n))$.
 \item For vector $x\in \mb R^D$ and $s\in \mb N^D$, approximate $x^{s}$ by $\phi^{[D]}_{vpower}(x;s)\in \Phi(L'_4,W'_4,R'_4,B'_4)$ with  
 $L'_4=\Theta(\log^2 n\cdot\log \log n)$, $\|W'_4\|_{\infty}=\Theta(\log n)$, $R'_4=\Theta(\log^3 n)$ and $B'_4=\exp(\Theta(\log n\cdot \log \log n))$.
    \item For $x\in \mb R$,  Approximate $x^a$ by $\phi_{power}(x;a)\in \Phi(L'_5,W'_5,R'_5,B'_5)$ with $L'_5=\Theta(\log^2 n\cdot\log\log n)$, $\|W'_5\|_{\infty}=\Theta(\log n)$, $R'_5=\Theta(\log^3 n)$ and $B'_5=\exp(\Theta(\log n\cdot \log \log n))$.
    \item For $x,y\in \mb R$, Approximate $x\cdot y$ by $\phi_{mult}(x,y)\in \Phi(L'_6,W'_6,R'_6,B'_6)$ with $L'_6=\Theta(\log^2 n)$, $\|W'_6\|_{\infty}=\Theta(1)$, $R'_6=\Theta(\log^2 n)$ and $B'_6=\exp(\Theta(\log^2 n))$. For $x\in \mb R$, $y\in \mb R^D$, we denote $\phi_{mult}(x,y)=(\phi_{mult}(x,y_1),\phi_{mult}(x,y_2),\cdots, \phi_{mult}(x,y_D))$.
    \end{enumerate}
Then we denote $\phi_{\frac{1}{\wt\sigma^2}}(t)=\phi_{rec}(\sigma_*^2\cdot \phi_{power}(\phi_m(t),2)+\phi_{power}(\phi_{\sigma}(t),2))$ and $\phi_{\rho}(x;x^*,a)=\rho\big(\frac{|x-x^*|}{a}\big)$. Using the fact that $\rho(x)={\rm ReLU}(2-|x|)-{\rm ReLU}(1-|x|)$ and $|x|={\rm ReLU}(x)+{\rm ReLU}(-x)$,  $\phi_{\rho}(x;x^*,a)$ can be realized exactly by ReLU neural network.  We construct 
\begin{equation*}
\begin{aligned}
        \phi^*_{(A)}(x,t)&=\sum_{k=1}^K \phi_{mult}\Bigg(\phi_{rec}\Big(\max\big(1,\sum_{j=1}^{J_k} \phi_\rho(x;x_{kj}^*,c_4\wt \sigma_{t_i}\sqrt{\log n})\big)\Big)\\
        & \sum_{j=1}^{J_k}\sum_{l=0}^{\ms L}  \sum_{0\leq s_1\leq 2l+1}  \sum_{s_2\in \mb N_0^D, |s_2|\leq 2l+1} \omega_k \cdot a_{k,j,l,s_1,s_2}\cdot \phi_{mult}\bigg(\phi_{mult}\Big(\phi_{mult}\big(\phi_{power}(\phi_{\frac{1}{\wt\sigma^2}}(t),l+1),\\
        &\qquad\phi_{power}(\phi_{m}(t),s_1)\big),\phi^{[D]}_{vpower}(x;s_2)\Big), \phi_\rho(x;x_{kj}^*,c_4\wt \sigma_{t_i}\sqrt{\log n})\bigg)
      \Bigg),
\end{aligned}
\end{equation*}
and 
\begin{equation*}
\begin{aligned}
        \phi^*_{(B)}(x,t)&=\max\Bigg(n^{-c_3},\sum_{k=1}^K \phi_{mult}\Bigg(\phi_{rec}\Big(\max\big(1,\sum_{j=1}^{J_k} \phi_\rho(x;x_{kj}^*,c_4\wt \sigma_{t_i}\sqrt{\log n})\big)\Big)\\
        & \sum_{j=1}^{J_k}\sum_{l=0}^{\ms L}  \sum_{0\leq s_1\leq 2l}  \sum_{s_2\in \mb N_0^D, |s_2|\leq 2l} \omega_k \cdot b_{k,j,l,s_1,s_2}\cdot \phi_{mult}\bigg(\phi_{mult}\Big(\phi_{mult}\big(\phi_{power}(\phi_{\frac{1}{\wt\sigma^2}}(t),l),\\
        &\qquad\phi_{power}(\phi_{m}(t),s_1)\big),\phi^{[D]}_{vpower}(x;s_2)\Big), \phi_\rho(x;x_{kj}^*,c_4\wt \sigma_{t_i}\sqrt{\log n})\bigg)
      \Bigg)\Bigg).
\end{aligned}
\end{equation*}
Then using  Lemmas~\ref{lemma3.3},~\ref{LemmaF.6} and~\ref{lemmaF.7}, we can get 
\begin{equation*}
    \| \phi^*_{(A)}(x,t)-(A^{'''})\|\lesssim  n^{-c_3-1},
\end{equation*}
and 
\begin{equation*}
      \frac{\sqrt{\log n}}{\wt \sigma_{t_i}}|\phi^*_{(B)}(x,t)-(B^{'''})|\lesssim    n^{-c_3-1}.
\end{equation*}
So
\begin{equation*}
\begin{aligned}
       &\Big\|\frac{\phi^*_{(A)}(x,t)}{\phi^*_{(B)}(x,t)}-\nabla \log p_t(x)\Big\|\\
       &\leq   \Big\|\frac{\phi^*_{(A)}(x,t)}{\phi^*_{(B)}(x,t)}-\frac{(A^{'''})}{(B^{'''})}\Big\|+\Big\|\frac{(A^{''})}{(B^{''})}-\frac{(A^{'''})}{(B^{'''})}\Big\|+\Big\|\frac{(A^{''})}{(B^{''})}-\frac{(A^{'})}{(B^{'})}\Big\|+\Big\|\nabla \log p_t(x) -\frac{(A^{'})}{(B^{'})}\Big\|\\
       &\lesssim n^{-1}.
\end{aligned}
    \end{equation*}
Define
\begin{equation*}
    \phi^*(x,t)=\max\bigg(-c_2\frac{\sqrt{\log n}}{\wt \sigma_{t_i}},\min\bigg(c_2\frac{\sqrt{\log n}}{\wt \sigma_{t_i}},\phi_{mult}\Big(\phi_{rec}\big( \phi^*_{(B)}(x,t)\big), \phi^*_{(A)}(x,t)\Big)\bigg)\bigg).
\end{equation*}
Since $\|\nabla \log p_t(x)\|\leq c_2\frac{\sqrt{\log n}}{\wt \sigma_{t_i}}$, it holds for any $t\in [t_i,t_{i+1}]$ and $x\in\mb R^D$ with ${\rm dist}(x,\m M)\leq c_0\wt \sigma_{t_i}\sqrt{\log n}$,
\begin{equation*}
    \begin{aligned}
       & \big\| \phi^*(x,t)-\nabla \log p_t(x)\big\|\\
       &\leq \Big\|\phi_{mult}\Big(\phi_{rec}\big( \phi^*_{(B)}(x,t)\big), \phi^*_{(A)}(x,t)\Big)-\frac{\phi^*_{(A)}(x,t)}{\phi^*_{(B)}(x,t)}\Big\|+\Big\|\frac{\phi^*_{(A)}(x,t)}{\phi^*_{(B)}(x,t)}-\nabla \log p_t(x)\Big\|\lesssim n^{-1}.
    \end{aligned}
\end{equation*}
Furthermore, note that $\max(x,a)={\rm ReLU}(x-a)+a$ and $\min(x,a)=-\big({\rm ReLU}(-x+a)-a\big)$, based on Lemmas F.1-F.3 in~\cite{oko2023diffusion} for the concatenation and parallelization of neural networks, there exists $L_i,W_i,R_i,B_i$ with $L_i=\Theta(\log^4 n)$, $\|W_i\|_{\infty}=\Theta\big(\log^6 n+\sum_{k=1}^K J_k (\log n)^{D+2})$, $R_i=\Theta\big(\log^8 n+\sum_{k=1}^K J_k (\log n)^{D+4})$, and $B_i=\exp(\Theta(\log^4 n))$ so that $\phi^*\in \Phi(L_i,W_i,R_i,B_i,c_2\frac{\sqrt{\log n}}{\wt \sigma_{t_i}})$. So we have 
 \begin{equation*}
     \begin{aligned}
& \mb E\left[\int_{t_i}^{t_{i+1}} \int_{\mathbb{R}^D}\left\|\wh S(x, t)-\nabla \log p_t(x)\right\|^2 p_t(x)\, \dd x \dd t\right] \\
& \lesssim \inf_{S\in \Phi(L_i,W_i,R_i,B_i,V_i)}\int_{t_i}^{t_{i+1}} \int_{\mathbb{R}^D}\left\|S(x, t)-\nabla \log p_t(x)\right\|^2 p_t(x)\, \dd x \dd t+\frac{(\log n)^2}{n} R_i L_i \log \left(n  L_i  \left\|W_i\right\|_\infty  B_i\right)\\
& \lesssim (t_{i+1}- t_i)\cdot  \sup_{t\in [t_i,t_{i+1}]}\sup_{x\,:\, \operatorname{dist}(x, \m M) \leq c_0\wt\sigma_{t_i}\sqrt{\log n}}\left\|\phi^*(x, t)-\nabla \log p_t(x)\right\|^2+\frac{(\log n)^2}{n} R_i L_i \log \left(n  L_i  \left\|W_i\right\|_\infty  B_i\right)+\frac{1}{n^2}\\
&=\wt{\m O}\Big(\sum_{k=1}^K \frac{(\wt\sigma_{t_i})^{-d_k}}{n}\Big).
\end{aligned}
\end{equation*}
Then note that $\wt \sigma_{t_i}\geq \max_{k\in[K]} \frac{n^{-\frac{1}{2\alpha_k+d_k}}}{\sqrt{\log n}}$ and 
\begin{equation*}
    \wt \sigma_{t_i}^2=m_{t_i}^2\sigma_*^2+\sigma_{t_i}^2\geq m_{t_i}^2\sigma_*^2\bold{1}(t_i<\frac{1}{2})+\frac{1}{2}\sigma_{t_i}^2\bold{1}(t_i\geq \frac{1}{2})+\frac{1}{2}\sigma_{t_i}^2\gtrsim \sigma_*^2+1\wedge t_i,
\end{equation*}
where we have used that $\sigma_*\leq 1$. Hence, we have 
 \begin{equation*}
     \begin{aligned}
& \mb E\left[\int_{t_i}^{t_{i+1}} \int_{\mathbb{R}^D}\left\|\wh S(x, t)-\nabla \log p_t(x)\right\|^2 p_t(x)\, \dd x \dd t\right] \\
&=\wt{\m O}\Big(\sum_{k=1}^K \frac{(\wt\sigma_{t_i})^{-d_k}}{n}\Big)=\wt{\m O}\Big(\sum_{k=1}^K \min(n^{-\frac{2\alpha_k}{2\alpha_k+d_k}}, \frac{(\sigma_*+\sqrt{1\wedge t_i})^{-d_k}}{n})\Big),
\end{aligned}
\end{equation*}
and
  \begin{equation*}
     \begin{aligned}
&\mb E\left[((t_i\log n)\wedge 1)\cdot\int_{t_i}^{t_{i+1}} \int_{\mathbb{R}^D}\left\|\wh S(x, t)-\nabla \log p_t(x)\right\|^2 p_t(x)\, \dd x \dd t\right] \\
&=\wt{\m O}\Big(\sum_{k=1}^K \frac{(\wt\sigma_{t_i})^{-d_k+2}}{n}\Big)=\wt{\m O}\Big(\frac{1}{n}+\sum_{k=1}^K \min(n^{-\frac{2\alpha_k+2}{2\alpha_k+d_k}}, \frac{\sigma_*^{2-d_k}}{n})\Big).
\end{aligned}
\end{equation*}

\subsection{Case 2: $\exists\, k\in [K]$, $\wt \sigma_{t_i}< \frac{n^{-\frac{1}{2\alpha_k+d_k}}}{\sqrt{\log n}}$.}
 Consider the sets 
 \begin{equation*}
     \begin{aligned}
        & \m K_1=\{k\in [K]\,:\,  \wt \sigma_{t_i}\geq \frac{n^{-\frac{1}{2\alpha_k+d_k}}}{\sqrt{\log n}}\},\\
         & \m K_2=\{k\in [K]\,:\,  \wt \sigma_{t_i}< \frac{n^{-\frac{1}{2\alpha_k+d_k}}}{\sqrt{\log n}}\}.
     \end{aligned}
 \end{equation*}
 Then we write 
 \begin{equation*}
     \begin{aligned}
         &\sum_{k=1}^K \omega_k\cdot\mb{E}_{y\sim Q_k}\Big[\exp\big(-\frac{\|x-m_ty\|^2}{2\wt \sigma_t^2}\big)\cdot \Big(-\frac{x-m_ty}{\wt \sigma_t^2}\Big)\Big]\\
         &=\underbrace{\sum_{k\in \m K_1} \omega_k\cdot\mb{E}_{y\sim Q_k}\Big[\exp\big(-\frac{\|x-m_ty\|^2}{2\wt \sigma_t^2}\big)\cdot \Big(-\frac{x-m_ty}{\wt \sigma_t^2}\Big)\Big]}_{(A_1)}\\
         &\qquad+\underbrace{\sum_{k\in \m K_2} \omega_k\cdot\mb{E}_{y\sim Q_k}\Big[\exp\big(-\frac{\|x-m_ty\|^2}{2\wt \sigma_t^2}\big)\cdot \Big(-\frac{x-m_ty}{\wt \sigma_t^2}\Big)\Big]}_{(A_2)},
     \end{aligned}
 \end{equation*}
  and  \begin{equation*}
     \begin{aligned}
         &\sum_{k=1}^K \omega_k\cdot\mb{E}_{y\sim Q_k}\Big[\exp\big(-\frac{\|x-m_ty\|^2}{2\wt \sigma_t^2}\big)\Big]\\
         &=\underbrace{\sum_{k\in \m K_1} \omega_k\cdot\mb{E}_{y\sim Q_k}\Big[\exp\big(-\frac{\|x-m_ty\|^2}{2\wt \sigma_t^2}\big) \Big]}_{(B_1)}+\underbrace{\sum_{k\in \m K_2} \omega_k\cdot\mb{E}_{y\sim Q_k}\Big[\exp\big(-\frac{\|x-m_ty\|^2}{2\wt \sigma_t^2}\big) \Big]}_{(B_2)}
     \end{aligned}
 \end{equation*}
  Then similar as case 1, we can approximate terms $(A_1)$ and $(B_1)$ by 
\begin{equation*}
\begin{aligned}
        \phi^*_{(A_1)}(x,t)&=\sum_{k\in \m K_1}\phi_{mult}\Bigg(\phi_{rec}\Big(\max\big(1,\sum_{j=1}^{J_k} \phi_\rho(x;x_{kj}^*,c_4\wt \sigma_{t_i}\sqrt{\log n})\big)\Big)\\
        & \sum_{j=1}^{J_k}\sum_{l=0}^{\ms L}  \sum_{0\leq s_1\leq 2l+1}  \sum_{s_2\in \mb N_0^D, |s_2|\leq 2l+1} \omega_k \cdot a_{k,j,l,s_1,s_2}\cdot \phi_{mult}\bigg(\phi_{mult}\Big(\phi_{mult}\big(\phi_{power}(\phi_{\frac{1}{\wt\sigma^2}}(t),l+1),\\
        &\qquad\phi_{power}(\phi_{m}(t),s_1)\big),\phi^{[D]}_{vpower}(x;s_2)\Big), \phi_\rho(x;x_{kj}^*,c_4\wt \sigma_{t_i}\sqrt{\log n})\bigg)
      \Bigg),
\end{aligned}
\end{equation*}
and 
\begin{equation*}
\begin{aligned}
        \phi^*_{(B_1)}(x,t)&=\sum_{k\in \m K_1} \phi_{mult}\Bigg(\phi_{rec}\Big(\max\big(1,\sum_{j=1}^{J_k} \phi_\rho(x;x_{kj}^*,c_4\wt \sigma_{t_i}\sqrt{\log n})\big)\Big)\\
        & \sum_{j=1}^{J_k}\sum_{l=0}^{\ms L}  \sum_{0\leq s_1\leq 2l}  \sum_{s_2\in \mb N_0^D, |s_2|\leq 2l} \omega_k \cdot b_{k,j,l,s_1,s_2}\cdot \phi_{mult}\bigg(\phi_{mult}\Big(\phi_{mult}\big(\phi_{power}(\phi_{\frac{1}{\wt\sigma^2}}(t),l),\\
        &\qquad\phi_{power}(\phi_{m}(t),s_1)\big),\phi^{[D]}_{vpower}(x;s_2)\Big), \phi_\rho(x;x_{kj}^*,c_4\wt \sigma_{t_i}\sqrt{\log n})\bigg)
      \Bigg),
\end{aligned}
\end{equation*}
respectively. Furthermore, it holds for any $t\in [t_i,t_{i+1}]$ and $x\in \mb R^D$ with ${\rm dist}(x,\m M)\leq c_0\wt \sigma_i\sqrt{\log n}$ that 
\begin{equation*}
    \| \phi^*_{(A_1)}(x,t)-(A)\|\lesssim  n^{-c_3-1},
\end{equation*}
and 
\begin{equation*}
      \frac{\sqrt{\log n}}{\wt \sigma_{t_i}}|\phi^*_{(B)}(x,t)-(B)|\lesssim    n^{-c_3-1}.
\end{equation*}
Then we approximate terms $(A_2)$ and $(B_2)$. For any $k\in \m K_2$,  let $\wt N_{k,\epsilon^*_k}$ be an $\epsilon^*_k$-cover of $\m M_k$ with $\epsilon^*_k=n^{-\frac{1}{2\alpha_k+d_k}}$ so that statements in Lemma~\ref{lemma2.2} are satisfied.  Then $|\wt N_{k,\epsilon^*_k}|=\m O\big(n^{\frac{d_k}{2\alpha_k+d_k}}\big)$. Denote $\wt J_k=|\wt N_{k,\epsilon^*_k}|$ and write $\wt N_{k,\epsilon^*_k}=\{\wt x_{k,1}^*,\wt x_{k,2}^*,\cdots, \wt x_{k,\wt J_k}^*\}$. Then define 
 \begin{equation*}
     (A_2^{'})=\sum_{k\in \m K_2} \omega_k\frac{\sum_{j=1}^{\wt J_k} \mb{E}_{y\sim Q_k}\Big[\exp\big(-\frac{\|x-m_ty\|^2}{2\wt \sigma_t^2}\big)\cdot \Big(-\frac{x-m_ty}{\wt \sigma_t^2}\Big)\Big]\cdot \rho\big(\frac{|x-\wt x^*_{kj}|}{c_4 n^{-{1}/(2\alpha_k+d_k)}}\big)}{\max\left(1,\sum_{j=1}^{\wt J_k} \rho\big(\frac{|x-\wt x^*_{kj}|}{c_4n^{-{1}/(2\alpha_k+d_k)} }\big)\right)},
 \end{equation*}
 and
 \begin{equation*}
     (B_2^{'})=\sum_{k\in \m K_2} \omega_k\frac{\sum_{j=1}^{\wt J_k} \mb{E}_{y\sim Q_k}\Big[\exp\big(-\frac{\|x-m_ty\|^2}{2\wt \sigma_t^2}\big)\Big]\cdot \rho\big(\frac{|x-\wt x^*_{kj}|}{c_4n^{-{1}/(2\alpha_k+d_k)}}\big)}{\max\left(1,\sum_{j=1}^{\wt J_k} \rho\big(\frac{|x-\wt x^*_{kj}|}{c_4n^{-{1}/(2\alpha_k+d_k)} }\big)\right)}.
 \end{equation*}
With sufficiently large $c_4$, for any $t\in [t_i,t_{i+1}]$ and $x\in\mb R^D$ with ${\rm dist}(x,\m M)\leq c_0\wt \sigma_{t_i}\sqrt{\log n}$,
\begin{equation*}
    \begin{aligned}
        &\|(A_2)-(A'_2)\|\\
        &=\bigg\|\sum_{k\in \m K_2} \omega_k \cdot\bold{1}\Big(\sum_{j=1}^{\wt J_k} \rho\big(\frac{|x-\wt x^*_{kj}|}{c_4n^{-{1}/(2\alpha_k+d_k)} }\big)\leq 1\Big) \cdot\Big(\frac{\sum_{j=1}^{\wt J_k}  \rho\big(\frac{|x-\wt x^*_{kj}|}{c_4n^{-{1}/(2\alpha_k+d_k)} }\big)}{\max\left(1,\sum_{j=1}^{\wt J_k}  \rho\big(\frac{|x-\wt x^*_{kj}|}{c_4n^{-{1}/(2\alpha_k+d_k)} }\big)\right)}-1\Big)\\
        &\qquad\qquad \cdot\mb{E}_{y\sim Q_k}\Big[\exp\big(-\frac{\|x-m_ty\|^2}{2\wt \sigma_t^2}\big)\cdot \Big(-\frac{x-m_ty}{\wt \sigma_t^2}\Big)\Big]\bigg\|\\
         &\leq\bigg\|\sum_{k\in \m K_2}\omega_k \cdot\bold{1}\Big(\forall j,\, \|x-\wt x_{kj}^*\|\geq \frac{c_4n^{-{1}/(2\alpha_k+d_k)} }{\sqrt{D}}\Big) \cdot\mb{E}_{y\sim Q_k}\Big[\exp\big(-\frac{\|x-m_ty\|^2}{2\wt \sigma_t^2}\big)\cdot \Big(-\frac{x-m_ty}{\wt \sigma_t^2}\Big)\Big]\bigg\|\\
                &\leq\bigg\|\sum_{k\in \m K_2} \omega_k \cdot\bold{1}\Big({\rm dist}(x,\m M_k)\geq \big(\frac{c_4}{\sqrt{D}}-1\big)n^{-{1}/(2\alpha_k+d_k)} \Big) \cdot\mb{E}_{y\sim Q_k}\Big[\exp\big(-\frac{\|x-m_ty\|^2}{2\wt \sigma_t^2}\big)\cdot \Big(-\frac{x-m_ty}{\wt \sigma_t^2}\Big)\Big]\bigg\|\\
                                &\leq\bigg\|\sum_{k\in \m K_2} \omega_k \cdot\bold{1}\Big({\rm dist}(x,\m M_k)\geq \big(\frac{c_4}{\sqrt{D}}-1\big)\wt \sigma_{t_i}\sqrt{\log n}\Big) \cdot\mb{E}_{y\sim Q_k}\Big[\exp\big(-\frac{\|x-m_ty\|^2}{2\wt \sigma_t^2}\big)\cdot \Big(-\frac{x-m_ty}{\wt \sigma_t^2}\Big)\Big]\bigg\|\\
                &\leq n^{-c_3-1},
    \end{aligned}
\end{equation*}
and similarly
\begin{equation*}
    \begin{aligned}
                &\frac{\sqrt{\log n}}{\wt \sigma_{t_i}}|(B_2)-(B'_2)|\leq n^{-c_3-1}.
    \end{aligned}
\end{equation*}
Then since for any $k\in [K]$, $\m M_k$ is $\beta_k$-smooth, there exists a positive constant $r$ so that for any $y^*\in \m M_k$,
\begin{enumerate}
    \item The projection function ${\rm Proj}_{T_{y^*}\m M}(x-y^*)$ is a local diffeomorphism in $y^*$,  with the inverse $\Psi_{y^*}$  defined on $\mb B_{r}(\bold{0}_D)\cap T_{y^*}\m M$ and is $\beta$-smooth.
    \item $\mb B_r(y^*)\cap \m M\subset\Psi_{y^*}( B_{r}(\bold{0}_D)\cap T_{y^*}\m M)\subset \mb B_{8r/7}(y^*)\cap \m M$. 
\end{enumerate}
Then for any $k\in\m K_2$ and $j\in [\wt J_k]$,  let $V_{kj}$ be an arbitrary orthonormal basis for the tangent space $T_{\wt x^*_{kj}}\m M_k$ at $\wt x^*_{kj}$. Define a function $G^*_{kj}$ with domain $\mb B_{r}(0_{d_k})$ so that
\begin{equation}\label{defG}
    G^*_{kj}(z)=\Psi_{\wt x^*_{kj}}(V_{kj}z) 
 \end{equation}
 Then we can define the local inverse function of $G^*_{kj}$
\begin{equation}\label{defQ}
 Q^*_{kj}(y)=V_{kj}^T\cdot{\rm Proj}_{T_{\wt x^*_{kj}}\m M_k} (y-\wt x^*_{kj})=V_{kj}^T(y-\wt x^*_{kj}).
 \end{equation}
 For any $k\in \m K_2$, and $j\in [\wt J_k]$, consider the Taylor expansion of $G^*_{kj}$ at $0_{d_k}$, 
     \begin{equation*}
         G^*_{kj}(z)=G_{kj}(z)+O(\|z\|^{\beta_k}),
     \end{equation*}
where
     \begin{equation}\label{defG1}
G_{kj}(z)=\wt x_{kj}^*+\sum_{s\in \mb N_0^d\atop 1\leq |s|\leq \lfloor \beta_k\rfloor} G^*_{kj}{}^{(s)}(0_{d_k})\cdot z^s
     \end{equation}
is the polynomial approximation to $G^*_{kj}$. For any $z\in \mb B_r(0_{d_k})$, it holds that $\|G^*_{kj}(z)-G_{kj}(z)\|\lesssim \|z\|^{\beta_k}$ and  $\|\nabla G^*_{kj}(z)-\nabla G_{kj}(z)\|\lesssim \|z\|^{\beta_k-1}$, where $\nabla G(z)=(\nabla G_1(z),\nabla G_2(z),\cdots, \nabla G_D(z))^T$ is the Jacobian matrix of $G=(G_1,G_2,\cdots,G_D)$. Then we use the following lemma to approximate the projection function ${\rm Proj}_{\m M_k}(x)$ for $x$ near $x_{kj}^*$, the proof of which directly follows from Lemma C.8 of~\cite{TangYang1}.
\begin{lemma}\label{lemma2.4} (Lemma C.8 of~\cite{TangYang1})
For any $k\in \m K_2$ and $j\in [\wt J_k]$,  there exists a neural network $\phi^p_{kj}(x)\in \Phi(L'_{kj},W'_{kj},R'_{kj},B'_{kj})$ with $L'_{kj}=\Theta(\log^2 n)$, $\|W'_{kj}\|_{\infty}=\Theta(\log^3 n)$, $R'_{kj}=\Theta(\log^4 n)$  and $ B'_{kj}=\exp(\Theta(\log n))$ so that for any $x$ with $\|x-x_{kj}^*\|\leq 2c_4 n^{-\frac{1}{2\alpha_k+d_k}} $, 
 \begin{enumerate}
     \item $\Big\|\big\langle \nabla G_{kj}(\phi_{jk}^p(x)),x-G_{jk}(\phi_{kj}^p(x))\big\rangle\Big\|\lesssim   n^{-\frac{2\beta_k}{2\alpha_k+d_k}}$. 
      \item $\big\|\phi_{kj}^p(x)-Q^*_{kj}({\rm Proj}_{\m M_k}(x))\big\|\lesssim   n^{-\frac{\beta_k}{2\alpha_k+d_k}}$. 
 \end{enumerate}
\end{lemma}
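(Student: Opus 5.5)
We plan to follow the fixed-point construction behind Lemma C.8 of \cite{TangYang1}. The plan rests on characterizing the projection through the first-order optimality condition in the local chart. Fix $k\in\m K_2$ and $j\in[\wt J_k]$, and write $h_n=n^{-1/(2\alpha_k+d_k)}$. For $x$ with $\|x-\wt x^*_{kj}\|\le 2c_4h_n$, reach considerations give that $y={\rm Proj}_{\m M_k}(x)$ lies in $\mb B_r(\wt x^*_{kj})\cap\m M_k$. Hence $z^*:=Q^*_{kj}(y)$ is well defined, satisfies $G^*_{kj}(z^*)=y$ and $\|z^*\|\lesssim h_n$, and solves
\begin{equation*}
F^*(z;x):=\nabla G^*_{kj}(z)^T\big(x-G^*_{kj}(z)\big)=0 .
\end{equation*}
We then replace $G^*_{kj}$ by its Taylor polynomial $G_{kj}$ from \eqref{defG1} and set $F(z;x):=\nabla G_{kj}(z)^T(x-G_{kj}(z))$. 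This is a polynomial in $(z,x)$ whose degree and coefficients are bounded independently of $n$.

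\textbf{Step 1: a contraction.} Since $V_{kj}$ is orthonormal, $\nabla G_{kj}(0)=V_{kj}$ and $\nabla G_{kj}(0)^T\nabla G_{kj}(0)=I_{d_k}$. The Jacobian of $F$ in $z$ is
\begin{equation*}
-\nabla G_{kj}(z)^T\nabla G_{kj}(z)+\nabla^2G_{kj}(z)[x-G_{kj}(z)],
\end{equation*}
which equals $-I_{d_k}+O(\|z\|+\|x-\wt x^*_{kj}\|)$. We therefore iterate
\begin{equation*}
z_0=V_{kj}^T(x-\wt x^*_{kj}),\qquad z_{m+1}=z_m+F(z_m;x).
\end{equation*}
On the ball $\{\|z\|\le C h_n\}$ this map is a contraction with factor $O(h_n)\le 1/2$ and maps the ball into itself. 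Its fixed point $\tilde z$ satisfies $F(\tilde z;x)=0$. After $m_*=\Theta(\log n)$ steps we get $\|z_{m_*}-\tilde z\|\le n^{-c}$ for any prescribed $c$.

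\textbf{Step 2: network realization.} Each iteration is a bounded-degree polynomial in the $d_k+D$ inputs. We realize it with the multiplication networks of Lemma~\ref{LemmaF.6} at accuracy $n^{-c}$, together with additions and identity channels for $x$. Each block costs depth $O(\log n)$, width $O(\log n)$, and weights $\exp(O(\log n))$. The contraction property keeps the per-step approximation errors from amplifying: the accumulated error stays $O(m_*n^{-c})$. Composing $m_*=\Theta(\log n)$ blocks with the concatenation and parallelization rules (Lemmas F.1--F.3 of \cite{oko2023diffusion}) then gives $L'_{kj}=\Theta(\log^2n)$, with width $\Theta(\log^3n)$, sparsity $\Theta(\log^4n)$ and $B'_{kj}=\exp(\Theta(\log n))$ after bookkeeping. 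Call the output $\phi^p_{kj}(x)$. Since $F$ is Lipschitz on the relevant ball,
\begin{equation*}
\|F(\phi^p_{kj}(x);x)\|\lesssim \|\phi^p_{kj}(x)-\tilde z\|\le n^{-c}.
\end{equation*}
Choosing $c\ge 2\beta_k/(2\alpha_k+d_k)$ yields item 1.

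\textbf{Step 3: closeness to the true projection.} For item 2 we compare $\tilde z$ with $z^*$. Using $\|G^*_{kj}-G_{kj}\|\lesssim\|z\|^{\beta_k}$, $\|\nabla G^*_{kj}-\nabla G_{kj}\|\lesssim\|z\|^{\beta_k-1}$ and $\|x-G^*_{kj}(z^*)\|\lesssim h_n$, we get
\begin{equation*}
\|F(z^*;x)\|=\|F(z^*;x)-F^*(z^*;x)\|\lesssim h_n^{\beta_k-1}\cdot h_n+h_n^{\beta_k}\lesssim h_n^{\beta_k}.
\end{equation*}
Because $\partial_zF$ is uniformly invertible on the ball with inverse norm $\le 2$, the mean value theorem gives $\|\tilde z-z^*\|\lesssim\|F(z^*;x)-F(\tilde z;x)\|\lesssim h_n^{\beta_k}$. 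Adding the $n^{-c}$ network error yields item 2.

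\textbf{Main obstacle.} We expect the delicate part to be Step 2. The difficulty is ensuring that the $\Theta(\log n)$-fold composition of approximate polynomial maps stays inside the contraction region, with weights bounded by $\exp(\Theta(\log n))$ and without error blow-up. We would handle this by clipping each iterate to the box $[-Ch_n,Ch_n]^{d_k}$ with ReLU min/max operations, so that all inputs to Lemma~\ref{LemmaF.6} remain in a fixed bounded range. The contraction then absorbs the per-step errors.
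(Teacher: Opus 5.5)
Your argument is correct and is essentially the construction the paper takes over without proof from Lemma C.8 of \cite{TangYang1}. You characterize $Q^*_{kj}({\rm Proj}_{\m M_k}(x))$ through the first-order condition in the local chart, solve the polynomial version $\nabla G_{kj}(z)^T(x-G_{kj}(z))=0$ by a contraction on an $O(n^{-1/(2\alpha_k+d_k)})$-ball started at $V_{kj}^T(x-\wt x^*_{kj})$ and realized with the multiplication networks of Lemma~\ref{LemmaF.6}, and transfer to the true projection via the Taylor remainders of $G^*_{kj}$.

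Two small remarks, neither a gap. First, take the ball radius at least $4c_4n^{-1/(2\alpha_k+d_k)}$, since $z^*$ is only guaranteed to satisfy $\|z^*\|\le 2\|x-\wt x^*_{kj}\|$. Second, the contraction factor is $O(n^{-1/(2\alpha_k+d_k)})$, so a bounded number of iterations already suffices and your network sits comfortably inside the stated class.
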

\noindent Then we define 
 \begin{equation*}
 \begin{aligned}
         & (A_2^{''})=\sum_{k\in \m K_2} \frac{\omega_k}{\max\left(1,\sum_{j=1}^{\wt J_k} \rho\big(\frac{|x-x^*_{kj}|}{c_4n^{-{1}/(2\alpha_k+d_k)}}\big)\right)}\\
          &\qquad\qquad\qquad\cdot \sum_{j=1}^{\wt J_k}\int_{\{y=G^*_{jk}(z)\,:\,\|z-\phi_{jk}^p(x)\|_{\infty}\leq c_5\wt\sigma_{t_i}\sqrt{\log n}\}}\exp\big(-\frac{\|x-m_ty\|^2}{2\wt \sigma_t^2}\big)\cdot \Big(-\frac{x-m_ty}{\wt \sigma_t^2}\Big) q_k(y)\,\dd {\rm vol}_{\m M_k}(y)\\
          &\qquad\qquad\qquad\qquad\cdot \rho\big(\frac{|x-\wt x^*_{kj}|}{c_4 n^{-{1}/(2\alpha_k+d_k)}}\big) \rho\big(\frac{|x-G_{kj}(\phi^p_{kj}(x))|}{c_6 \wt \sigma_{t_i}\sqrt{\log n}}\big),
 \end{aligned}
 \end{equation*}
 and
 \begin{equation*}
  \begin{aligned}
    & (B_2^{''})=\sum_{k\in \m K_2} \frac{\omega_k}{\max\left(1,\sum_{j=1}^{\wt J_k} \rho\big(\frac{|x-\wt x^*_{kj}|}{c_4n^{-{1}/(2\alpha_k+d_k)} }\big)\right)}\\
          &\qquad\qquad\qquad\cdot \sum_{j=1}^{\wt J_k}\int_{\{y=G^*_{jk}(z)\,:\,\|z-\phi_{jk}^p(x)\|_{\infty}\leq c_5\wt\sigma_{t_i}\sqrt{\log n}\}}\exp\big(-\frac{\|x-m_ty\|^2}{2\wt \sigma_t^2}\big) q_k(y)\,\dd {\rm vol}_{\m M_k}(y)\\
          &\qquad\qquad\qquad\qquad\cdot \rho\big(\frac{|x-\wt x^*_{kj}|}{c_4 n^{-{1}/(2\alpha_k+d_k)} }\big) \rho\big(\frac{|x-G_{kj}(\phi^p_{kj}(x))|}{c_6 \wt \sigma_{t_i}\sqrt{\log n}}\big).
           \end{aligned}
 \end{equation*}
Then 
\begin{equation*}
    \begin{aligned}
       & \big\|(A_2')-(A_2^{''})\big\|\\
       &\leq \sum_{k\in \m K_2} \frac{\omega_k}{\max\left(1,\sum_{j=1}^{\wt J_k} \rho\big(\frac{|x-\wt x^*_{kj}|}{c_4n^{-{1}/(2\alpha_k+d_k)} }\big)\right)}\\
       &\cdot \sum_{j=1}^{\wt J_k} \mb{E}_{y\sim Q_k}\Big[\exp\big(-\frac{\|x-m_ty\|^2}{2\wt \sigma_t^2}\big)\cdot \Big\|\frac{x-m_ty}{\wt \sigma_t^2}\Big\|\Big]\\
       &\qquad\qquad\cdot \rho\big(\frac{|x-\wt x^*_{kj}|}{c_4 n^{-{1}/(2\alpha_k+d_k)} }\big)\cdot \Big|1-\rho\big(\frac{|x-G_{kj}(\phi^p_{kj}(x))|}{c_6 \wt \sigma_{t_i}\sqrt{\log n}}\big)\Big| \\
       &+\sum_{k\in \m K_2} \frac{\omega_k}{\max\left(1,\sum_{j=1}^{J_k} \rho\big(\frac{|x-x^*_{kj}|}{c_4n^{-{1}/(2\alpha_k+d_k)} }\big)\right)}\\
          &\qquad\cdot \sum_{j=1}^{\wt J_k}\int_{\m M_k\setminus\{y=G^*_{jk}(z)\,:\,\|z-\phi_{jk}^p(x)\|_{\infty}\leq c_5\wt\sigma_{t_i}\sqrt{\log n}\}}\exp\big(-\frac{\|x-m_ty\|^2}{2\wt \sigma_t^2}\big)\cdot \Big\|\frac{x-m_ty}{\wt \sigma_t^2}\Big\| q_k(y)\,\dd {\rm vol}_{\m M_k}(y)\\
          &\qquad\qquad\cdot \rho\big(\frac{|x-\wt x^*_{kj}|}{c_4 n^{-{1}/(2\alpha_k+d_k)} }\big)\rho\big(\frac{|x-G_{kj}(\phi^p_{kj}(x))|}{c_6 \wt \sigma_{t_i}\sqrt{\log n}}\big)  \\
          &\leq \underbrace{\sup_{k\in \m K_2}\sup_{j\in [\wt J_k]}\underset{x\, :\, |x-G_{kj}(\phi^p_{kj}(x))|\geq c_6\wt \sigma_{t_i}\sqrt{\log n}\atop |x-\wt x_{kj}^*|\leq 2c_4n^{-{1}/(2\alpha_k+d_k)} }{\sup} \mb{E}_{y\sim Q_k}\Big[\exp\big(-\frac{\|x-m_ty\|^2}{2\wt \sigma_t^2}\big)\cdot \Big\|\frac{x-m_ty}{\wt \sigma_t^2}\Big\|\Big]}_{(C)}\\
          &+\sup_{k\in \m K_2}\sup_{j\in [\wt J_k]}\underset{x\, :\, |x-G_{kj}(\phi^p_{kj}(x)|\leq 2c_6\wt \sigma_{t_i}\sqrt{\log n}\atop |x-\wt x_{kj}^*|\leq 2c_4n^{-{1}/(2\alpha_k+d_k)} }{\sup}\int_{\m M_k\setminus\{y=G^*_{jk}(z)\,:\,\|z-\phi_{jk}^p(x)\|_{\infty}\leq c_5\wt\sigma_{t_i}\sqrt{\log n}\}}\exp\big(-\frac{\|x-m_ty\|^2}{2\wt \sigma_t^2}\big)\\
          &\underbrace{\qquad\qquad\qquad\cdot \Big\|\frac{x-m_ty}{\wt \sigma_t^2}\Big\| q_k(y)\,\dd {\rm vol}_{\m M_k}(y).\qquad\qquad\qquad\qquad\qquad\qquad\qquad\qquad\qquad\qquad\qquad\qquad}_{(D)}
    \end{aligned}
\end{equation*}
We first bound term $(C)$.  Note that by Lemma~\ref{lemma2.4}, we have for any $x\in 
\mb R^D$ with  $|x-\wt x_{kj}^*|\leq 2c_4n^{-{1}/(2\alpha_k+d_k)}$, 
\begin{equation*}
    \|\phi^p_{kj}(x)\|\leq \|\phi^p_{kj}(x)-Q^*_{kj}({\rm Proj}_{\m M_k}x)\|+\|{\rm Proj}_{\m M_k}(x)-\wt x_{kj}^*\|\lesssim n^{-{1}/(2\alpha_k+d_k)},
\end{equation*}
and there exist constants $C,C_1$ so that
\begin{equation*}
    \begin{aligned}
        |x-G_{kj}(\phi^p_{kj}(x))|&\leq  |x-{\rm Proj}_{\m M_k}(x)|+|G^*_{jk}(Q^*_{jk}({\rm Proj}_{\m M_k}(x)))-G^*_{jk}(\phi^p_{kj}(x))|+|G^*_{jk}(\phi^p_{kj}(x))-G_{jk}(\phi^p_{kj}(x))|\\
        &\leq  |x-{\rm Proj}_{\m M_k}(x)|+C\,  n^{-\frac{\beta_k}{2\alpha_k+d_k}}\\
          &\leq  |x-{\rm Proj}_{\m M_k}(x)|+C_1\, \wt \sigma_{\tau}     \leq  |x-{\rm Proj}_{\m M_k}(x)|+C_1\, \wt \sigma_{t_i}.
    \end{aligned}
\end{equation*}
Hence when $c_6$ is large enough, 
\begin{equation*}
    \begin{aligned}
        (C)&\leq \sup_{k\in \m K_2}\sup_{j\in [\wt J_k]}\underset{x\, :\, |x-{\rm Proj}_{\m M_k}(x)|\geq (c_6-C_1)\wt \sigma_{t_i}\sqrt{\log n}}{\sup} \mb{E}_{y\sim Q_k}\Big[\exp\big(-\frac{\|x-m_ty\|^2}{2\wt \sigma_t^2}\big)\cdot \Big\|\frac{x-m_ty}{\wt \sigma_t^2}\Big\|\Big]\\
        &\leq \sup_{k\in \m K_2}\sup_{j\in [\wt J_k]}\underset{x\, :\, {\rm dist}(x,\m M_k)\geq \frac{(c_6-C_1)}{\sqrt{D}}\wt \sigma_{t_i}\sqrt{\log n}}{\sup} \mb{E}_{y\sim Q_k}\Big[2\exp\big(-\frac{\|x-y\|^2}{4\wt \sigma_t^2}\big)\cdot \Big\|1+\frac{x-y}{\wt \sigma_t^2}\Big\|\Big]\leq \frac{n^{-c_3-1}}{2}.
    \end{aligned}
\end{equation*}
Then we bound term $(D)$. For any $x\in \mb R^D$ with $ |x-G_{kj}(\phi^p_{kj}(x)|\leq 2c_6\wt \sigma_{t_i}\sqrt{\log n}$ and $ |x-\wt x_{kj}^*|\leq 2c_4n^{-{1}/(2\alpha_k+d_k)}$, we have
\begin{equation*}
\begin{aligned}
     |x-{\rm Proj}_{\m M_k}(x)|&\leq    |x-G_{kj}(\phi^p_{kj}(x))|+|G^*_{jk}(\phi^p_{kj}(x))-G_{jk}(\phi^p_{kj}(x))|+|G^*_{jk}(Q^*_{jk}({\rm Proj}_{\m M_k}x))-G^*_{jk}(\phi^p_{kj}(x))|\\
     &\leq  |x-G_{kj}(\phi^p_{kj}(x))|+C_1\wt\sigma_{t_i}\\ 
\end{aligned}
\end{equation*}
 \begin{equation*}
 \begin{aligned}
       \{y\in \m M_k\,:\, \|y-x\|\leq c\wt\sigma_{t_i}\sqrt{\log n}\}&\subset   \{y\in \m M\,:\, \|y-{\rm Proj}_{\m M_k}(x)\|\leq (c+2c_6+C_1)\wt\sigma_{t_i}\sqrt{\log n}\}\\
       &\subset  \{y=G^*_{jk}(z)\,:\, \|z-Q^*_{jk}({\rm Proj}_{\m M_k}(x))\|\leq (c+2c_6+C_1)\wt\sigma_{t_i}\sqrt{\log n}\}
 \end{aligned}
 \end{equation*}
Furthermore, using Lemma~\ref{lemma2.4} and $\wt \sigma_{t_i}\geq \wt \sigma_{\tau}\gtrsim \sqrt{\log n}\cdot n^{-\min_{k\in [K]}\frac{\alpha_k+1}{2\alpha_k+d_k}}\geq \sqrt{\log n}\cdot n^{-\min_{k\in [K]}\frac{\beta_k}{2\alpha_k+d_k}}$, we have
\begin{equation*}
    \begin{aligned}
       \|z-\phi^p_{jk}(x)\|&\leq \|z-Q^*_{jk}({\rm Proj}_{\m M_k}(x))\|+\|\phi_{jk}^p(x)Q^*_{jk}({\rm Proj}_{\m M_k}(x))\|\\
       &\leq \|z-Q^*({\rm Proj}_{\m M}(x))\|+C n^{-\frac{\beta_k}{2\alpha_k+d}}\leq   \|z-Q^*({\rm Proj}_{\m M}(x))\|+C_1\wt \sigma_{t_i},
    \end{aligned}
\end{equation*}
and thus 
\begin{equation*}
    \begin{aligned}
     &\{y\in \m M_k\,:\, \|y-x\|\leq c\wt\sigma_{t_i}\sqrt{\log n}\} \subset  \{y=G^*_{jk}(z)\,:\, \|z-Q^*_{jk}({\rm Proj}_{\m M_k}(x))\|\leq (c+2c_6+C_1)\wt\sigma_{t_i}\sqrt{\log n}\}\\
    & \subset  \{y=G^*_{jk}(z)\,:\, \|z-\phi^p_{jk}(x)\|\leq (c+2c_6+2C_1)\wt\sigma_{t_i}\sqrt{\log n}\}\\
    & \subset  \{y=G^*_{jk}(z)\,:\, \|z-\phi^p_{jk}(x)\|_{\infty}\leq (c+2c_6+2C_1)\wt\sigma_{t_i}\sqrt{\log n}\}.
    \end{aligned}
\end{equation*}
So when $c_5$ is sufficiently large, we have 
\begin{equation*}
    \begin{aligned}
      (D)&\leq  \sup_{k\in \m K_2}\sup_{j\in [\wt J_k]}\underset{x\, :\, |x-G_{kj}(\phi^p_{kj}(x)|\leq 2c_6\wt \sigma_{t_i}\sqrt{\log n}\atop |x-\wt x_{kj}^*|\leq 2c_4n^{-{1}/(2\alpha_k+d_k)} }{\sup}\int_{\m M_k\setminus \mb B_{\frac{c_5}{2} \wt \sigma_{t_i}\sqrt{\log n}}(x)}\exp\big(-\frac{\|x-m_ty\|^2}{2\wt \sigma_t^2}\big) \Big\|\frac{x-m_ty}{\wt \sigma_t^2}\Big\| q_k(y)\,\dd {\rm vol}_{\m M_k}(y)\\
      &\leq  \sup_{k\in \m K_2}\sup_{j\in [\wt J_k]}\underset{x\, :\, |x-G_{kj}(\phi^p_{kj}(x)|\leq 2c_6\wt \sigma_{t_i}\sqrt{\log n}\atop |x-\wt x_{kj}^*|\leq 2c_4n^{-{1}/(2\alpha_k+d_k)} }{\sup}\int_{\m M_k\setminus \mb B_{\frac{c_5}{2} \wt \sigma_{t_i}\sqrt{\log n}}(x)}\exp\big(-\frac{\|x-y\|^2}{4\wt \sigma_t^2}\big) \Big\|1+\frac{x-y}{\wt \sigma_t^2}\Big\| q_k(y)\,\dd {\rm vol}_{\m M_k}(y)\\
      &\leq \frac{n^{-c_3-1}}{2}.
    \end{aligned}
\end{equation*}
So $\|(A_2^{'})-(A_2^{''})\|\leq n^{-c_3-1}$. Similarly, we can show that $\frac{\sqrt{\log n}}{\wt \sigma_{t_i}} |(B_2^{'})-(B_2^{''})|\leq n^{-c_3-1}$. Then denote $v^*_{kj}(z)=q_k(G^*_{kj}(z))\sqrt{\opn{det}\big(\nabla G^*_{kj}(z)^T\nabla G^*_{kj}(z)\big)}$, we can rewrite term $(A_2^{''})$ as 
\begin{equation*}
    \begin{aligned}
      &  (A_2^{''})=\sum_{k\in \m K_2} \frac{\omega_k}{\max\left(1,\sum_{j=1}^{\wt J_k} \rho\big(\frac{|x-x^*_{kj}|}{c_4n^{-{1}/(2\alpha_k+d_k)}}\big)\right)}\\
          &\qquad\qquad\qquad\cdot \sum_{j=1}^{\wt J_k}\int_{\{z\in \mb R^{d_k}\,:\,\|z-\phi_{jk}^p(x)\|_{\infty}\leq c_5\wt\sigma_{t_i}\sqrt{\log n}\}}\exp\big(-\frac{\|x-m_tG^*_{jk}(z)\|^2}{2\wt \sigma_t^2}\big)\cdot \Big(-\frac{x-m_tG^*_{jk}(z)}{\wt \sigma_t^2}\Big) v^*_{kj}(z)\,\dd z\\
          &\qquad\qquad\qquad\qquad\cdot \rho\big(\frac{|x-\wt x^*_{kj}|}{c_4 n^{-{1}/(2\alpha_k+d_k)}}\big) \rho\big(\frac{|x-G_{kj}(\phi^p_{kj}(x))|}{c_6 \wt \sigma_{t_i}\sqrt{\log n}}\big)
    \end{aligned}
\end{equation*}
 For any $k\in \m K_2$, and $j\in [\wt J_k]$, consider the Taylor expansion of $v^*_{kj}$ at $0_{d_k}$, 
     \begin{equation*}
         v^*_{kj}(z)=v_{kj}(z)+O(\|z\|^{\alpha_k}),
     \end{equation*}
where
     \begin{equation}\label{defG1}
v_{kj}(z)= v^*_{kj}(0_{d_k})+\sum_{s\in \mb N_0^d\atop 1\leq |s|\leq \lfloor \alpha_k\rfloor} v*_{kj}{}^{(s)}(0_{d_k})\cdot z^s.
     \end{equation}
We define 
\begin{equation*}
    \begin{aligned}
 &(A_2^{'''})=\sum_{k\in \m K_2} \frac{\omega_k}{\max\left(1,\sum_{j=1}^{\wt J_k}\rho\big(\frac{|x-x^*_{kj}|}{c_4n^{-{1}/(2\alpha_k+d_k)}}\big)\right)}\\
          &\qquad\qquad\qquad\cdot \sum_{j=1}^{\wt J_k}\int_{\{z\in \mb R^{d_k}\,:\,\|z-\phi_{jk}^p(x)\|_{\infty}\leq c_5\wt\sigma_{t_i}\sqrt{\log n}\}}\exp\big(-\frac{\|x-m_tG_{jk}(z)\|^2}{2\wt \sigma_t^2}\big)\cdot \Big(-\frac{x-m_tG_{jk}(z)}{\wt \sigma_t^2}\Big) v_{kj}(z)\,\dd z\\
          &\qquad\qquad\qquad\qquad\cdot \rho\big(\frac{|x-\wt x^*_{kj}|}{c_4 n^{-{1}/(2\alpha_k+d_k)}}\big) \rho\big(\frac{|x-G_{kj}(\phi^p_{kj}(x))|}{c_6 \wt \sigma_{t_i}\sqrt{\log n}}\big)
    \end{aligned}
\end{equation*}
and
\begin{equation*}
    \begin{aligned}
 &(B_2^{'''})=\sum_{k\in \m K_2} \frac{\omega_k}{\max\left(1,\sum_{j=1}^{\wt J_k}\rho\big(\frac{|x-x^*_{kj}|}{c_4n^{-{1}/(2\alpha_k+d_k)}}\big)\right)}\\
          &\qquad\qquad\qquad\cdot \sum_{j=1}^{\wt J_k}\int_{\{z\in \mb R^{d_k}\,:\,\|z-\phi_{jk}^p(x)\|_{\infty}\leq c_5\wt\sigma_{t_i}\sqrt{\log n}\}}\exp\big(-\frac{\|x-m_tG_{jk}(z)\|^2}{2\wt \sigma_t^2}\big)\cdot   v_{kj}(z)\,\dd z\\
          &\qquad\qquad\qquad\qquad\cdot \rho\big(\frac{|x-\wt x^*_{kj}|}{c_4 n^{-{1}/(2\alpha_k+d_k)}}\big) \rho\big(\frac{|x-G_{kj}(\phi^p_{kj}(x))|}{c_6 \wt \sigma_{t_i}\sqrt{\log n}}\big).
    \end{aligned}
\end{equation*}
Then we have 
\begin{equation*}
    \begin{aligned}
      &\big\|  (A_2^{''})-(A_2^{'''})\big\|\leq\sum_{k\in \m K_2}\sum_{j=1}^{\wt J_k} \frac{\omega_k \rho\big(\frac{|x-\wt x^*_{kj}|}{c_4 n^{-{1}/(2\alpha_k+d_k)}}\big) \rho\big(\frac{|x-G_{kj}(\phi^p_{kj}(x))|}{c_6 \wt \sigma_{t_i}\sqrt{\log n}}\big)}{\max\left(1,\sum_{j=1}^{\wt J_k}\rho\big(\frac{|x-x^*_{kj}|}{c_4n^{-{1}/(2\alpha_k+d_k)}}\big)\right)}\cdot \exp\big(-\frac{\|x-G_{kj}(\phi_{kj}^p(x))\|^2}{2\wt \sigma_t^2}\big)\cdot (E),\\
      &\text{ where }(E)= \Bigg\| \int_{\{z\in \mb R^{d_k}\,:\,\|z-\phi_{jk}^p(x)\|_{\infty}\leq c_5\wt\sigma_{t_i}\sqrt{\log n}\}}\exp\big(-\frac{\|G_{kj}(\phi_{kj}^p(x))-m_tG_{jk}(z)\|^2}{2\wt \sigma_t^2}\big)\\
      &\qquad\cdot\exp\big(-\frac{\langle x-G_{kj}(\phi_{kj}^p(x)),G_{kj}(\phi_{kj}^p(x))-m_tG_{jk}(z)\rangle}{\wt \sigma_t^2}\big)\cdot \Big(-\frac{x-m_tG_{jk}(z)}{\wt \sigma_t^2}\Big) v_{kj}(z)\,\dd z\\
            &\qquad \qquad-\int_{\{z\in \mb R^{d_k}\,:\,\|z-\phi_{jk}^p(x)\|_{\infty}\leq c_5\wt\sigma_{t_i}\sqrt{\log n}\}}\exp\big(-\frac{\|G_{kj}(\phi_{kj}^p(x))-m_tG^*_{jk}(z)\|^2}{2\wt \sigma_t^2}\big)\\
      & \qquad\qquad\qquad\cdot\exp\big(-\frac{\langle x-G_{kj}(\phi_{kj}^p(x)),G_{kj}(\phi_{kj}^p(x))-m_tG^*_{jk}(z)\rangle}{\wt \sigma_t^2}\big)\cdot \Big(-\frac{x-m_tG^*_{jk}(z)}{\wt \sigma_t^2}\Big) v^*_{kj}(z)\,\dd z\Bigg\|.
    \end{aligned}
\end{equation*}
 To bound the term $(E)$,  consider an arbitrary $k\in \m K_2$ and $j\in [\wt J_k]$, we will consider and bound the following terms using Lemma~\ref{lemma2.4} for any $x\in \mb R^D$  with $|x-\wt x_{kj}^*|\leq 2c_4n^{\frac{-1}{2\alpha_k+d_k}}$ and $|x-G_{kj}(\phi_{kj}^p(x))|\leq 2c_6\wt \sigma_{t_i}\sqrt{\log n}$, and any $z\in \mb R^{d_k}$ satisfying $\|z-\phi_{jk}^p(x)\|_{\infty}\leq c_5\wt\sigma_{t_i}\sqrt{\log n}$:
\begin{equation*}
    (i). \,\left\|G_{kj}(\phi^p_{kj}(x))-m_t G_{jk}(z)\right\|\leq  \left\||G_{kj}(\phi^p_{kj}(x))-  G_{jk}(z)\right\|+(1-m_t)\|G_{jk}(z)\|\lesssim \wt\sigma_{t_i}\sqrt{\log n}.
    \end{equation*}
    \begin{equation*}
        \begin{aligned}
        (ii).\,  &\frac{1}{\wt\sigma_t^2}\cdot\Big|\|G_{kj}(\phi^p_{kj}(x))-m_t G_{jk}(z)\|^2-\|G_{kj}(\phi^p_{kj}(x))-m_t G^*_{jk}(z)\|^2\Big|\\
        &\lesssim\frac{\wt\sigma_{t_i}\sqrt{\log n}\cdot n^{-\frac{\beta_k}{2\alpha_k+d_k}}}{\wt\sigma_t^2}\asymp \frac{\sqrt{\log n}\cdot n^{-\frac{\beta_k}{2\alpha_k+d_k}}}{\wt\sigma_{t_i}},
        \end{aligned}
    \end{equation*}
    where we have used $\|\phi_{jk}^p(x)\|\lesssim n^{-\frac{1}{2\alpha_k+d_k}}$ and $\|z\|\leq \|z-\phi_{jk}^p(x)\|+\|\phi_{jk}^p(x)\|\lesssim n^{-\frac{1}{2\alpha_k+d_k}}$.
    \begin{equation*}
      (iii).\,\|x-m_t G_{jk}(z)\|\leq \|x-G_{kj}(\phi_{kj}^p(x))\|+\|G_{kj}(\phi_{kj}^p(x))-m_t G_{jk}(z)\|\lesssim \wt\sigma_{t_i}\sqrt{\log n}.
    \end{equation*}
    \begin{equation*}
        \begin{aligned}
           (iv).\, &\frac{1}{\wt\sigma_t^2}\cdot\Big|\big\langle x-G_{kj}(\phi_{kj}^p(x)),G_{kj}(\phi_{kj}^p(x))-m_tG_{jk}(z)\big\rangle-\big\langle x-G_{kj}(\phi_{kj}^p(x)),G_{kj}(\phi_{kj}^p(x))-m_tG^*_{jk}(z)\big\rangle\Big|\\
            &\lesssim\frac{\wt\sigma_{t_i}\sqrt{\log n}\cdot n^{-\frac{\beta_k}{2\alpha_k+d_k}}}{\wt\sigma_t^2}\asymp \frac{\sqrt{\log n}\cdot n^{-\frac{\beta_k}{2\alpha_k+d_k}}}{\wt\sigma_{t_i}}.
        \end{aligned}
    \end{equation*}

\begin{equation}\label{eqncase3ip}
\begin{aligned}
    &(v).\, \big|\langle x-G_{kj}(\phi_{kj}^p(x)),G_{kj}(\phi_{kj}^p(x))-m_tG_{jk}(z)\rangle\big|\\ &\leq\Big|\langle x-G_{kj}(\phi_{kj}^p(x)), \nabla G_{kj}(\phi_{kj}^p(x)) (\phi^p_{kj}(x)-z)\rangle\Big|\\
    &+\Big|\langle x-G_{kj}(\phi_{kj}^p(x)), G_{kj}(\phi_{kj}^p(x))-G_{kj}(z)-\nabla G_{kj}(\phi_{kj}^p(x))(\phi^p_{kj}(x)-z)\rangle\Big|\\
    &+\Big |\langle x-G_{kj}(\phi_{kj}^p(x)),G_{kj}(z)-m_tG_{kj}(z)\rangle\Big|\\
       &\lesssim n^{-\frac{2\beta_k}{2\alpha_k+d_k}}\wt\sigma_{t_i}\sqrt{\log n}+\wt\sigma_{t_i}^3(\log n)^{\frac{3}{2}}+\wt\sigma_{t_i}^3\sqrt{\log n}\\
       &\lesssim \wt\sigma_{t_i}^3(\log n)^{\frac{3}{2}}.
       \end{aligned}
\end{equation}
Combining all the pieces, we can obtain 
\begin{equation}\label{eqn:case3ub1}
\begin{aligned}
&(E)\lesssim  \int_{\{z\in \mb R^{d_k}\,:\,\|z-\phi_{jk}^p(x)\|_{\infty}\leq c_5\wt\sigma_{t_i}\sqrt{\log n}\}}\exp\big(-\frac{\|G_{kj}(\phi_{kj}^p(x))-m_tG_{jk}(z)\|^2}{2\wt \sigma_t^2}\big)\,\dd z
\\&\qquad\qquad\qquad\cdot  \Big(\frac{n^{-\frac{\beta_k}{2\alpha_k+d_k}}\log n}{\wt\sigma^2_{t_i}}+ \frac{n^{-\frac{\alpha_k}{2\alpha_k+d}}\sqrt{\log n}}{\wt \sigma_{t_i}}\Big).
\end{aligned}
\end{equation}
We then bound $\int_{\{z\in \mb R^{d_k}\,:\,\|z-\phi_{jk}^p(x)\|_{\infty}\leq c_5\wt\sigma_{t_i}\sqrt{\log n}\}}\exp\big(-\frac{\|G_{kj}(\phi_{kj}^p(x))-m_tG_{jk}(z)\|^2}{2\wt \sigma_t^2}\big)\,\dd z$.  Notice that 
\begin{equation*}
\begin{aligned}
    \|\phi^p_{jk}(x)- z\|  &\leq  \|G^*_{jk}(\phi^p_{jk}(x))- G^*_{jk}(z)\|\\
    &\leq \|G_{jk}(\phi^p_{jk}(x))- m_tG^*_{jk}(z)\|+(1-m_t)\|G^*_{jk}(z)\|+ \|G_{jk}(\phi^p_{jk}(x))-G^*_{jk}(\phi^p_{jk}(x))\|\\
    &\leq \|G_{jk}(\phi^p_{jk}(x))-m_t G^*_{jk}(z)\|+o(\wt\sigma_{t_i}),
\end{aligned}
\end{equation*}
we have 
\begin{equation*}
    \begin{aligned}
        &\int_{\{z\in \mb R^{d_k}\,:\,\|z-\phi_{jk}^p(x)\|_{\infty}\leq c_5\wt\sigma_{t_i}\sqrt{\log n}\}}\exp\big(-\frac{\|G_{kj}(\phi_{kj}^p(x))-m_tG_{jk}(z)\|^2}{2\wt \sigma_t^2}\big)\,\dd z\\
        &\lesssim \int_{\mb R^{d_k}}  \exp\left(-\frac{\|\phi^p_{jk}(x)-z\|^2}{2\wt \sigma_t^2}\right)\,\dd z\lesssim \wt\sigma_{t_i}^{d_k}.
    \end{aligned}
\end{equation*}
So we can get 
\begin{equation*}
    \begin{aligned}
        \big\|  (A_2^{''})-(A_2^{'''})\big\|\lesssim\sum_{k\in \m K_2}\sum_{j=1}^{\wt J_k} &\frac{\omega_k  \rho\big(\frac{|x-\wt x^*_{kj}|}{c_4 n^{-{1}/(2\alpha_k+d_k)}}\big) \rho\big(\frac{|x-G_{kj}(\phi^p_{kj}(x))|}{c_6 \wt \sigma_{t_i}\sqrt{\log n}}\big)}{\max\left(1,\sum_{j=1}^{\wt J_k}\rho\big(\frac{|x-x^*_{kj}|}{c_4n^{-{1}/(2\alpha_k+d_k)}}\big)\right)}\cdot \exp\big(-\frac{\|x-G_{kj}(\phi_{kj}^p(x))\|^2}{2\wt \sigma_t^2}\big)\\
        &\cdot \wt\sigma_{t_i}^{d_k}\cdot   \Big(\frac{n^{-\frac{\beta_k}{2\alpha_k+d_k}}\log n}{\wt\sigma^2_{t_i}}+ \frac{n^{-\frac{\alpha_k}{2\alpha_k+d}}\sqrt{\log n}}{\wt \sigma_{t_i}}\Big).
    \end{aligned}
\end{equation*}
Similarly, we can get 
\begin{equation*}
    \begin{aligned}
    \frac{\sqrt{\log n}}{\wt \sigma_{t_i}}\big|  (B_2^{''})-(B_2^{'''})\big|\lesssim\sum_{k\in \m K_2}\sum_{j=1}^{\wt J_k} &\frac{\omega_k  \rho\big(\frac{|x-\wt x^*_{kj}|}{c_4 n^{-{1}/(2\alpha_k+d_k)}}\big) \rho\big(\frac{|x-G_{kj}(\phi^p_{kj}(x))|}{c_6 \wt \sigma_{t_i}\sqrt{\log n}}\big)}{\max\left(1,\sum_{j=1}^{\wt J_k}\rho\big(\frac{|x-x^*_{kj}|}{c_4n^{-{1}/(2\alpha_k+d_k)}}\big)\right)}\cdot \exp\big(-\frac{\|x-G_{kj}(\phi_{kj}^p(x))\|^2}{2\wt \sigma_t^2}\big)\\
        &\cdot \wt\sigma_{t_i}^{d_k}\cdot   \Big(\frac{n^{-\frac{\beta_k}{2\alpha_k+d_k}}\log n}{\wt\sigma^2_{t_i}}+ \frac{n^{-\frac{\alpha_k}{2\alpha_k+d}}\sqrt{\log n}}{\wt \sigma_{t_i}}\Big).
    \end{aligned}
\end{equation*}
Then  note that 
\begin{equation*}
    \begin{aligned}
        &(B_2^{'''})=\sum_{k\in \m K_2} \frac{\omega_k}{\max\left(1,\sum_{j=1}^{\wt J_k}\rho\big(\frac{|x-x^*_{kj}|}{c_4n^{-{1}/(2\alpha_k+d_k)}}\big)\right)}\\
          &\qquad\qquad\qquad\cdot \sum_{j=1}^{\wt J_k}\int_{\{z\in \mb R^{d_k}\,:\,\|z-\phi_{jk}^p(x)\|_{\infty}\leq c_5\wt\sigma_{t_i}\sqrt{\log n}\}}\exp\big(-\frac{\|x-m_tG_{jk}(z)\|^2}{2\wt \sigma_t^2}\big)  v_{kj}(z)\,\dd z\\
          &\qquad\qquad\qquad\qquad\cdot \rho\big(\frac{|x-\wt x^*_{kj}|}{c_4 n^{-{1}/(2\alpha_k+d_k)}}\big) \rho\big(\frac{|x-G_{kj}(\phi^p_{kj}(x))|}{c_6 \wt \sigma_{t_i}\sqrt{\log n}}\big)\\
          &=\sum_{k\in \m K_2}\sum_{j=1}^{\wt J_k} \frac{\omega_k \rho\big(\frac{|x-\wt x^*_{kj}|}{c_4 n^{-{1}/(2\alpha_k+d_k)}}\big) \rho\big(\frac{|x-G_{kj}(\phi^p_{kj}(x))|}{c_6 \wt \sigma_{t_i}\sqrt{\log n}}\big)}{\max\left(1,\sum_{j=1}^{\wt J_k}\rho\big(\frac{|x-x^*_{kj}|}{c_4n^{-{1}/(2\alpha_k+d_k)}}\big)\right)}\cdot \exp\big(-\frac{\|x-G_{kj}(\phi_{kj}^p(x))\|^2}{2\wt \sigma_t^2}\big)\\
          &\cdot  \int_{\{z\in \mb R^{d_k}\,:\,\|z-\phi_{jk}^p(x)\|_{\infty}\leq c_5\wt\sigma_{t_i}\sqrt{\log n}\}}\exp\big(-\frac{\|G_{kj}(\phi_{kj}^p(x))-m_tG_{jk}(z)\|^2}{2\wt \sigma_t^2}\big)\\
      &\qquad\cdot\exp\big(-\frac{\langle x-G_{kj}(\phi_{kj}^p(x)),G_{kj}(\phi_{kj}^p(x))-m_tG_{jk}(z)\rangle}{\wt \sigma_t^2}\big)  v_{kj}(z)\,\dd z.
    \end{aligned}
\end{equation*}
Now use the fact that 
\begin{equation*}
\begin{aligned}
    &\|G_{jk}(\phi^p_{jk}(x))-m_t G_{jk}(z)\|\\
    &\leq  \|G_{jk}(\phi^p_{jk}(x))- G_{jk}(z)\|+(1-m_t)\|G^*_{jk}(z)\|\\
&\lesssim \|\phi^p_{jk}(x)-z\|+o(\wt\sigma_{t_i}),
\end{aligned}
\end{equation*}
we have 
\begin{equation*}
    \begin{aligned}
        &\int_{\{z\in \mb R^{d_k}\,:\,\|z-\phi_{jk}^p(x)\|_{\infty}\leq c_5\wt\sigma_{t_i}\sqrt{\log n}\}}\exp\big(-\frac{\|G_{kj}(\phi_{kj}^p(x))-m_tG_{jk}(z)\|^2}{2\wt \sigma_t^2}\big)\\
      &\qquad\cdot\exp\big(-\frac{\langle x-G_{kj}(\phi_{kj}^p(x)),G_{kj}(\phi_{kj}^p(x))-m_tG_{jk}(z)\rangle}{\wt \sigma_t^2}\big)  v_{kj}(z)\,\dd z\\
        &\geq   \int_{\{z\in \mb R^{d_k}\,:\,\|z-\phi_{jk}^p(x)\|\leq  \wt\sigma_{t_i}\}}\exp\big(-\frac{\|G_{kj}(\phi_{kj}^p(x))-m_tG_{jk}(z)\|^2}{2\wt \sigma_t^2}\big)\\
      &\qquad\cdot\exp\big(-\frac{\langle x-G_{kj}(\phi_{kj}^p(x)),G_{kj}(\phi_{kj}^p(x))-m_tG_{jk}(z)\rangle}{\wt \sigma_t^2}\big)  v_{kj}(z)\,\dd z\\
        &\gtrsim \int_{\{z\in \mb R^{d_k}\,:\,\|z-\phi_{jk}^p(x)\|\leq  \wt\sigma_{t_i}\}} \exp\big(-\frac{\langle x-G_{kj}(\phi_{kj}^p(x)),G_{kj}(\phi_{kj}^p(x))-m_tG_{jk}(z)\rangle}{\wt \sigma_t^2}\big) \,\dd z\\
        &\gtrsim (\wt\sigma_{t_i})^{d_k}.
    \end{aligned}
\end{equation*}
Hence
\begin{equation*}
    \begin{aligned}
                &(B_2^{'''})\gtrsim\sum_{k\in \m K_2}\sum_{j=1}^{\wt J_k} \frac{\omega_k \rho\big(\frac{|x-\wt x^*_{kj}|}{c_4 n^{-{1}/(2\alpha_k+d_k)}}\big) \rho\big(\frac{|x-G_{kj}(\phi^p_{kj}(x))|}{c_6 \wt \sigma_{t_i}\sqrt{\log n}}\big)}{\max\left(1,\sum_{j=1}^{\wt J_k}\rho\big(\frac{|x-x^*_{kj}|}{c_4n^{-{1}/(2\alpha_k+d_k)}}\big)\right)}\cdot \exp\big(-\frac{\|x-G_{kj}(\phi_{kj}^p(x))\|^2}{2\wt \sigma_t^2}\big) \big(\wt\sigma_{t_i}\big)^{d_k}.
    \end{aligned}
\end{equation*}
 So we have  
\begin{equation*}
    \begin{aligned}
      &\frac{\big|  (B_2^{''})-(B_2^{'''})\big|}{B_2^{'''}}\lesssim\bigg(\sum_{k\in \m K_2}\sum_{j=1}^{\wt J_k} \frac{\omega_k \rho\big(\frac{|x-\wt x^*_{kj}|}{c_4 n^{-{1}/(2\alpha_k+d_k)}}\big) \rho\big(\frac{|x-G_{kj}(\phi^p_{kj}(x))|}{c_6 \wt \sigma_{t_i}\sqrt{\log n}}\big)}{\max\left(1,\sum_{j=1}^{\wt J_k}\rho\big(\frac{|x-x^*_{kj}|}{c_4n^{-{1}/(2\alpha_k+d_k)}}\big)\right)}\cdot \exp\big(-\frac{\|x-G_{kj}(\phi_{kj}^p(x))\|^2}{2\wt \sigma_t^2}\big) \big(\wt\sigma_{t_i}\big)^{d_k}\bigg)^{-1}
      \\
      &\cdot
      \sum_{k\in \m K_2}\sum_{j=1}^{\wt J_k} \frac{\omega_k  \rho\big(\frac{|x-\wt x^*_{kj}|}{c_4 n^{-{1}/(2\alpha_k+d_k)}}\big) \rho\big(\frac{|x-G_{kj}(\phi^p_{kj}(x))|}{c_6 \wt \sigma_{t_i}\sqrt{\log n}}\big)}{\max\left(1,\sum_{j=1}^{\wt J_k}\rho\big(\frac{|x-x^*_{kj}|}{c_4n^{-{1}/(2\alpha_k+d_k)}}\big)\right)}\cdot \exp\big(-\frac{\|x-G_{kj}(\phi_{kj}^p(x))\|^2}{2\wt \sigma_t^2}\big) \cdot (\wt\sigma_{t_i})^{d_k}\\
      &\qquad\qquad\qquad\cdot \Big(\frac{n^{-\frac{\beta_k}{2\alpha_k+d_k}}\sqrt{\log n}}{\wt\sigma_{t_i}}+ n^{-\frac{\alpha_k}{2\alpha_k+d}} \Big)\\
      &\lesssim \underset{k\in \m K_2}{\sup}\frac{n^{-\frac{\beta_k}{2\alpha_k+d_k}}\sqrt{\log n}}{\wt\sigma_{t_i}}+ n^{-\frac{\alpha_k}{2\alpha_k+d}}.
    \end{aligned}
\end{equation*}
So when  $t_i\geq \tau$ with
\begin{equation*}
\tau=c_1\sqrt{\log n}\cdot\left\{
\begin{array}{cc}
n^{-c},& \sigma_*> c_1 n^{-\min_{k} \frac{\alpha_k+1}{2\alpha_k+d_k}}\sqrt{\log n}\\
n^{-\min_{k} \frac{2\alpha_k+2}{2\alpha_k+d_k}},& \sigma_*\leq c_1n^{-\min_{k} \frac{\alpha_k+1}{2\alpha_k+d_k}}\sqrt{\log n},\\
\end{array}
\right.
 \end{equation*}
for a sufficiently large $c_1$, and using $\beta_k\geq \alpha_k+1$, we have $(B_2^{'''})\geq \frac{1}{2}(B_2^{''})$. Moreover, 
\begin{equation*}
    \begin{aligned}
      &\frac{\big\|  (A_2^{''})-(A_2^{'''})\big\|}{(B_2^{'''})}\lesssim\bigg(\sum_{k\in \m K_2}\sum_{j=1}^{\wt J_k} \frac{\omega_k \rho\big(\frac{|x-\wt x^*_{kj}|}{c_4 n^{-{1}/(2\alpha_k+d_k)}}\big) \rho\big(\frac{|x-G_{kj}(\phi^p_{kj}(x))|}{c_6 \wt \sigma_{t_i}\sqrt{\log n}}\big)}{\max\left(1,\sum_{j=1}^{\wt J_k}\rho\big(\frac{|x-x^*_{kj}|}{c_4n^{-{1}/(2\alpha_k+d_k)}}\big)\right)}\cdot \exp\big(-\frac{\|x-G_{kj}(\phi_{kj}^p(x))\|^2}{2\wt \sigma_t^2}\big) \big(\wt\sigma_{t_i}\big)^{d_k}\bigg)^{-1}
      \\
      &\cdot
      \sum_{k\in \m K_2}\sum_{j=1}^{\wt J_k} \frac{\omega_k  \rho\big(\frac{|x-\wt x^*_{kj}|}{c_4 n^{-{1}/(2\alpha_k+d_k)}}\big) \rho\big(\frac{|x-G_{kj}(\phi^p_{kj}(x))|}{c_6 \wt \sigma_{t_i}\sqrt{\log n}}\big)}{\max\left(1,\sum_{j=1}^{\wt J_k}\rho\big(\frac{|x-x^*_{kj}|}{c_4n^{-{1}/(2\alpha_k+d_k)}}\big)\right)}\cdot \exp\big(-\frac{\|x-G_{kj}(\phi_{kj}^p(x))\|^2}{2\wt \sigma_t^2}\big) \cdot (\wt\sigma_{t_i})^{d_k}\\
      &\qquad\qquad\qquad\cdot    \Big(\frac{n^{-\frac{\beta_k}{2\alpha_k+d_k}}\log n}{\wt\sigma^2_{t_i}}+ \frac{n^{-\frac{\alpha_k}{2\alpha_k+d}}\sqrt{\log n}}{\wt \sigma_{t_i}}\Big)\\
      &\lesssim   \sup_{k\in \m K_2} \frac{n^{-\frac{\beta_k}{2\alpha_k+d_k}}\log n}{\wt\sigma^2_{t_i}}+ \frac{n^{-\frac{\alpha_k}{2\alpha_k+d}}\sqrt{\log n}}{\wt \sigma_{t_i}}.
    \end{aligned}
\end{equation*}
 Then using $(B_1)+(B_2)\geq n^{-c_3}$. If $(B_1)\geq \frac{n^{-c_3}}{2}$, 
 \begin{equation*}
     \phi^*_{(B_1)}(x,t)\geq (B_1)-n^{-c_3-1} \geq \frac{n^{-c_3}}{4}.
 \end{equation*}
 Otherwise, we have  $(B_2)\geq \frac{n^{-c_3}}{2}$, 
 \begin{equation*}
     (B_2^{'''})\geq \frac{1}{2}(B_2^{''})\geq \frac{1}{2}\cdot((B_2)-2 n^{-c_3-1})\geq \frac{n^{-c_3}}{4}.
 \end{equation*}
 So $ \phi^*_{(B_1)}(x,t)+ (B_2^{'''})\geq \frac{n^{-c_3}}{4}.$ Then we can bound
 \begin{equation*}
     \begin{aligned}
         &\Big\|\frac{(A_1)+(A_2)}{(B_1)+(B_2)}-\frac{ \phi^*_{(A_1)}(x,t)+(A_2^{'''})}{ \phi^*_{(B_1)}(x,t)+(B_2^{'''})}\Big\|\\
         &\leq \Big\|\frac{(A_1)+(A_2)}{(B_1)+(B_2)}\Big\|\cdot \frac{\Big|\phi^*_{(B_1)}(x,t)+(B_2^{'''})-(B_1)-(B_2)\Big|}{\phi^*_{(B_1)}(x,t)+(B_2^{'''})}+\frac{\Big\| \phi^*_{(A_1)}(x,t)+(A_2^{'''})-(A_1)-(A_2)\Big\|}{\phi^*_{(B_1)}(x,t)+(B_2^{'''})}\\
         &\leq n^{-1}+\frac{\sqrt{\log n}}{\wt \sigma_{t_i}} \cdot \frac{\Big|(B_2^{'''})-(B_2^{''})\Big|}{(B_2^{'''})}+\frac{\Big\| (A_2^{'''})-(A_2^{''})\Big\|}{(B_2^{'''})}\\
         &\lesssim  \sup_{k\in \m K_2}  \frac{n^{-\frac{\beta_k}{2\alpha_k+d_k}}\log n}{\wt\sigma^2_{t_i}}+ \frac{n^{-\frac{\alpha_k}{2\alpha_k+d}}\sqrt{\log n}}{\wt \sigma_{t_i}}.
     \end{aligned}
 \end{equation*}
  Let $\ms L_1=C_1\log n$ and $\ms L_2=C_2$ for sufficiently large constants $C_1$ and $C_2$. Then we define 
\begin{equation*}
    \begin{aligned}
   &(A_2^*)=    \sum_{k\in \m K_2}\sum_{j=1}^{\wt J_k} \frac{\omega_k \rho\big(\frac{|x-\wt x^*_{kj}|}{c_4 n^{-{1}/(2\alpha_k+d_k)}}\big) \rho\big(\frac{|x-G_{kj}(\phi^p_{kj}(x))|}{c_6 \wt \sigma_{t_i}\sqrt{\log n}}\big)}{\max\left(1,\sum_{j=1}^{\wt J_k}\rho\big(\frac{|x-x^*_{kj}|}{c_4n^{-{1}/(2\alpha_k+d_k)}}\big)\right)}\cdot \exp\big(-\frac{\|x-G_{kj}(\phi_{kj}^p(x))\|^2}{2\wt \sigma_t^2}\big)\\
          &\cdot  \int_{\{z\in \mb R^{d_k}\,:\,\|z-\phi_{jk}^p(x)\|_{\infty}\leq c_5\wt\sigma_{t_i}\sqrt{\log n}\}} \sum_{l_1=0}^{\ms L_1} (-1)^{l_1} \frac{\|G_{kj}(\phi_{kj}^p(x))-m_tG_{jk}(z)\|^{2l_1}}{l_1! 2^{l_1} \wt\sigma_{t}^{2l_1}}\\
          &\qquad\qquad\qquad\cdot \sum_{l_2=0}^{\ms L_2}  (-1)^{l_2} \frac{\langle x-G_{kj}(\phi_{kj}^p(x)),G_{kj}(\phi_{kj}^p(x))-m_tG_{jk}(z)\rangle^{l_2}}{l_2! \wt\sigma_{t}^{2l_2}} \cdot \Big(-\frac{x-m_tG_{jk}(z)}{\wt \sigma_t^2}\Big) v_{kj}(z) \,\dd z.
    \end{aligned}
\end{equation*}
and 
\begin{equation*}
    \begin{aligned}
   &(B_2^*)=    \sum_{k\in \m K_2}\sum_{j=1}^{\wt J_k} \frac{\omega_k \rho\big(\frac{|x-\wt x^*_{kj}|}{c_4 n^{-{1}/(2\alpha_k+d_k)}}\big) \rho\big(\frac{|x-G_{kj}(\phi^p_{kj}(x))|}{c_6 \wt \sigma_{t_i}\sqrt{\log n}}\big)}{\max\left(1,\sum_{j=1}^{\wt J_k}\rho\big(\frac{|x-x^*_{kj}|}{c_4n^{-{1}/(2\alpha_k+d_k)}}\big)\right)}\cdot \exp\big(-\frac{\|x-G_{kj}(\phi_{kj}^p(x))\|^2}{2\wt \sigma_t^2}\big)\\
          &\cdot  \int_{\{z\in \mb R^{d_k}\,:\,\|z-\phi_{jk}^p(x)\|_{\infty}\leq c_5\wt\sigma_{t_i}\sqrt{\log n}\}} \sum_{l_1=0}^{\ms L_1} (-1)^{l_1} \frac{\|G_{kj}(\phi_{kj}^p(x))-m_tG_{jk}(z)\|^{2l_1}}{l_1! 2^{l_1} \wt\sigma_{t}^{2l_1}}\\
          &\qquad\qquad\qquad\cdot \sum_{l_2=0}^{\ms L_2}  (-1)^{l_2} \frac{\langle x-G_{kj}(\phi_{kj}^p(x)),G_{kj}(\phi_{kj}^p(x))-m_tG_{jk}(z)\rangle^{l_2}}{l_2! \wt\sigma_{t}^{2l_2}}  v_{kj}(z) \,\dd z.
    \end{aligned}
\end{equation*}
Then note that for any $k\in \m K_2$ and $j\in [\wt J_k]$, it holds that  $\frac{\|G_{kj}(\phi^p_{kj}(x))-m_t G_{jk}(z)\|^2}{\wt \sigma_t^2}\lesssim  \log n$  and $\frac{|\langle x-G_{kj}(\phi_{kj}^p(x)),G_{kj}(\phi_{kj}^p(x))-m_tG_{jk}(z)\rangle|}{\wt \sigma_t^2}\lesssim \wt \sigma_{t_i}(\log n)^{\frac{3}{2}}\lesssim  n^{-\min_{k\in [K]} \frac{1}{2\alpha_k+d_k}}\cdot\log n$. So using the fact that for any $x\geq 0$, it holds that $|\exp(-x)-\sum_{l=0}^{\ms L} (-1)^l\frac{x^l}{l!}|\leq \frac{|x|^{\ms L+1}}{(\ms L+1)!}\leq (\frac{|x|e}{(\ms L+1)})^{\ms L+1}$, and for any $|x|\leq 1$, $|\exp(-x)-\sum_{l=0}^{\ms L} (-1)^l\frac{x^l}{l!}|\leq e\frac{|x|^{\ms L+1}}{(\ms L+1)!}\leq e(\frac{|x|e}{(\ms L+1)})^{\ms L+1}$.
When  $\ms L_1=C_1\log n$ and $\ms L_2=C_2$ with sufficiently large constants $C_1$ and $C_2$, it holds that 
\begin{equation*}
    \|(A_2^*)-(A_2^{'''})\|\lesssim n^{-c_3-1}, 
\end{equation*}
and 
\begin{equation*}
    \frac{\sqrt{\log n}}{\wt \sigma_{t_i}}|(B_2^*)-(B_2^{'''})|\lesssim n^{-c_3-1}.
\end{equation*}
Then we approximate $\frac{ \phi^*_{(A_1)}(x,t)+(A_2^{*})}{ \phi^*_{(B_1)}(x,t)+(B_2^{*})}$ using ReLU neural network.  Since $G_{kj}(z)$ and $v_{kj}(z)$ are polynomials with degree at most $\lfloor \beta_k\rfloor$  and $\lfloor \alpha_k\rfloor$ respectively, we can rewrite $(A_2^*)$ and $(B_2^*)$ as
  \begin{equation*}
      \begin{aligned}
      &(A_2^{*})=  \sum_{k\in \m K_2}\sum_{j=1}^{\wt J_k} \frac{\omega_k \rho\big(\frac{|x-\wt x^*_{kj}|}{c_4 n^{-{1}/(2\alpha_k+d_k)}}\big) \rho\big(\frac{|x-G_{kj}(\phi^p_{kj}(x))|}{c_6 \wt \sigma_{t_i}\sqrt{\log n}}\big)}{\max\left(1,\sum_{j=1}^{\wt J_k}\rho\big(\frac{|x-x^*_{kj}|}{c_4n^{-{1}/(2\alpha_k+d_k)}}\big)\right)}\cdot \exp\big(-\frac{\|x-G_{kj}(\phi_{kj}^p(x))\|^2}{2\wt \sigma_t^2}\big)\\
          &\cdot \sum_{l_1=0}^{\ms L_1}\sum_{l_2=0}^{\ms L_2} \big(\frac{1}{\wt\sigma_t}\big)^{2l_1+2l_2+2} \sum_{0\leq s_1\leq 2l_1+l_2+1} m_t^{s_1}
\sum_{s_2\in \mb N_0^{d_k}, |s_2|\leq (2l_1+2l_2+1)\lfloor\beta_k\rfloor+d_k+\lfloor\alpha_k\rfloor}   (\phi_{jk}^p(x))^{(s_2)}\\
&\qquad\qquad \cdot\sum_{s_3\in \mb N_0^D, |s_3|\leq l_2+1} a'_{k,j,l_1,l_2,s_1,s_2,s_3} \cdot x^{(s_3)},
      \end{aligned}
  \end{equation*}
  and 
    \begin{equation*}
      \begin{aligned}
      &(B_2^{*})=  \sum_{k\in \m K_2}\sum_{j=1}^{\wt J_k} \frac{\omega_k \rho\big(\frac{|x-\wt x^*_{kj}|}{c_4 n^{-{1}/(2\alpha_k+d_k)}}\big) \rho\big(\frac{|x-G_{kj}(\phi^p_{kj}(x))|}{c_6 \wt \sigma_{t_i}\sqrt{\log n}}\big)}{\max\left(1,\sum_{j=1}^{\wt J_k}\rho\big(\frac{|x-x^*_{kj}|}{c_4n^{-{1}/(2\alpha_k+d_k)}}\big)\right)}\cdot \exp\big(-\frac{\|x-G_{kj}(\phi_{kj}^p(x))\|^2}{2\wt \sigma_t^2}\big)\\
          &\cdot \sum_{l_1=0}^{\ms L_1}\sum_{l_2=0}^{\ms L_2} \big(\frac{1}{\wt\sigma_t}\big)^{2l_1+2l_2} \sum_{0\leq s_1\leq 2l_1+l_2} m_t^{s_1}
\sum_{s_2\in \mb N_0^{d_k}, |s_2|\leq (2l_1+2l_2)\lfloor\beta_k\rfloor+d_k+\lfloor\alpha_k\rfloor}   (\phi_{jk}^p(x))^{(s_2)}\\
&\qquad\qquad \cdot\sum_{s_3\in \mb N_0^D, |s_3|\leq l_2} b'_{k,j,l_1,l_2,s_1,s_2,s_3} \cdot x^{(s_3)},
      \end{aligned}
  \end{equation*}
where $a'_{k,j,l_1,l_2,s_1,s_2,s_3}\in \mb R^D$ and $b'_{k,j,l_1,l_2,s_1,s_2,s_3}\in \mb R$  are some constant coefficients satisfying $(\frac{1}{\wt \sigma_t})^{2l_1+2l_2+1}a'_{k,j,l_1,l_2,s_1,s_2,s_3}\lesssim\exp(\m O(\log^2 n))$ and $(\frac{1}{\wt \sigma_t})^{2l_1+2l_2}b'_{k,j,l_1,l_2,s_1,s_2,s_3}\lesssim\exp(\m O(\log^2 n))$. Then  we will use  Lemmas~\ref{lemma3.3},~\ref{LemmaF.6} and~\ref{lemmaF.7} for the approximation $m_t$, $\sigma_t$, monomial and reciprocal function.  Furthermore, we will use Lemma F.12 in~\cite{oko2023diffusion} for the approximation of exponential function.
\begin{lemma}\label{LemmaF.12}
     (Lemma F.12 in~\cite{oko2023diffusion}) Take $\varepsilon>0$ arbitrarily. There exists a neural network $\phi_{\exp} \in \Phi(L, W, R, B)$ such that
$$
\sup _{x, x^{\prime} \geq 0}\left|e^{-x^{\prime}}-\phi_{\exp }(x)\right| \leq \varepsilon+\left|x-x^{\prime}\right|
$$
holds, where $L=\mathcal{O}\left(\log ^2 \varepsilon^{-1}\right),\|W\|_{\infty}=\mathcal{O}\left(\log \varepsilon^{-1}\right), R=\mathcal{O}\left(\log ^2 \varepsilon^{-1}\right), B=\exp \left(\mathcal{O}\left(\log ^2 \varepsilon^{-1}\right)\right)$. Moreover, $\left|\phi_{\exp }(x)\right| \leq$ $\varepsilon$ for all $x \geq \log 3 \varepsilon^{-1}$.
 \end{lemma}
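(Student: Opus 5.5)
The plan is to reduce the claim to a uniform approximation of $x\mapsto e^{-x}$ on $[0,\infty)$. Suppose we build $\phi_{\exp}$ with $\sup_{x\ge 0}|\phi_{\exp}(x)-e^{-x}|\le\varepsilon$. Since $e^{-x}$ is $1$-Lipschitz on $[0,\infty)$, the triangle inequality gives, for all $x,x'\ge 0$,
$$|e^{-x'}-\phi_{\exp}(x)|\le |e^{-x'}-e^{-x}|+|e^{-x}-\phi_{\exp}(x)|\le |x-x'|+\varepsilon .$$
So no Lipschitz control of the network itself is needed. It remains to build the network with the stated size and to get the tail property.

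\emph{Truncation.} Set $x_{\max}=\log(3\varepsilon^{-1})$ and clip the input exactly with ReLUs: $\tilde x=\min(x,x_{\max})=x_{\max}-\mathrm{ReLU}(x_{\max}-x)$. For $x\ge x_{\max}$ we have $e^{-x}\le e^{-x_{\max}}=\varepsilon/3$. Hence it suffices to approximate $e^{-\tilde x}$ on $[0,x_{\max}]$ to accuracy $\varepsilon/3$, and to clamp the final output to $[0,1]$ using $\max$/$\min$, which ReLUs realize exactly. Then for $x\ge x_{\max}$ the output equals the value at $x_{\max}$, which is at most $2\varepsilon/3\le\varepsilon$. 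This gives the "moreover" clause, and $|e^{-x}-\phi_{\exp}(x)|\le\varepsilon$ on that range as well.

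\emph{Core approximation.} Pick $N=2^m$ with $N\ge x_{\max}$, so $m=O(\log\log\varepsilon^{-1})$, and set $y=\tilde x/N\in[0,1]$. Then $e^{-\tilde x}=(e^{-y})^{2^m}$. First approximate $e^{-y}$ on $[0,1]$ by its Taylor polynomial of degree $p=O(\log\varepsilon^{-1})$; the remainder is at most $1/(p+1)!$, which is below $\varepsilon 2^{-m-2}$. Evaluate this polynomial by Horner's scheme. Each of the $p$ steps is one approximate product, implemented by the two-input multiplication network of Lemma~\ref{LemmaF.6} at accuracy $\varepsilon'$ and realized with depth $O(\log\varepsilon'^{-1})$ and width $O(1)$. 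Then square the result $m$ times, again with Lemma~\ref{LemmaF.6}. All intermediate values stay in a bounded interval such as $[-2,2]$, and the coefficients $1/l!$ are bounded. Composing these blocks gives depth $O(p\log\varepsilon'^{-1})=O(\log^2\varepsilon^{-1})$ and sparsity $O(\log^2\varepsilon^{-1})$. The width stays $O(1)$ apart from carrying $y$ in parallel, which is within $O(\log\varepsilon^{-1})$. Weight magnitudes stay $\exp(O(\log^2\varepsilon^{-1}))$ after accounting for the scaling by $1/N$ and the parameters of the multiplication blocks.

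\emph{Main obstacle: error bookkeeping.} In Horner's scheme each approximate multiplication adds $\varepsilon'$ plus a term proportional to the incoming error, by the perturbation bound in Lemma~\ref{LemmaF.6}. With bounded intermediates the total error is $O(p\varepsilon')$. Each squaring at most roughly doubles the current error, since the values lie near $[0,1]$, and adds $\varepsilon'$. After $m$ squarings the error is therefore $O(2^m(p\varepsilon'+1/(p+1)!))$. Because $2^m=O(\log\varepsilon^{-1})$, choosing $\varepsilon'=\varepsilon/\mathrm{poly}(\log\varepsilon^{-1})$ makes this at most $\varepsilon/3$. This choice only changes the depth and sparsity by constant factors in the $\log$ terms, so the stated sizes hold.
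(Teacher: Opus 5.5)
The paper gives no proof of this statement. It quotes the lemma verbatim from \cite{oko2023diffusion} and uses it only as a black box in Case~2 of the score-approximation argument, so there is no internal route to compare yours against. Judged on its own, your scaling-and-squaring construction is a valid proof. The pieces all check out:
\begin{itemize}
\item the reduction to a uniform bound via the $1$-Lipschitz property of $e^{-x}$ on $[0,\infty)$;
\item the exact ReLU clipping at $x_{\max}=\log(3\varepsilon^{-1})$, which gives the tail clause with $|\phi_{\exp}|\le 2\varepsilon/3$;
\item the final clamp to $[0,1]$;
\item the depth and sparsity count $O(p\log\varepsilon'^{-1})=O(\log^2\varepsilon^{-1})$.
\end{itemize}
Your network is actually smaller than the statement requires (width $O(1)$, weights $O(\log\varepsilon^{-1})$). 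That is harmless, since it can be padded into the stated class.

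The one step that does not follow as written is the error bookkeeping. The perturbation term in Lemma~\ref{LemmaF.6} is $dC^{d-1}\varepsilon_{\mathrm{error}}=2C\varepsilon_{\mathrm{error}}$ with $C\ge 2$. Propagating errors through that bound multiplies the incoming error by at least $4$ per Horner step, which gives $O(4^p\varepsilon')$, not $O(p\varepsilon')$. Likewise it gives a factor of at least $4$, not roughly $2$, per squaring.

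To recover your stated recursion, apply Lemma~\ref{LemmaF.6} at the perturbed point $(y,\hat h)$, which still lies in $[-C,C]^2$. This gives $|\phi_{\mathrm{mult}}(y,\hat h)-y\hat h|\le\varepsilon'$. Then use $|y\hat h-yh|\le|\hat h-h|$, which holds because $y\in[0,1]$ is computed exactly. For the squaring steps, use $|\hat u^2-u^2|\le|\hat u+u|\,|\hat u-u|\le(2+|\hat u-u|)\,|\hat u-u|$.

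Alternatively, keep the crude factors. Since $p=O(\log\varepsilon^{-1})$ and $2^m=O(\log\varepsilon^{-1})$, the total amplification $(2C)^p\cdot 4^m$ is only $\varepsilon^{-O(1)}$. Taking $\varepsilon'=\varepsilon^{O(1)}$ still gives $\log\varepsilon'^{-1}=O(\log\varepsilon^{-1})$, so the depth and sparsity bounds are unchanged. Either way your conclusion stands.
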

Then we consider $\phi_{m}(t),\phi_{\sigma}(t),\phi_{rec}(x),\phi^{[D]}_{vpower}(x;s),\phi_{power}(x;a),\phi_{mult}(x,y), \phi_{\frac{1}{\wt \sigma}}(t), \phi_{\rho}(x;x^*,a)$ defined in Case 1. Moreover, we approximate $\exp(-x)$ by $\phi_{\exp} \in \Phi(L', W', R', B')$ with $L'=\m O(\log^4 n)$, $\|W'\|_{\infty}=\m O(\log^2 n)$, $R'=\m O(\log^4 n)$ and $B'=\exp(\m O(\log^4 n))$. For $k\in \m K_2$, $z\in \mb R^{d_k}$ and $s\in \mb N^{d_k}$, we approximate $z^s$ by $\phi^{[d_k]}_{vpower}(z;s)\in \Phi(L'_1,W'_1,R'_1,B'_1)$ with $L'_1=\Theta(\log^2 n\cdot\log\log n)$, $\|W'_1\|_{\infty}=\Theta(\log n)$, $R'_1=\Theta(\log^3 n)$ and $B'_1=\exp(\Theta(\log n\cdot \log \log n))$. Furthermore, we define 
\begin{equation*}
    \phi_{G_{kj}}(z)=\wt x_{kj}^*+\sum_{s\in \mb N_0^{d_k}\atop 1\leq |s|\leq \lfloor \beta_k\rfloor} \phi_{mult}\Big(\phi^{[d_k]}_{vpower}(z;s),G^{*}_{kj}{}^{(s)}(0_{d_k})\Big).
\end{equation*}
We construct 
\begin{equation*}
\begin{aligned}
       & \phi^*_{(A)}(x,t)=\phi_{(A_1)}^*(x,t)+\sum_{k\in \m K_2}\phi_{mult}\Bigg(\phi_{rec}\Big(\max\big(1,\sum_{j=1}^{\wt J_k}\phi_\rho(x;x_{kj}^*,c_4 n^{-\frac{1}{2\alpha_k+d_k}}\big)\Big),\\
        & \sum_{j=1}^{\wt J_k} \sum_{l_1=0}^{\ms L_1}\sum_{l_2=0}^{\ms L_2}   \sum_{0\leq s_1\leq 2l_1+l_2+1}  \sum_{s_2\in \mb N_0^{d_k}, |s_2|\leq (2l_1+2l_2+1)\lfloor\beta_k\rfloor+d_k+\lfloor\alpha_k\rfloor} \sum_{s_3\in \mb N_0^D, |s_3|\leq l_2}  \omega_k \cdot a'_{k,j,l_1,l_2,s_1,s_2,s_3}\\
        & \cdot \phi_{mult}\Bigg(\phi_{mult}\bigg(\phi_{mult}\bigg(\phi_{mult}\Big(\phi_{mult}\big(\phi_{power}(\phi_{\frac{1}{\wt\sigma^2}}(t),l_1+l_2+1),\phi_{power}(\phi_{m}(t),s_1)\big), \\
        &\phi^{[d_k]}_{vpower}(\phi_{kj}^p(x);s_2)\Big),  \phi^{[D]}_{vpower}(x;s_3)\bigg), \phi_\rho(x;x_{kj}^*,c_4n^{-\frac{1}{2\alpha_k+d_k}})\bigg),\phi_\rho\big(x;\phi_{G_{kj}}(\phi_{kj}^p(x)),c_6\wt \sigma_{t_i}\sqrt{\log n}\big)\Bigg)
      \Bigg),
\end{aligned}
\end{equation*}
and 
\begin{equation*}
\begin{aligned}
        \phi^*_{(B)}(x,t)&=\max\Bigg(\frac{n^{-c_3}}{4},\phi_{(B_1)}^*(x,t)+\sum_{k\in \m K_2}\phi_{mult}\Bigg(\phi_{rec}\Big(\max\big(1,\sum_{j=1}^{\wt J_k}\phi_\rho(x;x_{kj}^*,c_4 n^{-\frac{1}{2\alpha_k+d_k}}\big)\Big),\\
        & \sum_{j=1}^{\wt J_k} \sum_{l_1=0}^{\ms L_1}\sum_{l_2=0}^{\ms L_2}   \sum_{0\leq s_1\leq 2l_1+l_2}  \sum_{s_2\in \mb N_0^{d_k}, |s_2|\leq (2l_1+2l_2)\lfloor\beta_k\rfloor+d_k+\lfloor\alpha_k\rfloor} \sum_{s_3\in \mb N_0^D, |s_3|\leq l_2}  \omega_k \cdot b'_{k,j,l_1,l_2,s_1,s_2,s_3}\\
        & \cdot \phi_{mult}\Bigg(\phi_{mult}\bigg(\phi_{mult}\bigg(\phi_{mult}\Big(\phi_{mult}\big(\phi_{power}(\phi_{\frac{1}{\wt\sigma^2}}(t),l_1+l_2),\phi_{power}(\phi_{m}(t),s_1)\big), \\
        &\phi^{[d_k]}_{vpower}(\phi_{kj}^p(x);s_2)\Big),  \phi^{[D]}_{vpower}(x;s_3)\bigg), \phi_\rho(x;x_{kj}^*,c_4n^{-\frac{1}{2\alpha_k+d_k}})\bigg),\phi_\rho\big(x;\phi_{G_{kj}}(\phi_{kj}^p(x)),c_6\wt \sigma_{t_i}\sqrt{\log n}\big)\Bigg)
      \Bigg)\Bigg).
\end{aligned}
\end{equation*}
Then using  Lemmas~\ref{lemma3.3},~\ref{LemmaF.6},~\ref{lemmaF.7} and~\ref{LemmaF.12}, as well as $ \phi^*_{(B_1)}(x,t)+ (B_2^{'''})\geq \frac{n^{-c_3}}{4}$,  we can get 
\begin{equation*}
    \| \phi^*_{(A)}(x,t)-\phi_{(A_1)}^*(x,t)-(A_2^{'''})\|\lesssim  n^{-c_3-1},
\end{equation*}
and 
\begin{equation*}
      \frac{\sqrt{\log n}}{\wt \sigma_{t_i}}|\phi^*_{(B)}(x,t)-\phi_{(B_1)}^*(x,t)-(B_2^{'''})|\lesssim    n^{-c_3-1}.
\end{equation*}
So
\begin{equation*}
\begin{aligned}
       &\Big\|\frac{\phi^*_{(A)}(x,t)}{\phi^*_{(B)}(x,t)}-\nabla \log p_t(x)\Big\|\\
       &\leq   \Big\|\frac{\phi^*_{(A)}(x,t)}{\phi^*_{(B)}(x,t)}-\frac{ \phi^*_{(A_1)}(x,t)+(A_2^{'''})}{ \phi^*_{(B_1)}(x,t)+(B_2^{'''})}\Big\|+\Big\|\frac{(A_1)+(A_2)}{(B_1)+(B_2)}-\frac{ \phi^*_{(A_1)}(x,t)+(A_2^{'''})}{ \phi^*_{(B_1)}(x,t)+(B_2^{'''})}\Big\|\\
       &\lesssim  \sup_{k\in \m K_2}  \frac{n^{-\frac{\beta_k}{2\alpha_k+d_k}}\log n}{\wt\sigma^2_{t_i}}+ \frac{n^{-\frac{\alpha_k}{2\alpha_k+d}}\sqrt{\log n}}{\wt \sigma_{t_i}}.
\end{aligned}
    \end{equation*}
    Finally, define
\begin{equation*}
    \phi^*(x,t)=\max\bigg(-c_2\frac{\sqrt{\log n}}{\wt \sigma_{t_i}},\min\bigg(c_2\frac{\sqrt{\log n}}{\wt \sigma_{t_i}},\phi_{mult}\Big(\phi_{rec}\big( \phi^*_{(B)}(x,t)\big), \phi^*_{(A)}(x,t)\Big)\bigg)\bigg).
\end{equation*}
Since $\|\nabla \log p_t(x)\|\leq c_2\frac{\sqrt{\log n}}{\wt \sigma_{t_i}}$, it holds for any $t\in [t_i,t_{i+1}]$ and $x\in\mb R^D$ with ${\rm dist}(x,\m M)\leq c_0\wt \sigma_{t_i}\sqrt{\log n}$,
\begin{equation*}
    \begin{aligned}
       & \big\| \phi^*(x,t)-\nabla \log p_t(x)\big\|\lesssim \sup_{k\in \m K_2}  \frac{n^{-\frac{\beta_k}{2\alpha_k+d_k}}\log n}{\wt\sigma^2_{t_i}}+ \frac{n^{-\frac{\alpha_k}{2\alpha_k+d}}\sqrt{\log n}}{\wt \sigma_{t_i}}.
    \end{aligned}
\end{equation*}
Furthermore, based on Lemmas F.1-F.3 in~\cite{oko2023diffusion} for the concatenation and parallelization of neural networks, there exists $L_i,W_i,R_i,B_i$ with $L_i=\Theta(\log^4 n)$, $\|W_i\|_{\infty}=\wt\Theta\big(\sum_{k=1}^K \wt J_k)$, $R_i=\wt \Theta\big(\sum_{k=1}^K \wt J_k)$, and $B_i=\exp(\Theta(\log^4 n))$ so that $\phi^*\in \Phi(L_i,W_i,R_i,B_i,c_2\frac{\sqrt{\log n}}{\wt \sigma_{t_i}})$. So we have 
 \begin{equation*}
     \begin{aligned}
& \mb E\left[\int_{t_i}^{t_{i+1}} \int_{\mathbb{R}^D}\left\|\wh S(x, t)-\nabla \log p_t(x)\right\|^2 p_t(x)\, \dd x \dd t\right] \\
& \lesssim (t_{i+1}- t_i)\cdot  \sup_{t\in [t_i,t_{i+1}]}\sup_{x\,:\, \operatorname{dist}(x, \m M) \leq c_0\wt\sigma_{t_i}\sqrt{\log n}}\left\|\phi^*(x, t)-\nabla \log p_t(x)\right\|^2+\frac{(\log n)^2}{n} R_i L_i \log \left(n  L_i  \left\|W_i\right\|_\infty  B_i\right)+\frac{1}{n^2}\\
&=\wt{\m O}\Big(\sum_{k\in \m K_2}  t_i\frac{n^{-\frac{2\beta_k}{2\alpha_k+d_k}}}{\wt\sigma_{t_i}^4}+ t_i\frac{n^{-\frac{2\alpha_k}{2\alpha_k+d_k}}}{\wt\sigma_{t_i}^2}\Big)=\wt{\m O}\Big(\sum_{k\in \m K_2} \frac{n^{-\frac{2\beta_k}{2\alpha_k+d_k}}}{\sigma_*^2+t_i}+  n^{-\frac{2\alpha_k}{2\alpha_k+d_k}}\Big).
\end{aligned}
\end{equation*}
Then note that $\wt \sigma_{t_i}\leq \min_{k\in \m K_2} \frac{n^{-\frac{1}{2\alpha_k+d_k}}}{\sqrt{\log n}}$, we have
\begin{equation*}
  \frac{(\sigma_*+\sqrt{1\wedge t_i})^{-(d_k\vee 2)}}{n}\gtrsim\frac{(\sigma_*+\sqrt{1\wedge t_i})^{-d_k}}{n}\gtrsim   \frac{(\wt \sigma_{t_i})^{-d_k}}{n}\gtrsim \max_{k\in \m K_2} n^{-\frac{2\alpha_k}{2\alpha_k+d_k}}(\log n)^{\frac{d_k}{2}},
\end{equation*}
and  if $d_k\geq 2$,
\begin{equation*}
    \begin{aligned}
        \frac{(\sigma_*+\sqrt{1\wedge t_i})^{-d_k}}{n}\gtrsim         \frac{(\sigma_*+\sqrt{1\wedge t_i})^{-d_k+2}}{n (\sigma_*^2+t_i)}\gtrsim \frac{\max_{k\in \m K_2} n^{-\frac{2\alpha_k+2}{2\alpha_k+d_k}}(\log n)^{\frac{d_k-2}{2}}}{\sigma_*^2+t_i} \gtrsim \frac{\max_{k\in \m K_2} n^{-\frac{2\beta_k}{2\alpha_k+d_k}}(\log n)^{\frac{d_k-2}{2}}}{\sigma_*^2+t_i},
    \end{aligned}
\end{equation*}
 if $d_k=1$,
 \begin{equation*}
    \begin{aligned}
        \frac{(\sigma_*+\sqrt{1\wedge t_i})^{-2}}{n}\gtrsim         \frac{1}{n (\sigma_*^2+t_i)} \gtrsim \frac{\max_{k\in \m K_2} n^{-\frac{2\beta_k}{2\alpha_k+1}}}{\sigma_*^2+t_i},
    \end{aligned}
\end{equation*}
  Hence, we have 
 \begin{equation*}
     \begin{aligned}
& \mb E\left[\int_{t_i}^{t_{i+1}} \int_{\mathbb{R}^D}\left\|\wh S(x, t)-\nabla \log p_t(x)\right\|^2 p_t(x)\, \dd x \dd t\right] \\
&=\wt{\m O}\Big(\sum_{k\in \m K_2} \frac{n^{-\frac{2\beta_k}{2\alpha_k+d_k}}}{\sigma_*^2+t_i}+  n^{-\frac{2\alpha_k}{2\alpha_k+d_k}}\Big)\\
&=\wt{\m O}\Big(\sum_{k\in \m K_2}\min(\frac{n^{-\frac{2\beta_k}{2\alpha_k+d_k}}}{\sigma_*^2+t_i}+n^{-\frac{2\alpha_k}{2\alpha_k+d_k}}, \frac{(\sigma_*+\sqrt{1\wedge t_i})^{-(d_k\vee 2)}}{n})\Big)\\
&=\wt{\m O}\Big(\sum_{k=1}^K \min(\frac{n^{-\frac{2\beta_k}{2\alpha_k+d_k}}}{\sigma_*^2+t_i}+n^{-\frac{2\alpha_k}{2\alpha_k+d_k}}, \frac{(\sigma_*+\sqrt{1\wedge t_i})^{-(d_k\vee 2)}}{n})\Big),
\end{aligned}
\end{equation*}
and using $\beta_k\geq \alpha_k+1$,
  \begin{equation*}
     \begin{aligned}
&\mb E\left[((t_i\log n)\wedge 1)\cdot\int_{t_i}^{t_{i+1}} \int_{\mathbb{R}^D}\left\|\wh S(x, t)-\nabla \log p_t(x)\right\|^2 p_t(x)\, \dd x \dd t\right] \\
&=\wt{\m O}\Big(\sum_{k\in \m K_2}\min(n^{-\frac{2\beta_k}{2\alpha_k+d_k}}+t_i\cdot n^{-\frac{2\alpha_k}{2\alpha_k+d_k}}, t_i\frac{(\sigma_*+\sqrt{1\wedge t_i})^{-(d_k\vee 2)}}{n})\Big)\\
&=\wt{\m O}\Big(\frac{1}{n}+\sum_{k=1}^K \min(n^{-\frac{2\alpha_k+2}{2\alpha_k+d_k}}, \frac{\sigma_*^{2-d_k}}{n})\Big).
\end{aligned}
\end{equation*}
\subsection{Summarizing the results}
Finally, by combining the results in case 1 and 2, we have for any $i\in [L]$, 
 \begin{equation*}
     \begin{aligned}
& \mb E\left[\int_{t_i}^{t_{i+1}} \int_{\mathbb{R}^D}\left\|\wh S(x, t)-\nabla \log p_t(x)\right\|^2 p_t(x)\, \dd x \dd t\right] \\
&=\wt{\m O}\Big(\sum_{k\in \m K_2} \frac{n^{-\frac{2\beta_k}{2\alpha_k+d_k}}}{\sigma_*^2+t_i}+  n^{-\frac{2\alpha_k}{2\alpha_k+d_k}}\Big)\\
&=\wt{\m O}\Big(\sum_{k\in \m K_2}\min(\frac{n^{-\frac{2\beta_k}{2\alpha_k+d_k}}}{\sigma_*^2+t_i}+n^{-\frac{2\alpha_k}{2\alpha_k+d_k}}, \frac{(\sigma_*+\sqrt{1\wedge t_i})^{-(d_k\vee 2)}}{n})\Big)\\
&=\wt{\m O}\Big(\sum_{k=1}^K \min(\frac{n^{-\frac{2\beta_k}{2\alpha_k+d_k}}}{\sigma_*^2+t_i}+n^{-\frac{2\alpha_k}{2\alpha_k+d_k}}, \frac{(\sigma_*+\sqrt{1\wedge t_i})^{-(d_k\vee 2)}}{n})\Big),
\end{aligned}
\end{equation*}
and using Lemma~\ref{lemma1},  if $\sigma_*>  n^{-\min_{k} \frac{\alpha_k+1}{2\alpha_k+d_k}}\sqrt{\log n}$, we have $\tau\leq \frac{1}{n}\log n$ and
  \begin{equation*}
  \begin{aligned}
\mb{E}[W_1\left(\wh p, P_*\right)]& \lesssim \frac{1}{n}+\tau^{\frac{1}{2}}+
\sum_{i=0}^{L-1} \sqrt{\mb{E}\left[\big((t_i \log n)\wedge 1 \big)\int_{t_i}^{t_{i+1}}\int_{\mb R^D}\left\|\wh S(x,t)-\nabla \log p_t(x)\right\|^2 p_t(x)\,\dd x\,\dd t\right]}  \\
&= \wt{\m O}\Big(\frac{1}{\sqrt{n}}+\sum_{k=1}^K \min(n^{-\frac{\alpha_k+1}{2\alpha_k+d_k}}, \frac{\sigma_*^{1-\frac{d_k}{2}}}{\sqrt{n}})\Big).
  \end{aligned}
    \end{equation*}
if $\sigma_*\leq  n^{-\min_{k} \frac{\alpha_k+1}{2\alpha_k+d_k}}\sqrt{\log n}$, we have $\tau\asymp n^{-\min_{k} \frac{2\alpha_k+2}{2\alpha_k+d_k}}\cdot\log n$ and
\begin{equation*}
    \frac{\sigma_*^{1-\frac{d_k}{2}}}{\sqrt{n}}+\frac{1}{\sqrt{n}}\geq \frac{1}{\sqrt{n}}+n^{-\min_k\frac{\alpha_k+1}{2\alpha_k+d_k}}.
\end{equation*}
Hence
  \begin{equation*}
  \begin{aligned}
\mb{E}[W_1\left(\wh p, P_*\right)]& \lesssim \frac{1}{n}+\tau^{\frac{1}{2}}+
\sum_{i=0}^{L-1} \sqrt{\mb{E}\left[\big((t_i \log n)\wedge 1 \big)\int_{t_i}^{t_{i+1}}\int_{\mb R^D}\left\|\wh S(x,t)-\nabla \log p_t(x)\right\|^2 p_t(x)\,\dd x\,\dd t\right]}  \\
&= \wt{\m O}\Big(\frac{1}{\sqrt{n}}+\sum_{k=1}^K n^{-\frac{\alpha_k+1}{2\alpha_k+d_k}}\Big)\\
&= \wt{\m O}\Big(\frac{1}{\sqrt{n}}+\sum_{k=1}^K \min(n^{-\frac{\alpha_k+1}{2\alpha_k+d_k}}, \frac{\sigma_*^{1-\frac{d_k}{2}}}{\sqrt{n}})\Big).
  \end{aligned}
    \end{equation*}

\subsection{Proof for Lemma~\ref{lemma2.1}}
Since $\m M\subset \mb B_1(0_D)$, for any $x\in \mb R^D$, we have
\begin{equation*}
    \begin{aligned}
        \|\nabla \log p_t(x)\|&=\left\|\frac{\nabla p_t(x)}{p_t(x)}\right\|\\
         &=\left\|\frac{\sum_{k=1}^K \omega_k\cdot\mb{E}_{y\sim Q_k}\Big[\exp\big(-\frac{\|x-m_ty\|^2}{2\wt \sigma_t^2}\big)\cdot \Big(-\frac{x-m_ty}{\wt \sigma_t^2}\Big)\Big]}{\sum_{k=1}^K \omega_k\cdot\mb{E}_{y\sim Q_k}\Big[\exp\big(-\frac{\|x-m_ty\|^2}{2\wt \sigma_t^2}\big)\Big]}\right\|\\
&\leq \frac{\|x\|+\sqrt{D}}{\wt\sigma_t^2}.
    \end{aligned}
\end{equation*}
Therefore,  for any constant $c_1>0$,
\begin{equation*}
    \begin{aligned}
        & \int_{\mb R^D} \|\nabla \log p_t(x)\|^2 p_t(x) \cdot 1\left(\operatorname{dist}(x, \m M) \geq c_0\wt\sigma_{t_i}\sqrt{\log n}\right)\,\dd x\\
        &\leq  \frac{1}{(2\pi\wt \sigma_t^2)^{\frac{D}{2}}}  \sum_{k=1}^K \omega_k\cdot\mb{E}_{y\sim Q_k}\Big[\int_{\mb R^D} \frac{\|x\|+\sqrt{D}}{\wt\sigma_t^2}  \exp\big(-\frac{\|x-m_ty\|^2}{2\wt \sigma_t^2}\big)\\
        &\qquad\cdot\left( 1\left(\operatorname{dist}(x, \m M) \geq c_0\wt\sigma_{t_i}\sqrt{\log n}, \|x\|\geq c_1\sqrt{\log n}\right)+ 1\left(\operatorname{dist}(x, \m M) \geq c_0\wt\sigma_{t_i}\sqrt{\log n}, \|x\|< c_1\sqrt{\log n}\right)\right)\,\dd x\Big].
         \end{aligned}
\end{equation*}
 Note that for large enough $c_1$, using $\m M\subset \mb B_1(0_D)$ and $\sigma_{\tau}\leq \sigma_{t_i}\leq \wt \sigma_t\leq \sqrt{\sigma_*^2+\sigma_t^2}\leq \sqrt{2}$, we have for any $t\in [t_i,t_{i+1}]$,
\begin{equation*}
    \begin{aligned}
    & \frac{1}{(2\pi\wt \sigma_t^2)^{\frac{D}{2}}}  \sum_{k=1}^K \omega_k\cdot\mb{E}_{y\sim Q_k}\Big[\int_{\mb R^D} \frac{\|x\|+\sqrt{D}}{\wt\sigma_t^2}  \exp\big(-\frac{\|x-m_ty\|^2}{2\wt \sigma_t^2}\big)\\
        &\qquad\cdot  1\left(\operatorname{dist}(x, \m M) \geq c_0\wt\sigma_{t_i}\sqrt{\log n}, \|x\|>c_1\sqrt{\log n}\right)\,\dd x\Big]\\
      & \leq \frac{1}{(2\pi\wt \sigma_t^2)^{\frac{D}{2}}}  \sum_{k=1}^K \omega_k\cdot\mb{E}_{y\sim Q_k}\Big[\int_{\mb R^D} \frac{\|x\|+\sqrt{D}}{\wt\sigma_t^2}  \exp\big(-\frac{\|x-m_ty\|^2}{2\wt \sigma_t^2}\big) \cdot  1\left(\|x\|> c_1\sqrt{\log n}\right)\,\dd x\Big]\\
        &\leq \frac{1}{n^2}.
    \end{aligned}
\end{equation*}
Moreover, for large enough $c_0$, we have 
\begin{equation*}
    \begin{aligned}
    & \frac{1}{(2\pi\wt \sigma_t^2)^{\frac{D}{2}}}  \sum_{k=1}^K \omega_k\cdot\mb{E}_{y\sim Q_k}\Big[\int_{\mb R^D} \frac{\|x\|+\sqrt{D}}{\wt\sigma_t^2}  \exp\big(-\frac{\|x-m_ty\|^2}{2\wt \sigma_t^2}\big)\\
        &\qquad\cdot  1\left(\operatorname{dist}(x, \m M) \geq c_0\wt\sigma_{t_i}\sqrt{\log n}, \|x\|\leq c_1\sqrt{\log n}\right)\,\dd x\Big]\\
         &\lesssim \frac{c_1\sqrt{\log n}+D}{\wt\sigma_t^2}\frac{1}{(2\pi\wt\sigma_t^2)^{\frac{D}{2}}}\cdot \exp\big(-\frac{c_0^2\wt\sigma_{t_i}^2}{4\wt\sigma_t^2}\log n) \int_{\mb R^{D}}  1\left( \|x\|\leq c_1\sqrt{\log n}\right)\, \dd x\\
         &\leq \frac{1}{n^2}.
    \end{aligned}
\end{equation*}
Therefore, for sufficiently large $c_0$, we have
\begin{equation*}
            \begin{aligned}
 \int\left\|\nabla \log p_t(x)\right\|^2 p_t(x) \cdot 1\left(\operatorname{dist}(x, \m M) \geq c_0\wt\sigma_{t_i}\sqrt{\log n}\right) \,\dd x \leq  c_1\frac{1}{n^2}.
            \end{aligned}
        \end{equation*}
Similarly, we can show 
\begin{equation*}
            \begin{aligned}
 &\int\left\| S(x,t)\right\|^2 p_t(x) \cdot 1\left(\operatorname{dist}(x, \m M) \geq c_0\wt\sigma_{t_i}\sqrt{\log n}\right) \,\dd x\\
 &\leq  \int c^2 \frac{\log n}{\wt\sigma_t^2} p_t(x) \cdot 1\left(\operatorname{dist}(x, \m M) \geq c_0\wt\sigma_{t_i}\sqrt{\log n}\right) \,\dd x\leq c^2 c_1\frac{1}{n^2}.
        \end{aligned}
        \end{equation*}
The first statement  is then proved. For the second statement, denote $\kappa: \mb R^{D_x}\to [K]$ so that $\kappa(x)\in\arg\min_{k\in [K]}{\rm dist}(x,\m M_k)$, and denote
${\rm Proj}_{\m M_k}(x)$ as any point inside ${\arg\min}_{y\in \m M_k}\|x-y\|$. Then for any $x\in \mb R^D$ with $\operatorname{dist}(x, \m M) \leq c_0\wt\sigma_{t_i}\sqrt{\log n}$, we have 
\begin{equation*}
    \begin{aligned}
        (2\pi\wt\sigma_t^2)^{\frac{D}{2}}p_t(x)&\geq
        \omega_{\kappa(x)}\cdot        \int_{y\in \mb B_{\wt\sigma_t}({\rm Proj}_{\m M_{\kappa(x)}}(x))\cap \m M_{\kappa(x)}}\exp\bigg(-\frac{\|x-m_ty\|^2}{2\wt \sigma_t^2}\bigg)  q_{\kappa(x)}(y)\,\dd y\\
        &\geq C \exp(-\frac{(c_0\wt \sigma_{t_i}\sqrt{\log n}+\wt\sigma_t+(1-m_t))^2}{2\wt\sigma_t^2})\wt\sigma_t^d\\
        &\geq n^{-c_2}.
    \end{aligned}
\end{equation*}
Therefore, there exists a constant $c_0'$ so that for any $x\in \mb R^D$ with $\operatorname{dist}(x, \m M) \leq c_0\sigma_{t_i}\sqrt{\log n}$,
\begin{equation*}
\begin{aligned}
    \|\nabla \log p_t(x)\|&\leq  \left\|\frac{ \sum_{k=1}^K \omega_k\cdot\mb{E}_{y\sim Q_k}\Big[\exp\big(-\frac{\|x-m_ty\|^2}{2\wt \sigma_t^2}\big)\cdot \Big(-\frac{x-m_ty}{\wt \sigma_t^2}\Big)\cdot \mathbf{1}\left(\|x-m_ty\|\leq c_0'\wt\sigma_t\sqrt{\log n}\right)\Big]
    }{ \sum_{k=1}^K \omega_k\cdot\mb{E}_{y\sim Q_k}\Big[\exp\big(-\frac{\|x-m_ty\|^2}{2\wt \sigma_t^2}\big)\cdot \mathbf{1}\left(\|x-m_ty\|\leq c_0'\wt\sigma_t\sqrt{\log n}\right)\Big]}\right\| +\frac{1}{n}\\
    &\lesssim \frac{\sqrt{\log n}}{\wt\sigma_t}\asymp \frac{\sqrt{\log n}}{\wt\sigma_{t_i}}.
    \end{aligned}
\end{equation*}
We can then get the desired statement by combining all pieces.

\section{Proofs for Section 5}

\subsection{Proof of Theorem~\ref{th:normalalign}}

Let $m_t=\exp(-\int_0^t\beta_sds)$ and $\sigma_t^2=1-m_t^2$ and note that $\sigma_t \to 0$ as $t\to 0^+$. The marginal density at a particular time $t >0$ can be written as
$$p_t(x) = \int_{\mathcal{S}} \mathcal{N}(x;m_tx_0,\sigma_t^2I)dP_*(x_0) = \int_{\m S} \phi_{\sigma_t}(x-m_tx_0)dP_*(x_0).$$
Hence, by differentiating under the integral,
$$\nabla p_t(x) = \int_{\m S} -\frac{x-m_tx_0}{\sigma_t^2} \phi_{\sigma_t}(x-m_tx_0)dP_*(x_0)$$
Thus 
\begin{align*}
    \nabla \log p_t(x) &= \frac{ \nabla p_t(x)}{p_t(x)}=\frac{\int -(\frac{x-m_tx_0}{\sigma_t^2})\phi_{\sigma_t}(x-m_tx_0)dP_*(x_0)}{\int \phi_{\sigma_t}(x-m_tx_0)dP_*(x_0)}\\
    &=-\frac{1}{\sigma_t^2}\Big(x-m_t\frac{\int x_0\phi_{\sigma_t}(x-m_tx_0)dP_*(x_0)}{\int \phi_{\sigma_t}(x-m_tx_0)dP_*(x_0)}\Big)
\end{align*}
First consider the case where $\m S=M_\ell$. Let $n_t(x)$ be the expression in parentheses above. By assumption $x$ is sufficiently close to $\m S$, so $x$ is in a tubular region of $M_\ell$ where $\pi_\ell$ is unique.  Recall Laplace's method, which states that if $f$ and $g$ are smooth functions and $f$ achieves a unique minimum at a point $y_*$ in the interior of a region $U \subset \mathbb{R}^d$, then
$$\int_U g(y)\exp(-Mf(y))\,dx \approx \Big(\frac{2\pi}{M}\Big)^{d/2} \frac{g(y_*)\exp(-Mf(y_*))}{|\text{Hess}_f(y_*)|^{1/2}} \quad \text{as} \quad M \to \infty.$$
One extends this to manifolds by using the intrinsic Hessian. We can hence express the integrals in our setting as $t\to 0$ by
$$\int_{M_\ell} g(y)q_\ell(y)(2\pi \sigma_t^2)^{-D/2}\exp(-f_t(y)/\sigma_t^2)d\text{vol}_{M_\ell}(y) \approx (2\pi\sigma_t^2)^{-(D-d_\ell)/2}\frac{g(y_{*,t})q_\ell(y_{*,t})\exp(-f_t(y_{*,t})/\sigma_t^2)}{\sqrt{\det(\text{Hess}_{M_\ell}f_t(y_{*,t}))}},$$
where $q_\ell$ is the density of $Q_\ell$ with respect to the volume measure on $M_\ell$, $f_t(y)=\frac{1}{2}\|x-m_ty\|^2$, and at the minimizer $y_{*,t}$, the intrinsic Hessian is positive definite on $T_{y_{*,t}}M$.

For simplicity in the general stratified case, we combine the geometric and local density terms in the expression above into the function
$$G_{\ell}(x,t) = (2\pi\sigma_t^2)^{-(D-d_\ell)/2}\frac{q_\ell(y_{*,t})\exp(-f_t(y_{*,t})/\sigma_t^2)}{\sqrt{\det(\text{Hess}_{M_\ell}f_t(y_{*,t}))}}$$

Note that if $y_*=\arg \min_y \|x-y\|^2$, then $$y_{*,t} = \arg\min_y \|x-m_ty\|^2= \arg \min_y \left\|x/m_t-y\right\|^2 = \pi_\ell(x/m_t).$$ For all sufficiently small $t$, $x/m_t$ lies in a tubular neighborhood of $M_\ell$ where $\pi_\ell$ is uniquely defined. Hence, there is a unique minimizer of $m_tM_\ell$ given by $m_ty_{*,t}=\pi_{\ell,t}(x):=m_t\pi_\ell(x/m_t)$. The integrals in the expression for $n_t(x)$ only differ in the amplitude $g$, so, after applying Laplace's method to the numerator and denominator and canceling terms,
$$n_t(x) \approx x-m_ty_{*,t} = x-\pi_{\ell,t}(x) \quad \text{ as } t\to 0^+.$$

Since $m_t \to 1$ as $t\to 0^+$ and the projection map is continuous in a tubular neighborhood of $M_\ell$, we have that $m_ty_{*,t} \to y_*$. Thus, as $t \to 0^+$,
$$\frac{\nabla \log p_t(x)}{\|\nabla \log p_t(x)\|} = \frac{-n_t(x)}{\|n_t(x)\|} \to \frac{\pi_\ell(x)-x}{\|\pi_\ell(x)-x\|}.$$

\paragraph{General case.} Now recall that for a stratified space we obtain a mixture of the form
$$p_t(x) = \int_{\m S} \phi_{\sigma_t}(x-m_tx_0)dP_*(x_0) = \sum_{k=1}^K \omega_k\int_{M_k} \phi_{\sigma_t}(x-m_tx_0)dQ_k(x_0),$$
and
$$\nabla p_t(x) = \sum_{k=1}^K \omega_k \int_{M_k} -\frac{x-m_tx_0}{\sigma_t^2}\phi_{\sigma_t}(x-m_tx_0)dQ_k(x_0).$$
The score is then realized as
\begin{align*}
    \nabla \log p_t(x) &= \frac{\nabla p_t(x)}{p_t(x)}= \sum_{k=1}^K \omega_k \frac{\int_{M_k} -(\frac{x-m_tx_0}{\sigma_t^2})\phi_{\sigma_t}(x-m_tx_0)dQ_k(x_0)}{\sum_{\ell=1}^K \omega_\ell\int_{M_\ell}\phi_{\sigma_t}(x-m_tx_0)dQ_\ell(x_0)}\\
    &=-\frac{1}{\sigma_t^2}\left(x-m_t\sum_{k=1}^K \omega_k\frac{\int_{M_k} x_0\phi_{\sigma_t}(x-m_tx_0)dQ_k(x_0)}{\sum_{\ell=1}^K \omega_\ell\int_{M_\ell}\phi_{\sigma_t}(x-m_tx_0)dQ_\ell(x_0)}\right)
\end{align*}

Note that we need only consider the indices in the set $$L(x)=\{k\in \{1,...,K\}: \operatorname{dist}(x,M_k) = \operatorname{dist}(x,\m S)\}.$$ Indeed, if $k \not\in L(x)$, then $\Delta=\text{dist}(x,M_k)^2-\text{dist}(x,\m S)^2>0$. By compactness of $M_k$ and $\m S$ and the fact that $m_t \to 1$ as $t\to 0$, for sufficiently small $t>0$ we have $$\text{dist}(x,m_tM_k)^2 \geq \text{dist}(x,m_t\m S)^2 + \frac{\Delta}{2}.$$
Define $$I_k(t)=\int_{M_k} \phi_{\sigma_t}(x-m_tx_0)dQ_k(x_0),\quad J_k(t)=\int_{M_k}x_0\phi_{\sigma_t}(x-m_tx_0)dQ_k(x_0).$$
Then,
$$I_k(t)\lesssim \sigma_t^{-D}\exp\left( -\frac{\operatorname{dist}(x,m_tM_k)^2}{2\sigma_t^2}\right) \leq \sigma_t^{-D}\exp\left(-\frac{\operatorname{dist}(x,m_t\m S)^2}{2\sigma_t^2}\right)\exp\left(-\frac{\Delta}{4\sigma_t^2}\right).$$
On the other hand, for $\ell \in L(x)$, we have by Laplace's method that, for small $t$,
$$I_\ell(t) \gtrsim \sigma_t^{-(D-d_\ell)}\exp\left(-\frac{\operatorname{dist}(x,m_t\m S)^2}{2\sigma_t^2}\right).$$
Hence $$\frac{I_k}{I_\ell}\lesssim \sigma_t^{-d_\ell}\exp\left(-\frac{\Delta}{4\sigma_t^2}\right) \to 0.$$
This holds similarly for $J_k$ in place of $I_k$ above since $\|x_0\|$ is bounded uniformly by compactness, so it only contributes to the constant.

Applying Laplace's method to every integral for indices in $L(x)$, we have that
\begin{align*}
    n_t(x) &\approx x-m_t\sum_{k \in L(x)} \omega_k\frac{G_k(x,t)y_{k,*,t}}{\sum_{\ell \in L(x)} \omega_\ell G_\ell(x,t)}\\
    &=x-\sum_{k\in L(x)} \gamma_{k,t}\cdot m_ty_{k,*,t}\\
    &=\sum_{k\in L(x)} \gamma_{k,t}(x-m_ty_{k,*,t}),
\end{align*}
where $$\gamma_{k,t}=\frac{\omega_kG_k(x,t)}{\sum_{\ell \in L(x)} \omega_\ell G_\ell(x,t)}.$$
Then observe that as $t\to 0$,
$$\frac{G_k(x,t)}{G_\ell(x,t)} \asymp  \sigma_t^{d_k - d_\ell}\exp\left(-\frac{\operatorname{dist}(x,m_tM_k)^2-\operatorname{dist}(x,m_tM_\ell)^2}{2\sigma_t^2}\right).$$
Note that for $k,\ell \in L(x)$, we have that $\operatorname{dist}(x,M_k)=\operatorname{dist}(x,M_\ell)=\operatorname{dist}(x,\m S)$. Hence, by a first order expansion, $$\operatorname{dist}(x,m_tM_k)^2=\operatorname{dist}(x,\m S)^2 + O(1-m_t),$$
$$\operatorname{dist}(x,m_tM_\ell)^2=\operatorname{dist}(x,\m S)^2 + O(1-m_t);$$
and so $$\operatorname{dist}(x,m_tM_k)^2-\operatorname{dist}(x,m_tM_\ell)^2 = O(1-m_t)=O(\sigma_t^2).$$
This implies that the exponential term in the quotient above is bounded, so that $G_k/G_\ell \to 0$ as $\sigma_t^{d_k-d_\ell} \to 0$, which is the case if $d_k > d_\ell$. Hence as $t\to 0$, assuming that stratum $k_*$ has unique minimal dimension,
$$\gamma_{k_*,t}(x)\to 1,\quad \gamma_{k,t}(x) \to 0\quad \text{for $k \neq k_*$}.$$ 
Similar to the manifold case, letting $t \to 0$ implies that
$$\frac{\nabla \log p_t(x)}{\|\nabla \log p_t(x)\|} \to \frac{\pi_{k_*}(x)-x}{\|\pi_{k_*}(x)-x\|}.$$ $\hfill{\Box}$

\subsection{Proof of Lemma~\ref{lm:spectral_gap}}
Let $Y$ be a $d$-dimensional manifold and $x_0 \in Y$. Diffused points are of the form $x_t = m_tx_0 + \sigma_tz$, $z \sim \mathcal{N}(0,I_D)$. \cite{liu2025improving} expand the score function of a VE-SDE into a normal part with bounded remainder for points off the manifold within a small tubular neighborhood. In our VP setting,
$$p_t(x)=\int_Y \phi_{\sigma_t}(x-m_ty)p_*(y)d\operatorname{vol}_Y(y)=\int_{m_tY} \phi_{\sigma_t}(x-z)m_t^{-d}p_*(z/m_t)d\operatorname{vol}_{m_tY}(z),$$
where the last equality follows from reparameterization. Hence, $p_t$ is a Gaussian smoothing of the scaled manifold $m_tY$ and the VP-SDE analogue of Theorem 3.1 in \cite{liu2025improving} is:
$$\nabla \log p_t(x)=-\frac{x-\pi_t(x)}{\sigma_t^2} +O(1),$$
where $\pi_t(x)= m_t\pi(x/m_t)$ is the projection onto $m_tY$ and $x \notin m_tY$ is in a small tubular neighborhood of $m_tY$. For $x_t=m_tx_0+\sigma_tz$ in a tubular neighborhood of $m_tY$, we have that $$x_t-\pi_t(x_t)= (m_tx_0+\sigma_t z)-m_t\pi(x_0+(\sigma_t/m_t)z).$$
By applying a Taylor expansion to $\pi$, we have that
$$\pi(x_0+(\sigma_t/m_t)z)=x_0 + (\sigma_t/m_t)P_Tz + O(\sigma_t^2\|z\|^2),$$
where $P_T$ is projection onto the tangent space of of $Y$ at $x_0$. Substitution yields
$$x_t-\pi_t(x_t)= \sigma_t z - \sigma_tP_Tz + O(\sigma_t^2\|z\|^2)=\sigma_tP_Nz + O(\sigma_t^2\|z\|^2),$$
where $P_N=I-P_T$ is the projection onto the normal space of $Y$ at $x_0$.
Hence, we deduce that
$$\nabla \log p_t(x_t)=-\frac{1}{\sigma_t}P_Nz + R_t, \quad \text{where}\quad \sup_{t \in [t_{\operatorname{start}},t_{\operatorname{end}}]} \mb{E}[\|R_t\|^2 \mid t]\leq C$$
for $x_t \not\in m_tY$ in a sufficiently small tubular neighborhood of $m_tY$. Then
$$\mb{E}[\nabla \log p_t(x)\nabla \log p_t(x)^\top \mid t] = \frac{1}{\sigma_t^2}P_N-\frac{1}{\sigma_t}\mb{E}[P_NzR_t^\top + R_tz^\top P_N \mid t] + \mb{E}[R_t R_t^\top \mid t].$$
By Cauchy-Schwarz, 
$$\|\mb{E}[P_NzR_t^\top \mid t]\|_{\operatorname{op}} \leq \mb{E}[\|P_Nz\|^2]^{1/2}\mb{E}[\|R_t\|^2 \mid t]^{1/2}\leq \mb{E}[\|P_Nz\|^2]^{1/2}\sqrt{C} \leq \sqrt{D-d}\sqrt{C}=O(1),$$
where we used the fact that $\mb{E}[\|P_Nz\|^2]=\operatorname{trace}(P_N)=D-d$. We also have that
$$\|\mb{E}[R_t R_t^\top \mid t]\|_{\operatorname{op}} \leq \mb{E}[\|R_t\|^2|t]\leq C.$$

Thus,
$$\mb{E}[\nabla \log p_t(X) \nabla \log p_t(X)^\top|t]=\frac{1}{\sigma_t^2}P_N + O(\sigma_t^{-1}).$$
Now if we average over $t \in [t_{\text{start}},t_{\text{end}}]$, we have that
$$\Sigma(x_0)=\mb{E}\left[\frac{1}{\sigma_t^2}\right]P_N + O(\mb{E}[\sigma_t^{-1}]).$$
Since $Y$ is $d$-dimensional, the eigenvalues of $P_N$ consist of $D-d$ ones and $d$ zeros. Arranging them in descending order implies that the eigenvalues $\mu_k$ of $\Sigma(x_0)$ satisfy:
$$\mu_k = \mathbb{E}[1/\sigma_t^2] + O(\mb{E}[\sigma_t^{-1}]), \quad k \leq D-d;$$
$$\mu_k = O(\mb{E}[\sigma_t^{-1}]), \quad k >D-d.$$

Let $g=\mb{E}[1/\sigma_t^2]$ and $h=\mb{E}[\sigma_t^{-1}]$. Then we can write the eigenvalues as
$$\mu_k=\begin{cases}
    g+O(h),& k\leq D-d;\\
    O(h),& k > D-d.
\end{cases}$$
For $k < D-d$,
$$\mu_k - \mu_{k+1}=(g+O(h))-(g+O(h))=O(h).$$
For $k>D-d$, we have
$$\mu_k-\mu_{k+1}=O(h)$$
Lastly, for $k=D-d$,
$$\mu_{D-d}-\mu_{D-d+1}=(g+O(h))-O(h)= \Theta(g)$$
because $h=o(g)$ by Jensen's inequality. Since $g=\mb{E}[1/\sigma_t^2]\geq 1/\sigma_{t_{\text{end}}}^2\to \infty$ as $t_{\text{end}} \to 0$, the gap between indices $D-d$ and $D-d+1$ will dominate all other gaps for sufficiently small $ t_{\text{end}}$.

$\hfill{\Box}$

\subsection{Proof of Theorem~\ref{th:consistency}}
Let $x_0 \in M_\ell \subseteq S \setminus S_{\text{sing}}$, $\delta = d(x_0,S_{\text{sing}})>0$. Note that $t_2^{(n)} \to 0$ implies that $\sigma_{t_2^{(n)}}\lesssim \delta/2$ for sufficiently large $n$, making other distant strata have negligible effect on the score, allowing us to localize on $M_\ell$ and apply Lemma~\ref{lm:spectral_gap} (see proof of Lemma~\ref{th:normalalign} for justification). Let $n \in \mathbb{N}$ and suppose that 
$$\mb{E}_{X_{1:n}\sim P_{*}^{\otimes n}}\bigg[ \frac{1}{\Delta t_n}  \int_{t_{1}^{(n)}}^{t^{(n)}_{2}} \int_{\mb R^{D}}   \|\wh S_n(x,t)-\nabla \log p_t(x)\|^2 p_t(x)\,\dd x\dd t\bigg] \leq \epsilon_n,$$
where $\Delta t_n = t_2^{(n)}-t_1^{(n)}$. Note that $\wh{S}_n$ is trained on an independent sample $X_{1:n} \sim P_*^{\otimes n}$, while the test points $\{(x_i,t_i)\}_{i=1}^{N_n}$ are drawn independently from the training data.

Let $R_n\geq 1$. For any $t>0$, consider the sets $B_t=\{x:||x-m_tx_0||\leq R_n\sigma_t\}$ and $U_t = B(x_0,\frac{1}{R_n}\sigma_t)\cap M_\ell$. We get a lower bound on the marginal distribution $p_t$ via concentration on $U_t$:
$$p_t(x) = \int_{\mathbb{R}^D} p_t(x|x')P_*(dx') \geq \int_{U_t} p_t(x|x')P_*(dx') $$
Recall that $p_t(x|y)=\m N(m_ty,\sigma_t^2I_D)$. Hence, for $x \in B_t$ and $x' \in U_t$, 
\begin{align*}
    \frac{p_t(x|x_0)}{p_t(x|x')} &=\exp\Big(\frac{\|x-m_tx'\|^2-\|x-m_tx_0\|^2}{2\sigma_t^2}\Big)\\
    &= \exp \Big(\frac{m_t^2\|x'-x_0\|^2 - 2m_t\langle x-m_tx_0,x'-x_0 \rangle}{2\sigma_t^2}\Big)\\
    & \leq \exp\Big( \frac{m_t^2R_n^{-2}+2m_tR_nR_n^{-1}}{2}\Big)\\
    & \leq \exp \left(\frac{R_n^{-2}+2}{2}\right)
\end{align*}
This implies that $$p_t(x|x_0) \leq c'p_t(x|x')$$
for some constant $c'\leq e^{3/2}$ since $R_n \geq 1$. Then, since $P_*$ is bounded from below on $M_\ell$, we have that $P_*(U_t) \gtrsim R_n^{-d_\ell}\sigma_t^{d_\ell}$. Putting everything together and combining constants into $\tilde{C}$ with the fact that $\sigma_t$ is monotonic in $t$ gives that over the interval $[t_1^{(n)},t_2^{(n)}]$,
$$p_t(x|x_0) \leq \tilde{C}R_n^{d_\ell}\sigma_t^{-d_\ell}p_t(x) \leq \tilde{C}R_n^{d_\ell}\sigma_{t^{(n)}_{1}}^{-d_\ell}p_t(x)$$
On $B_t$, we can use the upper bound on score approximation to get 
$$\mb{E}_{X_{1:n}\sim P_{*}^{\otimes n}}\Big[\frac{1}{\Delta t_n}\int_{t_{1}^{(n)}}^{t_{2}^{(n)}}\int_{B_t} ||\hat{S}(x,t)-\nabla \log p_t(x)||^2p_t(x|x_0)dxdt\Big] \leq \tilde{C}R_n^{d_\ell}\sigma_{t_1^{(n)}}^{-d_\ell}\epsilon_n.$$

\noindent For a bound on $B_t^c$ we use the following lemma.
\begin{lemma}\label{lm:tail_bound}
    Suppose $\|\wh{S}_n\|_\infty \leq V_n$ and $\operatorname{supp } P_*\subseteq B(0,L)$. Then for $X \sim p_t(\cdot|x_0)$,
    $$\mathbb{E}[\|\hat{S}_n(x,t)-\nabla \log p_t(x)\|^2\mathbf{1}_{B_t^c}] \lesssim \Big(V_n^2 + \frac{L^2}{\sigma_t^4} + \frac{R^2+D}{\sigma_t^2}\Big)e^{-(R-\sqrt{D})^2/2} $$
\end{lemma}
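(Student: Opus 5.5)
We split the error $\|\hat S_n-\nabla\log p_t\|^2\le 2\|\hat S_n\|^2+2\|\nabla\log p_t\|^2$ and bound the two pieces separately on $B_t^c$. The first piece is immediate: $\|\hat S_n(x,t)\|^2\le D V_n^2$. For the second, since $\operatorname{supp}P_*\subseteq B(0,L)$, the score is a posterior mean (Tweedie's formula):
\[
\nabla\log p_t(x)=-\frac{x-m_t\,\mb E[X_0\mid X_t=x]}{\sigma_t^2}.
\]
Because $\|m_t\,\mb E[X_0\mid X_t=x]\|\le L$, this gives the pointwise bound
\[
\|\nabla\log p_t(x)\|\le \frac{\|x-m_tx_0\|+2L}{\sigma_t^2}.
\]
Hence
\[
\|\hat S_n-\nabla\log p_t\|^2\lesssim V_n^2+\frac{L^2}{\sigma_t^4}+\frac{\|x-m_tx_0\|^2}{\sigma_t^4}.
\]

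Next we write $X=m_tx_0+\sigma_t Z$ with $Z\sim\m N(0,I_D)$, so that $B_t^c=\{\|Z\|>R\}$ (here $R=R_n$). The remaining term becomes $\|x-m_tx_0\|^2/\sigma_t^4=\|Z\|^2/\sigma_t^2$. The task therefore reduces to two Gaussian tail estimates:
\[
\mb P(\|Z\|>R)\le e^{-(R-\sqrt D)^2/2},\qquad \mb E\big[\|Z\|^2\mathbf 1\{\|Z\|>R\}\big]\lesssim (R^2+D)\,e^{-(R-\sqrt D)^2/2},
\]
both for $R\ge\sqrt D$.

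The first estimate follows from Gaussian concentration of the $1$-Lipschitz map $z\mapsto\|z\|$ around its mean, which is at most $\sqrt D$. For the second, we use the layer-cake formula
\[
\mb E\big[\|Z\|^2\mathbf 1\{\|Z\|>R\}\big]=R^2\,\mb P(\|Z\|>R)+\int_R^\infty 2s\,\mb P(\|Z\|>s)\,ds,
\]
and bound the integral by $\int_R^\infty 2s\,e^{-(s-\sqrt D)^2/2}\,ds$. Substituting $u=s-\sqrt D$, this is at most $2e^{-(R-\sqrt D)^2/2}+2\sqrt D\sqrt{2\pi}\,e^{-(R-\sqrt D)^2/2}$, using the Mills-ratio bound $\int_a^\infty e^{-u^2/2}\,du\le\sqrt{2\pi}\,e^{-a^2/2}$, valid for $a\ge0$. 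This is $\lesssim (1+\sqrt D)\,e^{-(R-\sqrt D)^2/2}$, which is absorbed into $(R^2+D)\,e^{-(R-\sqrt D)^2/2}$.

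Combining the three pieces gives the stated bound. The only delicate step is the second-moment tail integral, which carries the polynomial prefactor $R^2+D$; everything else is routine. If $R<\sqrt D$, the claimed bound is trivial after adjusting constants, since the exponential factor is then at least $e^{-D/2}$ and the unrestricted second moment is already bounded by the prefactor.
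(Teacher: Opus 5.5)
Your proof is correct and follows the paper's argument: the same split $2\|\hat S_n\|^2+2\|\nabla\log p_t\|^2$, the same Tweedie bound using $\|m_t\mathbb{E}[X_0\mid X_t=x]\|\le L$ and $\|x_0\|\le L$, and the same reparametrization $X=m_tx_0+\sigma_t Z$, which reduces everything to Gaussian tail estimates on $\{\|Z\|>R\}$. Beyond the paper, you justify the two tail bounds (via Lipschitz concentration and the layer-cake/Mills-ratio computation) and handle the case $R<\sqrt{D}$, both of which the paper only asserts.
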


Now let $T_n(R_n)$ denote the expectation bound from the lemma with $R=R_n$. Then
$$\mb{E}\left[ \frac{1}{\Delta t_n} \int_{t_{1}^{(n)}}^{t_{2}^{(n)}}\int_{\mb{R}^D} \|\hat{S}(x,t)-\nabla \log p_t(x)\|^2p_t(x|x_0)dxdt\right] \leq \tilde{C}R_n^{d_\ell}\sigma_{t_{1}^{(n)}}^{-d_\ell}\epsilon_n + T_n(R_n).$$
Let $E_n$ be the term inside the expectation and let $\rho_n=\tilde{C}R_n^{d_\ell}\sigma_{t_{1}^{(n)}}^{-d_\ell}\epsilon_n + T_n(R_n)$. Choose $R_n \to \infty$ such that $T_n(R_n) \leq \eta_n \to 0$ for some decreasing sequence $\eta_n$. Note that this also affects the size of $B_t$, but one can take $R_n$ to grow slowly, such as $R_n = O(\sqrt{\log n})$ assuming mild conditions such as polynomial growth of $V_n$, $\sigma_{t_1^{(n)}}^{-1}$ and $1/\eta_n$. This means that $R_n^{d_\ell}=O((\log n)^{d_\ell/2})=o(n^\gamma)$ for any $\gamma >0$. We then have that $T_n(R_n)\lesssim \eta_n \to 0$, and by our assumption that $n^\eta\sigma_{t_{1}^{(n)}}^{-d_\ell}\epsilon_n \to 0$, we have $\rho_n\to 0$ as $n \to \infty$ by choosing any $\gamma<\eta$.  \\

The term $E_n$ depends on $X_{1:n}$, so we define the event $\m A_n=\{E_n(X_{1:n}) \leq \rho_n^{1/2}\}$. Then by Markov's inequality, $\mb{P}(\m A_n^c) \leq \rho_n^{1/2}\to 0$ as $n \to \infty$. Let $\Sigma_n=\Sigma(x_0)$ as in Lemma~\ref{lm:spectral_gap}, i.e. $$\Sigma_n=\mb{E}_{t \sim \operatorname{Uniform}[t_1^{(n)},t_2^{(n)}], X \sim p_t(\cdot|x_0)}[\nabla \log p_t(X)\nabla \log p_t(X)^\top]$$
and define
$$\hat{\Sigma}_n=\frac{1}{N_n}\sum_{i=1}^{N_n} \hat{S}_n(x_i,t_i)\hat{S}_n(x_i,t_i)^\top,$$
Letting $s_i=\nabla \log p_{t_i}(x_i)$ and $\hat{S}_i=\hat{S}_n(x_i,t_i)$, we have $$\hat{\Sigma}_n-\Sigma_n=\underbrace{\frac{1}{N_n}\sum_{i=1}^{N_n}\left[ \hat{S}_n(x_i,t_i)\hat{S}_n(x_i,t_i)^\top-s_is_i^\top\right]}_{(A)} + \underbrace{\frac{1}{N_n}\sum_{i=1}^{N_n}s_is_i^\top - \Sigma_n}_{(B)}.$$

\textbf{Bounding $(A)$:} Using the relation $\hat{S}_i\hat{S}_i^\top-s_is_i^\top=(\hat{S}_i-s_i)(\hat{S}_i-s_i)^\top + (\hat{S}_i-s_i)s_i^\top + s_i(\hat{S}_i-s_i)^\top$ implies that
$$\|\hat{S}_i\hat{S}_i^\top-s_is_i^\top\|_{\operatorname{op}}\leq \|\hat{S}_i-s_i\|^2 + 2\|\hat{S}_i-s_i\|\|s_i\|.$$
Taking expectation with respect to $t\sim \operatorname{Uniform}[t_1^{(n)},t_2^{(n)}]$ and $x_i \sim p_t(\cdot|x_0)$ and applying Cauchy-Schwarz,
$$\mb{E}\left[\left\|(A)\right\|_{\operatorname{op}}| X_{1:n}\right]\leq \mb{E}[\|\hat{S}_i-s_i\|^2|X_{1:n}] + 2\mb{E}[\|\hat{S}_i-s_i\|^2| X_{1:n}]^{1/2}\mb{E}[\|s_i\|^2]^{1/2}.$$
We have that $\mb{E}[\|\hat{S}_i-s_i\|^2\mid X_{1:n}] \leq \rho_n^{1/2}$ on $\m A_n$. For the second term, note that, by Jensen's inequality,
$$\mb{E}[\|s_i\|^2] \leq \mb{E}[\|s_i\|^4]^{1/2}.$$

\begin{lemma}\label{lm:fourth_moment}
    Let $x_0 \in M_\ell$ satisfy $\operatorname{dist}(x_0,\m S_{\operatorname{sing}})>\delta$. Then for sufficiently large $n$, $$\mb{E}[\|\nabla \log p_T(X)\|^4]\lesssim \sigma_{t_1^{(n)}}^{-4},$$
    where $T \sim \operatorname{Uniform}[t_1^{(n)},t_2^{(n)}]$ and $X|T=t\sim p_t(\cdot \mid x_0)$.
\end{lemma}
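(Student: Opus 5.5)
The plan is to bound the score pointwise along the forward trajectory started at $x_0$, using Tweedie's representation $\nabla \log p_t(x) = -\frac{1}{\sigma_t^2}\big(x - m_t\,\mb{E}[X_0\mid X_t=x]\big)$ (as in the proof of Theorem~\ref{th:normalalign}). For $X = m_t x_0 + \sigma_t Z$ with $Z\sim\m N(0,I_D)$ we obtain
\[
\sigma_t^2\,\nabla\log p_t(X) = -\sigma_t Z - m_t\big(x_0 - \mb{E}[X_0\mid X_t = X]\big),
\]
and therefore
\[
\|\nabla \log p_t(X)\|^4 \lesssim \sigma_t^{-4}\|Z\|^4 + \sigma_t^{-8}\,\|x_0 - \mb{E}[X_0\mid X_t=X]\|^4 .
\]
The first term has expectation $O(\sigma_t^{-4})$. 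For the second term it suffices to prove
\[
\mb{E}\,\|x_0-\mb{E}[X_0\mid X_t=X]\|^4 \lesssim \sigma_t^4 ,
\]
that is, the posterior mean lies within $O(\sigma_t)$ of $x_0$ in $L^4$. Since $\sigma_t \ge \sigma_{t_1^{(n)}}$ on $[t_1^{(n)},t_2^{(n)}]$, averaging over $T$ then gives the claim.

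To control the posterior mean, I split on the event $\{\|Z\|\le R\}$ with $R = C\sqrt{\log(1/\sigma_{t_1^{(n)}})}$. On the complement, $\|x_0-\mb{E}[X_0\mid X_t]\|\le 2L$ by compactness, so this part contributes at most $L^4 e^{-cR^2}\lesssim \sigma_t^{4}$ once $C$ is large. On $\{\|Z\|\le R\}$, $X$ lies within $R\sigma_t$ of $m_t x_0$. Since $t_2^{(n)}\to 0$, for large $n$ we have $R\sigma_t \ll \delta$, so the distance from $X$ to any stratum other than $M_\ell$ exceeds $\delta/2$, while $X$ is within $O(R\sigma_t)$ of $m_tM_\ell$. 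The posterior weight of the other strata is then at most $\sigma_t^{-d_\ell}e^{-c\delta^2/\sigma_t^2}$ relative to $M_\ell$, which is negligible (the same computation as the $I_k/I_\ell$ bound in the proof of Theorem~\ref{th:normalalign}). This localizes the problem to the single manifold $M_\ell$ near $x_0$.

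On $M_\ell$, I bound the posterior tail directly rather than through Laplace asymptotics, since I need a bound uniform in $\|Z\|\le R$. The density of $Q_\ell$ is bounded above and below, and $M_\ell$ has positive reach, so locally it is a graph over $T_{x_0}M_\ell$. The posterior density of $y$ is proportional to $\exp(-\|X-m_ty\|^2/2\sigma_t^2)\,q_\ell(y)$. Writing $\|X-m_ty\|^2 = \|\sigma_tZ - m_t(y-x_0)\|^2$ gives
\[
P\big(\|y-x_0\| > s\sigma_t \,\big|\, X\big) \lesssim \exp\big(-c s^2 + C' s\|Z\|\big)\,\Big/\,\Big(\textstyle\int_{\|y-x_0\|\le \sigma_t} \cdots\Big),
\]
where the denominator is $\gtrsim \sigma_t^{d_\ell} e^{-C\|Z\|^2}$. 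Hence $\mb{E}[\|y-x_0\|^4\mid X]\lesssim \sigma_t^4(1+\|Z\|)^{C''}$ up to factors $e^{C\|Z\|^2}$, and by Jensen the same bound holds for the fourth power of the posterior mean deviation.

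\textbf{Main obstacle.} The $e^{C\|Z\|^2}$ factor from the crude denominator bound is not integrable against the Gaussian when $C\ge 1/2$. To fix this, I would center the comparison at $\pi(X/m_t)$ instead of $x_0$, which is the approach used in the proof of Lemma~\ref{lm:spectral_gap}. Relative to that point, the normal component of $Z$ cancels exactly and only the tangential offset appears. The posterior is then approximately Gaussian of width $\sigma_t$ around $\pi(X/m_t)$, with deviation $O(\sigma_t)$ uniformly. Combined with $\|\pi(X/m_t)-x_0\|\lesssim \sigma_t\|Z\|$, this gives $\mb{E}\|x_0-\mb{E}[X_0\mid X]\|^4\lesssim\sigma_t^4\,\mb{E}(1+\|Z\|)^4$. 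Making this posterior concentration uniform in $t$ small and $\|Z\|\le R$ is the delicate step; for it I would invoke the local graph structure and the expansion $\nabla\log p_t = -(x-\pi_t(x))/\sigma_t^2 + O(1)$ already used in Lemma~\ref{lm:spectral_gap}.
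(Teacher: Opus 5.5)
Your argument closes, but only after you invoke the same input the paper uses, and at that point it is the paper's proof rewritten through Tweedie's formula.

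\textbf{The paper's route.} The paper splits on the event $\m A_t$ that $X_t$ lies in a fixed tubular neighborhood of $m_tM_\ell$, so that $\m A_t^c\subseteq\{\|Z\|>r/\sigma_t\}$. On $\m A_t$ it writes $\nabla\log p_t(X_t)=-(X_t-\pi_t(X_t))/\sigma_t^2+R_t$ with $\|R_t\|\le C$, taken from the proof of Lemma~\ref{lm:spectral_gap}. It then bounds the leading term by the nearest-point inequality $\|X_t-\pi_t(X_t)\|\le\|X_t-m_tx_0\|=\sigma_t\|Z\|$. On $\m A_t^c$, the crude Tweedie bound $\lesssim\sigma_t^{-2}+\|Z\|/\sigma_t$ is killed by the probability $e^{-a/\sigma_t^2}$.

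\textbf{How your version relates.} By Tweedie, $R_t=(m_t\mb{E}[X_0\mid X_t]-\pi_t(X_t))/\sigma_t^2$. So your posterior-mean decomposition is the same one, with the Lipschitz bound $\|\pi(X/m_t)-x_0\|\lesssim\sigma_t\|Z\|$ standing in for the nearest-point inequality. Your framing does buy two things.
\begin{itemize}
\item You only need $\|\mb{E}[X_0\mid X]-\pi(X/m_t)\|=O(\sigma_t)$. This is first-order posterior concentration at scale $\sigma_t$, which the local graph structure gives directly once the normal component of $Z$ cancels, as you note. By contrast, $\|R_t\|=O(1)$ is the second-order statement $O(\sigma_t^2)$, which Lemma~\ref{lm:spectral_gap} requires but the present lemma does not.
\item You make the localization away from the other strata explicit, which the paper leaves implicit.
\end{itemize}
Since you end up citing the $O(1)$ expansion anyway, your proof is not more elementary as written.

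\textbf{A step that fails as stated.} You use the $t$-independent radius $R=C\sqrt{\log(1/\sigma_{t_1^{(n)}})}$ and claim ``$R\sigma_t\ll\delta$ since $t_2^{(n)}\to0$.'' This needs $\sigma_{t_2^{(n)}}\sqrt{\log(1/\sigma_{t_1^{(n)}})}\to0$, which the hypotheses of Theorem~\ref{th:consistency} do not supply. There $\sigma_{t_1^{(n)}}^{-1}$ may grow polynomially, so $\log(1/\sigma_{t_1^{(n)}})\asymp\log n$, while $t_2^{(n)}\to0$ at an arbitrary rate. For $t$ near $t_2^{(n)}$, the event $\{\|Z\|\le R\}$ then lets $X$ sit at distance $R\sigma_t$, possibly unbounded, from $m_tx_0$. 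That region is not contained in any tubular neighborhood of $m_tM_\ell$ separated from the other strata, so neither your localization nor the expansion applies there.

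\textbf{The fix.} Let the radius depend on $t$, for example $R_t=C\sqrt{\log(1/\sigma_t)}$.
\begin{itemize}
\item The tail contributes $\sigma_t^{-8}e^{-cR_t^2}=\sigma_t^{cC^2-8}\lesssim\sigma_t^{-4}$ once $cC^2\ge4$.
\item $R_t\sigma_t\to0$ uniformly on $[t_1^{(n)},t_2^{(n)}]$.
\item The posterior weight of the other strata is at most $\sigma_t^{-d_\ell-C^2}e^{-c/\sigma_t^2}\to0$.
\end{itemize}
Alternatively, truncate at $\|Z\|\le r/\sigma_t$ as the paper does.

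\textbf{A minor correction.} The separation from the other strata is $c_\delta=\min_{k\neq\ell}\operatorname{dist}(x_0,M_k)>0$, which is positive by compactness since $x_0\notin M_k$. It is not $\delta/2$, and at a shallow transversal crossing it can be much smaller than $\delta$.
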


By Lemma~\ref{lm:fourth_moment}, we hence have that $\mb{E}[\|s_i\|^2]^{1/2} \lesssim \sigma_{t_1^{(n)}}^{-1}$. Thus
$$2\mb{E}[\|\hat{S}_i-s_i\|^2\mid X_{1:n}]^{1/2}\mb{E}[\|s_i\|^2]^{1/2} \leq 2\left(\rho_n^{1/2}\right)^{1/2}\cdot O(\sigma_{t_1^{(n)}}^{-1})=O\left( \rho_n^{1/4}\sigma_{t_1^{(n)}}^{-1}\right).$$
Let $\zeta_n$ be the upper bound on the total expectation above and note that since $\rho_n \to 0$, we have that $\zeta_n=O(\rho_n^{1/2}+\rho_n^{1/4}\sigma_{t_1^{(n)}}^{-1})=o(\sigma_{t_1^{(n)}}^{-2})$. For any fixed $c >0$, we have by Markov that
$$\mb{P}\left(\left\|(A)\right\|_{\operatorname{op}} >c\sigma_{t_1^{(n)}}^{-2} \mid X_{1:n}\right) \leq\frac{\zeta_n}{c\sigma_{t_1^{(n)}}^{-2}} \to 0.$$

\textbf{Bounding $(B)$:}
Note that $\|(B)\|_{\operatorname{op}}\leq \|(B)\|_F$ and denote $Y_{n,i}=s_is_i^\top-\Sigma_n$. Then, conditioned on the sample $X_{1:n}$ used to train $\hat{S}_n$, $Y_{n,i}$ are i.i.d., and hence
$$\mb{E}[\|(B)\|_{\operatorname{op}}^2\mid X_{1:n}] \leq \mb{E}[\|(B)\|_F^2\mid X_{1:n}]=\frac{1}{N_n}\mb{E}[\|Y_{n,1}\|_F^2]\leq \frac{1}{N_n}\mb{E}[\|s_i\|^4].$$

From Lemma~\ref{lm:fourth_moment}, we have that
$\mb E[\|s_i\|^4] \lesssim \sigma_{t_1^{(n)}}^{-4}$, so $$\mb E[\|(B)\|^2_{\operatorname{op}}\mid X_{1:n}] \lesssim \frac{1}{N_n}\sigma_{t_1^{(n)}}^{-4}.$$
Hence, by Chebyshev,
$$\mb P(\|(B)\|_{\operatorname{op}}>C\sigma_{t_1^{(n)}}^{-2}) \lesssim \frac{1}{C^2N_n} \to 0$$
as $N_n \to \infty$.

Let $\hat{\mu}_1 \geq \cdots \geq \hat{\mu}_D$ be the eigenvalues of $\hat{\Sigma}_n$ and $\mu_1 \geq \cdots \geq \mu_D$ be the eigenvalues of $\Sigma_n$. By Weyl's inequality,
$$|\hat{\mu}_k-\mu_k|\leq \|\hat{\Sigma}_n-\Sigma_n\|_{\operatorname{op}}.$$
By Lemma~\ref{lm:spectral_gap}, for sufficiently small $t_2^{(n)}$, the largest spectral gap $\gamma_n \asymp \sigma_{t_1^{(n)}}^{-2}$ of $\Sigma_n$ occurs between indices $D-d_\ell$ and $D-d_\ell +1$, and the rest of the consecutive spectral gaps are of order $\mb{E}[\sigma_t^{-1}]$.

On the event $\{\|\hat{\Sigma}_n-\Sigma_n\|_{\operatorname{op}} \leq \gamma_n/8\}$, we thus have for sufficiently large $n$,
$$\hat{\mu}_k-\hat{\mu}_{k+1}\geq \mu_k-\mu_{k+1}-2\|\hat{\Sigma}_n-\Sigma_n\|_{\operatorname{op}},$$
so $$\hat{\mu}_{D-d_\ell}-\hat{\mu}_{D-d_\ell+1} \geq \gamma_n-2\|\hat{\Sigma}_n-\Sigma_n\|_{\operatorname{op}} >3\gamma_n/4.$$
For any other $k \neq D-d_\ell$,
$$\hat{\mu}_k-\hat{\mu}_{k+1}\leq \mu_k-\mu_{k+1}+2\|\hat{\Sigma}_n-\Sigma_n\|_{\operatorname{op}}< \sigma_{t_1^{(n)}}^{-1}+2\|\hat{\Sigma}_n-\Sigma_n\|_{\operatorname{op}}<\gamma_n/2$$
for sufficiently large $n$ as $\gamma_n \asymp \sigma_t^{-2} \to \infty$, which means that $\hat{d}_n(x_0)=d_\ell$.
Now, we have that
$$\mb{P}(\hat{d}_n(x_0) \neq d_\ell) \leq \mb{P}(\m A_n^c)+\mb{P}(\{\|(A)\|_{\operatorname{op}} > \gamma_n/8\} \cap \m A_n) +\mb{P}(\|(B)\|_{\operatorname{op}}>\gamma_n/8)\to 0\quad  \text{as } n \to \infty.$$ $\hfill{\Box}$

\subsection{Proof of Lemma~\ref{lm:tail_bound}}
\textit{Proof:} We first note that 
$$\mathbb{E}[\|\hat{S}(x,t)-\nabla \log p_t(x)\|^2\mathbf{1}_{B_t^c}] \leq 2\mathbb{E}[\|\hat{S}(\cdot,t)\|_\infty^2\mathbf{1}_{B_t^c}] + 2\mathbb{E}[\|\nabla \log p_t(X)\|^2\mathbf{1}_{B_t^c}],$$
where $X \sim \mathcal{N}(m_tx_0,\sigma_t^2I_D)$. Now note that $x = m_tx_0 + \sigma_tz$ for some $z \sim \mathcal{N}(0,I_D)$, and, by definition of our network class, $\|\hat{S}\|_\infty \leq V_n$. For $R \gtrsim \sqrt{D}$, first term on the right side of the inequality can thus be bounded as:
$$2\mathbb{E}[\|\hat{S}(\cdot,t)\|_\infty^2\mathbf{1}_{B_t^c}] \lesssim 2V_n^2 e^{-(R-\sqrt{D})^2/2}.$$
As for the second term, we have by Tweedie's formula that
$$\nabla \log p_t(x) = -\frac{1}{\sigma_t^2}\Big(x-m_t\mathbb{E}[x_0|X_t=x]\Big).$$
Thus $$\|\nabla \log p_t(x)\| \leq \frac{1}{\sigma_t^2}(\|x\| + m_t\|\mathbb{E}[x_0|X_t=x]\|).$$
Since the support of the true distribution $P_*$ is compact, we have that $\|\mathbb{E}[x_0|X_t=x]\|<L$ for some constant $L>0$. We also have that
$$\|x\| = \|m_tx_0+\sigma_tz\| \leq \|m_tx_0\| + \sigma_t\|z\|,$$
which implies that on $B_t^c$ (where $\|z\| > R$):
\begin{align*}
    \|\nabla \log p_t(x) \|^2\mathbf{1}_{\{\|z\|>R\}} &\leq \frac{1}{\sigma_t^4}\Big(2\|x\|^2 + 2m_t^2\|\mathbb{E}[x_0|X_t=x]\|^2\Big)\\
    &\leq\frac{1}{\sigma_t^4}(4\|m_tx_0\|^2+4\sigma_t^2\|z\|^2 + 2m_t^2L^2)\\
    &= \frac{2}{\sigma_t^4}(3m_t^2L^2 + 2\sigma_t^2\|z\|^2)
\end{align*}
where the last inequality holds since $x_0 \in \operatorname{supp} P_*$. Now by taking expectation, we get
\begin{align*}
    \mathbb{E}[\|\nabla \log p_t(x)\|^2\mathbf{1}_{B_t^c}] &\lesssim \frac{L^2}{\sigma_t^4}\mathbb{P}(\|z\|>R) + \frac{1}{\sigma_t^2}\mathbb{E}[\|z\|^2\mathbf{1}_{\{\|z\|>R\}}]\\
    & \lesssim \frac{L^2}{\sigma_t^4}e^{-(R-\sqrt{D})^2/2} + \frac{1}{\sigma_t^2}(R^2+D)e^{-(R-\sqrt{D})^2/2}
\end{align*}
Combining the two bounds together gives us the lemma. $\hfill{\Box}$\\

\subsection{Proof of Lemma~\ref{lm:fourth_moment}}
Recall that a diffusion point from $x_0$ takes the form $X_t=m_tx_0+\sigma_tZ$, where $Z \sim \m N(0,I_D)$. From the proof of Lemma~\ref{lm:spectral_gap}, we see that for a small tubular region around $M_\ell$, $$\nabla \log p_t(X_t) = -\frac{X_t-\pi_t(X_t)}{\sigma_t^2} + R_t(X_t),$$
where $\|R_t(X_t)\|\leq C$ uniformly in $t \in [t_1^{(n)},t_2^{(n)}]$. Let $\m A_t$ be the event that this decomposition holds. Then $\m A_t^c \subseteq \{\|Z\|>r/\sigma_t\}$ for some radius $r>0$. On the event $\m A_t$, since $\pi_t(x)$ is the minimizer of the distance to $m_tM_\ell$, we have that
$$\|X_t-\pi_t(X_t)\|\leq \|X_t-m_tx_0\|\leq \sigma_t\|Z\|,$$
and so
$$\|\nabla \log p_t(X_t)\| \leq \frac{\|Z\|}{\sigma_t} + \|R_t(X_t)\|.$$
From the inequality $(a+b)^4\leq 8(a^4+b^4)$, we have that
$$\|\nabla \log p_t(X_t)\|^4 \lesssim \frac{\|Z\|^4}{\sigma_t^4}+1$$
Then
$$\mb{E}[\|\nabla \log p_t(X_t)\|^4\mathbf{1}_{\m A_t}\mid t] \lesssim \sigma_t^{-4}\mb{E}[\|Z\|^4]+1 \lesssim \sigma_{t^{(n)}_1}^{-4},$$
where we used the fact that $\sigma_t$ is monotonically increasing and $\|Z\|^2 \sim \chi_D^2$, and we have for a $\chi_D^2$ random variable that $\mb{E}[(\|Z\|^2)^2]=D^2+2D$.
On the event $\m A_t^c$, we note that the manifold $M_\ell$ is compact, so that $\|M_\ell\|\leq L <\infty$, and
$$\nabla \log p_t(x)=\frac{m_t\mb{E}[X_0|X_t=x]-x}{\sigma_t^2}.$$
Thus,
$$\|\nabla \log p_t(x)\|\leq \frac{m_tL + \|x\|}{\sigma_t^2}.$$
Plugging in $X_t=m_tx_0+\sigma_tZ$ implies that
$$\|\nabla \log p_t(X_t)\|\leq \frac{L}{\sigma_t^2}+\frac{\|m_tx_0 + \sigma_tZ\|}{\sigma_t^2} \lesssim \frac{1}{\sigma_t^2} + \frac{\|Z\|}{\sigma_t},$$
and so
$$\|\nabla \log p_t(X_t)\|^4 \lesssim \sigma_t^{-8}+\sigma_t^{-4}\|Z\|^4.$$
Now, by Gaussian tails, we have that $\mb{P}(\m A_t^c\mid t)\lesssim e^{-a/\sigma_t^2}$ for some constant $a>0$, and $\mb{E}[\|Z\|^4\mathbf{1}_{\m A_t^c}\mid t]$ similarly decays like $e^{-a/\sigma_t^2}$ up to a polynomial factor. Hence
$$\mb{E}[\|\nabla \log p_t(X_t)\|^4\mathbf{1}_{\m A_t^c}\mid t] \lesssim \sigma_t^{-8}e^{-a/\sigma_t^2} + \sigma_t^{-4}\mb{E}[\|Z\|^4\mathbf{1}_{\m A_t^c}]=o(\sigma_t^{-4}).$$ 
Putting everything together gives us the claim. $\hfill{\Box}$

\end{document}